\definecolor{mydarkgreen}{RGB}{39,130,67}
\newcommand{\green}{\color{mydarkgreen}}
\newcommand{\cmark}{{\green\ding{51}}}%
\definecolor{mydarkred}{RGB}{192,47,25}
\newcommand{\red}{\color{mydarkred}}
\newcommand{\xmark}{{\red\ding{55}}}%
\newtheorem{theorem}{Theorem}
\newtheorem{proposition}{Proposition}
\newtheorem{lemma}{Lemma}
\newtheorem{corollary}{Corollary}
\newtheorem{assumption}{Assumption}
\newtheorem{remark}{Remark}
\newtheorem{definition}{Definition}
\DeclareMathOperator*{\argmin}{argmin}
\newcommand{\prox}{\mathop{\mathrm{prox}}\nolimits}
\newcommand{\sumin}{\sum_{i=1}^n}
\newcommand{\avein}{\frac{1}{n}\sum_{i=1}^n}
\newcommand{\avejn}{\frac{1}{n}\sum_{j=1}^n}
\newcommand{\proxR}{\prox_{\gamma R}}
\newcommand{\EE}{\mathbb{E}}
\newcommand{\RR}{\mathbb{R}}
\newcommand{\cL}{{\cal L}}
\newcommand{\Var}{\mathrm{Var}}
\newcommand{\eqdef}{\triangleq}
\newcommand{\Lgen}{{\Phi}}
\newcommand{\smo}{{\psi}}
\newcommand{\seganu}{{\omega}}
 \newcommand{\quadb}{{o}}
 \newcommand{\cF}{{\cal F}}
\def\<#1,#2>{\left\langle #1,#2\right\rangle}
\author{
 Konstantin Mishchenko \\
  KAUST  \\
  Thuwal, Saudi Arabia \\
 \texttt{konstantin.mishchenko@kaust.edu.sa} \\
  \And
 Filip Hanzely \\
  KAUST  \\
  Thuwal, Saudi Arabia \\
 \texttt{filip.hanzely@kaust.edu.sa} \\
  \And
  Peter Richt\'{a}rik\thanks{Also affiliated with the Moscow Institute of Physics and Technology, Dolgoprudny, Russia.} \\
  KAUST  \\
  Thuwal, Saudi Arabia \\
  \texttt{peter.richtarik@kaust.edu.sa } \\
}
\begin{document}

\title{99\% of Parallel Optimization is Inevitably a Waste of Time}

\title{99\% of Distributed Optimization is  a Waste of Time: \\ The Issue and How to Fix it}

\maketitle







\begin{abstract}
Many popular distributed optimization methods for training machine learning models fit the following template: a local gradient estimate is computed independently by each worker, then communicated to a master, which subsequently performs averaging. The average is broadcast back to the workers, which use it to perform a gradient-type step to update the local version of the model. It is also well known that many such  methods, including SGD, SAGA, and accelerated SGD for over-parameterized models, do not scale well with the number of parallel workers.  In this paper we observe that the above template is fundamentally inefficient in that too much data is unnecessarily communicated by the workers, which slows down the overall system.  We propose a fix based on a new update-sparsification method we develop in this work, which we suggest be used on top of existing methods. Namely, we develop a new variant of parallel block coordinate descent based on independent sparsification of the local gradient estimates before communication. We demonstrate that with only $m/n$ blocks sent by each of $n$ workers, where $m$ is the total number of parameter blocks, the theoretical iteration complexity of the underlying distributed methods is essentially unaffected. As an illustration, this means that when $n=100$ parallel workers are used, the communication of  $99\%$  blocks is redundant, and hence a waste of time. Our theoretical claims are supported through extensive numerical experiments which demonstrate an almost  perfect match with our theory on a number of synthetic and real datasets.
\end{abstract}

\section{Introduction}

In this work we are concerned with parallel/distributed  algorithms for solving finite sum minimization problems of the form 
\begin{align}
\textstyle   \min_{x\in \RR^d} f(x) \eqdef  \frac{1}{n}\sum \limits_{i=1}^n f_i(x),\label{eq:problem}
\end{align}
where each $f_i$ is convex and smooth. In particular, we are interested in methods which employ $n$ parallel units/workers/nodes/processors, each of which has access to a single function $f_i$ and its gradients (or unbiased estimators thereof). Let $x^*$ be an optimal solution of~\eqref{eq:problem}. In practical parallel or distributed scenarios, $f_i$ is often of the form
\begin{equation}\label{eq:stoch-f_i}  \textstyle  f_i(x) = \EE_{\xi} \phi_i(x; \xi),\end{equation}
where the expectation is with respect to a distribution of training examples stored locally at machine $i$. More typically, however, each machine contains a very large but finite number of examples (for simplicity, say there are $l$ examples on each machine), and $f_i$ is of the form \begin{equation}\label{eq:problem_saga_dist}
 \textstyle f_i(x) = \frac{1}{l}\sum \limits_{j=1}^l f_{ij}(x).
\end{equation}

 In the rest of this section we provide some basic motivation and intuitions in support of our approach. To this purpose, assume, for simplicity of exposition, that $f_i$ is of the finite-sum form \eqref{eq:problem_saga_dist}. In typical modern machine learning workloads, the number of machines $n$  is much smaller than the number of data points on each machine $l$. In a large scale regime (i.e., when the model size $d$,  the number of data points $nl$, or both are large),  problem \eqref{eq:problem} needs to be solved by a combination of efficient methods and modern hardware. In recent years there has been a lot of progress in designing new algorithms for solving this problem using techniques such as stochastic approximation~\cite{sgd}, variance reduction~\cite{sag, svrg, saga}, coordinate descent~\cite{nesterov_cd,richtarik2014iteration, wright2015coordinate} and acceleration~\cite{nesterov}, resulting in excellent theoretical and practical performance.

The computational power of the hardware is increasing as well. In recent years, a very significant amount of such increase is due to parallelism. Since many methods, such as minibatch Stochastic Gradient Descent (SGD), are embarrassingly parallel, it is very simple to use them in big data applications. However, it has been observed in practice that adding more resources beyond a certain limit does not improve iteration complexity significantly. Moreover, having more parallel units makes their synchronization harder due to so-called communication bottleneck.  Minibatch versions of most variance reduced methods\footnote{We shall mention that there are already a few variance reduced methods that scale, up to some level, linearly in a parallel setup: Quartz for sparse data~\cite{QUARTZ}, Katyusha~\cite{allen2017katyusha},  or SAGA/SVRG/SARAH with importance sampling for non-convex problems~\cite{horvath2018nonconvex}. } such as SAGA~\cite{saga} or SVRG~\cite{svrg} scale even worse in parallel setting -- they do not guarantee, in the worst case,  any speedup from using more than one function at a time. Unfortunately, numerical experiments show that this is not a proof flaw, but rather a real property of these methods~\cite{gower2018stochastic}. A similar observation was made for SVRG in~\cite{zhao2014accelerated}, where it was shown that only a small number of partial derivatives are needed at each iteration.

Since there are too many possible situations, we choose to focus on black-box optimization, although we admit that much can be achieved by assuming the sparsity structure. In fact, for any method there exists a toy situation where the method would scale perfectly -- one simply needs to assume that each function $f_i$ depends on its own subset of coordinates and minimize each $f_i$ independently. This can be generalized assuming sparsity patterns~\cite{leblond2017asaga, leblond2018improved} to get almost linear scaling if any coordinate appears in a small number of functions. Our interest, however, is in explaining situations as in~\cite{gower2018stochastic} where the models almost do not scale.

In this paper, we demonstrate that a simple trick of {\em independent} block sampling can remedy the problem of scaling,  to a substantial but limited extent. To illustrate one of the key insights of our paper on a simple example, in what follows consider a thought experiment in which GD is a baseline method we would want to improve on.

\subsection{From gradient descent to block coordinate descent and back}

A simple benchmark in the distributed setting is a parallel implementation of gradient descent (GD). GD arises as a special case of the more general class of block coordinate descent methods (BCD)~\cite{PCDM}.  The conventional way to run  BCD for problem \eqref{eq:problem} is to update a single or several blocks\footnote{Assume the entries of $x$ are partitioned into several non-overlapping blocks.} of $x$, chosen at random, on all  $n$ machines~\cite{PCDM, fercoq2015accelerated}, followed by an update aggregation step. Such updates on each worker typically involve a gradient step on a subspace corresponding to the selected blocks.  Importantly, and this is a key structural property of BCD methods, {\em the same set of blocks is updated on each machine}. If  communication is expensive, it often makes sense to do more work on each machine,  which in the context of BCD means updating more blocks.
A particular special case is to update {\em all} blocks, which leads to parallel implementation of GD for problem \eqref{eq:problem}, as mentioned above. Moreover, it is known that the theoretical iteration complexity of BCD improves as the number of blocks updated increases~\cite{PCDM, ALPHA, ESO}. For these and similar reasons, GD (or one of its variants, such as GD with momentum),  is often the preferable method to BCD. Having said that, we did not choose to describe BCD only to discard it at this point; we shall soon return to it, albeit with a twist.

\subsection{From gradient descent to independent block coordinate descent}

Because of what we just said, iteration complexity of GD will not improve by any variant running BCD; it can only get worse.  Despite this, {\em we propose to run  BCD, but a new variant which allows each worker to sample an independent subset of blocks} instead. This variant of BCD for \eqref{eq:problem} was not considered before. As we shall show,  our {\em independent sampling} approach leads to a better-behaved aggregated gradient estimator when compared to that of BCD, which in turn leads to better overall iteration complexity. We call our method {\em independent block coordinate descent (IBCD)}. We provide a unified analysis of our method, allowing for a random subset of $\tau m$ out of a total of $m$ blocks to be sampled on each machine, independently from other machines. GD  arises as a special case of this method by setting $\tau=1$.  However, as we show (see Corollary~\ref{cor:0893y83}), {\em the same iteration complexity guarantee can be obtained by choosing $\tau$ as low as $\tau=\nicefrac{1}{n}$.} The immediate consequence of this result is that {\em it is suboptimal to run GD in terms of communication complexity.} Indeed, GD needs to communicate all $m$ blocks per machine, while IBCD achieves the same rate with $\nicefrac{m}{n}$ blocks per machine only. Coming back to the abstract, consider an example with $n=100$ machines. In this case, when compared to GD, IBCD only communicates $1\%$ of the data. Because the iteration complexities of the two methods are the same, and if communication cost is dominant, this means that the problem can be solved in just $1\%$ of the time. In contrast, and when compared to the potential of IBCD, parallel implementation of GD inevitably wastes 99\% of the time.

The intuition behind why our approach works lies in the law of large numbers. By averaging independent noise we reduce the total variance of the resulting estimator by the factor of $n$. If, however, the noise is already tiny, as, in non-accelerated variance reduced methods, there is no improvement. On the other hand, (uniform)  block coordinate descent (CD) has variance proportional to $\nicefrac{1}{\tau}$~\cite{wangni2018gradient}, where $\tau < 1$ is the ratio of used blocks. Therefore, after the averaging step the variance is $\nicefrac{1}{\tau n}$, which illustrates why setting any $\tau > \nicefrac{1}{n}$ should not yield a significant speedup when compared to the choice $\tau = \nicefrac1n$. It also indicates that it should be possible to throw away a $(1-\nicefrac1n)$ fraction of blocks while keeping the same convergence rate.

\subsection{Beyond gradient descent and further contributions}

The goal of the above discussion was to introduce one of the ideas of this paper in a gentle way. However, our independent sampling idea has immense consequences beyond the realm of GD, as we show in the rest of the paper. Let us summarize the contributions here:

\begin{itemize}
\item We show that the independent sampling idea can be coupled with variance reduction/SAGA (see Sec~\ref{sec:saga}),  SGD for problem  \eqref{eq:problem}+\eqref{eq:stoch-f_i} (see Sec~\ref{sec:sgd}),  acceleration (under mild assumption on stochastic gradients; see Sec~\ref{sec:ABCDE}) and regularization/SEGA (see Sec~\ref{sec:sega}). We call the new methods ISAGA, ISGD, IASGD and ISEGA, respectively. We also develop ISGD variant for asynchronous distributed optimization -- IASGD (Sec~\ref{sec:asynch}). 

\item We present two versions of the  SAGA algorithm coupled with IBCD. The first one is for a distributed setting, where each machine owns a subset of data and runs a SAGA iteration with block sampling locally, followed by aggregation. The second version is in a shared data setting, where each machine has access to all functions. This allows for linear convergence even if $\nabla f_i(x^*)\neq 0$. 
 
 \item We show that when combined with IBCD, the  SEGA trick~\cite{SEGA} leads to a method that enjoys a   linear rate for problems where $\nabla f_i(x^*)\neq 0$ and allows for more general objectives which may include a non-separable non-smooth regularizer. 
\end{itemize}

A comprehensive summary of all proposed algorithms is given in Table~\ref{tbl:algs}.

 \begin{table}[!t]
\begin{center}
\small
\begin{tabular}{|c|c|c|c|c|c|c|c|}
\hline
{\bf \#}& {\bf Name}  & {\bf Origin}& { \bf $\nabla f_i(x^*)\neq0 $}&  { \bf  Lin. rate }  & {  \bf \begin{tabular}{c} In-machine \\ randomization \end{tabular}} &{\bf Note} \\
 \hline
  \hline
\ref{alg:cd}   & IBCD &  I+ CD~\cite{nesterov_cd} & \xmark & \cmark &\xmark & Simplest \\
  \hline
\ref{alg:sega}   & ISEGA   & I + SEGA~\cite{SEGA}& \cmark  & \cmark&\xmark & Allows prox \\
  \hline
\ref{alg:ibd}   & IBGD &  I+ GD  & \xmark & \cmark &\xmark & Bernoulli, no CD \\
  \hline
\ref{alg:saga}   & ISAGA  & I+ SAGA~\cite{saga}& \cmark &  \cmark& \cmark &Shared memory \\
  \hline
\ref{alg:saga_dist}   & ISAGA    &I+ SAGA~\cite{saga}& \xmark & \cmark&\cmark & \\
  \hline
\ref{alg:sgd}   & ISGD  & I + SGD~\cite{sgd}&\cmark  &  \xmark &\cmark & + Non-convex\\
  \hline
\ref{alg:acc}   & IASGD  &I + ASGD~\cite{vaswani2018fast}& \cmark  &  \xmark&\cmark & \\
 \hline
\ref{alg:asynch_sgd}   & IASGD   & I + ASGD~\cite{NIPS2011_4390}& \cmark  & \xmark&\cmark & Asynchronous \\
\hline
\end{tabular}
\end{center}
\caption{Summary of all algorithms proposed in the paper.}
\label{tbl:algs}
\end{table}

\section{Practical Implications and Limitations \label{sec:practical}}
In this section, we outline some further limitations and practical implications of our framework.
\subsection{Main limitation}

The main limitation of this work is that  independent sampling does not generally result in a sparse aggregated update. Indeed, since each machine might sample a different subset of blocks, all these updates add up to a dense one, and this problem gets worse as $n$ increases, other things equal. For instance, if every parallel unit updates a single unique block\footnote{Assume $x$ is partitioned into several ``blocks'' of variables.}, the total number of updated blocks is equal $n$. In contrast, standard BCD, one that samples the {\em same} block on each worker, would update a single block only. For simple linear problems, such as logistic regression, sparse updates allow for a fast implementation of BCD via memorization of the residuals. However, this limitation is not crucial in common settings where broadcast is much faster than reduce.

\subsection{Practical implications}

The main body of this work focuses on theoretical analysis and on verifying our claims via experiments. However, there are several straightforward and important applications of our technique.

\textbf{Distributed synchronous learning.} A common way to run a distributed optimization method is to perform a local update, communicate the result to a parameter server using a 'reduce' operation, and inform all workers using 'broadcast'. Typically, if the number of workers is significantly large, the bottleneck of such a system is communication. In particular, the 'reduce' operation takes much more time than 'broadcast' as it requires to add up different vectors computed locally, while 'broadcast' informs the workers about \textit{the same} data (see~\cite{mishchenko2019distributed} for a numerical validation that 'broadcast' is 10-20 times faster across a wide range of dimensions).
 Nevertheless, if every worker can instead send to the parameter server only $\tau = \nicefrac{1}{n}$ fraction of the $d$-dimensional update, essentially the server node will receive just one full $d$-dimensional vector, and thus our approach can compete against methods like QSGD~\cite{alistarh2017qsgd}, signSGD~\cite{bernstein2018signsgd}, TernGrad~\cite{wen2017terngrad}, DGC~\cite{lin2017deep} or ATOMO~\cite{wang2018atomo}. In fact, our approach may completely remove the communication bottleneck. 

\textbf{Distributed asynchronous learning.} The main difference with the synchronous case is that only one-to-one communications will be used instead of highly efficient 'reduce' and 'broadcast'. Clearly, the communication to the server will be much faster with $\tau=\nicefrac{1}{n}$, so the main question is how to make the communication back fast as well. Hopefully, the parameter server can copy the current vector and send it using non-blocking communication, such as \textit{isend()} in MPI4PY~\cite{dalcin2011parallel}. Then, the communication back will not prevent the server from receiving the new updates. We combine the IBCD approach with asynchronous updates, which leads to a new method:  IASGD (Algorithm~\ref{alg:asynch_sgd}).

\textbf{Distributed sparse learning.} Large datasets, such as binary classification data from LibSVM, often have sparse gradients. In this case, the 'reduce' operation is not efficient and one needs to communicate data by sending positions of nonzeros and their values. Moreover, as we prove later, one can use independent sampling with $\ell_1$-penalty, which makes the problem solution sparse. In that case, only communication from a worker to the parameter server is slow, so both synchronous and asynchronous methods gain in performance.

\textbf{Methods with local subproblems.} One can also try to extend our analysis to methods with exact block-coordinate minimization or primal-dual and proximal methods such as Point-SAGA~\cite{defazio2016simple}, PDHG~\cite{chambolle2011first}, DANE~\cite{shamir2014communication}, etc. There, by restricting ourselves to a subset of coordinates, we may obtain a subproblem that is easier to solve by orders of magnitude.
    
\textbf{Block-separable problems within machines.} 
Given that the local problem on each machine is block coordinate-wise separable, partial derivative blocks can be evaluated $\nicefrac{1}{\tau}$ times cheaper than the gradients. Thus, independent sampling improves scalability at no cost. Such problems can be obtained considering the dual problem, as is done in~\cite{COCOA+}, for example.

For a comprehensive list of frequently used notation, see Table~\ref{tbl:notation} in the supplementary material.



\section{Independent Block Coordinate Descent \label{sec:basic}}

\subsection{Technical assumptions}
We present the most common technical assumptions required in order to derive convergence rates. 
\begin{definition}
Function $F$ is $L$ smooth if for all $x,y\in \RR^d$ we have: 
\begin{equation}\label{eq:smooth}
 F(x)\leq F(y)+ \< \nabla F(y), x-y> + \tfrac{L}{2}\|x-y \|_2^2.
\end{equation} 
Similarly, $F$ is $\mu$ strongly convex if for all $x,y\in \RR^d$: 
\begin{equation}\label{eq:strong_convex}
 F(x)\geq F(y)+ \< \nabla F(y), x-y> + \tfrac{\mu}{2}\|x-y \|_2^2.
\end{equation} 
\end{definition}
In most results we present, functions $f_i$ are required to be smooth and convex, and $f$ strongly convex. 

\begin{assumption}\label{as:smooth_sc}
For every $i$, function $f_i$ is convex, $L$ smooth and function $f$ is $\mu$ strongly convex. 
\end{assumption}

As mentioned, since independent sampling does not preserve the variance reduction property, in some of our results we shall consider $\nabla f_i(x^*)=0$ for all $i$. 
\begin{assumption}\label{as:zero_grads}
For all $1\leq i \leq n$ we have $\nabla f_i(x^*)=0$.
\end{assumption}

In Sec~\ref{sec:saga} we show that Assumption~\ref{as:zero_grads} can be dropped once the memory is shared among the machines.
Further, in Sec~\ref{sec:sega} we show that Assumption~\ref{as:zero_grads} can be dropped even in the fully distributed setup using the SEGA trick. Lastly, Assumption~\ref{as:zero_grads} is naturally satisfied in many applications. For example, in least squares setting $\min \|Ax-b\|_2^2$, it is equivalent to existence of $x^*$ such that $Ax^*=b$. On the other hand, current state-of-the-art deep learning models are often overparameterized so that they allow zero training loss, which is again equivalent to $\nabla f_i(x^*)=0$ for all $i$  (however, such problems are typically non-convex).

\subsection{Block structure of $\RR^d$~\label{sec:notation}}

 Let $\RR^d$  be partitioned into $m$ blocks $u_1, \dotsc, u_m$ of arbitrary sizes, so that the parameter space is $\RR^{|u_1|}\times\dotsb \RR^{|u_m|}$. For any vector $x\in \RR^d$ and a set of blocks $U$ we denote by $x_U$ the vector that has the same coordinate as $x$ in the set of blocks $U$ and zeros elsewhere.

\subsection{IBCD}

In order to provide a quick taste of our results, we first present the IBCD method described in the introduction and formalized as Algorithm~\ref{alg:cd}. 

\begin{algorithm}[h]
  \caption{Independent Block Coordinate Descent (IBCD)}
  \label{alg:cd}
\begin{algorithmic}[1]
\STATE{\bfseries Input: } {$x^0\in\RR^d$, partition of $\RR^d$ into $m$ blocks $u_1,\dotsc, u_m$, ratio of blocks to be sampled $\tau$, stepsize $\gamma$, \# of parallel units $n$}
  \FOR{$t=0,1,\dotsc$}
    \FOR{$i=1,\dotsc,n$ in parallel}
        \STATE Sample independently and uniformly a subset of $\tau m$ blocks $U_i^t \subseteq \{u_1, \dotsc, u_m\}$
        \STATE $x_i^{t+1} = x^t - \gamma  (\nabla f_i(x^t))_{U_i^t}$
    \ENDFOR
    \STATE $x^{t+1} = \frac{1}{n}\sumin x_i^{t+1}$
  \ENDFOR
\end{algorithmic}
\end{algorithm}

A key parameter of the method is $\nicefrac{1}{m} \leq \tau \leq 1$ (chosen so that $\tau m$ is an integer), representing a fraction of blocks to be sampled by each worker. At iteration $t$, each machine independently samples a subset of $\tau m$ blocks $U_i^t \subseteq \{u_1,\dots,u_m\}$, uniformly at random. The $i$th worker then performs a subspace gradient step of the form $x_i^{t+1} = x^t - \gamma (\nabla f_i(x^t))_{U_i^t}$, where $\gamma>0$ is a stepsize.  Note that only coordinates of $x^t$ belonging to $U_i^t$ get updated. This is then followed by aggregating all $n$ gradient updates: $x^{t+1} = \tfrac{1}{n}\sum_i x_i^{t+1}$.

\subsection{Convergence of IBCD}

Theorem~\ref{th:cd} provides a convergence rate for Algorithm~\ref{alg:cd}. Admittedly, the assumptions of Theorem~\ref{th:cd} are somewhat restrictive; in particular, we  require $\nabla f_i(x^*)=0$ for all $i$. However, this is necessary. Indeed, in general one can not expect to have $\sum_{i=1}^n(\nabla f_i(x^*))_{U_i}=0$ (which would be required for the method to converge to $x^*$) for independently sampled sets of blocks $U_i$ unless $\nabla f_i(x^*)=0$ for all $i$. As mentioned, the issue is resolved in Sec~\ref{sec:sega} using the SEGA trick~\cite{SEGA}. 
\begin{theorem}\label{th:cd} 
Suppose that Assumptions~\ref{as:smooth_sc},~\ref{as:zero_grads} hold. For Algorithm~\ref{alg:cd} with $\gamma = \frac{n}{\tau n + 2(1 - \tau)}\frac{1}{2L}$ we have
\[
\EE\left[ \|x^{t} -x^*\|_2^2 \right] \leq \left(1-\tfrac{\mu}{2L} \tfrac{\tau n}{\tau n + 2(1-\tau)}\right)^t\|x^{0} -x^*\|_2^2.
\]
\end{theorem}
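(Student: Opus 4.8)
The plan is to track the squared distance to the optimum and show it contracts geometrically in expectation. Since the aggregation step gives $x^{t+1}=\frac1n\sum_i x_i^{t+1}=x^t-\gamma g^t$ with the aggregated estimator $g^t\eqdef\frac1n\sum_{i=1}^n(\nabla f_i(x^t))_{U_i^t}$, I would first expand $\|x^{t+1}-x^*\|_2^2=\|x^t-x^*\|_2^2-2\gamma\langle g^t,x^t-x^*\rangle+\gamma^2\|g^t\|_2^2$ and take the conditional expectation over the independent block samplings $U_1^t,\dots,U_n^t$ given $x^t$. The two ingredients I need are the mean and the second moment of $g^t$.

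For the mean: because $U_i^t$ is a uniformly random set of $\tau m$ blocks, every block lands in $U_i^t$ with probability $\tau$, so $\EE[(\nabla f_i(x^t))_{U_i^t}\mid x^t]=\tau\nabla f_i(x^t)$ and hence $\EE[g^t\mid x^t]=\tau\nabla f(x^t)$. Substituting this into the cross term and invoking $\mu$-strong convexity of $f$ in the form $\langle\nabla f(x^t),x^t-x^*\rangle\ge f(x^t)-f(x^*)+\frac{\mu}{2}\|x^t-x^*\|_2^2$ yields the contraction factor $(1-\gamma\tau\mu)$ together with a favorable $-2\gamma\tau(f(x^t)-f(x^*))$ term.

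The crux — and the step I expect to be the main obstacle — is the second moment $\EE[\|g^t\|_2^2\mid x^t]$, since this is where the benefit of \emph{independent} sampling must surface. Using independence of the $U_i^t$ across machines, the cross terms factorize as $\langle\EE[(\nabla f_i)_{U_i^t}],\EE[(\nabla f_j)_{U_j^t}]\rangle=\tau^2\langle\nabla f_i,\nabla f_j\rangle$ for $i\neq j$, while each diagonal term contributes $\EE\|(\nabla f_i)_{U_i^t}\|_2^2=\tau\|\nabla f_i\|_2^2$. Collecting these and using $\sum_i\nabla f_i=n\nabla f$ gives the exact identity $\EE[\|g^t\|_2^2\mid x^t]=\tau^2\|\nabla f(x^t)\|_2^2+\frac{\tau(1-\tau)}{n^2}\sum_i\|\nabla f_i(x^t)\|_2^2$. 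The $\tau^2$ term is the ``signal'' and the second term is residual noise that is damped by $1/n$ through averaging; this is precisely the law-of-large-numbers effect described in the introduction, and the reason $\tau=\nicefrac1n$ will suffice.

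It remains to convert gradient norms into function-value gaps, and here Assumption~\ref{as:zero_grads} is essential: convexity and $L$-smoothness of each $f_i$ together with $\nabla f_i(x^*)=0$ give the standard co-coercivity bound $\|\nabla f_i(x^t)\|_2^2\le 2L(f_i(x^t)-f_i(x^*))$, whose average yields both $\frac1n\sum_i\|\nabla f_i(x^t)\|_2^2\le 2L(f(x^t)-f(x^*))$ and $\|\nabla f(x^t)\|_2^2\le 2L(f(x^t)-f(x^*))$. Plugging these into the second-moment identity bounds $\EE\|g^t\|_2^2$ by a constant multiple of $(f(x^t)-f(x^*))$. Finally I would collect all $(f(x^t)-f(x^*))$ contributions; for the prescribed stepsize $\gamma=\frac{n}{\tau n+2(1-\tau)}\frac{1}{2L}$ their net coefficient is non-positive, so this term is dropped using $f(x^t)\ge f(x^*)$, leaving $\EE[\|x^{t+1}-x^*\|_2^2\mid x^t]\le(1-\gamma\tau\mu)\|x^t-x^*\|_2^2$ with $\gamma\tau\mu=\frac{\mu}{2L}\frac{\tau n}{\tau n+2(1-\tau)}$. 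Taking full expectation and unrolling over $t$ gives the stated geometric rate. The only genuine delicacy is choosing the stepsize so the function-gap term is absorbed while keeping the contraction as strong as possible; the rest is bookkeeping.
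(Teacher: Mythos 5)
Your proposal is correct and follows essentially the same route as the paper: the paper proves Theorem~\ref{th:cd} by specializing its general recurrence (Lemma~\ref{lem:sgd_recur}, which is itself assembled from the bias--variance decomposition~\eqref{eq:variance_decompos}, the independence-based variance formula of Lemma~\ref{lem:moments}, and the co-coercivity bound of Lemma~\ref{lem:expected_squared_grad}) to $\sigma=0$ and $\nabla f_i(x^*)=0$, which matches your computation of the mean and second moment of the aggregated estimator, your use of strong convexity for the cross term, and your stepsize bookkeeping. The only cosmetic difference is that you compute $\EE\left[\|g^t\|_2^2\right]$ in one shot, while the paper separates the conditional mean and conditional variance of $x^{t+1}$; these are algebraically identical manipulations.
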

As a consequence of Theorem~\ref{th:cd}, we can choose $\tau$ as small as $\nicefrac{1}{n}$ and get, up to a constant factor, the same convergence rate as gradient descent, as described next.
\begin{corollary}\label{cor:0893y83}
If $\tau=\nicefrac1n$, the iteration complexity\footnote{Number of iterations to reach $\epsilon$ accurate solution.} of Algorithm~\ref{alg:cd} is ${\cal O} (\nicefrac{L}{\mu} \log\nicefrac{1}{\epsilon})$. 
\end{corollary}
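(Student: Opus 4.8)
The plan is to analyze the one-step progress of Algorithm~\ref{alg:cd} by expanding the squared distance to $x^*$ and taking expectations over the independent block samplings $U_i^t$. First I would write the aggregated iterate as $x^{t+1} = x^t - \frac{\gamma}{n}\sum_{i=1}^n (\nabla f_i(x^t))_{U_i^t}$, since subtracting $x^*$ from both sides and expanding gives
\[
\|x^{t+1}-x^*\|_2^2 = \|x^t-x^*\|_2^2 - 2\gamma\left\<x^t-x^*, \tfrac1n\sumin (\nabla f_i(x^t))_{U_i^t}\right> + \gamma^2\left\|\tfrac1n\sumin (\nabla f_i(x^t))_{U_i^t}\right\|_2^2.
\]
The key is that the sets $U_i^t$ are sampled \emph{uniformly} with each block included with probability $\tau$, so $\EE[(\nabla f_i(x^t))_{U_i^t}] = \tau \nabla f_i(x^t)$ by linearity. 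Taking conditional expectation given $x^t$, the cross term becomes $-2\gamma\tau\left\<x^t-x^*,\nabla f(x^t)\right>$, which is exactly the quantity controlled by smoothness and strong convexity of $f$.

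The main obstacle, and the step I expect to require the most care, is bounding the quadratic term $\gamma^2\,\EE\|\frac1n\sumin (\nabla f_i(x^t))_{U_i^t}\|_2^2$. Here the independence across machines is what makes the factor $n$ appear. I would expand the squared norm into diagonal terms ($i=j$) and off-diagonal terms ($i\neq j$). For the off-diagonal terms, independence of $U_i^t$ and $U_j^t$ lets the expectation factor, yielding $\tau^2\left\<\nabla f_i(x^t),\nabla f_j(x^t)\right>$. For the diagonal terms, $\EE\|(\nabla f_i(x^t))_{U_i^t}\|_2^2 = \tau\|\nabla f_i(x^t)\|_2^2$, since each coordinate survives with probability $\tau$. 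Combining these, the $\tau^2$ contributions reassemble into $\tau^2\|\frac1n\sumin \nabla f_i(x^t)\|_2^2 = \tau^2\|\nabla f(x^t)\|_2^2$, while the diagonal terms leave a residual of order $\frac{\tau(1-\tau)}{n^2}\sumin\|\nabla f_i(x^t)\|_2^2$. This residual is precisely where the $\tau n + 2(1-\tau)$ denominator in the stepsize originates.

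Next I would use Assumption~\ref{as:zero_grads}, namely $\nabla f_i(x^*)=0$, to invoke $L$-smoothness and convexity of each $f_i$ in the form $\|\nabla f_i(x^t)\|_2^2 = \|\nabla f_i(x^t)-\nabla f_i(x^*)\|_2^2 \leq 2L\,(f_i(x^t)-f_i(x^*)-\<\nabla f_i(x^*),x^t-x^*\>)$, which after averaging bounds $\frac1n\sumin\|\nabla f_i(x^t)\|_2^2$ by $2L(f(x^t)-f(x^*))$. A similar co-coercivity bound handles $\|\nabla f(x^t)\|_2^2\le 2L(f(x^t)-f(x^*))$. These convert all gradient-norm terms into the functional suboptimality $f(x^t)-f(x^*)$, which pairs with the negative cross term $-2\gamma\tau(f(x^t)-f(x^*))$ coming from convexity. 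The plan is then to choose $\gamma$ so that the net coefficient on $f(x^t)-f(x^*)$ is nonpositive — this is exactly the stated $\gamma = \frac{n}{\tau n+2(1-\tau)}\frac{1}{2L}$, which balances the positive quadratic contributions against the negative linear one and lets us discard the functional term.

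Finally I would apply $\mu$-strong convexity of $f$ to lower bound the cross term as $\<x^t-x^*,\nabla f(x^t)\>\ge \mu\|x^t-x^*\|_2^2$ on the portion not consumed above, yielding a recursion of the form $\EE\|x^{t+1}-x^*\|_2^2 \le (1-\rho)\EE\|x^t-x^*\|_2^2$ with $\rho = \frac{\mu}{2L}\frac{\tau n}{\tau n+2(1-\tau)}$. Unrolling this contraction over $t$ steps and taking total expectation gives the claimed bound. The corollary then follows immediately by substituting $\tau=\nicefrac1n$, for which $\frac{\tau n}{\tau n+2(1-\tau)} = \frac{1}{1+2(1-\nicefrac1n)} = \frac{1}{3-\nicefrac2n} = \Theta(1)$, so the contraction factor is $1-\Theta(\nicefrac{\mu}{L})$ and the iteration complexity is $\mathcal{O}(\nicefrac{L}{\mu}\log\nicefrac1\epsilon)$.
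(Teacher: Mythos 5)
Your proposal is correct and follows essentially the same route as the paper: the paper proves Theorem~\ref{th:cd} by specializing its SGD machinery (the variance decomposition~\eqref{eq:variance_decompos}, the moment computation of Lemma~\ref{lem:moments}, the gradient bound of Lemma~\ref{lem:expected_squared_grad} under Assumption~\ref{as:zero_grads}, and the recursion of Lemma~\ref{lem:sgd_recur}) to $\sigma=0$, and your direct diagonal/off-diagonal expansion of $\EE\bigl\|\tfrac1n\sumin(\nabla f_i(x^t))_{U_i^t}\bigr\|_2^2$ is algebraically the same computation, yielding the same $\tau^2\|\nabla f(x^t)\|_2^2+\tfrac{\tau(1-\tau)}{n^2}\sumin\|\nabla f_i(x^t)\|_2^2$ split, the same cancellation of the functional term via the stepsize, and the same contraction factor $1-\gamma\tau\mu$. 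The concluding substitution $\tau=\nicefrac1n$ matches the paper's Corollary~\ref{cor:0893y83} exactly.
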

\subsection{Optimal block sizes}
If we naively use coordinates as blocks, i.e.\ all blocks have size equal 1, the update will be very sparse and the efficient way to send it is by providing positions of nonzeros and the corresponding values. If, however, we partition $\RR^d$ into blocks of size approximately equal $d/n$, then on average only one block will be updated by each worker. This means that it will be just enough for each worker to communicate the block number and its entries, which is twice less data sent than when using coordinates as blocks.

\section{Variance Reduction \label{sec:saga}}
As the first extension of IBCD, we inject independent coordinate sampling into SAGA\footnote{Independent coordinate sampling is not limited to SAGA and can be similarly applied to other variance reduction techniques.}~\cite{saga}, resulting in a new method we call ISAGA. We consider two different settings for ISAGA. The first one is standard distributed setup~\eqref{eq:problem}, where each $f_i$ is of the fine-sum form \eqref{eq:problem_saga_dist}. The idea is to run SAGA with independent coordinate sampling locally on each worker, followed by aggregating the updates. However, as for IBCD, we require $\nabla f_i(x^*) = 0$ for all $i$. The second setting is a {\em shared data/memory} setup; i.e., we assume that all workers have access to all functions from the finite sum. This allows us to drop Assumption~\ref{as:zero_grads}. Due to space limitations, we present distributed ISAGA in Sec~\ref{sec:saga_dist} of the supplementary.

\subsection{Shared data ISAGA}
We now present a different setup for ISAGA in which the requirement $\nabla f_i(x^*)=0$ is not needed. Instead of \eqref{eq:problem}, we rather solve the problem
\begin{align} \label{eq:problem_saga_sm}
\textstyle    \min_{x\in \RR^d} f(x) \eqdef \frac{1}{N}\sum_{j=1}^N \smo_{j}(x)
\end{align}
with $n$ workers all of which have  access to all  data describing $f$. Therefore, all workers can evaluate $\nabla \smo_{j}(x)$ for any $1\leq j\leq N$. Similarly to plain SAGA, we remember the freshest gradient information in vectors $\alpha_{j}$, and update them as 
\begin{equation}
\alpha_{j_i^t}^{t+1} = \alpha_{j_i^t}^t+ (\nabla \smo_{j_i^t}(x^t)- \alpha_{j_i^t}^t )_{U_i^t}, \quad \alpha_{j'}^{t+1} = \alpha_{j'}^t, \label{eq:saga_alpha_sm}
\end{equation}
where  $j_i^t$ is the  index  sampled at iteration $t$ by machine $i$, and $j'$ refers to all indices that were not sampled at iteration $t$ by any machine. The iterate updates within each machine are taken only on a sampled set of coordinates, i.e.,  
$
x_i^{t+1} = x^t - \gamma  (\nabla \smo_{j_i^t}(x^t) - \alpha_{j_i^t}^t + \overline \alpha^t)_{U_i^t}.
$
where $\overline \alpha^t$ stands for the average of all $\alpha$, and thus it is a delayed estimate of $\nabla f(x^t)$. Lastly, we set the next iterate as the average of proposed iterates by each machine  $x^{t+1} = \frac1n \sumin x_i^{t+1} $. The formal statement of the algorithm is given in the supplementary as Algorithm~\ref{alg:saga}.

\begin{theorem}\label{th:saga_shared}
 Suppose that function $f$ is $\mu$ strongly convex and each $\smo_i$ is $L$ smooth and convex. If $\gamma\le \frac{1}{L\left(\frac{3}{n} + \tau\right)}$, then for iterates of Algorithm~\ref{alg:saga} we have
    \begin{align*}
        \EE \|x^t - x^*\|_2^2 \le (1 - \vartheta)^t\left(\|x^0 - x^*\|_2^2 + c \gamma^2 \Psi^0\right),
    \end{align*}
    where $\Psi^0\eqdef \sum_j \|\alpha_{j}^0 - \nabla \smo_{j}(x^*)\|_2^2$, $\vartheta\eqdef \tau\min\left\{\gamma\mu, \frac{ n}{N} - \frac{2}{nNc} \right\}\ge 0$ and $c\eqdef \frac{1}{n}( \frac{1}{\gamma L} - \frac{1}{n} - \tau) > 0$.
\end{theorem}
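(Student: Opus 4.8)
The plan is to exhibit a Lyapunov function $\Phi^t \eqdef \|x^t-x^*\|_2^2 + c\gamma^2\Psi^t$, with $\Psi^t\eqdef\sum_j\|\alpha_j^t-\nabla\smo_j(x^*)\|_2^2$, and to prove a one-step contraction $\EE[\Phi^{t+1}\mid\mathcal F^t]\le(1-\vartheta)\Phi^t$, where $\mathcal F^t$ is the $\sigma$-algebra generated by the first $t$ iterations. The claimed bound then follows by taking total expectation and unrolling. Throughout I would use two elementary identities for the independent block sampling: for any fixed $v$, $\EE[(v)_{U_i^t}]=\tau v$ and $\EE[\|(v)_{U_i^t}\|_2^2]=\tau\|v\|_2^2$, since each block lies in $U_i^t$ with probability $\tau$; together with the fact that uniform index sampling gives $\EE_j[\nabla\smo_j(x^t)]=\nabla f(x^t)$ and $\EE_j[\alpha_j^t]=\overline\alpha^t$.

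First I would expand $\|x^{t+1}-x^*\|_2^2$ along $x^{t+1}=x^t-\frac\gamma n\sum_i (w_i^t)_{U_i^t}$, where $w_i^t\eqdef\nabla\smo_{j_i^t}(x^t)-\alpha_{j_i^t}^t+\overline\alpha^t$. Taking conditional expectation, the identities above yield unbiasedness up to the scaling $\tau$, namely $\EE[\tfrac1n\sum_i(w_i^t)_{U_i^t}]=\tau\nabla f(x^t)$, so the inner-product term becomes $-2\gamma\tau\langle\nabla f(x^t),x^t-x^*\rangle$. For the squared norm I would split into diagonal and cross terms; since the machines sample independently given $\mathcal F^t$, each of the $n(n-1)$ cross terms equals $\tau^2\|\nabla f(x^t)\|_2^2$, while the $n$ diagonal terms each contribute $\tau\,\EE_j\|w_j^t\|_2^2$. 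This is where the averaging pays off: the noisy diagonal part carries a factor $\tfrac1n$, whereas the cross terms assemble the deterministic signal $\tfrac{n-1}n\tau^2\|\nabla f(x^t)\|_2^2$. I would then bound the SAGA estimator variance $\EE_j\|w_j^t\|_2^2$ by the standard convexity-plus-smoothness (co-coercivity) inequality for the $\smo_j$, producing a term proportional to $f(x^t)-f(x^*)$ plus $\tfrac1N\Psi^t$.

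The crux is the recursion for $\EE[\Psi^{t+1}\mid\mathcal F^t]$, and here the shared-memory, multi-worker structure makes the accounting delicate. Because every worker can evaluate every $\nabla\smo_j$, all workers that happen to sample the same index write the \emph{same} fresh value, so there is no write conflict in value, only in occurrence. For a fixed index-block pair I would compute the marginal refresh probability $p=1-(1-\tfrac\tau N)^n$ (at least one of the $n$ independent workers samples that index and includes that block) and, by linearity, obtain $\EE[\Psi^{t+1}\mid\mathcal F^t]=(1-p)\Psi^t+p\sum_j\|\nabla\smo_j(x^t)-\nabla\smo_j(x^*)\|_2^2$. Bounding the last sum by smoothness gives a $(1-p)$ contraction of the table error plus another $f(x^t)-f(x^*)$ term. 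The main obstacle is controlling $p$ and tracking the resulting constants precisely: the table contraction must be read as roughly $\tfrac{n\tau}N$, and the small second-order deficit in $p$ together with the $\tfrac1N\Psi^t$ leakage from the variance term is exactly what forces the correction $\tfrac2{nNc}$ inside $\vartheta$.

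Finally I would add the two recursions with weight $c\gamma^2$ on the $\Psi$ part and force the coefficient of $f(x^t)-f(x^*)$ to be nonpositive, so those terms can be dropped. Matching this cancellation is what pins down the admissible stepsize $\gamma\le\frac1{L(3/n+\tau)}$ and the positivity of $c=\frac1n(\frac1{\gamma L}-\frac1n-\tau)$; here I expect to need the sharpened inner-product estimate combining strong convexity with co-coercivity, so that the cross-term $\|\nabla f(x^t)\|_2^2$ is absorbed rather than merely bounded by $2L(f(x^t)-f(x^*))$. What remains are two contraction sources: the iterate distance contracts at rate $1-\gamma\tau\mu$ via strong convexity, and the gradient table contracts at a rate governed by the refresh rate minus the variance leakage; taking the worse of the two yields $1-\vartheta$ with $\vartheta=\tau\min\{\gamma\mu,\tfrac nN-\tfrac2{nNc}\}$, and the stepsize condition guarantees $\vartheta\ge0$. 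Unrolling completes the proof.
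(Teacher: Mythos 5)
Your plan coincides, step for step, with the paper's own proof: your Lyapunov function $\|x^t-x^*\|_2^2+c\gamma^2\Psi^t$ is exactly the one used in the paper (cf.~\eqref{eq:saga_lyapunov}), your diagonal/cross-term expansion of $\EE\|x^{t+1}-x^*\|_2^2$ is algebraically identical to the paper's mean--variance decomposition \eqref{eq:variance_decompos}, \eqref{eq:variance_decomposition} combined with the variance bound of Lemma~\ref{lem:saga_variance}, and the final step --- choosing $c$ so that the coefficient of $f(x^t)-f(x^*)$ vanishes and taking the worse of the two contraction factors --- is verbatim the paper's argument. (Incidentally, no ``sharpened'' inner-product estimate is needed: the paper simply applies strong convexity to $\langle\nabla f(x^t),x^t-x^*\rangle$ and bounds $\|\nabla f(x^t)\|_2^2\le 2L\left(f(x^t)-f(x^*)\right)$ by smoothness.)

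The one place where you genuinely deviate is the recursion for $\Psi^t$, and there your accounting does not deliver the stated constant. The paper's Lemma~\ref{lem:saga_grads} treats every index $j$ as refreshed with probability exactly $\frac{n}{N}$, i.e.\ it ignores collisions among the independently sampled indices $j_1^t,\dots,j_n^t$; this is what produces the contraction factor $1-\frac{\tau n}{N}$ and hence the term $\frac{n}{N}-\frac{2}{nNc}$ in $\vartheta$. You instead use the exact collision-aware refresh probability $p=1-\left(1-\frac{\tau}{N}\right)^n$, which is strictly smaller than $\frac{n\tau}{N}$ for $n\ge 2$, and then claim that this deficit ``together with'' the variance leakage is what forces the correction $\frac{2}{nNc}$. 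That is a misattribution: in the paper the correction comes entirely from the leakage term $\frac{2\gamma^2\tau}{nN}\Psi^t$ supplied by Lemma~\ref{lem:saga_variance} when the Lyapunov coefficients are balanced; none of it is available to absorb a shortfall in the refresh probability. Carrying your plan through honestly, the balance condition reads $\frac{2\gamma^2\tau}{nN}+c\gamma^2(1-p)\le(1-\rho)c\gamma^2$, i.e.\ $\rho\le p-\frac{2\tau}{nNc}$, so you prove the theorem only with $\vartheta$ replaced by the strictly smaller $\min\left\{\gamma\tau\mu,\,p-\frac{2\tau}{nNc}\right\}$. The shortfall $\frac{n\tau}{N}-p\le\frac{n(n-1)\tau^2}{2N^2}$ is second order, so the conclusion of Corollary~\ref{cor:saga_sm} survives, but the exact rate of Theorem~\ref{th:saga_shared} does not follow from your argument as written; to recover it you must either adopt the paper's no-collision idealization (e.g.\ sample the $n$ indices without replacement, at the price of re-examining the cross-machine independence on which your cross-term computation relies) or accept the degraded constant. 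To your credit, your bookkeeping is the faithful one for genuinely independent index sampling; the discrepancy is inherited from the paper's Lemma~\ref{lem:saga_grads} rather than introduced by you.
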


As in Sec~\ref{sec:saga_dist}, the choice $\tau = \nicefrac{1}{n}$ yields a convergence rate which is, up to a constant factor, the same as the convergence rate of SAGA. Therefore, Algorithm~\ref{alg:saga} enjoys the desired parallel linear scaling, without the additional requirement of Assumption~\ref{as:zero_grads}. Corollary~\ref{cor:saga_sm} formalizes the claim.

\begin{corollary} \label{cor:saga_sm}
  Consider the setting from Theorem~\ref{th:saga_shared}.  Set $\tau = \nicefrac{1}{n}$ and $\gamma = \nicefrac{n}{5L}$. Then $c=\nicefrac{3}{n^2}$, $\rho = \min \left\{ \nicefrac{\mu}{5L}, \nicefrac{1}{3N}\right\}$ and the complexity of Algorithm~\ref{alg:saga} is $O\left(\max\{\nicefrac{L}{\mu}, N\}\log\nicefrac{1}{\varepsilon} \right)$.
\end{corollary}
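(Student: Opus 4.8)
The plan is to invoke Theorem~\ref{th:saga_shared} and substitute the prescribed values $\tau = \nicefrac{1}{n}$ and $\gamma = \nicefrac{n}{5L}$ into its formulas, then translate the resulting contraction factor into an iteration complexity.

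First I would verify that the chosen step size is admissible. The theorem requires $\gamma \le \frac{1}{L(\nicefrac{3}{n} + \tau)}$; with $\tau = \nicefrac{1}{n}$ the right-hand side equals $\frac{1}{L\cdot \nicefrac{4}{n}} = \nicefrac{n}{4L}$, and since $\gamma = \nicefrac{n}{5L} < \nicefrac{n}{4L}$ the hypothesis holds. I would then compute $c = \frac{1}{n}\left(\frac{1}{\gamma L} - \frac{1}{n} - \tau\right)$. Because $\gamma = \nicefrac{n}{5L}$ gives $\frac{1}{\gamma L} = \nicefrac{5}{n}$, we obtain $c = \frac{1}{n}\left(\frac{5}{n} - \frac{1}{n} - \frac{1}{n}\right) = \frac{1}{n}\cdot\frac{3}{n} = \nicefrac{3}{n^2}$; in particular $c>0$, as the theorem demands.

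Next I would evaluate the contraction factor $\vartheta = \tau\min\left\{\gamma\mu,\, \frac{n}{N} - \frac{2}{nNc}\right\}$, which plays the role of $\rho$ in the corollary. The first argument of the minimum is $\gamma\mu = \nicefrac{n\mu}{5L}$. For the second, plugging $c = \nicefrac{3}{n^2}$ yields $\frac{2}{nNc} = \frac{2n}{3N}$, whence $\frac{n}{N} - \frac{2n}{3N} = \frac{n}{3N}$. Multiplying the whole minimum by $\tau = \nicefrac{1}{n}$ cancels the common factor $n$ and gives $\rho \eqdef \vartheta = \min\left\{\nicefrac{\mu}{5L},\, \nicefrac{1}{3N}\right\}$, exactly as claimed; this is nonnegative, confirming $\vartheta\ge 0$.

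Finally, Theorem~\ref{th:saga_shared} guarantees $\EE\|x^t - x^*\|_2^2 \le (1-\rho)^t\big(\|x^0-x^*\|_2^2 + c\gamma^2\Psi^0\big)$. Using the standard bound $(1-\rho)^t \le e^{-\rho t}$, the right-hand side drops below $\varepsilon$ times the initial quantity once $t \ge \frac{1}{\rho}\log\frac{1}{\varepsilon}$, so the iteration complexity is $O\!\left(\frac{1}{\rho}\log\frac{1}{\varepsilon}\right)$. Since $\frac{1}{\rho} = \max\left\{\nicefrac{5L}{\mu},\, 3N\right\} = O\!\left(\max\{\nicefrac{L}{\mu}, N\}\right)$, this is $O\!\left(\max\{\nicefrac{L}{\mu}, N\}\log\nicefrac{1}{\varepsilon}\right)$. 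The entire argument is substitution followed by elementary algebra; the only point requiring genuine care is checking the step-size admissibility condition, since violating it would void the application of the theorem.
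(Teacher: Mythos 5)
Your proposal is correct and follows exactly the route the paper intends: the corollary is an immediate consequence of Theorem~\ref{th:saga_shared} obtained by substituting $\tau=\nicefrac{1}{n}$ and $\gamma=\nicefrac{n}{5L}$, verifying the step-size condition, computing $c=\nicefrac{3}{n^2}$ and $\vartheta=\min\{\nicefrac{\mu}{5L},\nicefrac{1}{3N}\}$, and converting the linear contraction into an ${\cal O}\left(\max\{\nicefrac{L}{\mu},N\}\log\nicefrac{1}{\varepsilon}\right)$ iteration complexity. All the arithmetic checks out, including the admissibility check $\nicefrac{n}{5L}<\nicefrac{n}{4L}$ that the paper leaves implicit.
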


\section{Beyond Assumption~\ref{as:zero_grads} and Regularization \label{sec:sega}}
For this section only, let us consider a regularized objective of the form
\begin{align} \label{eq:problem_sega}
\textstyle    \min_{x\in \RR^d} f(x) \eqdef \frac{1}{n}\sum_{i=1}^n f_i(x) + R(x), 
\end{align}
where $R$ is a closed convex regularizer such that its proximal operator is computable:
$   \prox_{\gamma R}(x) \eqdef \argmin_{y} \left\{R(y) + \frac{1}{2\gamma}\|y - x\|^2_2  \right\}.
$ In this section we propose ISEGA:  an independent sampling variant of SEGA~\cite{SEGA}. We do this in order to both i) avoid Assumption~\ref{as:zero_grads} (while keeping linear convergence) and ii) allow for $R$. Original SEGA learns gradients $\nabla f(x^t)$ from sketched gradient information via the so called sketch-and-project process~\cite{gower2015randomized}, constructing a vector sequence $h^t$. In ISEGA on each machine $i$ we iteratively construct a sequence of vectors $h^t_i$ which play the role of estimates of $\nabla f_i(x^t)$. This is done via the following rule: 
\begin{equation} \label{eq:sega_h}
h_i^{t+1} = h_i^t + (\nabla f_i(x^t)- h_i^t)_{U_i^t}.
\end{equation}
The key idea is again that these vectors are created from random blocks independently sampled on each machine. Next, using $h^t$, SEGA builds an unbiased gradient estimator $g_i^t$ of $\nabla f_i(x^t)$ as follows:
\begin{equation} \label{eq:sega_g}
\textstyle g_i^t = h_i^t + \frac{1}{\tau} (\nabla f_i(x^t) - h_i^t)_{U_i^t}. 
\end{equation}
Then, we average the vectors $g_i^t$ and take a proximal step.

Unlike coordinate descent, SEGA (or ISEGA) is not limited to separable proximal operators since, as follows from our analysis 
, $h_i^t\to \nabla f_i(x^*)$. Therefore, ISEGA can be seen as a variance reduced version of IBCD for problems with non-separable regularizers. The price to be paid for dropping Assumption~\ref{as:zero_grads} and having more general objective~\eqref{eq:problem_sega} is that updates from each worker are dense, in contrast to those in Algorithm~\ref{alg:cd}.

In order to be consistent with the rest of the paper, we only develop a simple variant of  ISEGA (Algorithm~\ref{alg:sega}) in which we consider block coordinate sketches with uniform probabilities and non-weighted Euclidean metric (i.e.\ $B=I$ in notation of~\cite{SEGA}). It is possible to develop the theory in full generality as in~\cite{SEGA}. However, we do not do this for the sake of simplicity. 

\begin{algorithm}[h]
  \caption{ISEGA}\label{alg:sega}
  \label{alg:sega}
\begin{algorithmic}[1]
\STATE{\bfseries Input: }{$x^0\in \RR^d$, initial gradient estimates $h_1^0, \dotsc, h_n^0\in \RR^d$, partition of $\RR^d$ into $m$ blocks $u_1,\dotsc, i_m$, ratio of blocks to be sampled $\tau$, stepsize $\gamma$,  \# parallel units $n$}
  \FOR{$t=0,1,\dotsc$}
    \FOR{$i=1,\dotsc,n$ in parallel}
        \STATE Sample independently and uniformly a subset of $\tau m$ blocks $U_i^t$
        \STATE $g_i^t = h_i^t + \frac{1}{\tau} (\nabla f_i(x^t) - h_i^t)_{U_i^t}$ 
        \STATE  $h_i^{t+1} = h_i^t + \tau (g_i^t - h^t)$  
    \ENDFOR
    \STATE $x^{t+1} = \proxR\left( x^t - \gamma \frac{1}{n}\sumin g_i^{t} \right)$  
  \ENDFOR
\end{algorithmic}
\end{algorithm}

We next present the convergence rate of ISEGA (Algorithm~\ref{alg:sega}). 

\begin{theorem}\label{thm:sega}
Suppose Assumption~\ref{as:smooth_sc} holds. Algorithm~\ref{alg:sega} with 
$ \gamma = \min \{ \frac{1}{4L\left( 1+\frac{1}{n\tau }\right)}, \frac{1}{\frac{\mu}{\tau}+ \frac{4L}{n\tau}} \}
$
satisfies
$
 \EE[\|x^t - x^*\|^2_2] \leq (1-\gamma\mu)^t\Lgen^0,
$
where $\Lgen^0 = \|x^0 - x^*\|^2_2 + \frac{ \gamma}{2L\tau n} \sum_{i=1}^n\|h^0 - \nabla f(x^*)\|^2_2$.
\end{theorem}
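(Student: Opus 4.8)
The plan is to establish the one-step inequality $\EE[\Lgen^{t+1}\mid x^t, h_1^t,\dots,h_n^t] \le (1-\gamma\mu)\Lgen^t$ for the Lyapunov function $\Lgen^t \eqdef \|x^t-x^*\|_2^2 + \sigma\sumin\|h_i^t - \nabla f_i(x^*)\|_2^2$ with $\sigma = \frac{\gamma}{2L\tau n}$; taking total expectation and unrolling then gives the claimed geometric decay of $\EE[\|x^t-x^*\|_2^2]$ with initial value $\Lgen^0$. The starting observation is that, conditioned on the current state, each uniformly random block set $U_i^t$ of size $\tau m$ contains any fixed block with probability $\tau$, so $\EE[(v)_{U_i^t}] = \tau v$ for every $v\in\RR^d$. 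Substituting into~\eqref{eq:sega_g} yields $\EE[g_i^t]=\nabla f_i(x^t)$, hence the aggregate $g^t\eqdef\frac1n\sumin g_i^t$ is an unbiased estimator of $\nabla f(x^t)$, where here $f=\frac1n\sumin f_i$ denotes the smooth part.

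For the iterate block I would write the step as $x^{t+1}=\proxR(x^t-\gamma g^t)$ and combine nonexpansiveness of $\proxR$ with the fixed-point identity $x^*=\proxR(x^*-\gamma\nabla f(x^*))$ (equivalent to first-order optimality $0\in\nabla f(x^*)+\partial R(x^*)$) to get $\|x^{t+1}-x^*\|_2^2\le\|x^t-x^*-\gamma(g^t-\nabla f(x^*))\|_2^2$. Expanding and taking conditional expectation, unbiasedness turns the cross term into $-2\gamma\<\nabla f(x^t)-\nabla f(x^*),x^t-x^*>$ and leaves the second moment $\EE\|g^t-\nabla f(x^*)\|_2^2=\|\nabla f(x^t)-\nabla f(x^*)\|_2^2+\EE\|g^t-\nabla f(x^t)\|_2^2$. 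The decisive computation is the variance term: a per-block evaluation gives $\EE\|g_i^t-\nabla f_i(x^t)\|_2^2=\frac{1-\tau}{\tau}\|\nabla f_i(x^t)-h_i^t\|_2^2$, and since the $U_i^t$ are drawn \emph{independently} across machines the variances add, so $\EE\|g^t-\nabla f(x^t)\|_2^2=\frac{1-\tau}{\tau n^2}\sumin\|\nabla f_i(x^t)-h_i^t\|_2^2$. Independence makes the variances, rather than the deviations, add — a factor-$n$ gain over a sample shared across machines — and this is precisely what will make $\tau=\nicefrac1n$ affordable.

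The memory block is handled by the same per-block argument applied to~\eqref{eq:sega_h}, giving $\EE\|h_i^{t+1}-\nabla f_i(x^*)\|_2^2=\tau\|\nabla f_i(x^t)-\nabla f_i(x^*)\|_2^2+(1-\tau)\|h_i^t-\nabla f_i(x^*)\|_2^2$. Weighting these by $\sigma$, adding the iterate bound, and splitting $\|\nabla f_i(x^t)-h_i^t\|_2^2\le2\|\nabla f_i(x^t)-\nabla f_i(x^*)\|_2^2+2\|h_i^t-\nabla f_i(x^*)\|_2^2$ in the variance, I would organize every term into three families: the stale-memory norms $\sumin\|h_i^t-\nabla f_i(x^*)\|_2^2$, the gradient-difference norms $\sumin\|\nabla f_i(x^t)-\nabla f_i(x^*)\|_2^2$, and the inner product $\<\nabla f(x^t)-\nabla f(x^*),x^t-x^*>$. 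To remove the gradient-difference family I would use co-coercivity of each convex $L$-smooth $f_i$, which after averaging gives $\frac1n\sumin\|\nabla f_i(x^t)-\nabla f_i(x^*)\|_2^2\le L\<\nabla f(x^t)-\nabla f(x^*),x^t-x^*>$, together with $\|\nabla f(x^t)-\nabla f(x^*)\|_2^2\le\frac1n\sumin\|\nabla f_i(x^t)-\nabla f_i(x^*)\|_2^2$ from Jensen; one half of the inner-product budget absorbs these, and the other half is converted to $-\gamma\mu\|x^t-x^*\|_2^2$ via $\mu$-strong convexity of $f$.

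The main obstacle is the simultaneous calibration of $\sigma$ and $\gamma$ so that both leftover families close into the single factor $(1-\gamma\mu)$. The stale-memory coefficient must obey $\frac{2\gamma^2(1-\tau)}{\tau n^2}+\sigma(1-\tau)\le(1-\gamma\mu)\sigma$, which with $\sigma=\frac{\gamma}{2L\tau n}$ reduces to $\gamma\le\frac{\tau}{\mu+\nicefrac{4L(1-\tau)}{n}}$ — implied by the second branch of the stated stepsize; the gradient-difference coefficient must satisfy $L\bigl[\gamma^2\bigl(1+\frac{2(1-\tau)}{\tau n}\bigr)+\sigma\tau n\bigr]\le\gamma$, which with $\sigma\tau n=\frac{\gamma}{2L}$ reduces to $\gamma\le\frac{1}{2L(1+\nicefrac{2(1-\tau)}{\tau n})}$ — implied by the first branch. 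Both constraints holding at once is exactly what forces $\gamma$ to be the minimum of the two expressions in the theorem, and once they hold all non-$\Lgen^t$ terms cancel, yielding the stated contraction; my constants are slightly looser than the paper's because of the crude $2\|a\|^2+2\|b\|^2$ split, which a more careful estimate tightens.
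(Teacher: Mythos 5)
Your proof is correct, and its skeleton coincides with the paper's: the same prox-nonexpansiveness step against the fixed point $x^*=\proxR(x^*-\gamma\nabla f(x^*))$, the same Lyapunov function $\|x^t-x^*\|_2^2+\tfrac{\gamma}{2L\tau n}\sumin\|h_i^t-\nabla f_i(x^*)\|_2^2$ (the paper writes it as $\gamma\seganu\tfrac1n\sumin\|h_i^t-\nabla f_i(x^*)\|_2^2$ with $\seganu=\tfrac{1}{2L\tau}$), the same memory recursion \eqref{eq:sega_h_bound}, and the same two-constraint calibration of $\gamma$ matching the two branches of the stated stepsize. Where you genuinely depart is in the central second-moment estimate. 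The paper's lemma \eqref{eq:sega_g_bound} splits $g^t-\nabla f(x^*)$ into an $h$-part plus a gradient-part, pays a factor $2$ via $\|a+b\|_2^2\le2\|a\|_2^2+2\|b\|_2^2$, and must then control the non-vanishing cross terms of the gradient-part with Young's inequality, which is where its $(n-1)$ term comes from. You instead center first: the exact bias--variance identity $\EE\|g^t-\nabla f(x^*)\|_2^2=\|\nabla f(x^t)-\nabla f(x^*)\|_2^2+\EE\|g^t-\nabla f(x^t)\|_2^2$, the exact per-machine variance $\tfrac{1-\tau}{\tau}\|\nabla f_i(x^t)-h_i^t\|_2^2$, and cross-machine independence make every cross term vanish identically, so your only lossy step is the later $2$-split of $\|\nabla f_i(x^t)-h_i^t\|_2^2$. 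Comparing coefficients on the gradient-difference family, you get $\tfrac1n+\tfrac{2(1-\tau)}{\tau n^2}$ per machine against the paper's $\tfrac2n+\tfrac{2(1-\tau)}{\tau n^2}-\tfrac{2}{n^2}$, so your bound is in fact slightly \emph{tighter}, not looser as your closing remark guesses---both proofs pay a factor-$2$ split somewhere, but yours pays it in a cheaper place. The absorption step also differs mildly: you halve the inner-product budget and use co-coercivity of each $f_i$, while the paper lower-bounds the inner product by $\tfrac{\mu}{2}\|x^t-x^*\|_2^2$ plus a Bregman divergence and then applies \eqref{eq:sega_smoothness}; these are interchangeable and both yield the contraction factor $(1-\gamma\mu)$. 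Finally, your two stepsize conditions, $\gamma\le\tfrac{1}{2L(1+2(1-\tau)/(\tau n))}$ and $\gamma\le\tfrac{\tau}{\mu+4L(1-\tau)/n}$, are indeed implied by the first and second branches of the theorem's $\gamma$ respectively, so the argument closes as claimed.
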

Note that if the condition number of the problem is not too small so that $n= {\cal O}\left(L/\mu\right)$ (which is usually the case in practice), ISEGA scales linearly in the parallel setting. In particular, when doubling the number of workers, each worker can afford to evaluate only half of  the block partial derivatives while keeping the same convergence speed. Moreover, setting $\tau = \nicefrac1n$, the rate corresponds, up to a constant factor, to the rate of gradient descent. Corollary~\ref{cor:sega} states the result.

\begin{corollary}\label{cor:sega}
Consider the setting from Theorem~\ref{thm:sega}. Suppose that $\nicefrac{L}{\mu}\geq n$ and choose $\tau = \nicefrac1n$. Then, complexity of Algorithm~\ref{alg:sega} is ${\cal O} (\nicefrac{L}{\mu}\log\nicefrac1\epsilon)$.
\end{corollary}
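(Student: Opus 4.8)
The plan is to specialize the stepsize and the contraction factor from Theorem~\ref{thm:sega} to the regime $\tau = \nicefrac{1}{n}$ and $\nicefrac{L}{\mu}\geq n$, and then convert the resulting per-iteration contraction into an iteration-complexity bound. First I would substitute $\tau = \nicefrac{1}{n}$ into the two arguments of the minimum defining $\gamma$. Since $n\tau = 1$, the first argument becomes $\frac{1}{4L\left(1 + \nicefrac{1}{n\tau}\right)} = \frac{1}{8L}$, and the second becomes $\frac{1}{\nicefrac{\mu}{\tau} + \nicefrac{4L}{n\tau}} = \frac{1}{\mu n + 4L}$. Thus $\gamma = \min\left\{\frac{1}{8L}, \frac{1}{\mu n + 4L}\right\}$.

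Next I would determine which of the two candidates is the actual minimum. The inequality $\frac{1}{8L} \leq \frac{1}{\mu n + 4L}$ is equivalent to $\mu n + 4L \leq 8L$, i.e.\ $\mu n \leq 4L$, which holds under the standing hypothesis $\nicefrac{L}{\mu}\geq n$ (indeed $\mu n \leq L \leq 4L$). Hence the first branch is active and $\gamma = \frac{1}{8L}$.

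With this stepsize the contraction factor from Theorem~\ref{thm:sega} is $1 - \gamma\mu = 1 - \frac{\mu}{8L}$. To reach an $\epsilon$-accurate solution it suffices that $\left(1 - \frac{\mu}{8L}\right)^t \Lgen^0 \leq \epsilon$; taking logarithms and using $\log(1-x)\leq -x$ yields $t \geq \frac{8L}{\mu}\log\frac{\Lgen^0}{\epsilon}$. Treating $\Lgen^0$ as a fixed problem-dependent constant absorbed into the logarithm gives the claimed complexity $t = {\cal O}\left(\nicefrac{L}{\mu}\log\nicefrac{1}{\epsilon}\right)$.

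The computation is essentially pure substitution, so the only genuine decision point — and it is a minor one — is verifying which branch of the $\min$ is active. I expect this to be the main (slight) obstacle, since it is exactly here that the assumption $\nicefrac{L}{\mu}\geq n$ is used: without it the $\mu n$ term in the second argument could dominate, forcing $\gamma = \frac{1}{\mu n + 4L}$ and degrading the rate to ${\cal O}\left((n + \nicefrac{L}{\mu})\log\nicefrac{1}{\epsilon}\right)$. The hypothesis $\nicefrac{L}{\mu}\geq n$ is precisely what guarantees the gradient-descent-like $\nicefrac{L}{\mu}$ dependence.
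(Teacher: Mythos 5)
Your proposal is correct and matches the paper's (implicit) argument: the corollary is a direct substitution of $\tau = \nicefrac{1}{n}$ into Theorem~\ref{thm:sega}, where the hypothesis $\nicefrac{L}{\mu}\geq n$ ensures $\mu n + 4L \leq 8L$ so that $\gamma = \frac{1}{8L}$ and the contraction factor $1-\gamma\mu = 1 - \frac{\mu}{8L}$ yields ${\cal O}(\nicefrac{L}{\mu}\log\nicefrac{1}{\epsilon})$ iterations. Your identification of where the hypothesis is needed, and of the degraded rate ${\cal O}\left((n+\nicefrac{L}{\mu})\log\nicefrac{1}{\epsilon}\right)$ without it, is also accurate.
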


\begin{remark}
Parallel implementation Algorithm~\ref{alg:sega} would be to always send $(\nabla f_i(x^k))_{U_i^t}$ to the server; which keeps updating vector $h^t$ and takes the prox step. 
\end{remark}

\section{Experiments}
In this section, we numerically verify our theoretical claims. Recall that there are various settings where it is possible to make practical experiments (see Sec~\ref{sec:practical}), however, we do not restrain ourselves to any of them in order to deliver as clear a message as possible.

Due to space limitations, we only present a small fraction of the experiments here. A full and exhaustive comparison, together with the complete experiment setup description, is presented in Sec~\ref{sec:app:experiments} of the supplementary material.

In the first experiment presented here, we compare SAGA against ISAGA in a shared data setup (Algorithm~\ref{alg:saga}) for various values of $n$ with $\tau = \nicefrac{1}{n}$ in order to demonstrate linear scaling. We consider logistic regression problem on LibSVM data~\cite{chang2011libsvm}. The results (Figure~\ref{fig:saga_main}) corroborate our theory: indeed, setting $n\tau=1$ does not lead to a decrease in the convergence rate when compared to the original SAGA.

The next experiment (Figure~\ref{fig:sega_main}) supports an analogous claim for ISEGA (Algorithm~\ref{alg:sega}). We run the method for several  $(n,\tau)$ pairs for which $n\tau=1$; on logistic regression problems and LibSVM data. We also plot convergence of gradient descent with the analogous stepsize. As our theory predicts, all the methods exhibit almost the same convergence rate.\footnote{We have chosen the stepsize $\gamma = \nicefrac{1}{2L}$ for GD, as this is the baseline to Algorithm~\ref{alg:sega} with zero variance. One can in fact set $\gamma = \nicefrac{1}{L}$ for GD and get $2\times$ faster convergence. However, this is only a constant factor. } Note that for $n=100$, Algorithm~\ref{alg:sega} throws away $99\%$ of partial derivatives while keeping the same convergence speed as GD, which justifies the title of the paper.

\begin{figure}[t]
\centering
\begin{minipage}{0.25\textwidth}
  \centering
\includegraphics[width =  \textwidth ]{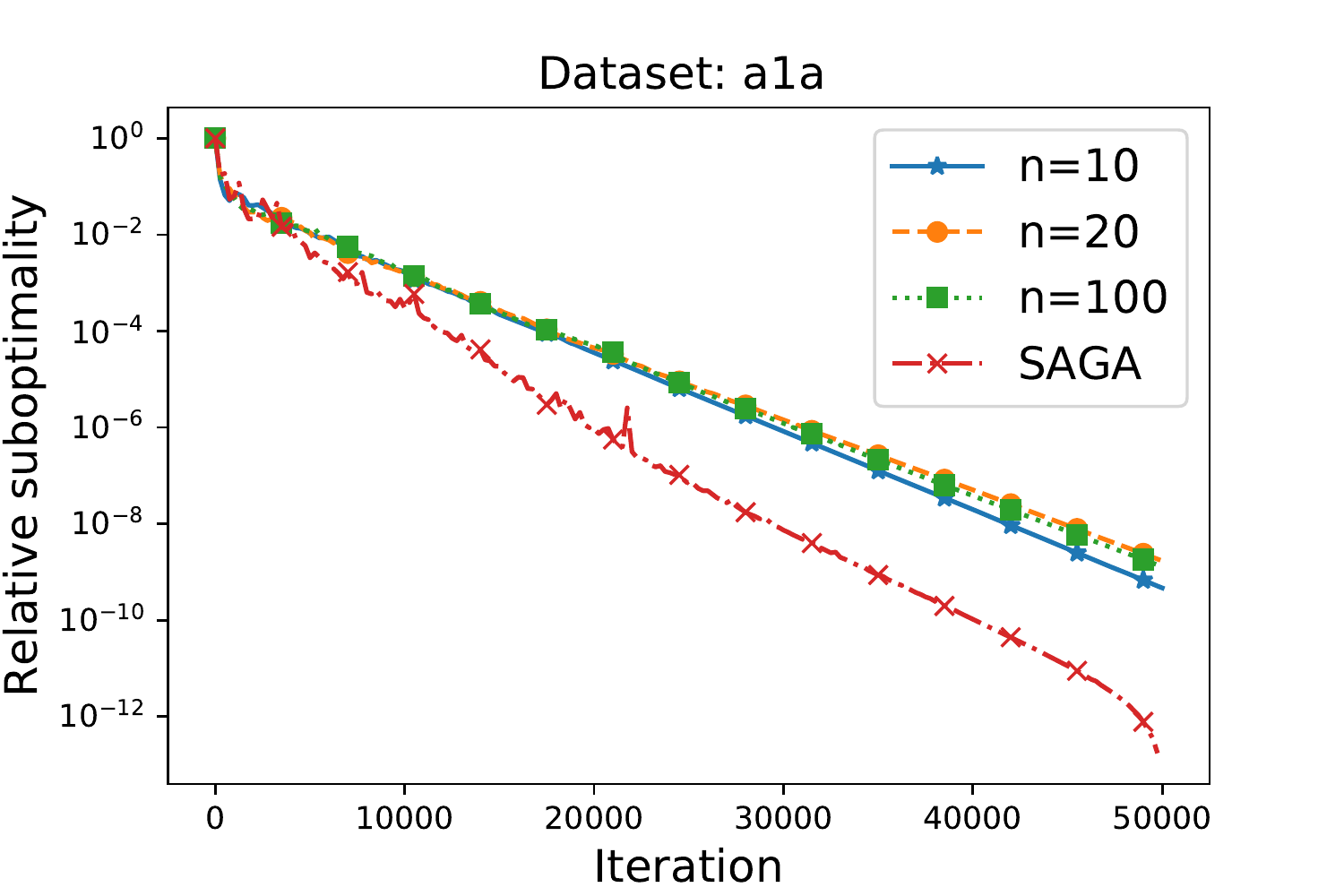}
\end{minipage}%
\begin{minipage}{0.25\textwidth}
  \centering
\includegraphics[width =  \textwidth ]{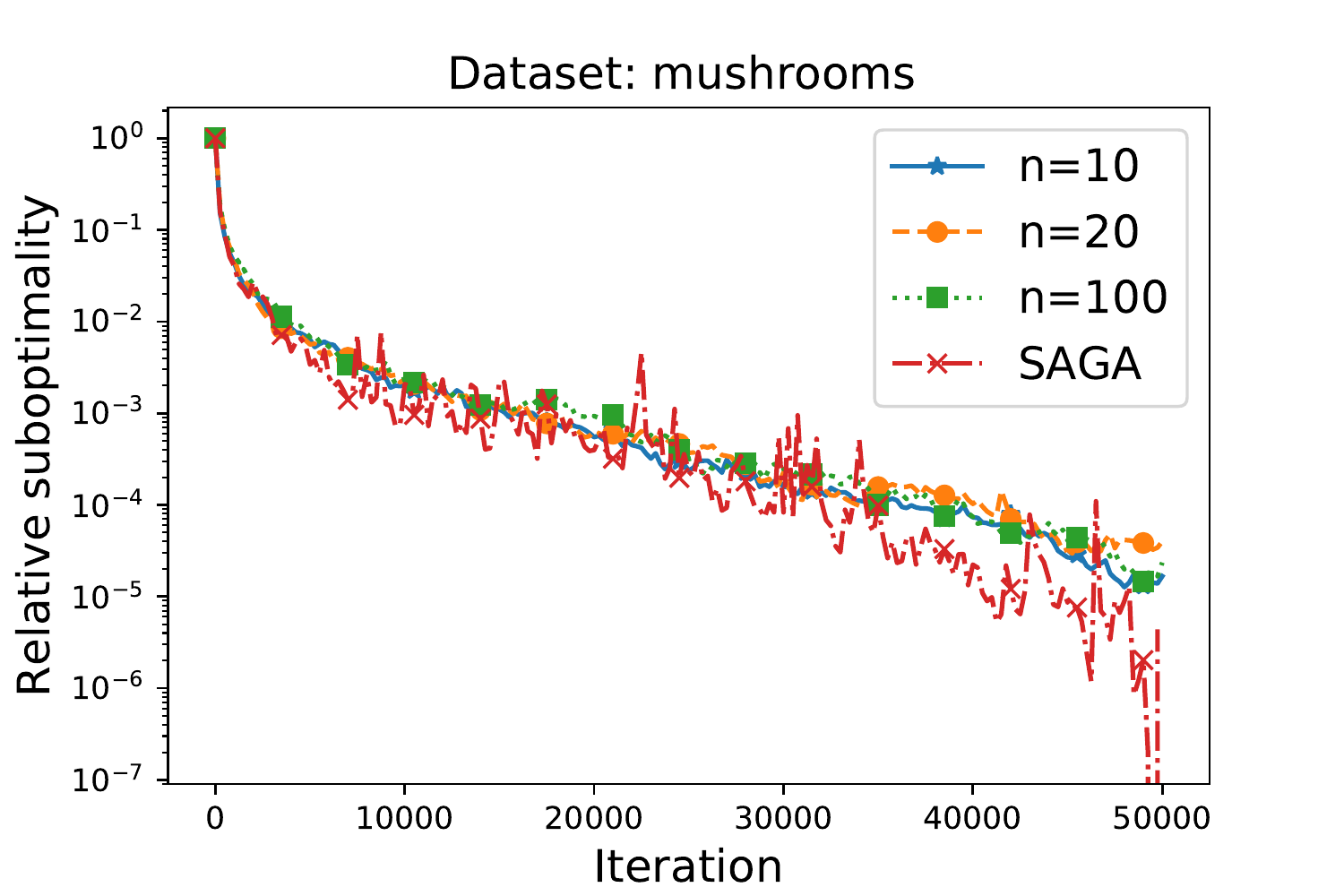}
\end{minipage}%
\begin{minipage}{0.25\textwidth}
  \centering
\includegraphics[width =  \textwidth ]{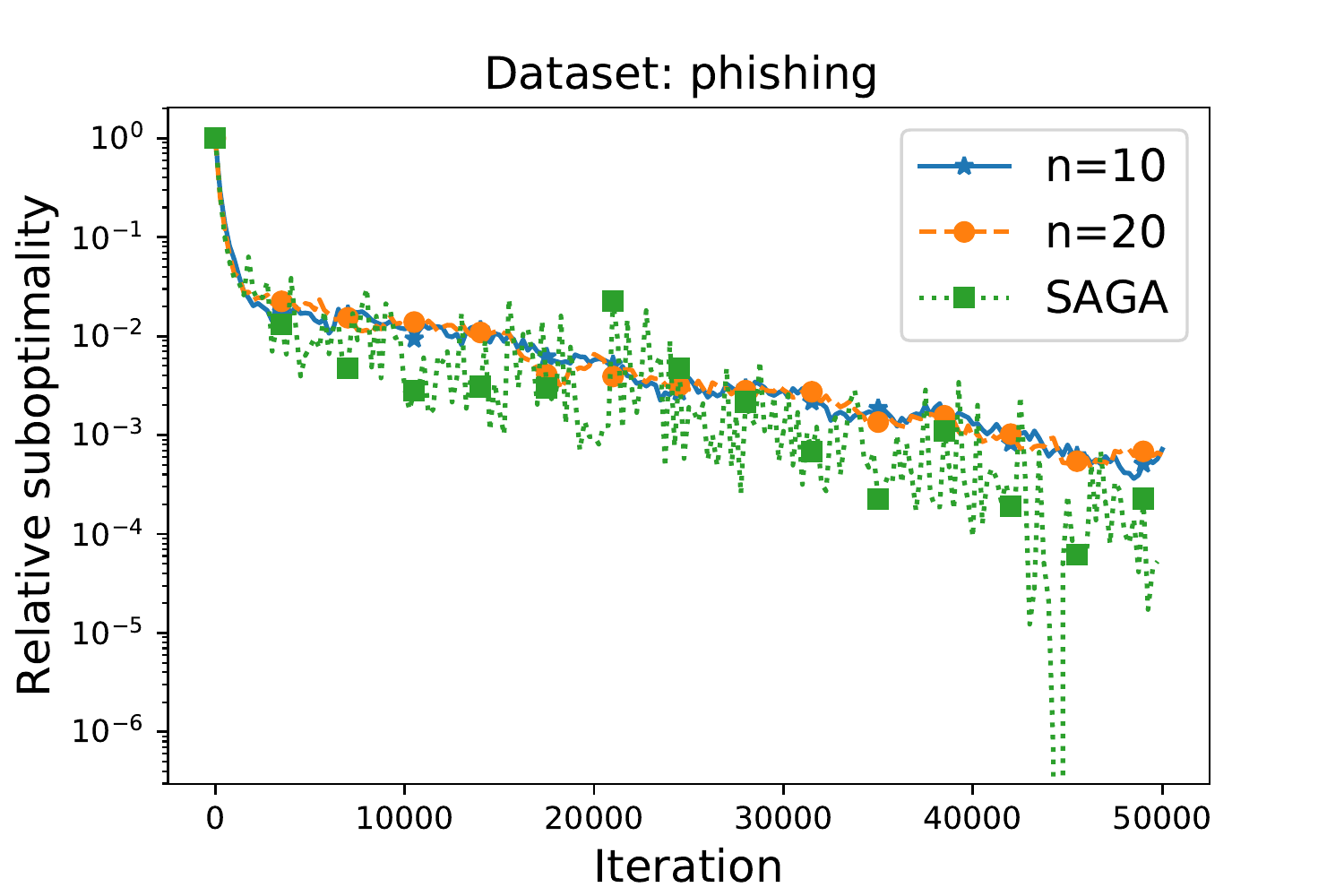}
\end{minipage}%
\begin{minipage}{0.25\textwidth}
  \centering
\includegraphics[width =  \textwidth ]{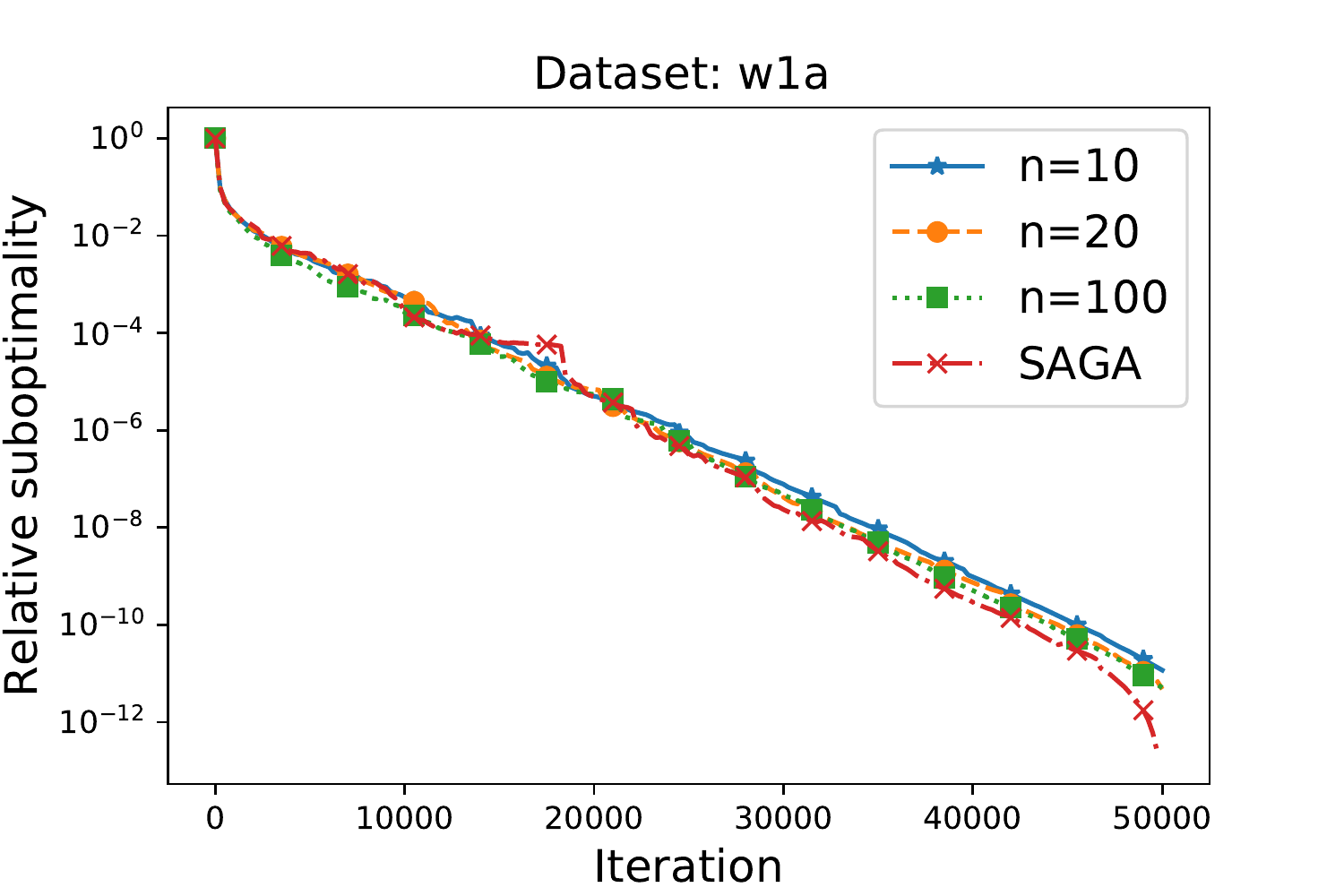}
\end{minipage}%
\caption{Comparison of SAGA and Algorithm~\ref{alg:saga} for various values of $n$ (number of workers) and $\tau=n^{-1}$ on LibSVM datasets. Stepsize $\gamma = \frac{1}{L(3n^{-1}+\tau)}$ is chosen in each case. } \label{fig:saga_main}
\end{figure}

\begin{figure}[ht]
\centering
\begin{minipage}{0.25\textwidth}
  \centering
\includegraphics[width =  \textwidth ]{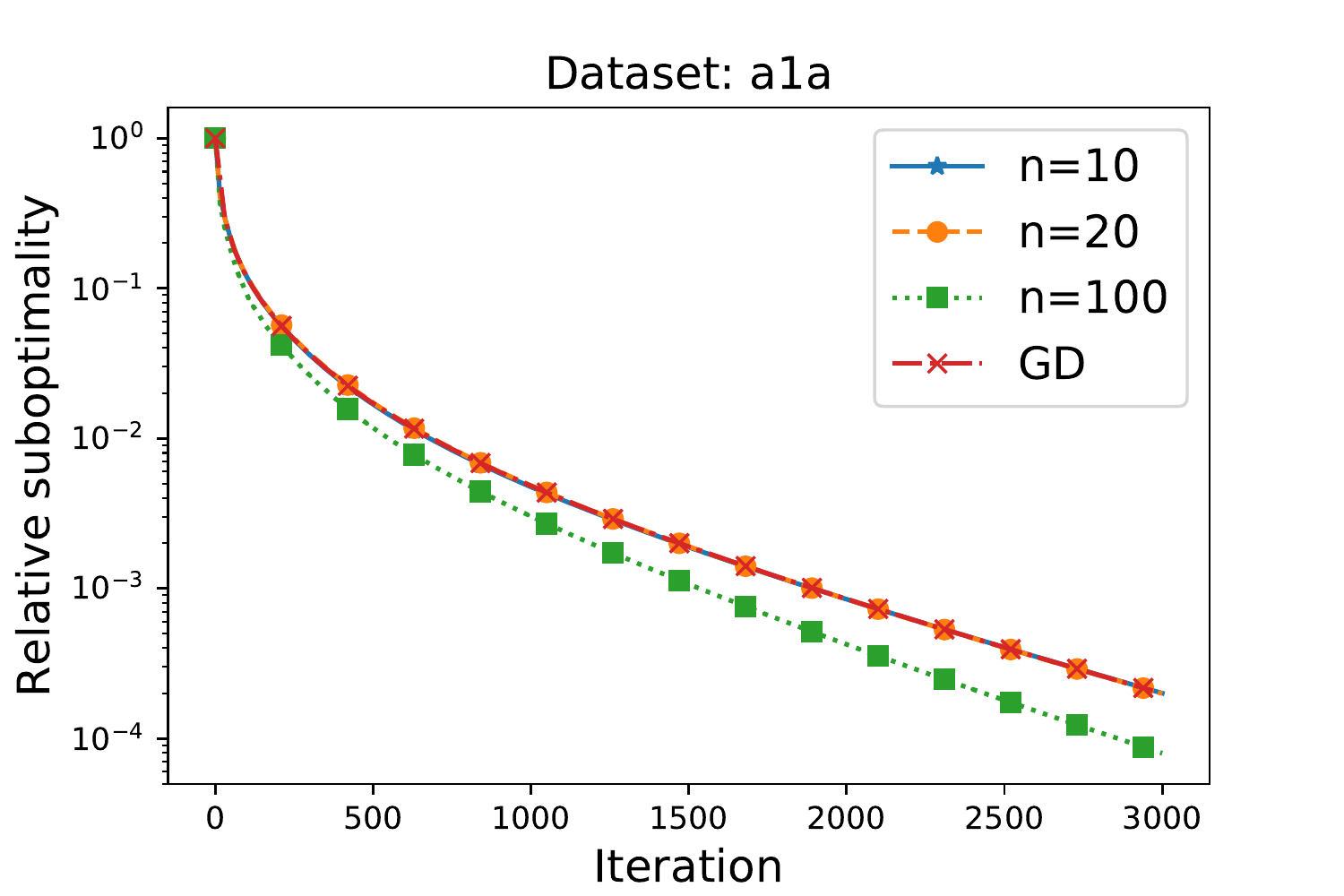}
\end{minipage}%
\begin{minipage}{0.25\textwidth}
  \centering
\includegraphics[width =  \textwidth ]{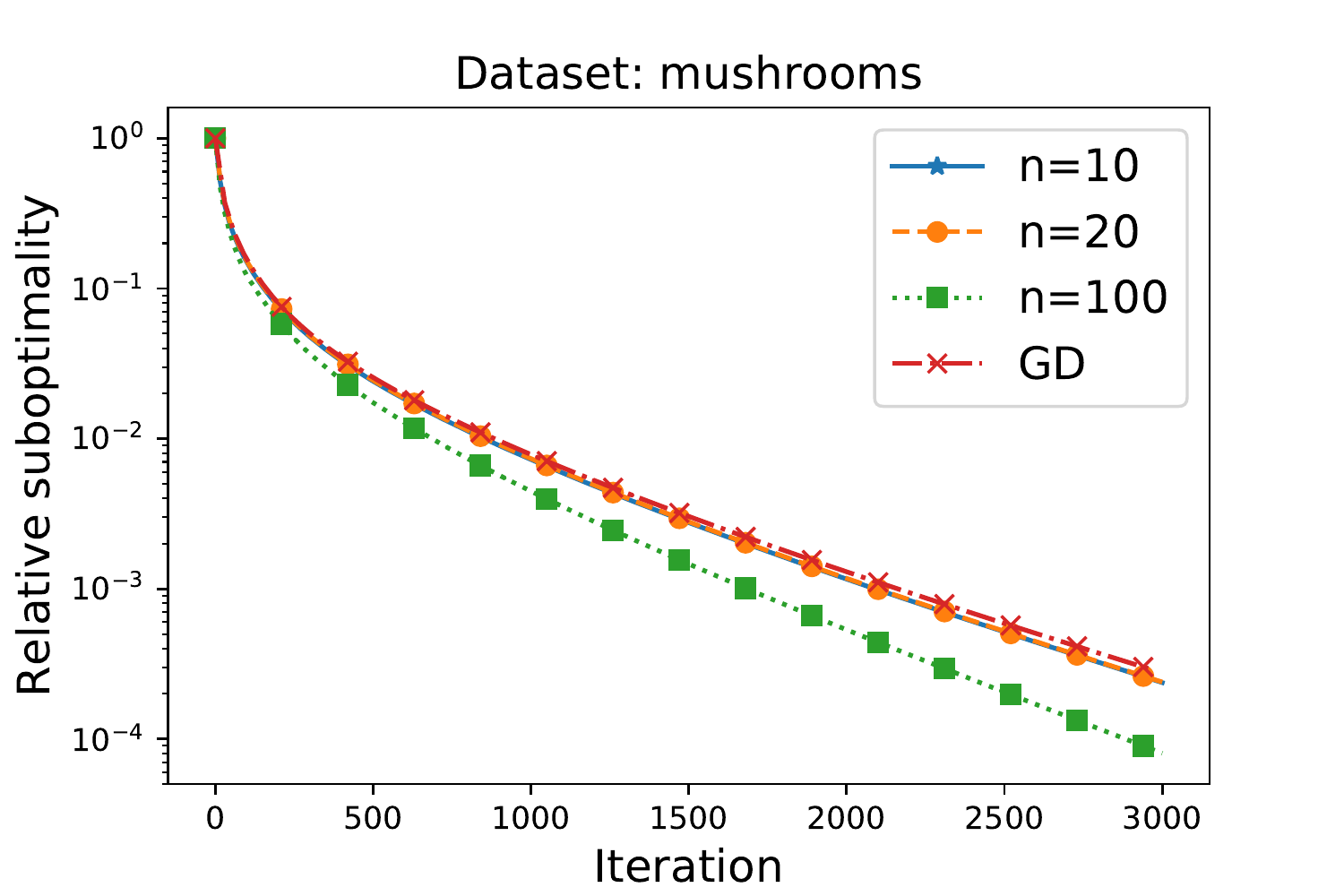}
\end{minipage}%
\begin{minipage}{0.25\textwidth}
  \centering
\includegraphics[width =  \textwidth ]{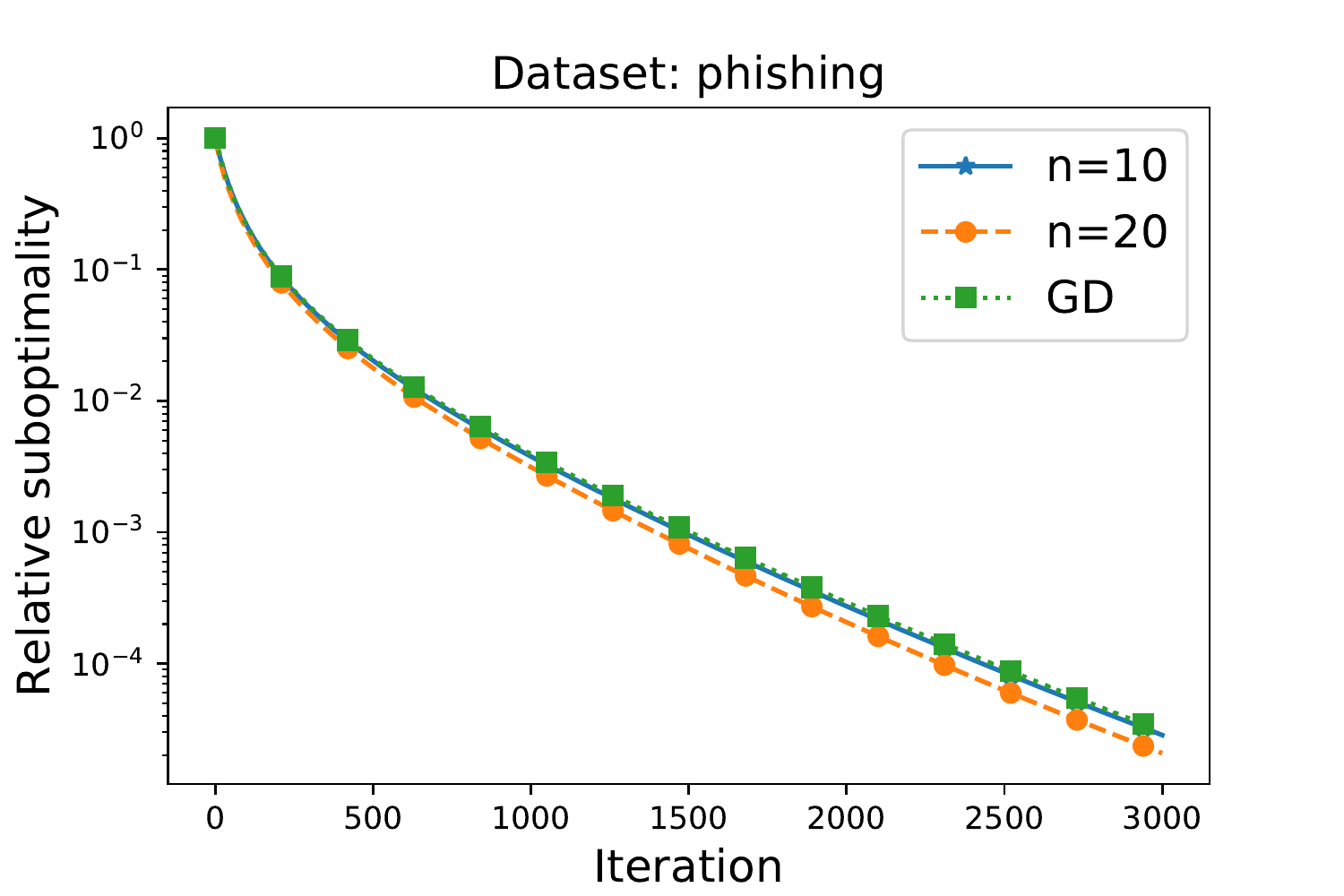}
\end{minipage}%
\begin{minipage}{0.25\textwidth}
  \centering
\includegraphics[width =  \textwidth ]{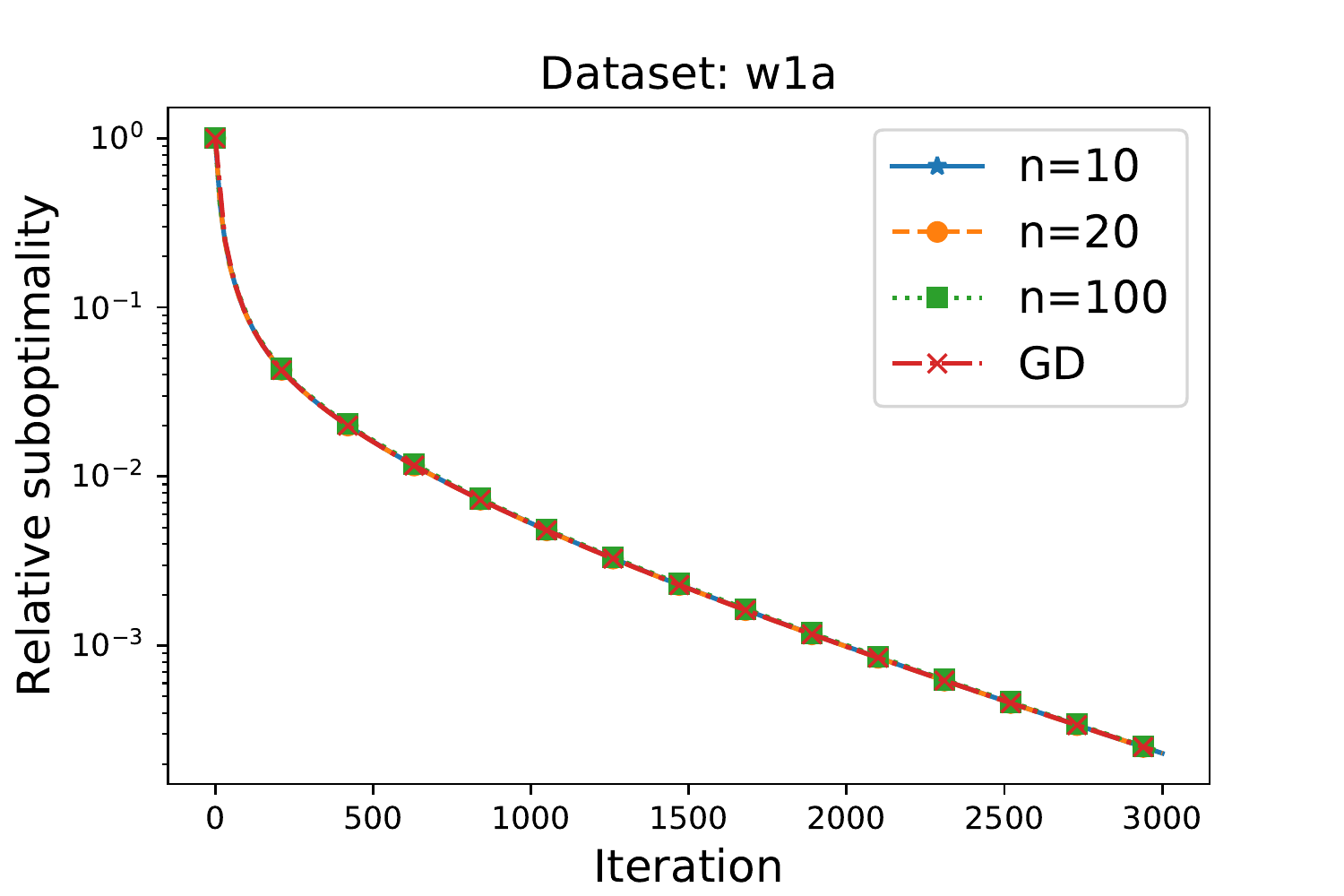}
\end{minipage}%
\caption{Comparison of Algorithm~\ref{alg:sega} for various $(n,\tau)$ such that $n\tau=1$ and GD on LibSVM datasets. Stepsize $1/(L\left(1+\frac{1}{n\tau}\right)) $ was chosen for Algorithm~\ref{alg:sega} and $\frac1{2L}$ for GD.}\label{fig:sega_main}
\end{figure}

\clearpage

 \bibliographystyle{plain} 
 \bibliography{99}
 
\newpage
\onecolumn
\appendix

\section{Table of Frequently Used Notation~\label{sec:table}}
\begin{table}[!h]
\begin{center}

{
\begin{tabular}{|c|l|c|}
\hline
\multicolumn{3}{|c|}{{\bf General} }\\
\hline
  $x^*$ & Optimal solution of the optimization problem& \\
  $n$ & Number of parallel workers/machines & Sec.~\ref{sec:notation} \\
  $\tau$ & Ratio of coordinate blocks to be sampled by each machine & Sec.~\ref{sec:notation}\\
  $d$ & Dimensionality of space $x\in \RR^d$ & Sec.~\ref{sec:notation} \\
   $m$ & Number of coordinate blocks  & Sec.~\ref{sec:notation} \\
   $f_i$ & Part of the objective owned by machine $i$ & \eqref{eq:problem}\\
   $L$ & Each $f_i$ is $L$ smooth & As.~\ref{as:smooth_sc} and \eqref{eq:smooth}\\
   $\mu$ & $f$ is $\mu$ strongly convex & As.~\ref{as:smooth_sc} and \eqref{eq:strong_convex}\\
   $U_i^t$ & Subset of blocks sampled at iteration $t$ and worker $i$ & \\
    $\gamma$ & Stepsize & \\
    $g$ & Unbiased gradient estimator & \\
     \hline
     \multicolumn{3}{|c|}{{\bf SAGA} }\\
 \hline
  $\alpha_j$ & Delayed estimate of $j$-th objective & \eqref{eq:saga_alpha_dist},  \eqref{eq:saga_alpha_sm}  \\
  $N$ & Finite sum size for shared data problem & \eqref{eq:problem_saga_sm} \\
  $l$ & Number of datapoints per machine in distributed setup  & \eqref{eq:problem_saga_dist} \\
  $\cL^t $ & Lyapunov function & \eqref{eq:saga_lyapunov}\\
 \hline
\multicolumn{3}{|c|}{{\bf SGD } }\\
 \hline
  $g_i^t$ & Unbiased stochastic gradient; $\EE g_i^t = \nabla f_i(x^t)$   &  \\
  $\sigma^2$ & An upper bound on the variance of stochastic gradients   & As. \ref{as:bounded_noise} \\
   \hline
\multicolumn{3}{|c|}{{\bf SEGA} }\\
 \hline
  $R$ & Regularizer & \eqref{eq:problem_sega} \\
  $h_i^t$ & Sequence  of biased estimators for $\nabla f_i(x^t)$  & \eqref{eq:sega_h} \\
  $g_i^t$ & Sequence of unbiased estimators for $\nabla f_i(x^t)$  & \eqref{eq:sega_g} \\
  $\Lgen^t$ & Lyapunov function  & Thm.~\ref{thm:sega} \\
\hline
\end{tabular}
}
\end{center}
\caption{Summary of frequently used notation.}
\label{tbl:notation}
\end{table}

\section{Future Work}
We sketch several possible extensions of this work. 
\begin{itemize}
\item Combining the tricks from the paper. Distributed ISAGA requires $\nabla f_i(x)=0$. We believe it would be possible to develop SEGA approach on top of it (such as it is developed on top of coordinate descent) and drop the mentioned requirement. We also believe it should be possible to accelerate the combination of SEGA and ISAGA.
\item Convergence in the asynchronous setup. We have provided theoretical results for parallel algorithms in the synchronous setting and the asynchronous theory is a very natural future step. Moreover, as we mentioned before, it has very direct practical implications. We believe that it possible to extend the works~\cite{mishchenko2018delay, mishchenko2018distributed} to design a method with proximable regularizer (for $\ell_1$ penalty) that would communicate little both sides.
\item Importance sampling. Standard coordinate descent exploits a smoothness structure of objective (either via coordinate-wise smoothness constants or more generally using a smoothness matrix) in order to sample coordinates non-uniformly~\cite{ESO, csiba2018importance, hanzely2018accelerated}. It would be interesting to derive an importance sampling in our setting in order to converge even faster. 
\end{itemize}

\clearpage

\section{IBGD: Bernoulli alternative to IBCD}
As an alternative to computing a random block of partial derivatives of size $\tau m$, it is possible to compute the whole gradient with probability $\tau$, and attain the same complexity result. While this can be inserted in all algorithms we propose, we only present an alternative to IBCD, which we call IBGD.

\begin{algorithm}[h]
  \caption{Independent Bernoulli Gradient Descent (IBGD)}
  \label{alg:ibd}
\begin{algorithmic}[1]
\STATE{\bfseries Input: } {$x^0\in\RR^d$, probability of computing the whole gradient $\tau$, stepsize $\gamma$, \# of parallel units $n$}
  \FOR{$t=0,1,\dotsc$}
    \FOR{$i=1,\dotsc,n$ in parallel}
        \STATE Set $g^t_i = \begin{cases} \nabla f_i(x^t) & \text{with probability} \quad \tau \\  0 & \text{with probability} \quad 1-\tau  \end{cases}\quad $ independently 
        \STATE $x_i^{t+1} = x^t - \gamma  g^t_i$
    \ENDFOR
    \STATE $x^{t+1} = \frac{1}{n}\sumin x_i^{t+1}$
  \ENDFOR
\end{algorithmic}
\end{algorithm}

\begin{theorem}\label{th:ibd} 
Suppose that Assumptions~\ref{as:smooth_sc},~\ref{as:zero_grads} hold. For Algorithm~\ref{alg:ibd} with $\gamma = \frac{n}{\tau n + 2(1 - \tau)}\frac{1}{2L}$ we have
\[
\EE\left[ \|x^{t} -x^*\|_2^2 \right] \leq \left(1-\frac{\mu}{2L} \frac{\tau n}{\tau n + 2(1-\tau)}\right)^t\|x^{0} -x^*\|_2^2.
\]
\end{theorem}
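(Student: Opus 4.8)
The plan is to mirror the analysis of Theorem~\ref{th:cd}, exploiting the fact that the Bernoulli estimator of Algorithm~\ref{alg:ibd} has exactly the same first and second moments as the block-sampling estimator of Algorithm~\ref{alg:cd}. First I would write the aggregated step explicitly. Introduce i.i.d.\ Bernoulli indicators $\chi_i^t\in\{0,1\}$ with $\EE[\chi_i^t]=\tau$, independent across workers $i$ and iterations $t$, so that worker $i$ produces $x_i^{t+1}=x^t-\gamma\chi_i^t\nabla f_i(x^t)$ and the master forms $x^{t+1}=\frac1n\sum_{i=1}^n x_i^{t+1}=x^t-\gamma g^t$ with $g^t\eqdef\frac1n\sum_{i=1}^n\chi_i^t\nabla f_i(x^t)$. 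Taking conditional expectation, $\EE[g^t\mid x^t]=\tau\nabla f(x^t)$, so in expectation the method performs a gradient step with effective stepsize $\gamma\tau$; Assumption~\ref{as:zero_grads} guarantees that $x^*$ is a fixed point, since $\sum_i\chi_i^t\nabla f_i(x^*)=0$ regardless of the realized indicators.

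The key step is the second-moment computation. Expanding $\|g^t\|_2^2$ and using independence of the $\chi_i^t$ across $i$ together with $\EE[(\chi_i^t)^2]=\tau$ and $\EE[\chi_i^t\chi_j^t]=\tau^2$ for $i\ne j$, I obtain
\[
\EE\left[\|g^t\|_2^2\mid x^t\right]=\frac{\tau(1-\tau)}{n^2}\sum_{i=1}^n\|\nabla f_i(x^t)\|_2^2+\tau^2\|\nabla f(x^t)\|_2^2.
\]
This is precisely the ``variance reduced by $n$'' structure advertised in the introduction: the cross terms contribute $\tau^2\|\nabla f(x^t)\|_2^2$, while the diagonal contributes the $O(\nicefrac1n)$ fluctuation. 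Crucially, this expression is identical to the second moment of the IBCD estimator $\frac1n\sum_i(\nabla f_i(x^t))_{U_i^t}$, because uniform block sampling of a $\tau$-fraction also has per-worker marginal $\tau$, per-worker second moment $\tau\|\nabla f_i(x^t)\|_2^2$, and independent-across-workers cross terms $\tau^2\<\nabla f_i(x^t),\nabla f_j(x^t)>$. Hence the remainder of the argument coincides with the proof of Theorem~\ref{th:cd}.

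I would then run the standard contraction. Write $\|x^{t+1}-x^*\|_2^2=\|x^t-x^*\|_2^2-2\gamma\<g^t,x^t-x^*>+\gamma^2\|g^t\|_2^2$, take conditional expectation, and substitute the two moment identities above. For the linear term I use $\mu$-strong convexity of $f$, giving $\<\nabla f(x^t),x^t-x^*>\ge (f(x^t)-f(x^*))+\tfrac\mu2\|x^t-x^*\|_2^2$; for the quadratic term I convert gradient norms into function-value gaps by co-coercivity, which follows from $L$-smoothness and convexity of each $f_i$ together with $\nabla f_i(x^*)=0$, namely $\|\nabla f_i(x^t)\|_2^2\le 2L(f_i(x^t)-f_i(x^*))$ and likewise $\|\nabla f(x^t)\|_2^2\le 2L(f(x^t)-f(x^*))$. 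Summing the former over $i$ turns $\tfrac1n\sum_i\|\nabla f_i(x^t)\|_2^2$ into $2L(f(x^t)-f(x^*))$. Collecting terms, the coefficient multiplying the nonnegative gap $f(x^t)-f(x^*)$ becomes nonpositive once $\gamma$ is small enough; the stated (deliberately conservative) choice $\gamma=\frac{n}{\tau n+2(1-\tau)}\frac1{2L}$ is such a value, and it leaves the contraction factor $1-\gamma\tau\mu=1-\frac{\mu}{2L}\frac{\tau n}{\tau n+2(1-\tau)}$ on $\|x^t-x^*\|_2^2$. Discarding the gap term and unrolling the recursion yields the claim.

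The main obstacle is the second-moment identity and, more importantly, recognizing that it matches the IBCD estimator; this is what forces the two theorems to share literally the same stepsize and rate. A secondary technical point is the dual role of Assumption~\ref{as:zero_grads}: it is needed both to make $x^*$ a genuine fixed point of the aggregated dynamics and to license the co-coercivity bounds that express gradient norms through function gaps. Finally, pinning the exact constants in $\gamma$ (rather than the clean threshold $\gamma\le\frac{n}{L(\tau n+1-\tau)}$ produced by the most direct bounds) requires carefully tracking the slack in the above inequalities, but the conceptual content is fully captured by the moment computation.
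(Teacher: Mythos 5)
Your proposal is correct and follows essentially the same route as the paper: the paper likewise computes the first and second moments of the Bernoulli update, observes that they coincide with those of the IBCD/ISGD estimator (equation~\eqref{eq:x_moments} with $\sigma=0$), and then invokes Lemma~\ref{lem:sgd_recur} with $\sigma=0$ and $\nabla f_i(x^*)=0$ to get the contraction $\EE\|x^{t+1}-x^*\|_2^2\leq(1-\gamma\tau\mu)\|x^t-x^*\|_2^2$. Your ``standard contraction'' step (strong convexity for the linear term, co-coercivity under Assumption~\ref{as:zero_grads} for the quadratic terms, drop the nonpositive function-gap term) is precisely the content of that lemma, which you simply inline rather than cite; the slightly sharper stepsize threshold you note arises only because the paper's lemma is stated for the general case $\nabla f_i(x^*)\neq 0$ and is therefore a bit looser.
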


Note that IBGD does not perform sparse updates to the server; it is either full (dense), or none. This resembles the most naive asynchronous setup -- where each iteration, a random subset of machines communicates with the server\footnote{In reality, there subset is not drawn from a fixed distribution}. Our findings thus show that we can expect perfect linear scaling for such unreal asynchronous setup. In the honest asynchronous setup, we shall still expect good parallel scaling once the sequence of machines that communicate with server somewhat resembles a fixed uniform distribution. 

\section{Shared data ISAGA -- algorithm}
\begin{algorithm}[h]
  \caption{ISAGA with shared data}
  \label{alg:saga}
\begin{algorithmic}[1]
\STATE{\bfseries Input: }{$x^0\in\RR^d$, $\alpha_1^0,\dotsc, \alpha_N^0$ partition of $\RR^d$ into $m$ blocks $u_1,\dotsc, u_m$, ratio of blocks to be sampled $\tau$, stepsize $\gamma$, \# parallel units $n$}
  \STATE Set $\overline \alpha^0 \eqdef \frac{1}{N}\sumin \alpha_i^0$
  \FOR{$t=0,1,\dotsc$}
          \STATE Sample uniformly set of indices $\{j_{1}^t, \dots, j_n^t\} \subseteq  \{1,\dots, N\}$ without replacement
    \FOR{$i=1,\dotsc,n$ in parallel}
        \STATE Sample independently and uniformly a subset of $\tau m$ blocks $U_t^i$
        \STATE $x_i^{t+1} = x^t - \gamma  (\nabla \smo_{j_i^t}(x^t) - \alpha_{j_i^t}^t + \overline \alpha^t)_{U_i^t}$
        \STATE $(\alpha_{j_i^t}^{t+1})_{U_i^t} = \alpha_{j_i^t}^t+ (\nabla \smo_{j_i^t}(x^t)- \alpha_{j_i^t}^t )_{U_i^t}$
    \ENDFOR
    \STATE For $j\not\in \{j_{1}^t, \dots, j_n^t\} $ set $(\alpha_{j}^{t+1})= \alpha_{i}^t$
    \STATE $x^{t+1} = \frac{1}{n}\sumin x_i^{t+1}$
    \STATE $\overline \alpha^{t+1} = \frac{1}{n}\sum_{j=1}^N \alpha_j^{t+1}$
  \ENDFOR
\end{algorithmic}
\end{algorithm}

\section{Distributed ISAGA \label{sec:saga_dist}}
In this section we consider problem \eqref{eq:problem} with $f_i$ of the finite-sum structure \eqref{eq:problem_saga_dist}.  Just like SAGA, every machine remembers the freshest gradient information of all local functions (stored in arrays $\alpha_{ij}$), and updates them once a new gradient information is observed. Given that index $j_i^t$ is sampled on $i$-th machine at iteration $t$, the iterate update step within each machine is taken only on a sampled set of coordinates: 
\[
x_i^{t+1} = x^t - \gamma  (\nabla f_{ij_i^t}(x^t) - \alpha_{ij_i^t}^t + \overline \alpha_i^t)_{U_i^t} .
\]
Above, $\overline \alpha_i^t$ stands for the average of $\alpha$ variables on $i$-th machine, i.e.\ it is a delayed estimate of $\nabla f_i(x^t)$. 

Since the new gradient information is a set of partial derivatives of $\nabla f_{ij_i^t}(x^t)$, we shall update
\begin{equation}
\alpha_{ij}^{t+1} = 
\left\{
                \begin{array}{ll}
                    \alpha_{ij}^t+ (\nabla f_{ij}(x^t)- \alpha_{ij}^t )_{U_i^t}   & j=j_i^t\\
                       \alpha_{ij}^t & j\neq j_i^t
                \end{array}
     \right.\label{eq:saga_alpha_dist}
\end{equation}
Lastly, the local results are aggregated. See Algorithm~\ref{alg:saga_dist} for details. 
\begin{algorithm}[h]
  \caption{Distributed ISAGA}
  \label{alg:saga_dist}
\begin{algorithmic}[1]
\STATE{\bfseries Input: }{$x^0\in\RR^d$, \# parallel units $n$, $i$-th unit owns $l$ functions $f_{I1} ,\dots, f_{il}$, partition of $\RR^d$ into $m$ blocks $u_1,\dotsc, u_m$, ratio of blocks to be sampled $\tau$, stepsize $\gamma$, initial vectors $\alpha_{ij}^0 \in \RR^d$ for $1\leq i \leq n, 1\leq j\leq l$  }
  \STATE Set $\overline \alpha^0 \eqdef \frac{1}{N}\sumin \alpha_i^0$
  \FOR{$t=0,1,\dotsc$}
    \FOR{$i=1,\dotsc,n$ in parallel}
         \STATE Sample independently \& uniformly $ j_i^t\in [l]$ 
        \STATE Sample independently \& uniformly a subset of $\tau m$ blocks $U_t^i$
        \STATE $x_i^{t+1} = x^t - \gamma  (\nabla f_{ij_i^t}(x^t) - \alpha_{ij_i^t}^t + \overline \alpha_i^t)_{U_i^t}$
        \STATE $\alpha_{ij^t}^{t+1} = \alpha_{ij_i^t}^t+ (\nabla f_{ij_i^t}(x^t)- \alpha_{ij_i^t}^t )_{U_i^t}$
        \STATE For any $j\neq j_i^t$ set $\alpha_{ij}^{t+1} = \alpha_{ij}^t$
        \STATE $\overline \alpha^{t+1} = \frac{1}{l}\sum_{j=1}^l \alpha_{ij}^{t+1}$
    \ENDFOR
    \STATE $x^{t+1} = \frac{1}{n}\sumin x_i^{t+1}$
  \ENDFOR
\end{algorithmic}
\end{algorithm}

The next result provides a convergence rate of distributed ISAGA.
\begin{theorem}\label{th:saga_dist}
    Suppose that Assumptions~\ref{as:smooth_sc},~\ref{as:zero_grads} hold. If $\gamma\le \frac{1}{L\left(\frac{3}{n} + \tau\right)}$, for iterates of distributed ISAGA we have
    \begin{align*}
        \EE \|x^t - x^*\|_2^2 \le (1 - \vartheta )^t\left(\|x^0 - x^*\|_2^2 + c \gamma^2 \Psi^0\right),
    \end{align*}
    where $\Psi^0\eqdef\sum_{i=1}^n\sum_{j=1}^l \|\alpha_{ij}^t - \nabla f_{ij}(x^*)\|_2^2$, $\vartheta\eqdef \tau\min\left\{\gamma\mu, \frac{ 1}{l} - \frac{2}{n^2lc} \right\}\ge 0$ and $c\eqdef \frac{1}{n} ( \frac{1}{\gamma L} - \frac{1}{n} - \tau) > 0$.
\end{theorem}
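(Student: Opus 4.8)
The plan is to run a Lyapunov argument. Define $\cL^t \eqdef \|x^t - x^*\|_2^2 + c\gamma^2\Psi^t$ with $\Psi^t \eqdef \sumin\sum_{j=1}^l\|\alpha_{ij}^t - \nabla f_{ij}(x^*)\|_2^2$, prove a one-step contraction $\EE[\cL^{t+1}]\le(1-\vartheta)\cL^t$ (expectation conditional on the history up to iteration $t$), and unroll it. Writing the aggregated iterate as $x^{t+1}=x^t-\gamma\bar g^t$ with $\bar g^t=\avein g_i^t$ and $g_i^t=(\nabla f_{ij_i^t}(x^t)-\alpha_{ij_i^t}^t+\overline{\alpha}_i^t)_{U_i^t}$, everything rests on two elementary facts about uniform block sampling with $|U_i^t|=\tau m$, namely $\EE_U[(v)_U]=\tau v$ and $\EE_U\|(v)_U\|_2^2=\tau\|v\|_2^2$, applied blockwise and \emph{independently} across machines. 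Combining the first with the SAGA control-variate telescoping $\EE_j[\nabla f_{ij}(x^t)-\alpha_{ij}^t+\overline{\alpha}_i^t]=\nabla f_i(x^t)$ gives unbiasedness up to scale, $\EE g_i^t=\tau\nabla f_i(x^t)$, hence $\EE\bar g^t=\tau\nabla f(x^t)$. The structural key is the second moment: since the $g_i^t$ are independent across $i$, $\EE\|\bar g^t\|_2^2=\tau^2\|\nabla f(x^t)\|_2^2+\frac{1}{n^2}\sumin\Var(g_i^t)$, which is exactly where averaging buys the factor $\nicefrac{1}{n}$ on the noise and which drives the whole result.

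Next I would bound the distance recursion. Expanding $\|x^{t+1}-x^*\|_2^2=\|x^t-x^*\|_2^2-2\gamma\<\bar g^t,x^t-x^*>+\gamma^2\|\bar g^t\|_2^2$ and taking expectation, the cross term becomes $-2\gamma\tau\<\nabla f(x^t),x^t-x^*>$, to which I apply $\mu$-strong convexity of $f$ to extract $-\gamma\tau\mu\|x^t-x^*\|_2^2-2\gamma\tau(f(x^t)-f(x^*))$. For the noise I use $\Var(g_i^t)\le\EE\|g_i^t\|_2^2=\tau\EE_j\|\nabla f_{ij}(x^t)-\alpha_{ij}^t+\overline{\alpha}_i^t\|_2^2$ together with the standard consequence of convexity and $L$-smoothness of each summand, $\|\nabla f_{ij}(x)-\nabla f_{ij}(x^*)\|_2^2\le 2L(f_{ij}(x)-f_{ij}(x^*)-\<\nabla f_{ij}(x^*),x-x^*>)$; summing over $j$ and invoking $\nabla f_i(x^*)=0$ (Assumption~\ref{as:zero_grads}) collapses the linear terms and produces a bound of the form $\EE\|\bar g^t\|_2^2\le c_1 L(f(x^t)-f(x^*))+\frac{2\tau}{n^2 l}\Psi^t$, where the coefficient $c_1$ already carries the $\nicefrac{1}{n}$ gain from independence.

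Finally I would bound the memory term. The rule~\eqref{eq:saga_alpha_dist} together with the blockwise identity gives, per coordinate block, $\EE\|\alpha_{ij}^{t+1}-\nabla f_{ij}(x^*)\|_2^2=\frac{\tau}{l}\|\nabla f_{ij}(x^t)-\nabla f_{ij}(x^*)\|_2^2+(1-\frac{\tau}{l})\|\alpha_{ij}^t-\nabla f_{ij}(x^*)\|_2^2$, whence $\EE\Psi^{t+1}\le(1-\frac{\tau}{l})\Psi^t+2L\tau n(f(x^t)-f(x^*))$. Adding $c\gamma^2$ times this to the distance recursion, I would choose $c$ precisely so that the $f(x^t)-f(x^*)$ terms cancel; this pins down $c=\frac1n(\frac{1}{\gamma L}-\frac1n-\tau)$ and forces the stepsize restriction $\gamma\le\frac{1}{L(\nicefrac{3}{n}+\tau)}$, which keeps $c>0$ and the residual function-gap coefficient nonpositive. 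Reading off the contraction rates of the $\|x^t-x^*\|_2^2$ and $\Psi^t$ components separately then gives $1-\gamma\tau\mu$ and $1-\tau(\frac1l-\frac{2}{n^2 l c})$, whose minimum is $1-\vartheta$, and unrolling delivers the stated bound.

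The step I expect to be the main obstacle is the simultaneous balancing in the last paragraph: the single constant $c$ must be chosen so that (i) the function-suboptimality terms arising from the estimator variance and from the $\Psi$ recursion cancel exactly, (ii) $c>0$, and (iii) both Lyapunov components contract at rate at least $1-\vartheta$; it is precisely the independence-induced $\nicefrac{1}{n}$ factor in $\EE\|\bar g^t\|_2^2$ that makes all three simultaneously compatible down to $\tau$ as small as $\nicefrac{1}{n}$. Matching the constants to the claimed $c$ and $\vartheta$ (rather than merely up to absolute factors) is the only genuinely delicate bookkeeping.
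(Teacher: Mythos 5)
Your proposal is the paper's own proof in outline: the same Lyapunov function $\|x^t-x^*\|_2^2+c\gamma^2\Psi^t$, the same bias--variance split in which independence across machines gives $\EE\|\bar g^t\|_2^2=\tau^2\|\nabla f(x^t)\|_2^2+\frac{1}{n^2}\sum_{i}\Var(g_i^t)$, the same per-machine SAGA variance bound and memory recursion, and the same cancellation-driven choice of $c$.

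One concrete remark on the bookkeeping you yourself flagged as the delicate step, since you assert it pins down the theorem's constants exactly: it does not, and the discrepancy is shared with the paper. Your (correct) bounds give $\Var(g_i^t)\le\tau\bigl(4L(f_i(x^t)-f_i(x^*))+\frac{2}{l}\sum_{j}\|\alpha_{ij}^t-\nabla f_{ij}(x^*)\|_2^2\bigr)$, so the aggregated coefficient of $f(x^t)-f(x^*)$ in $\EE[\cL^{t+1}]$ is $2\gamma\tau\bigl(\gamma\tau L+\frac{2\gamma L}{n}+c\gamma Ln-1\bigr)$; exact cancellation therefore forces $c=\frac{1}{n}\bigl(\frac{1}{\gamma L}-\tau-\frac{2}{n}\bigr)$, not the stated $c=\frac{1}{n}\bigl(\frac{1}{\gamma L}-\tau-\frac{1}{n}\bigr)$. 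The paper's proof makes the same factor-of-two slip: its underbraced coefficient reads $\gamma\tau L+\frac{\gamma L}{n}+c\gamma Ln-1$, omitting one $\frac{\gamma L}{n}$ that comes from the variance contribution $\frac{2\gamma^2\tau}{n}\cdot 2L(f(x^t)-f(x^*))$ in its own equation~\eqref{eq:saga_distrib_variance}. Consequently, with the stated $c$ a positive residual $\frac{2\gamma^2\tau L}{n}\,(f(x^t)-f(x^*))$ survives and cannot simply be dropped. The fix is cosmetic --- take the smaller $c$ above and tighten the stepsize to $\gamma\le\frac{1}{L(4/n+\tau)}$ so that $c\ge\frac{2}{n^2}$ and hence $\vartheta\ge0$ --- and it changes nothing qualitatively; but as written, neither your cancellation claim nor the paper's holds with the exact constants appearing in the theorem.
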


The choice $\tau = n^{-1}$ yields a convergence rate which is, up to a constant factor, the same as convergence rate of original SAGA. Thus, distributed ISAGA enjoys the desired parallel linear scaling. Corollary~\ref{cor:saga_dist} formalizes this claim. 

\begin{corollary}\label{cor:saga_dist}
 Consider the setting from Theorem~\ref{th:saga_dist}. Set $\tau = \frac{1}{n}$ and $\gamma = \frac{n}{5L}$. Then $c=\frac{3}{n^2}$, $\rho = \min \left\{ \frac{\mu}{5L}, \frac{1}{3nl}\right\}$ and the complexity of distributed ISAGA is \[{\cal O}\left(\max \left\{\frac{L}{\mu}, nl \right\}\log\frac{1}{\varepsilon} \right).\]
\end{corollary}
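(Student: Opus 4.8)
The plan is to treat Corollary~\ref{cor:saga_dist} purely as a specialization of Theorem~\ref{th:saga_dist}: every quantity in the corollary is obtained by substituting $\tau=\nicefrac1n$ and $\gamma=\nicefrac{n}{5L}$ into the theorem's formulas and simplifying. So the first thing I would do is confirm admissibility, i.e.\ that this stepsize satisfies the hypothesis $\gamma\le\frac{1}{L(\nicefrac3n+\tau)}$. With $\tau=\nicefrac1n$ the bound reads $\frac{1}{L(\nicefrac3n+\nicefrac1n)}=\frac{n}{4L}$, and since $\nicefrac{n}{5L}\le\nicefrac{n}{4L}$ the choice is valid, so Theorem~\ref{th:saga_dist} applies verbatim.

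Next I would compute $c$. Since $\frac{1}{\gamma L}=\frac{5L}{nL}=\frac5n$, the inner bracket becomes $\frac{1}{\gamma L}-\frac1n-\tau=\frac5n-\frac1n-\frac1n=\frac3n$, hence $c=\frac1n\cdot\frac3n=\frac{3}{n^2}$, which is strictly positive as required. Substituting this $c$ into $\vartheta$ I would evaluate the two arguments of the minimum separately: the first is $\gamma\mu=\frac{n\mu}{5L}$, and the second is $\frac1l-\frac{2}{n^2 l c}=\frac1l-\frac{2}{n^2 l\,(\nicefrac{3}{n^2})}=\frac1l-\frac{2}{3l}=\frac{1}{3l}$. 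Multiplying by the prefactor $\tau=\nicefrac1n$ then gives $\vartheta=\frac1n\min\{\frac{n\mu}{5L},\frac{1}{3l}\}=\min\{\frac{\mu}{5L},\frac{1}{3nl}\}=\rho$, matching the stated rate and confirming $\rho\ge0$.

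Finally I would convert the geometric contraction $\EE\|x^t-x^*\|_2^2\le(1-\rho)^t(\cdot)$ into an iteration count. Using $\log(1-\rho)\le-\rho$, it suffices to take $t\ge\frac1\rho\log\frac1\varepsilon$, and since $\frac1\rho=\max\{\frac{5L}{\mu},\,3nl\}=O(\max\{\frac{L}{\mu},\,nl\})$, the complexity is $O(\max\{\frac{L}{\mu},nl\}\log\frac1\varepsilon)$, as claimed. The only points requiring any care — and the nearest thing to an obstacle — are verifying the positivity conditions $c>0$ and $\vartheta\ge0$ so that the theorem genuinely applies at these parameters, and correctly identifying which branch of the $\min$ is active (this is precisely what turns into the $\max$ inside the complexity bound); everything else is routine arithmetic.
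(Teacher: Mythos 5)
Your proposal is correct and is exactly the argument the paper intends: the corollary is stated as an immediate specialization of Theorem~\ref{th:saga_dist}, and your substitutions (stepsize admissibility $\nicefrac{n}{5L}\le\nicefrac{n}{4L}$, the computation $c=\nicefrac{3}{n^2}$, the evaluation of both branches of the minimum giving $\rho=\min\{\nicefrac{\mu}{5L},\nicefrac{1}{3nl}\}$, and the conversion of the geometric rate into ${\cal O}(\max\{\nicefrac{L}{\mu},nl\}\log\nicefrac{1}{\varepsilon})$ iterations) reproduce precisely the arithmetic the paper leaves implicit. No gaps; the care you took with the positivity conditions $c>0$ and $\vartheta\ge 0$ is appropriate even though it is already guaranteed by the theorem once the stepsize bound is verified.
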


\section{SGD \label{sec:sgd}}

In this section, we apply independent sampling in a setup with a stochastic objective. In particular, we consider problem~\eqref{eq:problem} where $f_i$ is given as an expectation; see \eqref{eq:stoch-f_i}.
We assume we have access to a stochastic gradient oracle which, when queried at $x^t$,  outputs a random vector $g_i^t$ whose mean is $\nabla f_i(x^t)$:  $\EE g_i^t = \nabla f_i(x^t)$. 

Our proposed algorithm---ISGD---evaluates a subset of stochastic partial derivatives for the local objective and takes a step in the given direction for each machine. Next, the results are averaged and followed by the next iteration. We stress that the coordinate blocks have to be sampled independently within each machine.

\begin{algorithm}[h]
  \caption{ISGD}
  \label{alg:sgd}
\begin{algorithmic}[1]
\STATE{\bfseries Input: }{$x^0\in\RR^d$, partition of $\RR^d$ into $m$ blocks $u_1,\dotsc, u_m$, ratio of blocks to be sampled $\tau$, stepsize sequence $\{\gamma^t\}_{t=1}^\infty$, \# parallel units $n$}
  \FOR{$t=0,1,\dotsc$}
    \FOR{$i=1,\dotsc,n$ in parallel}
        \STATE Sample independently and uniformly a subset of $\tau m$ blocks $U_i^t \subseteq \{u_1, \dotsc, u_m\}$
        \STATE Sample blocks of stochastic gradient $(g_i^t)_{U_i^t}$  such that $\EE [g_i^t \, |\, x^t] = \nabla f_i(x^t)$
        \STATE $x_i^{t+1} = x^t - \gamma^t  (g_i^t)_{U_i^t}$
    \ENDFOR
    \STATE $x^{t+1} = \frac{1}{n}\sumin x_i^{t+1}$
  \ENDFOR
\end{algorithmic}
\end{algorithm}

In order to establish a convergence rate of ISGD, we shall assume boundedness of stochastic gradients for each worker.

\begin{assumption}\label{as:bounded_noise}
   Consider a sequence of iterates $\{x^t \}_{t=0}^\infty$ of Algorithm~\ref{alg:sgd}. Assume that $g_i^t$ is an unbiased estimator of $\nabla f_i(x^t)$ satisfying
   $
        \EE \|g_i^t - \nabla f_i(x^t)\|_2^2 \le \sigma^2.
$
    
\end{assumption}
\begin{assumption}\label{as:bounded_noise_at_opt}
   Stochastic gradients of function $f_i$ have bounded variance at the optimum of $f$:
   $
        \EE \|g_i - \nabla f_i(x^*)\|_2^2 \le \sigma^2,
$
    where $g_i$ is a random vector such that $\EE g_i = \nabla f_i(x^*)$.
\end{assumption}

Next, we present the convergence rate of Algorithm~\ref{alg:sgd}. Since SGD is not a variance reduced algorithm, it does not enjoy a linear convergence rate and one shall use decreasing step sizes. As a consequence, Assumption~\ref{as:zero_grads} is not required anymore since there is no variance reduction property to be broken. 

\begin{theorem} \label{th:sgd}
Let Assumptions~\ref{as:smooth_sc} and~\ref{as:bounded_noise} hold. If $\gamma^t = \frac{1}{a + ct}$, where $a= 2\left(\tau + \tfrac{2(1 - \tau)}{n} \right)L$, $c= \frac14 \mu\tau$, then for Algorithm~\ref{alg:sgd} we can upper bound $ \EE [f(\hat x^t) - f(x^*) ]$ by
    \begin{align*}
 \frac{a^2\left(1 - \frac{\tau\mu}{a}\right)\|x^0 - x^*\|_2^2}{\tau(t+1)a+ \tfrac{c\tau}{2}t(t+1)}  +  \frac{\sigma^2 + (1 - \tau)\frac{2}{n}\sumin\|\nabla f_i(x^*)\|_2^2 }{n\left(1+\tfrac1t\right)a+ \frac{nc}{2}(t+1)},
    \end{align*}
where $\hat x^t\eqdef \frac{1}{(t+1)a+ \frac{c}{2}t(t+1)}\sum_{k=0}^t (\gamma^k)^{-1} x^k$.
\end{theorem}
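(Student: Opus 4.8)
The plan is to run a standard stochastic-approximation descent argument on the squared distance $r^t \eqdef \|x^t - x^*\|_2^2$, where the only non-routine ingredient is the exact computation of the first two moments of the aggregated update under independent block sampling. Writing the aggregated direction as $G^t \eqdef \frac{1}{n}\sum_{i=1}^n (g_i^t)_{U_i^t}$, so that $x^{t+1} = x^t - \gamma^t G^t$, I would first expand $\EE[r^{t+1}\mid x^t] = r^t - 2\gamma^t \EE[\langle G^t, x^t - x^* \rangle\mid x^t] + (\gamma^t)^2 \EE[\|G^t\|_2^2\mid x^t]$. Since each block lies in the uniformly random set $U_i^t$ with probability exactly $\tau$, and the $g_i^t$ are unbiased, $\EE[(g_i^t)_{U_i^t}\mid x^t] = \tau \nabla f_i(x^t)$, hence $\EE[G^t\mid x^t] = \tau\nabla f(x^t)$. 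The key computation is the second moment: expanding $\|G^t\|_2^2 = \frac{1}{n^2}\sum_{i,j}\langle (g_i^t)_{U_i^t},(g_j^t)_{U_j^t}\rangle$ and using independence across $i\neq j$ (of both the samplings $U_i^t$ and the stochastic oracles) together with Assumption~\ref{as:bounded_noise}, the cross terms contribute $\tau^2\langle\nabla f_i,\nabla f_j\rangle$ while the diagonal terms give $\tau\EE\|g_i^t\|_2^2 \le \tau(\|\nabla f_i(x^t)\|_2^2 + \sigma^2)$. Collecting these yields $\EE[\|G^t\|_2^2\mid x^t]\le \tau^2\|\nabla f(x^t)\|_2^2 + \frac{\tau(1-\tau)}{n^2}\sum_i\|\nabla f_i(x^t)\|_2^2 + \frac{\tau\sigma^2}{n}$; this is precisely where the variance-reduction-by-averaging factor $\tfrac1n$ and the $(1-\tau)$ factor enter.

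Next I would convert the gradient-norm quantities into function-value suboptimality. Using $L$-smoothness and convexity of each $f_i$ (co-coercivity about $x^*$), $\sum_i\|\nabla f_i(x^t)\|_2^2 \le 4Ln(f(x^t)-f(x^*)) + 2\sum_i\|\nabla f_i(x^*)\|_2^2$, where $\nabla f(x^*)=0$ kills the cross term; likewise $\|\nabla f(x^t)\|_2^2\le 2L(f(x^t)-f(x^*))$ by smoothness of $f$. For the inner-product term I use $\mu$-strong convexity of $f$: $\langle\nabla f(x^t),x^t-x^*\rangle \ge (f(x^t)-f(x^*)) + \tfrac\mu2 r^t$. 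Substituting, and writing $D^t \eqdef f(x^t)-f(x^*)$, the $D^t$ coefficients combine into $-2\gamma^t\tau\big(1 - L\gamma^t(\tau + \tfrac{2(1-\tau)}{n})\big)D^t = -2\gamma^t\tau(1-\tfrac{\gamma^t a}{2})D^t$, which is exactly why $a = 2(\tau+\tfrac{2(1-\tau)}{n})L$. The noise terms assemble into $\frac{(\gamma^t)^2\tau}{n}B$ with $B\eqdef \sigma^2 + \frac{2(1-\tau)}{n}\sum_i\|\nabla f_i(x^*)\|_2^2$, matching the numerator of the second term. This gives the one-step recursion $\EE[r^{t+1}]\le (1-\gamma^t\tau\mu)\EE[r^t] - 2\gamma^t\tau(1-\tfrac{\gamma^t a}{2})\EE[D^t] + \frac{(\gamma^t)^2\tau}{n}B$, valid because $\gamma^t\le\gamma^0 = \tfrac1a$ forces $1-\tfrac{\gamma^t a}{2}\ge\tfrac12>0$.

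Finally I would telescope with the weights $(\gamma^k)^{-2} = (a+ck)^2$. After multiplying the recursion at step $k$ by $(\gamma^k)^{-2}$ and using $2(1-\tfrac{\gamma^k a}{2})\ge 1$ to lower-bound the $D^k$ coefficient by $\tau(\gamma^k)^{-1}$, the distance terms form a telescope whose consecutive coefficients satisfy $(\gamma^k)^{-2}(1-\gamma^k\tau\mu) - (\gamma^{k-1})^{-2} = (2c-\tau\mu)(a+ck)-c^2$; the choice $c=\tfrac14\tau\mu$ makes this negative, so the sum collapses to the boundary term $(\gamma^0)^{-2}(1-\gamma^0\tau\mu)r^0 = a^2(1-\tfrac{\tau\mu}{a})r^0$, while each reweighted noise term is the constant $\tfrac{\tau}{n}B$. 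Dividing by $\tau$ and by $S_t \eqdef \sum_{k=0}^t(\gamma^k)^{-1} = (t+1)a + \tfrac{c}{2}t(t+1)$, and invoking convexity of $f$ (Jensen) for the weighted average $\hat x^t$, gives $\EE[f(\hat x^t)-f(x^*)]\le \tfrac1{S_t}\sum_k(\gamma^k)^{-1}\EE[D^k]$, which is a bound of exactly the stated form after simplifying the denominators. The main obstacle is pinning down the second-moment bound exactly — tracking which cross terms survive under independent sampling and assembling the $\tau$, $1-\tau$ and $\tfrac1n$ factors correctly — together with verifying that the schedule $\gamma^k=\tfrac{1}{a+ck}$ with $c=\tfrac14\tau\mu$ makes the distance telescope collapse; the remaining descent-lemma bookkeeping is routine.
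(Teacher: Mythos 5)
Your proposal is correct and follows essentially the same route as the paper's own proof: your moment computation for the aggregated update is the paper's Lemma~\ref{lem:moments}, your gradient-to-suboptimality conversion is Lemma~\ref{lem:expected_squared_grad}, the resulting one-step recursion is exactly Lemma~\ref{lem:sgd_recur}, and the $(\gamma^k)^{-2}$-weighted telescoping followed by Jensen's inequality is precisely how the paper concludes. The only deviation is cosmetic but in your favor: you verify non-positivity of the telescoping coefficients via the exact identity $(\gamma^k)^{-2}(1-\gamma^k\tau\mu)-(\gamma^{k-1})^{-2}=(2c-\tau\mu)(a+ck)-c^2<0$, which is cleaner than the paper's estimate through $(1+\epsilon)^2\le 1+2.125\epsilon$ and a function bound; note only that your (correct) count of $t+1$ noise terms yields a second term of the form $\bigl(\sigma^2+(1-\tau)\tfrac{2}{n}\sumin\|\nabla f_i(x^*)\|_2^2\bigr)/\bigl(na+\tfrac{nc}{2}t\bigr)$, which matches the theorem's stated denominator only up to the paper's own harmless off-by-one in summing the per-iteration noise.
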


Note that the residuals decrease as ${\cal O}(t^{-1})$, which is a behavior one expects from standard SGD. Moreover, the leading complexity term scales linearly: if the number of workers $n$ is doubled, one can afford to halve $\tau$ to keep the same complexity. 
\begin{corollary}\label{cor:sgd}
Consider the setting from Theorem~\ref{th:sgd}. Then, iteration complexity of Algorithm~\ref{alg:sgd} is 
\[
{\cal O} \left( 
\frac{\sigma^2 + \frac{1}{n}\sumin\|\nabla f_i(x^*)\|_2^2 }{n\tau \mu \epsilon}
\right).
\]
\end{corollary}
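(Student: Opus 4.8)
The plan is to read off the complexity directly from the two-term upper bound in Theorem~\ref{th:sgd} by identifying which term governs the asymptotics in $t$, and then inverting the dominant term to solve for the number of iterations needed to reach accuracy $\epsilon$, i.e.\ the smallest $t$ for which $\EE[f(\hat x^t) - f(x^*)] \leq \epsilon$.

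First I would analyze the first (bias) term $\frac{a^2(1-\tau\mu/a)\|x^0-x^*\|_2^2}{\tau(t+1)a + \frac{c\tau}{2}t(t+1)}$. Since its denominator contains the quadratic piece $\frac{c\tau}{2}t(t+1)$, this term decays like ${\cal O}(t^{-2})$; in particular, driving it below $\epsilon$ requires only $t = {\cal O}(\epsilon^{-1/2})$ iterations. Hence it is asymptotically negligible compared to the second term, and I would simply note that it is dominated once $t$ is large enough.

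The main work is the second (variance) term $\frac{\sigma^2 + (1-\tau)\frac{2}{n}\sumin\|\nabla f_i(x^*)\|_2^2}{n(1+\frac1t)a + \frac{nc}{2}(t+1)}$. Here the denominator grows only linearly, dominated for large $t$ by $\frac{nc}{2}(t+1)$. Substituting $c = \tfrac14\mu\tau$ gives $\frac{nc}{2} = \frac{n\mu\tau}{8}$, so the term behaves like $\frac{8(\sigma^2 + (1-\tau)\frac{2}{n}\sumin\|\nabla f_i(x^*)\|_2^2)}{n\mu\tau\, t}$. Setting this $\leq \epsilon$ and solving for $t$, then using $(1-\tau)\leq 1$ and absorbing the numerical constants into the ${\cal O}$ notation, yields $t = {\cal O}\!\left(\frac{\sigma^2 + \frac{1}{n}\sumin\|\nabla f_i(x^*)\|_2^2}{n\tau\mu\epsilon}\right)$, which is exactly the claimed complexity.

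Putting the two pieces together, the bias term requires ${\cal O}(\epsilon^{-1/2})$ iterations while the variance term requires ${\cal O}(\epsilon^{-1})$, so for small $\epsilon$ the latter dominates and sets the overall rate. There is no real technical obstacle: the argument is pure bookkeeping on the bound already established in Theorem~\ref{th:sgd}. The only point demanding care is to confirm that the quadratic growth in the first denominator genuinely renders the bias term subdominant, so that the complexity is governed solely by the linearly-growing variance term; once this is verified, the stated ${\cal O}$ expression follows immediately.
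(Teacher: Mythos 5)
Your proposal is correct and is exactly the argument the paper intends: the corollary is read off from the bound in Theorem~\ref{th:sgd} by noting that the bias term decays as ${\cal O}(t^{-2})$ (hence needs only ${\cal O}(\epsilon^{-1/2})$ iterations) while the variance term, whose denominator grows like $\frac{nc}{2}t = \frac{n\mu\tau}{8}t$, dictates the ${\cal O}\bigl(\frac{\sigma^2 + \frac{1}{n}\sumin\|\nabla f_i(x^*)\|_2^2}{n\tau\mu\epsilon}\bigr)$ complexity after bounding $(1-\tau)\le 1$ and absorbing constants. The paper leaves this bookkeeping implicit, and your verification that the quadratically-growing denominator makes the bias term subdominant is precisely the step that justifies it.
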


Although problem~\eqref{eq:problem} explicitly assumes convex $f_i$, we also consider a non-convex extension, where smoothness of each individual $f_i$ is not required either. Theorem~\ref{th:sgd_ncvx} provides the result. 

\begin{theorem}[Non-convex rate]\label{th:sgd_ncvx}
    Assume $f$ is $L$ smooth, Assumption~\ref{as:bounded_noise} holds and for all $x\in\RR^d$ the difference between gradients of $f$ and $f_i$'s is bounded: $\frac{1}{n}\sumin \|\nabla f(x) - \nabla f_i(x)\|_2^2\le \nu^2$ for some constant $\nu\ge 0$. If $\hat x^t$ is sampled uniformly from $\{x^0, \dotsc, x^t\}$, then for Algorithm~\ref{alg:sgd} we have
    \begin{align*}
        \EE \|\nabla f(\hat x^t)\|_2^2
        \le \frac{\tfrac{f(x^0) - f^*}{t\tau\gamma} + \gamma  L\frac{\left(1 - \tau\right)\nu^2+\frac12\sigma^2}{n}}{1 - \frac{\gamma \tau L}{2} - \gamma L\left(1 - \tau\right)\frac{1}{n}}.
    \end{align*}
    \end{theorem}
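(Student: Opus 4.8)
The plan is to run the standard non-convex descent argument on the averaged iterate $x^t$, the only genuinely new ingredient being a careful second-moment estimate of the aggregated search direction, where the independence of the block sampling across the $n$ machines produces the $\nicefrac1n$ variance reduction responsible for the favorable scaling.

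First I would introduce the aggregated direction $G^t \eqdef \frac1n\sumin (g_i^t)_{U_i^t}$, so that the update reads $x^{t+1}=x^t-\gamma G^t$. Conditioning on $x^t$ and using that each block lies in the uniformly sampled set $U_i^t$ with marginal probability $\tau$, together with unbiasedness $\EE[g_i^t\mid x^t]=\nabla f_i(x^t)$ and the independence of the sampling from the stochastic-gradient noise, I would obtain $\EE[G^t\mid x^t]=\tau\nabla f(x^t)$, hence $\EE[\langle \nabla f(x^t),G^t\rangle \mid x^t]=\tau\|\nabla f(x^t)\|_2^2$.

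The heart of the argument is bounding $\EE[\|G^t\|_2^2\mid x^t]$. I would use the bias--variance split $\EE\|G^t\|^2=\|\EE G^t\|^2+\Var(G^t)$ and exploit that the pairs $(U_i^t,g_i^t)$ are independent across $i$, so $\Var(G^t)=\frac1{n^2}\sumin\Var\big((g_i^t)_{U_i^t}\big)$. For a single machine, conditioning first on $g_i^t$ gives $\EE_{U_i^t}\|(g_i^t)_{U_i^t}\|^2=\tau\|g_i^t\|^2$, and Assumption~\ref{as:bounded_noise} then yields $\EE\|(g_i^t)_{U_i^t}\|^2\le\tau(\|\nabla f_i(x^t)\|_2^2+\sigma^2)$. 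Assembling the pieces produces a bound of the form $\EE[\|G^t\|^2\mid x^t]\le \tau^2\|\nabla f(x^t)\|^2+\frac{\tau(1-\tau)}{n}\big(\frac1n\sumin\|\nabla f_i(x^t)\|^2\big)+\frac{\tau\sigma^2}{n}$. Finally I would invoke the dissimilarity assumption through $\frac1n\sumin\|\nabla f_i(x^t)\|_2^2\le 2\|\nabla f(x^t)\|_2^2+2\nu^2$, which expresses the right-hand side solely via $\|\nabla f(x^t)\|_2^2$, $\nu^2$ and $\sigma^2$; the explicit $\nicefrac1n$ factors on the $\nu^2$ and $\sigma^2$ terms are exactly what drives the scaling.

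With these two moment estimates in hand, I would apply the descent lemma from $L$-smoothness of $f$, namely $f(x^{t+1})\le f(x^t)-\gamma\langle\nabla f(x^t),G^t\rangle+\frac{L\gamma^2}{2}\|G^t\|^2$, take conditional expectation, and substitute. Collecting the $\|\nabla f(x^t)\|^2$ terms produces the factor $\gamma\tau\big(1-\frac{\gamma\tau L}{2}-\gamma L(1-\tau)\frac1n\big)$, precisely the stated denominator, while the leftover terms give the additive constant $\frac{\gamma^2\tau L}{2}\cdot\frac{2(1-\tau)\nu^2+\sigma^2}{n}$. I would then telescope over $t$ using $\EE f(x^{t+1})\ge f^*$, divide through by the coefficient of the gradient norm, and identify the Cesàro average $\frac{1}{t+1}\sum_{k=0}^t\EE\|\nabla f(x^k)\|_2^2$ with $\EE\|\nabla f(\hat x^t)\|_2^2$; bounding $t+1\ge t$ recovers the stated $t\tau\gamma$ and yields the claim. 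I expect the main obstacle to be precisely the second-moment estimate: one must keep the two independent randomness sources and the across-machine independence cleanly separated so that the cross terms collapse to $\tau^2\|\nabla f(x^t)\|^2$ and the per-machine variances contract to the $\nicefrac1n$-scaled term, and tracking the constants carefully here is what distinguishes the useful $\nicefrac1n$ dependence from a vacuous bound.
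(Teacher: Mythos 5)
Your proposal is correct and follows essentially the same route as the paper: your moment estimates for $G^t$ (unbiasedness giving $\tau\nabla f(x^t)$, and the bias--variance split with across-machine independence giving the $\tfrac{1}{n^2}\sumin$ variance term $\tau(1-\tau)\|\nabla f_i(x^t)\|_2^2 + \tau\sigma^2$) reproduce exactly the paper's Lemma on SGD moments, your dissimilarity bound $\frac1n\sumin\|\nabla f_i(x^t)\|_2^2 \le 2\|\nabla f(x^t)\|_2^2 + 2\nu^2$ and descent-lemma step reproduce its Lemma~\ref{lem:func_impr}, and the telescoping/Ces\`{a}ro-averaging conclusion is identical. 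The only cosmetic difference is that you phrase the second-moment computation in terms of the aggregated direction $G^t$ rather than the iterate $x^{t+1}$, which changes nothing of substance.
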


Again, the convergence rate from Theorem~\ref{th:sgd_ncvx} scales almost linearly with $\tau$: with doubling the number of workers one can afford to halve $\tau$ to keep essentially the same guarantees. Note that if $n$ is sufficiently large, increasing $\tau$ beyond a certain threshold does not improve convergence. This is a slightly weaker conclusion to the rest of our results where increasing $\tau$ beyond $n^{-1}$ might still offer speedup. The main reason behind this is  the fact that SGD may be  noisy enough on its own  to still benefit from the averaging step.

    \begin{corollary}\label{cor:ncvx}
 Consider the setting from Theorem~\ref{th:sgd_ncvx}. i) Choose $\tau\ge \frac{1}{n}$ and $\gamma = \frac{\sqrt{n}}{L\sqrt{\tau t}} \le \frac{1}{2L\left(\tau/2 + (1 - \tau)/n \right)}$. Then \[\EE \|\nabla f(\hat x^t)\|_2^2 \le \frac{2}{\sqrt{t\tau n}}\left(\frac{f(x^0) - f^*}{L} + (1 - \tau)\nu^2\right) = O\left(\frac{1}{\sqrt{t}}\right).\] ii) For any $\tau$ there is sufficiently large $n$ such that choosing $\gamma = {\cal O}\left( \frac{\epsilon}{\tau L^2}\right)$ yields complexity ${\cal O} \left( \frac{L^2}{\epsilon^2}\right)$. The complexity does not improve significantly when $\tau$ is increased. 

    \end{corollary}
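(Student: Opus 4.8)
The plan is to treat Corollary~\ref{cor:ncvx} as a direct specialization of the bound in Theorem~\ref{th:sgd_ncvx}, so no new descent inequality is needed; all the work is substituting stepsizes and simplifying. The one structural observation I would make up front concerns the denominator $1 - \frac{\gamma\tau L}{2} - \gamma L(1-\tau)\frac1n = 1 - \gamma L\left(\frac{\tau}{2} + \frac{1-\tau}{n}\right)$ of the theorem's right-hand side. Whenever $\gamma \le \frac{1}{2L(\tau/2 + (1-\tau)/n)}$ this denominator is at least $\frac12$, so the entire bound is upper bounded by twice its numerator. Establishing this clean ``factor-$2$'' reduction is the common first step for both parts.

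For part i) I would set $\gamma = \frac{\sqrt n}{L\sqrt{\tau t}}$ and first verify the stepsize constraint. Here the hypothesis $\tau \ge \frac1n$ is convenient: it gives $(1-\tau)/n \le \tau$, hence $\frac{\tau}{2} + \frac{1-\tau}{n} \le \frac{3\tau}{2}$, so it suffices to check $\gamma \le \frac{1}{3L\tau}$, which holds once $t$ is large enough (equivalently, the statement restricts attention to horizons $t$ for which the displayed inequality $\gamma \le \frac{1}{2L(\tau/2+(1-\tau)/n)}$ is valid). I would then substitute $\gamma$ into the factor-$2$ numerator. The ``bias'' term $\frac{f(x^0)-f^*}{t\tau\gamma}$ and the ``variance'' term $\gamma L\frac{(1-\tau)\nu^2 + \frac12\sigma^2}{n}$ both collapse to a multiple of $(n\tau t)^{-1/2}$, since $\frac{1}{t\tau\gamma} = \frac{L}{\sqrt{n\tau t}}$ and $\frac{\gamma L}{n} = \frac{1}{\sqrt{n\tau t}}$. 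Collecting them yields a bound of the form $\frac{2}{\sqrt{t\tau n}}\bigl(\text{const}\bigr)$, from which the $O(1/\sqrt t)$ rate is immediate since $n,\tau$ are fixed.

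For part ii) I would instead use a \emph{constant} stepsize $\gamma = \Theta(\epsilon/(\tau L^2))$ and target $\EE\|\nabla f(\hat x^t)\|_2^2 \le \epsilon$. Applying the factor-$2$ bound, I would split the accuracy budget into the bias and variance contributions. The variance contribution $2\gamma L\frac{(1-\tau)\nu^2 + \frac12\sigma^2}{n}$ carries an explicit $\frac1n$, so for the given fixed $\tau$ and for $\gamma \propto \epsilon$, choosing $n$ large enough forces it below $\epsilon/2$; this is precisely the ``sufficiently large $n$'' of the statement. The bias contribution $2\frac{f(x^0)-f^*}{t\tau\gamma}$, after substituting $\gamma = \Theta(\epsilon/(\tau L^2))$, becomes $\Theta\bigl(\frac{(f(x^0)-f^*)L^2}{t\epsilon}\bigr)$, and driving it below $\epsilon/2$ requires $t = O(L^2/\epsilon^2)$ iterations. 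Crucially, the factors of $\tau$ cancel (because $\gamma\propto 1/\tau$), so the iteration count $O(L^2/\epsilon^2)$ is independent of $\tau$, which is exactly the assertion that the complexity ``does not improve significantly when $\tau$ is increased.''

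The only genuinely delicate point is the bookkeeping of the stepsize constraint in part i): because the stepsize is horizon-dependent, I must confirm that $\gamma = \frac{\sqrt n}{L\sqrt{\tau t}}$ respects $\gamma \le \frac{1}{2L(\tau/2+(1-\tau)/n)}$ over the range of $t$ considered, which is what licenses replacing the denominator by $\frac12$. Everything else is routine algebraic simplification of the two numerator terms; I would double-check the exact constants (in particular whether the $\frac12\sigma^2$ term is absorbed or dropped in the final display), but these do not affect the asserted rate or complexity.
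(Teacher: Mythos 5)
Your proposal is correct and matches the paper's (implicit) argument: the corollary is given without a separate proof, being exactly the substitution of the stated stepsizes into Theorem~\ref{th:sgd_ncvx} together with the factor-$2$ lower bound on the denominator that your constraint $\gamma \le \frac{1}{2L(\tau/2+(1-\tau)/n)}$ licenses. Worth noting: your bookkeeping actually exposes that the paper's display in part i) has a typo --- substituting $\gamma = \frac{\sqrt n}{L\sqrt{\tau t}}$ gives a bias term $\frac{L\left(f(x^0)-f^*\right)}{\sqrt{n\tau t}}$ rather than $\frac{f(x^0)-f^*}{L\sqrt{n\tau t}}$, and the $\frac12\sigma^2$ contribution does not vanish --- but, as you correctly flagged, neither discrepancy affects the asserted $O\left(1/\sqrt{t}\right)$ rate in part i) or the $O\left(L^2/\epsilon^2\right)$ complexity in part ii).
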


  \section{Acceleration~\label{sec:ABCDE}}
Here we describe an accelerated variant of IBCD in the sense of~\cite{nesterov}. In fact, we will do something more general and accelerate ISGD, obtaining the IASGD algorithm. We again assume that machine $i$ owns $f_i$, which is itself a stochastic objective as in~\eqref{eq:stoch-f_i} with an access to an unbiased stochastic gradient $g^t$ every iteration: $\EE g_i^t = \nabla f_i(x^t)$. A key assumption for the accelerated SGD used to derive the best known rates~\cite{vaswani2018fast} is so the called strong growth of the unbiased gradient estimator.
\begin{definition}
Function $\phi(x)=\EE_\zeta \phi(x,\zeta)$ satisfies the strong growth condition with parameters $\rho, \sigma^2$, if for all $x$ we have
\[
\EE_{\zeta} \| \nabla \phi(x,\zeta)\|^2_2\leq \rho \| \nabla \phi(x)\|^2_2 +\sigma^2.
\]  
\end{definition} 

In order to derive a strong growth property of the gradient estimator coming from the independent block coordinate sampling, we require a strong growth condition on $f$ with respect to $f_1, \dots, f_n$ and also a variance bound on stochastic gradients of each individual $f_i$.

\begin{assumption} \label{as:strong_growth}
Function $f$ satisfies the strong growth condition with respect to $f_1, \dots, f_n$ : 
\begin{equation}\label{eq:acc_sg_f}
\frac{1}{n}\sum_{i=1}^n \|\nabla f_i(x)\|^2_2 \leq  \tilde{\rho} \|\nabla f(x) \|^2_2+  \tilde{\sigma}^2.
\end{equation}
Similarly, given that $g_i = g_i(x)$ provides an unbiased estimator of $\nabla f_i(x)$, i.e.\ $\EE g_i =\nabla f_i(x)$, variance of $g_i$ is bounded as follows for all $i$:
\begin{equation}\label{eq:acc_sg_fi}
 \Var\left[  g_i\right] \leq  \bar{\rho} \|\nabla f_i(x) \|^2_2+  \bar{\sigma}^2.
\end{equation}
\end{assumption}

Note that the variance bound \eqref{eq:acc_sg_fi} is weaker than the strong growth property as we always have $ \Var\left[  g_i\right]  \leq  \EE\left[ \| g_i \|^2_2\right] $.

Given that Assumption~\ref{as:strong_growth} is satisfied, we derive a strong growth property for the unbiased gradient estimator $q \eqdef \frac{1}{n\tau}\sum_{i=1}^n(\nabla g_i)_{U_i}$ in Lemma~\ref{lem:stronggrowth}. Next, IASGD is nothing but the scheme from~\cite{vaswani2018fast} applied to stochastic gradients $q$. For completeness, we state IASGD as Algorithm~\ref{alg:acc}.

\begin{algorithm}[h]
  \caption{IASGD.}
  \label{alg:acc}
\begin{algorithmic}[1]
\STATE{\bfseries Input: } {Starting point $y^0=v^0\in\RR^d$, partition of $\RR^d$ into $m$ blocks $u_1,\dotsc, u_m$, ratio of blocks to be sampled $\tau$, stepsize $\gamma$, number of parallel units $n$, acceleration parameter sequences $\{a,b,\eta\}_{t=0}^\infty$}
  \FOR{$t=0,1,\dotsc$}
         \STATE $x^t  = a^t v^t + (1 - a^t)y^t $ \\
    \FOR{$i=1,\dotsc,n$ in parallel}
        \STATE Sample independently and uniformly a subset of $\tau m$ blocks $U_i^t \subset \{u_1, \dotsc, u_m\}$
        \STATE Sample blocks of stochastic gradient $(g_i^t)_{U_i^t}$  such that $\EE [g_i^t \, |\, x^t] = \nabla f_i(x^t)$
       \ENDFOR
        \STATE  $q^t  =\frac{1}{n\tau}\sumin (g_i^t)_{U_i^t}$ \\
    	   \STATE  $y^{t+1}  = x^t -\gamma  q^t$ \\
		\STATE $v^{t+1} = b^t v^t + (1 - b^t)x^t  - \eta^t \gamma q^t$.
  \ENDFOR
\end{algorithmic}
\end{algorithm}

\begin{lemma}\label{lem:stronggrowth}
Suppose that Assumption~\ref{as:strong_growth} is satisfied. Then, we have
$
\EE\left[\|q\|^2_2\right] \leq \hat{\rho} \|\nabla f(x) \|^2_2 + \hat{\sigma}^2
$
for
\begin{eqnarray}
\label{eq:acc_rho}
\hat{\rho} &\eqdef& \left(1+ \tfrac{\tilde{\rho}}{n}   \left(\tfrac1\tau-1+\tfrac{\bar{\rho}}{\tau} \right) \right), 
\\
\hat{\sigma}^2 &\eqdef& \tfrac{\bar{\sigma}^2}{n\tau} + \tfrac{\tilde{\sigma}^2}{n}\left(\tfrac1\tau-1+\tfrac{\bar{\rho}}{\tau} \right). 
\label{eq:acc_sigma}
\end{eqnarray}
\end{lemma}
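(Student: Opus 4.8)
The plan is to bound $\EE\|q\|^2$ by carefully separating the two independent sources of randomness: the block-sampling sets $U_i$ and the stochastic gradients $g_i$ (so that $q = \frac{1}{n\tau}\sum_{i=1}^n (g_i)_{U_i}$). The whole computation rests on one elementary fact about uniform block sampling: since each of the $m$ blocks is retained in $U_i$ with marginal probability $\tau$, for any deterministic vector $v$ we have $\EE_{U_i}[(v)_{U_i}] = \tau v$ and $\EE_{U_i}\|(v)_{U_i}\|_2^2 = \tau\|v\|_2^2$. I would establish these two identities first, since everything else reduces to them.

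Writing $w_i \eqdef (g_i)_{U_i}$, the vectors $w_1,\dots,w_n$ are mutually independent across machines, and within each machine $U_i$ and $g_i$ are independent. Hence I expand $\EE\|\sum_i w_i\|_2^2 = \sum_i \EE\|w_i\|_2^2 + \sum_{i\neq j}\<\EE w_i, \EE w_j>$. Conditioning on $U_i$ and using $\EE g_i = \nabla f_i(x)$ gives $\EE w_i = \tau\nabla f_i(x)$, so each cross term equals $\tau^2\<\nabla f_i(x), \nabla f_j(x)>$. For the diagonal terms I split $g_i = \nabla f_i(x) + (g_i - \nabla f_i(x))$; the cross contribution vanishes in expectation, and the sampling identities give $\EE\|w_i\|_2^2 = \tau\|\nabla f_i(x)\|_2^2 + \tau\Var[g_i]$, into which I substitute the variance bound~\eqref{eq:acc_sg_fi}.

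The remaining work is purely algebraic. Using the identity $\sum_{i\neq j}\<\nabla f_i(x), \nabla f_j(x)> = n^2\|\nabla f(x)\|_2^2 - \sum_i\|\nabla f_i(x)\|_2^2$ (where $\nabla f = \frac1n\sum_i \nabla f_i$), the $n^2\|\nabla f\|_2^2$ piece, after multiplying by the prefactor $\frac{1}{n^2\tau^2}$, produces exactly the leading $\|\nabla f(x)\|_2^2$ term; the $\bar\sigma^2$ contribution collapses to $\frac{\bar\sigma^2}{n\tau}$; and the coefficient of $\frac1n\sum_i\|\nabla f_i(x)\|_2^2$ simplifies to $\frac{1}{n}\left(\frac1\tau - 1 + \frac{\bar\rho}{\tau}\right)$. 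Finally I invoke the strong growth assumption~\eqref{eq:acc_sg_f} to replace $\frac1n\sum_i\|\nabla f_i(x)\|_2^2$ by $\tilde\rho\|\nabla f(x)\|_2^2 + \tilde\sigma^2$, which folds the surviving $\|\nabla f\|_2^2$ term into $\hat\rho$ and the residual constant into $\hat\sigma^2$, matching~\eqref{eq:acc_rho}--\eqref{eq:acc_sigma}.

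The step that needs the most care is the diagonal computation $\EE\|w_i\|_2^2 = \tau\|\nabla f_i(x)\|_2^2 + \tau\Var[g_i]$: one must order the two nested expectations correctly (over $g_i$ for fixed $U_i$, then over $U_i$) and verify that restricting the zero-mean error $g_i - \nabla f_i(x)$ to the random block contributes a clean multiplicative factor $\tau$ as an \emph{equality} rather than merely a bound. Everything else is bookkeeping; the one hypothesis I must not overlook is the independence of $U_i$ from $g_i$, which is precisely what allows the two expectations to factor.
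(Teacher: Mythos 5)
Your proposal is correct and follows essentially the same route as the paper's proof: both exploit independence of the $U_i$ (and of $U_i$ from $g_i$), compute the uniform block-sampling moments, and then apply the variance bound \eqref{eq:acc_sg_fi} followed by the strong growth condition \eqref{eq:acc_sg_f}, in that order, to arrive at \eqref{eq:acc_rho}--\eqref{eq:acc_sigma}. The only difference is bookkeeping: the paper centers everything (writing $\EE\|q\|_2^2 = \|\nabla f(x)\|_2^2 + \mathrm{Var}$ and bounding the per-machine variance of $\frac{1}{\tau}(g_i)_{U_i}$), whereas you expand the uncentered second moment into diagonal terms $\EE\|w_i\|_2^2 = \tau\EE\|g_i\|_2^2$ and cross terms $\tau^2\langle \nabla f_i(x),\nabla f_j(x)\rangle$ --- a trivial rearrangement of the same identity.
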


It remains to use the stochastic gradient $q$ (with the strong growth bound from Lemma~\ref{lem:stronggrowth}) as a gradient estimate in~\cite{vaswani2018fast}[Theorem 6], which we restate as Theorem~\ref{th:accelerated} for completeness. 

\begin{theorem}\label{th:accelerated}
Suppose that $f$ is $L$ smooth, $\mu$ strongly convex and Assumption~\ref{as:strong_growth} holds. Then, for a specific choice of parameter sequences $\{a,b, \eta\}_{t=0}^\infty$ (See~\cite{vaswani2018fast}[Theorem 6] for details), iterates of IASGD admit an upper bound on $ \EE \left[f(x^{t+1}) \right]-f(x^*)$ of the form
\begin{eqnarray*}
 \left(1-\sqrt{\frac{\mu}{L\hat{\rho}^2}} \right)^t \left(f(x^0)-f(x^*) + \frac{\mu}{2}\|x^0-x^* \|^2_2\right) + \frac{\hat{\sigma}^2}{\hat{\rho} \sqrt{L\mu}}.
\end{eqnarray*}

\end{theorem}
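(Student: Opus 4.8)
The plan is to recognize that Theorem~\ref{th:accelerated} is a direct specialization of the accelerated SGD guarantee of~\cite{vaswani2018fast}[Theorem 6] to the particular unbiased gradient estimator $q$ produced by independent block sampling. That cited result states: if one runs the accelerated scheme (Algorithm~\ref{alg:acc}) driven by \emph{any} unbiased stochastic gradient oracle satisfying the strong growth condition with parameters $\rho$ and $\sigma^2$, on an $L$-smooth and $\mu$-strongly convex objective, then with the prescribed parameter sequences $\{a,b,\eta\}$ the iterates obey the displayed bound with $\rho$ and $\sigma^2$ in the roles of $\hat\rho$ and $\hat\sigma^2$. Thus the entire task reduces to checking that $q$ is an admissible oracle with the stated growth constants.

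First I would verify unbiasedness of $q$. Since each block is retained in $U_i$ with marginal probability $\tau$ (sampling $\tau m$ of $m$ blocks uniformly), and the block sampling is independent of the stochastic gradient draws, we get $\EE[(g_i)_{U_i}] = \tau\,\EE[g_i] = \tau\nabla f_i(x)$, where the last equality uses the hypothesis $\EE g_i = \nabla f_i(x)$ from Assumption~\ref{as:strong_growth}. Summing over $i$ and rescaling by $\tfrac{1}{n\tau}$ yields $\EE[q] = \tfrac{1}{n\tau}\sum_{i=1}^n \tau\nabla f_i(x) = \tfrac1n\sum_{i=1}^n \nabla f_i(x) = \nabla f(x)$, so $q$ is indeed unbiased for $\nabla f(x)$.

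Next I would invoke Lemma~\ref{lem:stronggrowth}, which—under Assumption~\ref{as:strong_growth}—already establishes precisely the strong growth bound $\EE\|q\|_2^2 \le \hat\rho\,\|\nabla f(x)\|_2^2 + \hat\sigma^2$ with $\hat\rho$ and $\hat\sigma^2$ given by~\eqref{eq:acc_rho}--\eqref{eq:acc_sigma}. With unbiasedness together with this strong growth property in hand, $q$ meets every hypothesis required by~\cite{vaswani2018fast}[Theorem 6]. Applying that theorem verbatim, with the substitution $(\rho,\sigma^2)\leftarrow(\hat\rho,\hat\sigma^2)$, delivers the claimed upper bound on $\EE[f(x^{t+1})]-f(x^*)$.

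The only genuine subtlety—rather than a real obstacle—is ensuring the correspondence between Algorithm~\ref{alg:acc} and the abstract scheme of~\cite{vaswani2018fast} is faithful: the sequences $\{a,b,\eta\}$ must be exactly those dictated by that theorem as functions of $L$, $\mu$, and the growth constants $(\hat\rho,\hat\sigma^2)$, and the per-iteration oracle call must be exactly $q$. Because IASGD is \emph{defined} as this scheme run on $q$, no analytical work beyond the two verifications above is needed; all the heavy lifting is delegated to Lemma~\ref{lem:stronggrowth} and to the cited black-box acceleration result.
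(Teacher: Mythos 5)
Your proposal is correct and matches the paper's own treatment: the paper likewise proves nothing beyond Lemma~\ref{lem:stronggrowth}, noting that ``it remains to use the stochastic gradient $q$ (with the strong growth bound from Lemma~\ref{lem:stronggrowth}) as a gradient estimate in~\cite{vaswani2018fast}[Theorem~6]'', of which Theorem~\ref{th:accelerated} is a restatement with $(\rho,\sigma^2)$ replaced by $(\hat\rho,\hat\sigma^2)$. Your explicit check of unbiasedness of $q$ is a small addition the paper leaves implicit, but the route is the same.
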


The next corollary provides a complexity of Algorithm~\ref{alg:acc} in a simplified setting where $\bar{\sigma}^2=\tilde{\sigma}^2=0$. Note that $\tilde{\sigma}^2=0$ implies $\nabla f_i(x^*) = 0$ for all $i$. It again shows a desired linear scaling: given that we double the number of workers, we can halve the number of  blocks to be evaluated on each machine and still keep the same convergence guarantees. It also shows that increasing $\tau$ beyond $\frac{\tilde{\rho}\bar{\rho}}{n}$ does not improve the convergence significantly. 
\begin{corollary}
Suppose that $\bar{\sigma}^2=\tilde{\sigma}^2=0$. Then, complexity of IASGD is \[{\cal O}\left(\frac{1}{\hat{\rho}}\sqrt{\frac{\mu}{L}}\log\frac1\epsilon \right) ={\cal O}\left(\frac{1}{1+\frac{\tilde{\rho}}{\tau n}(1+\bar{\rho})}\sqrt{\frac{\mu}{L}}\log\frac1\epsilon \right)  .\]
\end{corollary}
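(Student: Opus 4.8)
The plan is to obtain the corollary as a mechanical specialization of Theorem~\ref{th:accelerated} to the noiseless regime $\bar\sigma^2=\tilde\sigma^2=0$, after which essentially all that remains is algebra on the strong-growth constant $\hat\rho$. First I would feed $\bar\sigma^2=\tilde\sigma^2=0$ through Lemma~\ref{lem:stronggrowth}: every term in the definition \eqref{eq:acc_sigma} of $\hat\sigma^2$ carries a factor of $\bar\sigma^2$ or $\tilde\sigma^2$, so $\hat\sigma^2=0$, and consequently the additive floor $\hat\sigma^2/(\hat\rho\sqrt{L\mu})$ in Theorem~\ref{th:accelerated} disappears. (It is worth recording that $\tilde\sigma^2=0$ already forces $\nabla f_i(x^*)=0$ for every $i$: evaluating \eqref{eq:acc_sg_f} at $x^*$, where $\nabla f(x^*)=0$, gives $\tfrac1n\sum_i\|\nabla f_i(x^*)\|_2^2\le\tilde\sigma^2=0$, so the iterates really do converge to the shared minimizer and the linear rate is meaningful.) What survives is the purely geometric bound
\[
\EE[f(x^{t+1})]-f(x^*)\le\left(1-\tfrac{1}{\hat\rho}\sqrt{\tfrac{\mu}{L}}\right)^t C_0,\qquad C_0\eqdef f(x^0)-f(x^*)+\tfrac{\mu}{2}\|x^0-x^*\|_2^2,
\]
with $C_0$ a constant independent of $t$.

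Next I would turn this geometric decay into an iteration count. Writing the per-step contraction exponent as $r\eqdef\tfrac{1}{\hat\rho}\sqrt{\mu/L}$ and using $1-r\le e^{-r}$, the target $(1-r)^t C_0\le\epsilon$ is met once $t\ge r^{-1}\log(C_0/\epsilon)$; absorbing $C_0$ into the logarithm gives $t=\mathcal{O}(r^{-1}\log\tfrac1\epsilon)=\mathcal{O}\big(\hat\rho\sqrt{L/\mu}\,\log\tfrac1\epsilon\big)$, the complexity governed by exactly the contraction factor $r=\tfrac1{\hat\rho}\sqrt{\mu/L}$ displayed in the corollary.

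Finally I would collapse $\hat\rho$ to the advertised closed form, which simultaneously establishes the equality of the two expressions written in the corollary. Starting from \eqref{eq:acc_rho} and rewriting $\tfrac1\tau-1+\tfrac{\bar\rho}{\tau}=\tfrac{1+\bar\rho}{\tau}-1$ yields
\[
\hat\rho=1-\tfrac{\tilde\rho}{n}+\tfrac{\tilde\rho(1+\bar\rho)}{\tau n}.
\]
Since $\tau\le1$ the last summand dominates the $-\tilde\rho/n$ correction (indeed $\tfrac{\tilde\rho(1+\bar\rho)}{\tau n}\ge\tfrac{\tilde\rho}{n}$), so up to a universal constant $\hat\rho=\Theta\big(1+\tfrac{\tilde\rho(1+\bar\rho)}{\tau n}\big)$. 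Substituting this simplified $\hat\rho$ into $r$ reproduces the second displayed form; the resulting $1/(1+\tfrac{\tilde\rho}{\tau n}(1+\bar\rho))$ dependence is precisely what gives linear scaling (doubling $n$ lets one halve $\tau$ at no cost) and shows that pushing $\tau$ past $\Theta(\tfrac{\tilde\rho\bar\rho}{n})$ brings no further gain.

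I do not anticipate a genuine difficulty, as the corollary is a plug-in to Theorem~\ref{th:accelerated}. The only two points needing care are that \emph{both} noise constants must vanish before the floor $\hat\sigma^2$ does (a single surviving $\bar\sigma^2$ or $\tilde\sigma^2$ would leave an $\mathcal{O}(1)$ additive term and destroy the $\log(1/\epsilon)$ rate), and the $\mathcal{O}$-notation bookkeeping when reducing $\hat\rho$ to its leading-order form.
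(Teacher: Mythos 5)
Your proposal is correct and follows exactly the paper's (implicit) route: the corollary is stated without proof as a direct plug-in of $\bar\sigma^2=\tilde\sigma^2=0$ into Lemma~\ref{lem:stronggrowth} (which gives $\hat\sigma^2=0$ via \eqref{eq:acc_sigma}) and Theorem~\ref{th:accelerated}, followed by algebraic simplification of $\hat\rho$ from \eqref{eq:acc_rho}, and your side remark that $\tilde\sigma^2=0$ forces $\nabla f_i(x^*)=0$ is precisely the note the paper makes. Two caveats are worth recording. First, your derived iteration count ${\cal O}\bigl(\hat\rho\sqrt{L/\mu}\log\frac1\epsilon\bigr)$ is the mathematically correct one, and it is the \emph{reciprocal} of the formula displayed in the corollary; the displayed quantity $\frac1{\hat\rho}\sqrt{\mu/L}$ is the per-iteration contraction factor, so the paper's statement conflates rate and complexity, and your reading (and not the literal statement) is the right repair. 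Second, your claim $\hat\rho=\Theta\bigl(1+\frac{\tilde\rho(1+\bar\rho)}{\tau n}\bigr)$ is not justified by the domination argument alone and in fact fails in the corner case $\tau=1$, $\bar\rho=0$, where $\hat\rho=1$ while $1+\frac{\tilde\rho}{n}$ can be arbitrarily large; however, only the one-sided bound $\hat\rho\le 1+\frac{\tilde\rho(1+\bar\rho)}{\tau n}$, which always holds, is needed to pass from the first ${\cal O}$ expression to the second as a complexity bound, so this imprecision (shared by the paper's own loose use of ${\cal O}$-equality) does not invalidate the argument.
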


Theorem~\ref{th:accelerated} shows an accelerated rate for strongly convex functions applying~\cite{vaswani2018fast}[Thm 6] to the bound. A non-strongly convex rate can be obtained analogously from~\cite{vaswani2018fast}[Thm 7].

\section{Asynchronous ISGD}\label{sec:asynch}
In this section we extend ISGD algorithm to the asynchronous setup.  In particular, we revisit the method that was considered in~\cite{grishchenko2018asynchronous}, extend its convergence to stochastic oracle and show better dependency on quantization noise.

\begin{algorithm}[h]
  \caption{Asynchronous ISGD}
  \label{alg:asynch_sgd}
\begin{algorithmic}[1]
\STATE{\bfseries Input: }{$x^0\in\RR^d$, partition of $\RR^d$ into $m$ blocks $u_1,\dotsc, u_m$, ratio of blocks to be sampled $\tau$, stepsize $\gamma$, \# parallel units $n$}
  \FOR{$t=0,1,\dotsc$}
  	\STATE Worker $i=i_t$ is making update
       \STATE $w^{t - d_i^t} = \avejn x_j^{t-d_i^t}$
    	   \STATE $x^{t - d_i^t} = \proxR(w^{t-d_i^t})$
        \STATE Sample independently and uniformly a subset of $\tau m$ blocks $U_i^t \subseteq \{u_1, \dotsc, u_m\}$
        \STATE Sample blocks of stochastic gradient $(g_i^t)_{U_i^t}$  such that $\EE [g_i^t \, |\, x^t] = \nabla f_i(x^{t-d_i^t})$
        \STATE $x_i^{t} = x^{t - d_i^t} - \gamma  (g_i^t)_{U_i^t}$
        \STATE Send $(g_i^t)_{U_i^t}$ and receive $w^{t+1} = \avejn x_j^{t+1}$
  \ENDFOR
  \STATE {\bfseries Output:} $x^t = \proxR(w^t)$
\end{algorithmic}
\end{algorithm}

Let us denote the delay of worker $i$ at moment $t$ by $d_i^t$.
\begin{theorem}\label{th:asynch}
	Assume $f_1,\dotsc, f_n$ are $L$-smooth and $\mu$-strongly convex and let Assumption~\ref{as:bounded_noise_at_opt} be satisfied. Let us run Algorithm~\ref{alg:asynch_sgd} for $t$ iterations and assume that delays are bounded: $d_i^t\le M$ for any $i$ and $t$. If $\gamma\le \frac{1}{2L(\tau + \frac{2}{n})}$, then
	\begin{align*}
		\EE \|x^t - x^*\|_2^2
		&\le \left(1 - \tau\gamma\mu\right)^{\lfloor t/M\rfloor}C + 4\gamma\frac{\sigma^2}{n},
	\end{align*}
	where $C\eqdef \max_{i=1,\dotsc, n}\|x^0 - x_i^*\|_2^2$, $x_i^*\eqdef x^* - \tau\gamma \nabla f_i(x^*)$ and $\lfloor \cdot \rfloor$ is the floor operator.
\end{theorem}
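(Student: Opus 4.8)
The plan is to derive a one-step recursion for the conditional expectation of $\|x^{t+1}-x^*\|_2^2$ (or, more likely, a per-worker virtual version of it), and then to upgrade this recursion into the stated per-$M$-block geometric rate using the bounded-delay assumption. First I would write the server update at iteration $t$ in terms of the reporting worker $i$, its independently sampled block mask $U_i$ (each block present with probability $\tau$), and its unbiased stochastic gradient $g_i$ evaluated at the stale iterate $\hat x = x^{t-d_i^t}$. Expanding the square and taking expectation over $U_i$ and over the gradient noise, I would use $\EE_{U_i}[(g_i)_{U_i}]=\tau g_i$ and $\EE_{U_i}\|(g_i)_{U_i}\|_2^2=\tau\|g_i\|_2^2$, so that matching factors of $\tau$ appear in front of both the inner-product (drift) term and the second-moment (variance) term.

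The crucial algebraic device is the worker-specific shift $x_i^*=x^*-\tau\gamma\nabla f_i(x^*)$, which is exactly the image of $x^*$ under one expected worker-$i$ step $x\mapsto x-\tau\gamma\nabla f_i(x)$. Since $\nabla f_i(x^*)\neq0$ in general, a single worker's step does not vanish at $x^*$ but at $x_i^*$; writing the deterministic part of the update as this map applied to $\hat x$, the quantity $x^{t+1}-x_i^*$ becomes $(\hat x-x^*)-\tau\gamma\bigl(\nabla f_i(\hat x)-\nabla f_i(x^*)\bigr)$ in expectation, i.e.\ a genuine gradient-map contraction centered at $x^*$. Strong convexity and $L$-smoothness of $f_i$ then deliver the factor $(1-\tau\gamma\mu)$, while Assumption~\ref{as:bounded_noise_at_opt} bounds the residual noise $\|g_i-\nabla f_i(x^*)\|_2^2$ by $\sigma^2$; after accounting for the $1/n$ averaging across workers this yields the $4\gamma\sigma^2/n$ noise floor. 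The stepsize bound $\gamma\le\frac{1}{2L(\tau+2/n)}$ is precisely what forces the quadratic variance terms to be absorbed by the linear contraction term, keeping the coefficient of $\|\hat x-x^*\|_2^2$ strictly below one. Because the constant $C=\max_i\|x^0-x_i^*\|_2^2$ is a maximum over workers and $\tfrac1n\sum_i x_i^*=x^*$ (using $\nabla f(x^*)=\tfrac1n\sum_i\nabla f_i(x^*)=0$), I expect the cleanest route to track $n$ virtual per-worker sequences, each contracting toward its own $x_i^*$, with the true iterate recovered as their average.

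Next I would handle the delay. Since $g_i$ is read at the stale point $\hat x=x^{t-d_i^t}$ with $d_i^t\le M$, the contraction above is anchored at the stale iterate rather than at $x^t$; I would relate the two by observing that at most $M$ server updates separate them, each of magnitude controlled by $\gamma$ times a gradient term already appearing in the bound, and absorb the discrepancy back into the recursion. To convert the per-step estimate into $(1-\tau\gamma\mu)^{\lfloor t/M\rfloor}$, I would group the iterations into consecutive blocks of length $M$: within a block no read can reach past the block's starting iterate, so exactly one full contraction can be guaranteed per block, and telescoping the resulting geometric series of per-block noise contributions produces the additive $4\gamma\sigma^2/n$ term.

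The main obstacle I anticipate is the delay bookkeeping in the presence of the worker-specific shifts: one must maintain the contraction toward a target $x_i^*$ that changes with whichever worker is active, while simultaneously controlling how far the stale read $x^{t-d_i^t}$ has drifted from $x^t$, and do so without inflating the contraction factor or the noise floor. Getting the block-of-$M$ argument to yield exactly one factor $(1-\tau\gamma\mu)$ per block rather than a degraded rate, while keeping all accumulated delay-induced errors subsumed under the constant-stepsize noise term, is the delicate step, and it is exactly where the precise form of the stepsize restriction $\gamma\le\frac{1}{2L(\tau+2/n)}$ does the heavy lifting.
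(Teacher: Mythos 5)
Your one-step machinery is essentially the paper's: the shifted optima $x_i^*=x^*-\tau\gamma\nabla f_i(x^*)$ and the contraction of $(z_i^t-x^*)-\tau\gamma\left(\nabla f_i(z_i^t)-\nabla f_i(x^*)\right)$ are exactly Lemma~\ref{lem:asynch_contraction}; the variance control at the optimum via Assumption~\ref{as:bounded_noise_at_opt} is Lemma~\ref{lem:asynch}; the per-worker virtual sequences averaged into the server iterate are the paper's $w^t=\frac1n\sum_i x_i^t$ (with two details you gloss over: the server actually applies $x^t=\proxR(w^t)$, handled by prox non-expansiveness, and with $R\neq 0$ your identity $\frac1n\sum_i x_i^*=x^*$ fails because $\nabla f(x^*)\neq 0$; also, the cross-worker variance decomposition needs the tower property with workers ordered by delay, since their contributions are not unconditionally independent). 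The stepsize $\gamma\le\frac{1}{2L(\tau+2/n)}$ indeed does exactly what you say: it makes $1-2\tau\gamma L-\frac{4\gamma L}{n}\ge 0$ so the Bregman terms absorb all variance.

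The gap is in the delay bookkeeping, precisely the step you flag as delicate. First, your claim that within a fixed block $[kM,(k+1)M)$ ``no read can reach past the block's starting iterate'' is false: at $t=kM$ a delay of $M$ reads $x^{(k-1)M}$, the start of the \emph{previous} block. The paper does not use fixed blocks; it defines epochs adaptively by $T_0=0$, $T_{k+1}=\min\{t:\ t-\max_i d_i^t\ge T_k\}$, proves geometric decay of the epoch-wise quantity $\Psi^k$ (the maximum over the epoch of $\psi^t-\frac{4\gamma\sigma^2}{\mu n}$, where $\psi^t=\max_i\EE\|x^{t-d_i^t}-x^*\|_2^2$), and only then converts to the exponent $\lfloor t/M\rfloor$ via Proposition~\ref{pr:epoch}, which gives $T_k\le Mk$ under bounded delays (this is Theorem~\ref{th:asynch_epoch}). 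Your fixed-block induction can be repaired (allow reads into the previous block and run a two-block maximum argument), but as stated it does not go through. Second, your plan to relate $x^{t-d_i^t}$ to $x^t$ by summing the at most $M$ intervening updates is not only unnecessary but would defeat the theorem: such drift bounds inject $M$-dependent error terms that can only be absorbed by an $M$-dependent stepsize, whereas the stated stepsize is independent of $M$. The paper's recursion never compares stale and current iterates at all; it bounds $\EE\|x^t-x^*\|_2^2$ directly by $(1-\tau\gamma\mu)\max_i\EE\|x^{t-d_i^t}-x^*\|_2^2+\frac{4\tau\gamma^2\sigma^2}{n}$ and lets the epoch induction do the rest. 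Dropping the drift step and replacing fixed blocks by adaptive epochs turns your outline into the paper's proof.
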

Plugging $\gamma=\frac{1}{2L(\tau + \frac{2}{n})}$ gives complexity that will be significantly improving from increasing $\tau$ until $\tau = \frac{1}{n}$, and then only if $\tau$ jumps from $\frac{1}{n}$ to 1. In contrast, doubling $\tau$ from $\frac{2}{n}$ to $\frac{4}{n}$ would make little difference.

We note that if $\ell_1$ penalty is used, in practice $z_i^t$ should be rather computed on the parameter server side because it will sparsify the vector for communication back.

\clearpage
\section{Extra Experiments\label{sec:app:experiments}}
We present exhaustive numerical experiments to verify the theoretical claims of the paper. The experiments are performed in a simulated environment instead of the honestly distributed setup, as we only aim to verify the iteration complexity of proposed methods.  

First, in Sec~\ref{sec:exp_quad} provides the simplest setting in order to gain the best possible insight -- Algorithm~\ref{alg:cd} is tested on the artificial quadratic minimization problem. We compare Algorithm~\ref{alg:cd} against both gradient descent (GD) and standard CD (in our setting: when each machine samples the same subset of coordinates). We also study the effect of changing $\tau$ on the convergence speed. 

In the remaining parts, we consider a logistic regression problem on LibSVM data~\cite{chang2011libsvm}. Recall that logistic regression problem is given as

\begin{equation}
\label{eq:logreg}
f(x)= \frac1N \sum_{j=1}^N \left( \log \left(1+\exp\left(A_{j,:}x\cdot  b_j\right) \right)+\frac{\ell_2}{2} \| x\|_2^2\right),
\end{equation}
where $A$ is data matrix and $b$ is vector of data labels: $b_j\in \{-1,1 \}$\footnote{
The datapoints (rows of $A$) have been normalized so that each is of norm $1$. Therefore, each $f_i$ is $\frac14$ smooth in all cases. We set regularization parameter as $\ell_2 = 0.00025$ in all cases. 
}. In the distributed scenario (everything except of Algorithm~\ref{alg:saga}), we imitate that the data is evenly distributed to $n$ workers (i.e.\ each worker owns a subset of rows of $A$ and corresponding labels, all subsets have almost the same size). 

As our experiments are not aimed to be practical at this point (we aim to properly prove the conceptual idea), we consider multiple of rather smaller datasets: \texttt{a1a} ($d=123, n =1605$), \texttt{mushrooms} ($d=112, n =8124$), \texttt{phishing} ($d=68, n = 11055$), \texttt{w1a} ($d=300, n =2477$). The experiments are essentially of 2 types: one shows that setting $n\tau=1$ does not significantly violate the convergence of the original method. In the second type of experiments we study the behavior for varying $\tau$, and show that beyond certain threshold, increasing $\tau$ does not significantly improve the convergence. The threshold is smaller as $n$ increases, as predicted by theory.

\subsection{Simple, well understood experiment \label{sec:exp_quad}}

In this section we study the simplest possible setting -- we test the behavior of Algorithm~\ref{alg:cd} on the artificial quadratic minimization problem. The considered quadratic objective is set as 

\begin{equation} f_i(x) \eqdef \frac12 x^\top M_ix,   \qquad  M_i \eqdef vv^\top + \left(I- vv^\top\right) \frac{A_iA_i^\top}{\lambda_{\max}\left( A_iA_i^\top \right)}   \left(I- vv^\top\right), \qquad  v=\frac{v'}{\|v'\|}, \label{eq:quadratic}
\end{equation}
where entries of $v'\in \RR^d$ and $A_i\in \RR^{d\times \quadb}$ are sampled independently from standard normal distribution.

In the first experiment (Figure~\ref{fig:artif_1}), we compare Algorithm~\ref{alg:cd} with $n\tau=1$ against gradient descent (GD) and two versions of coordinate descent - a default version with stepsize $\frac1L$, and a coordinate descent with importance sampling (sample proportionally to coordinate-wise smoothness constants) and optimal step sizes (inverse of coordinate-wise smoothness constants). In all experiments, gradient descent enjoys twice better iteration complexity than Algorithm~\ref{alg:cd} which is caused by twice larger stepsize. However, in each case, Algorithm~\ref{alg:cd} requires fewer iterations to CD with importance sampling, which is itself significantly faster to plain CD.

\begin{figure}[H]
\centering
\begin{minipage}{0.24\textwidth}
  \centering
\includegraphics[width =  \textwidth ]{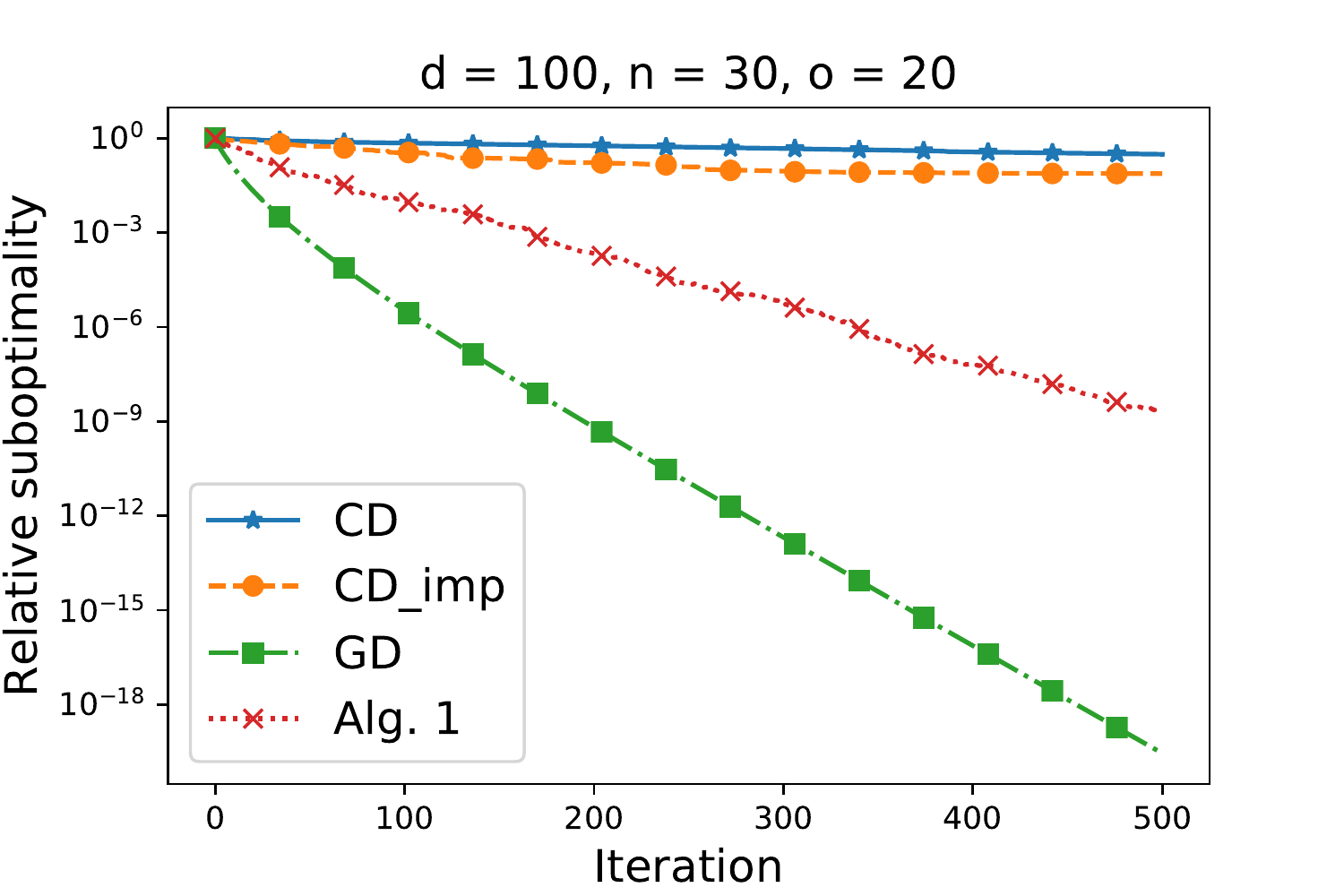}
\end{minipage}%
\begin{minipage}{0.24\textwidth}
  \centering
\includegraphics[width =  \textwidth ]{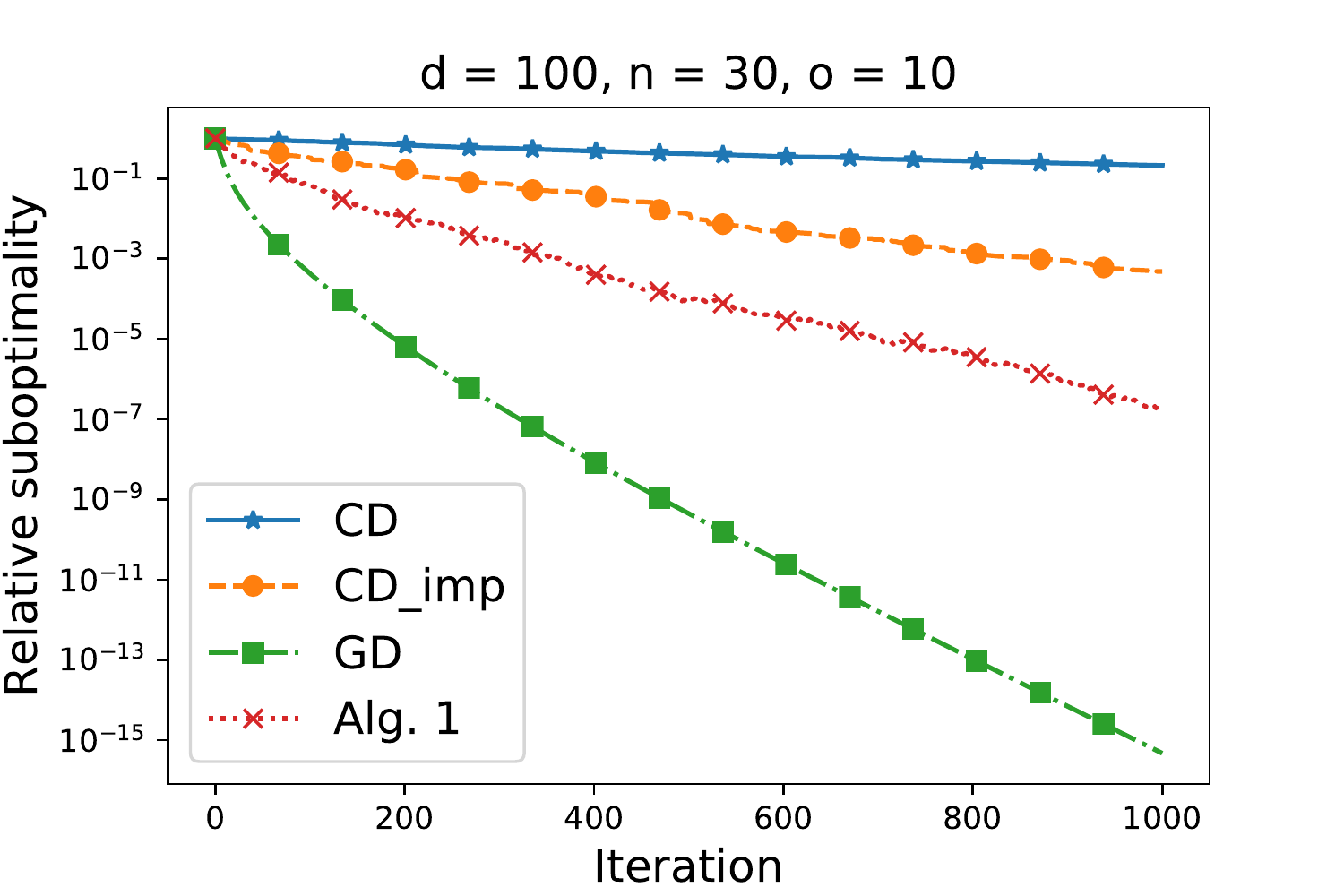}
\end{minipage}%
\begin{minipage}{0.24\textwidth}
  \centering
\includegraphics[width =  \textwidth ]{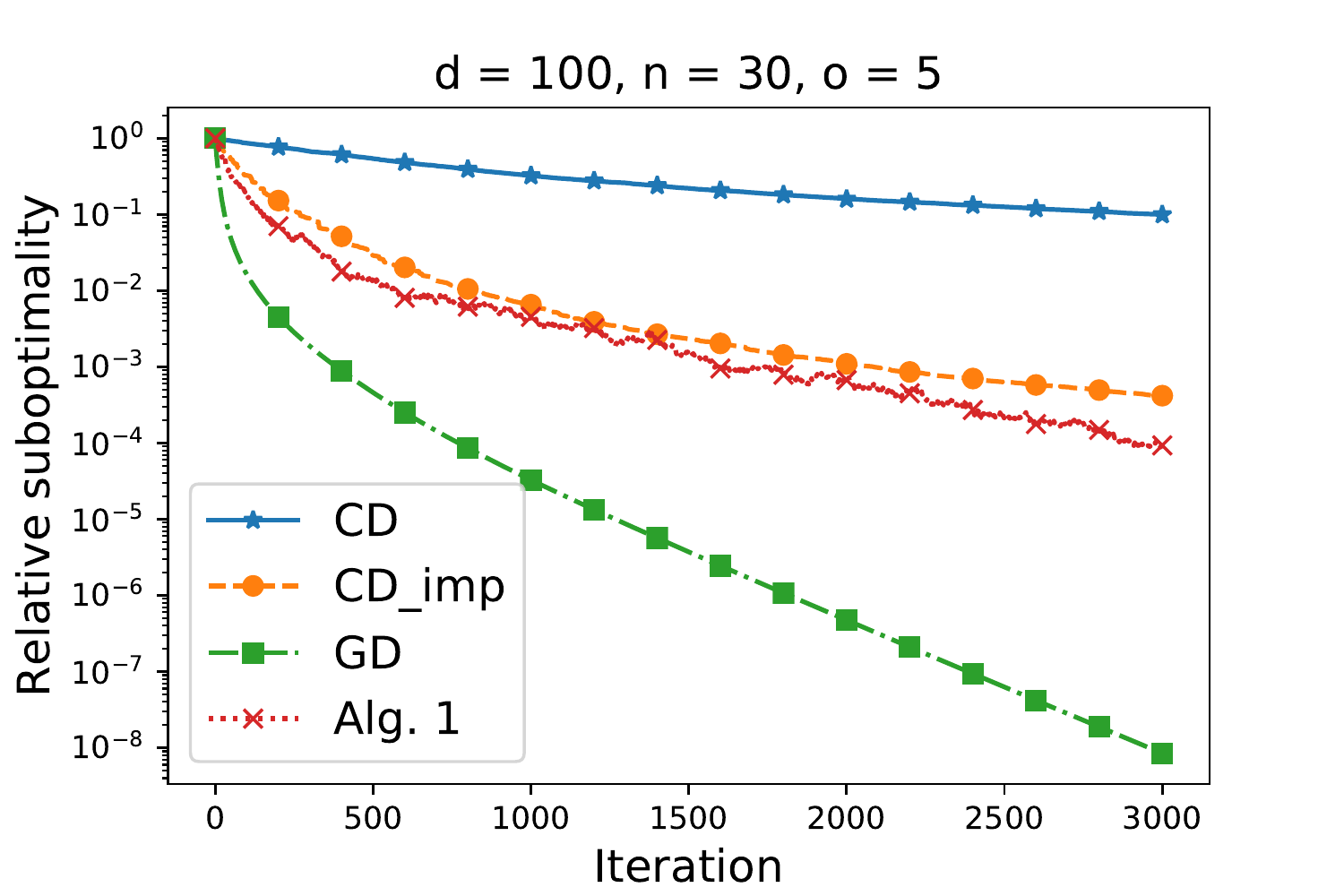}
\end{minipage}%
\begin{minipage}{0.24\textwidth}
  \centering
\includegraphics[width =  \textwidth ]{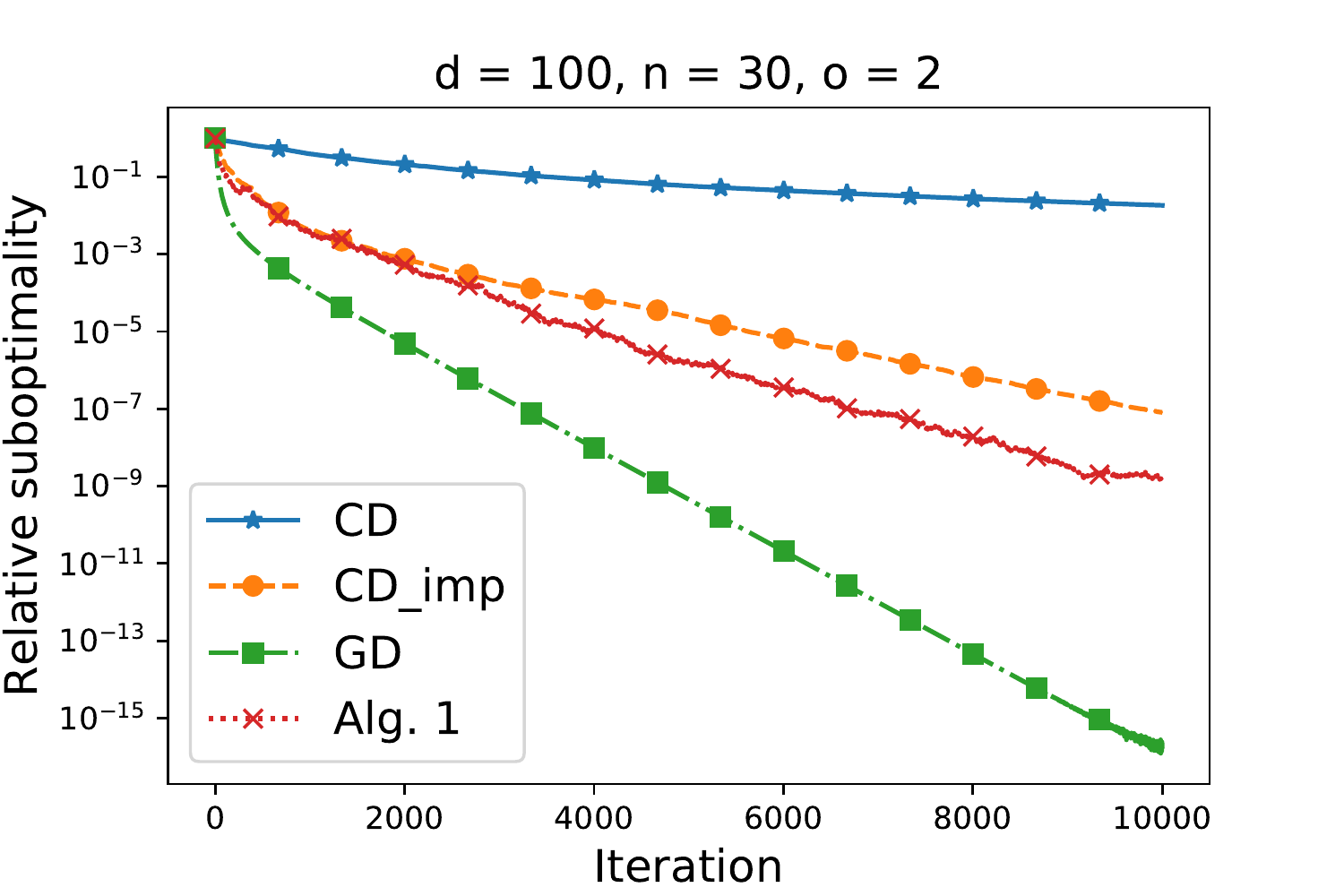}
\end{minipage}%
\\
\begin{minipage}{0.24\textwidth}
  \centering
\includegraphics[width =  \textwidth ]{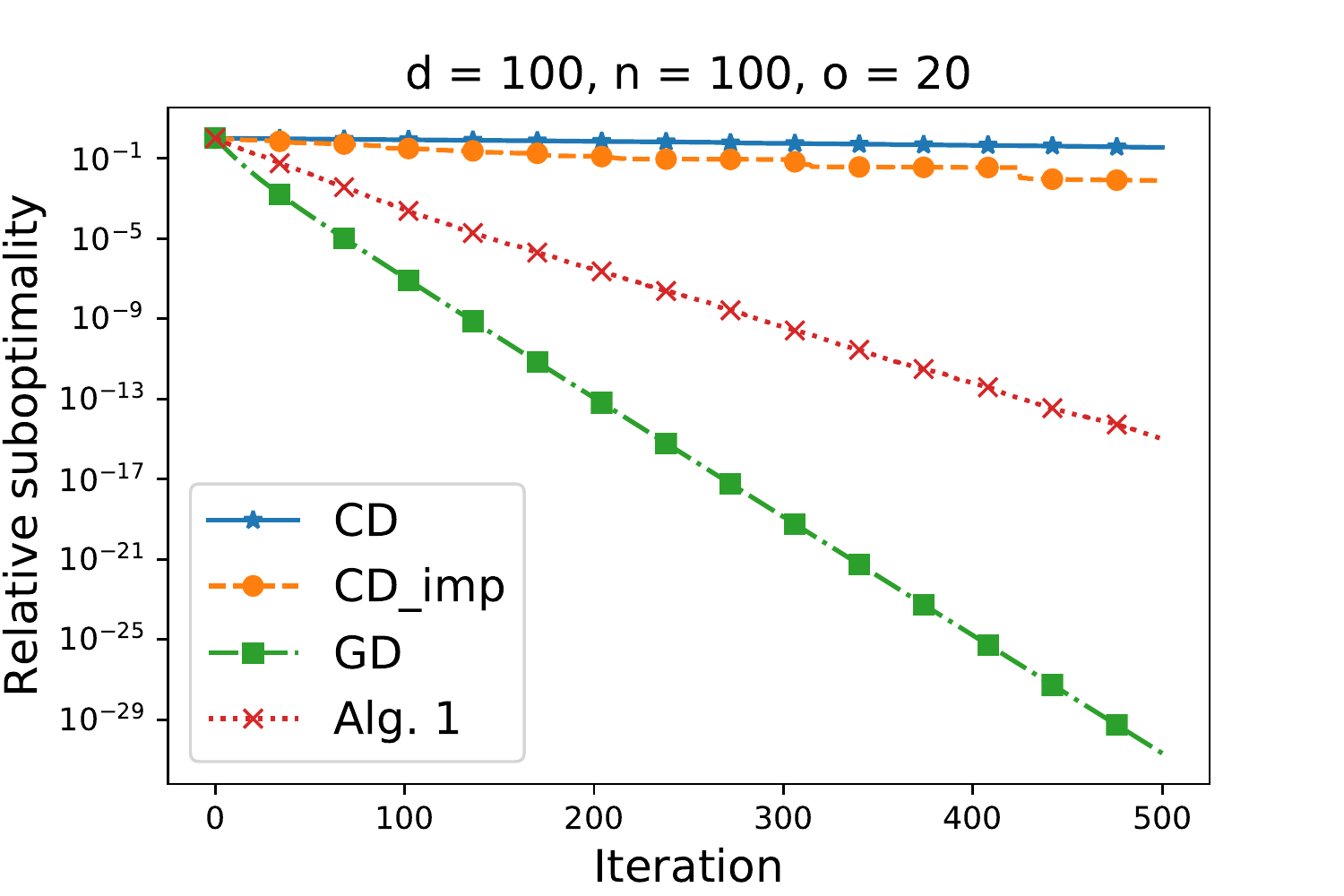}
\end{minipage}%
\begin{minipage}{0.24\textwidth}
  \centering
\includegraphics[width =  \textwidth ]{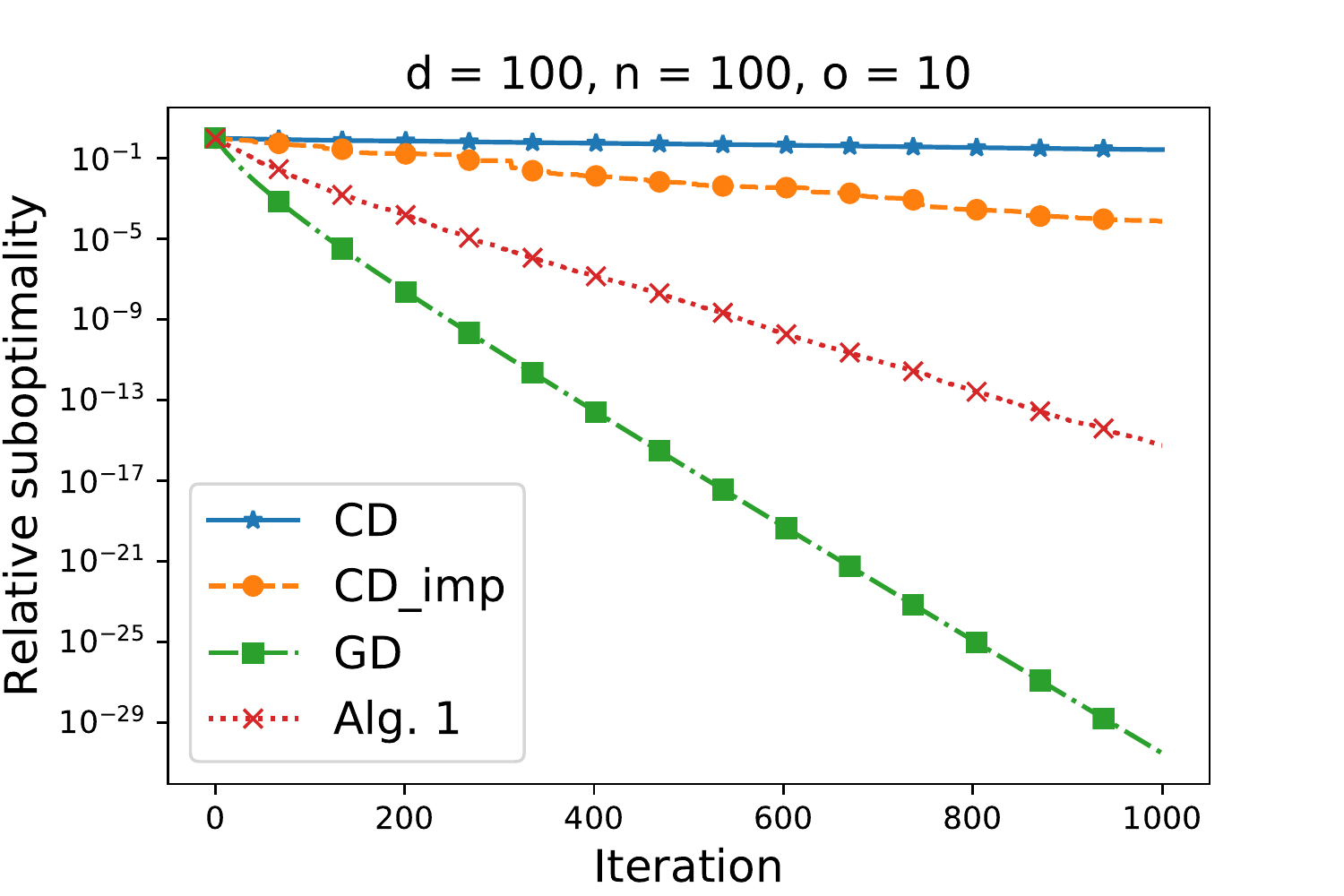}
\end{minipage}%
\begin{minipage}{0.24\textwidth}
  \centering
\includegraphics[width =  \textwidth ]{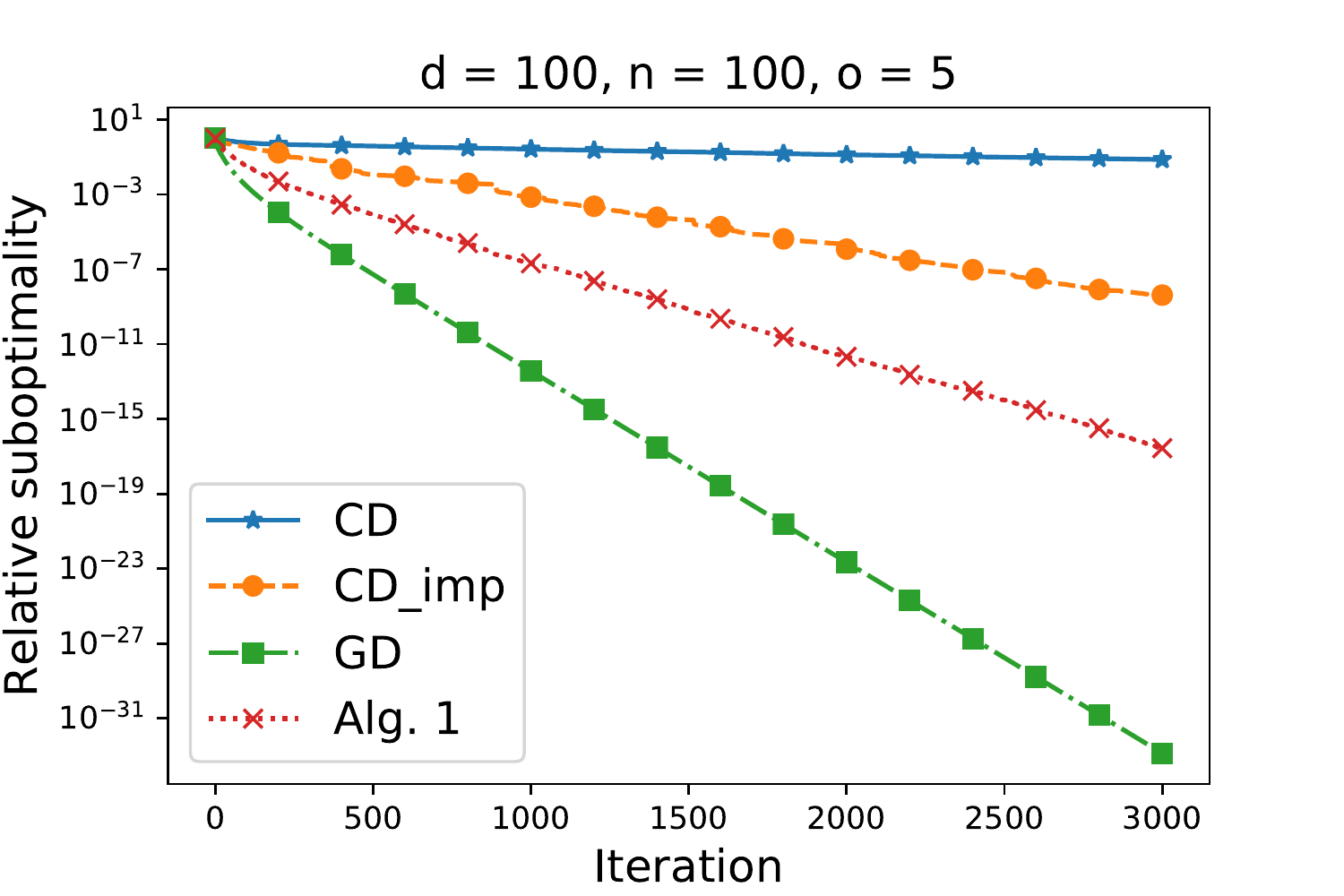}
\end{minipage}%
\begin{minipage}{0.24\textwidth}
  \centering
\includegraphics[width =  \textwidth ]{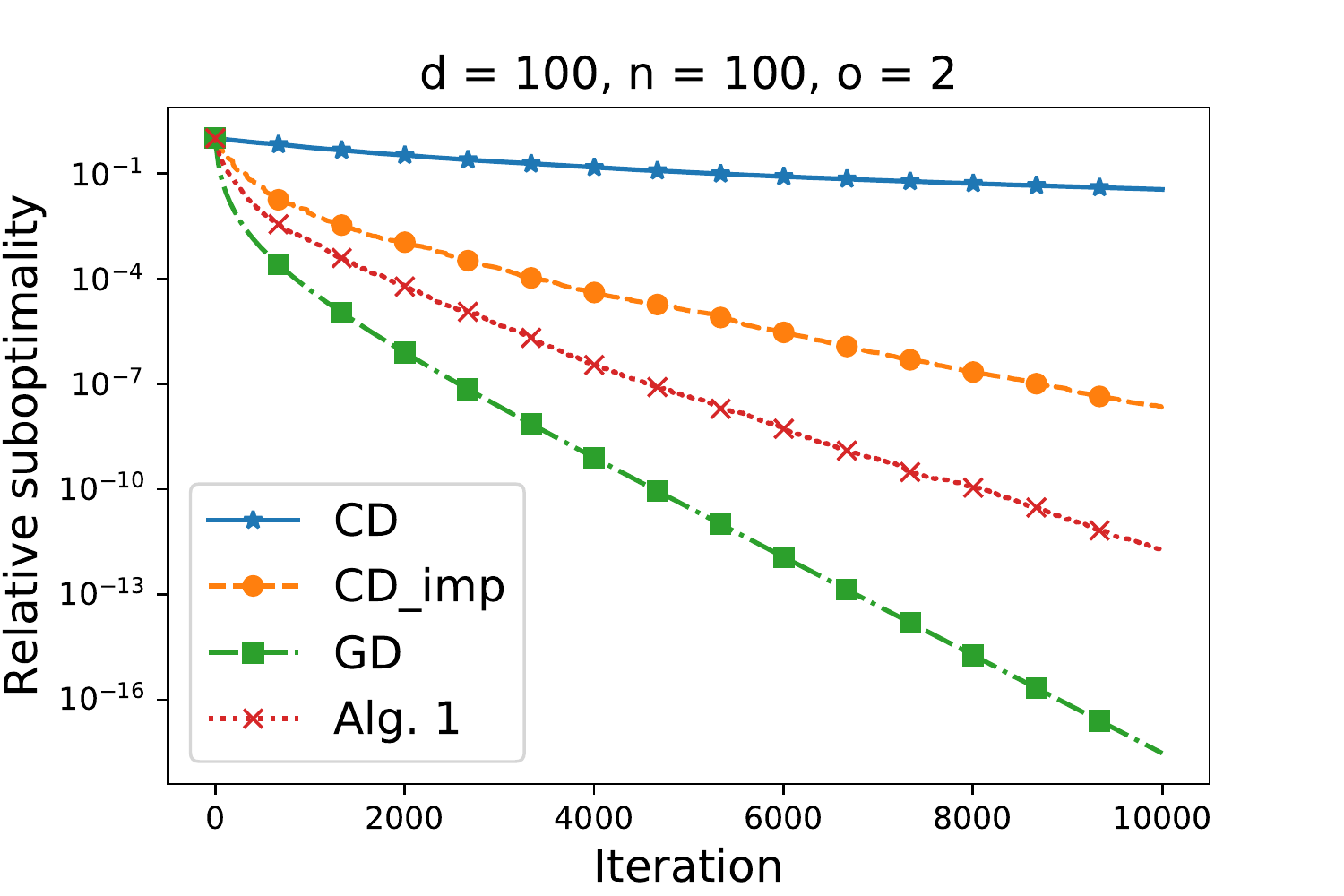}
\end{minipage}%
\\
\begin{minipage}{0.24\textwidth}
  \centering
\includegraphics[width =  \textwidth ]{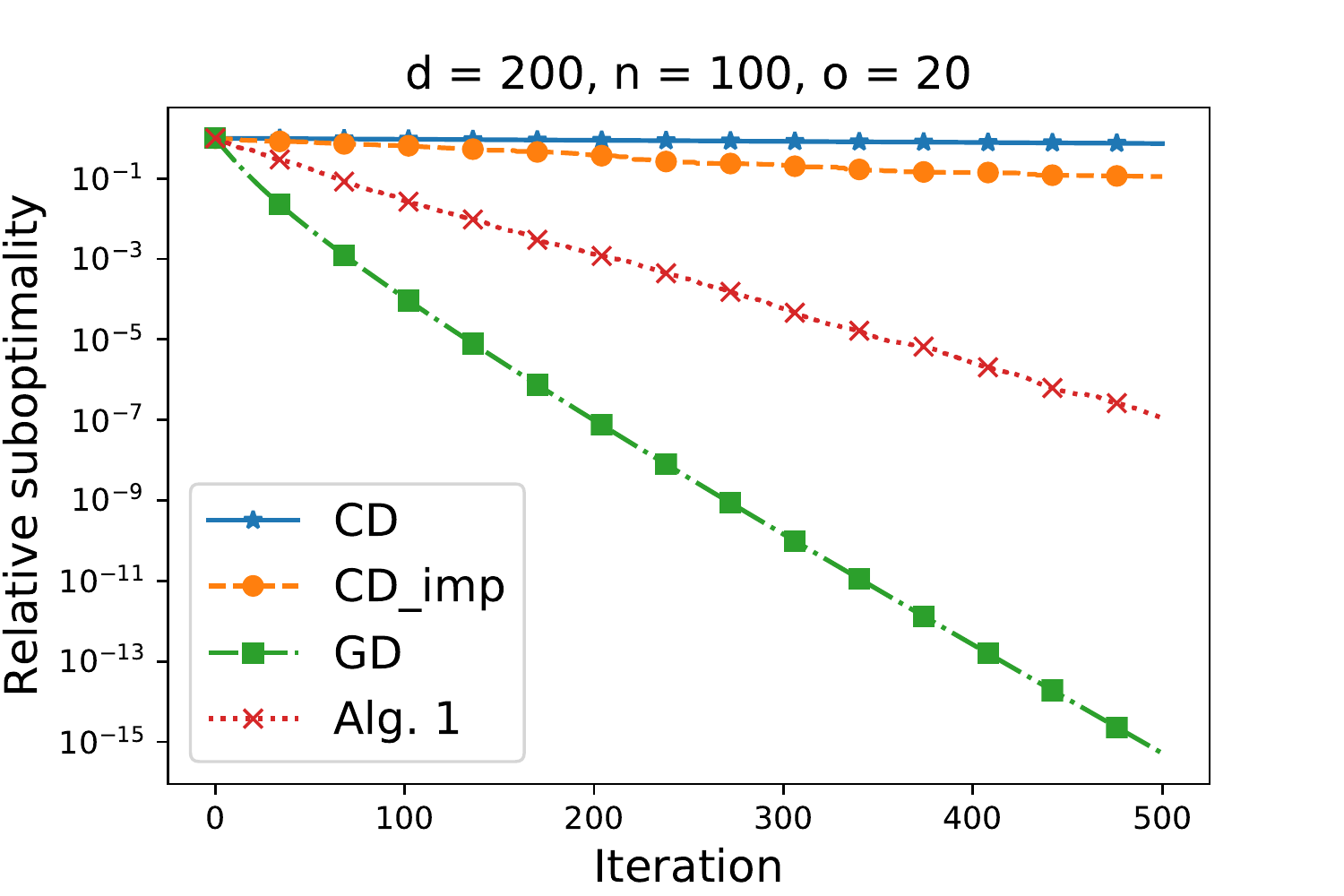}
\end{minipage}%
\begin{minipage}{0.24\textwidth}
  \centering
\includegraphics[width =  \textwidth ]{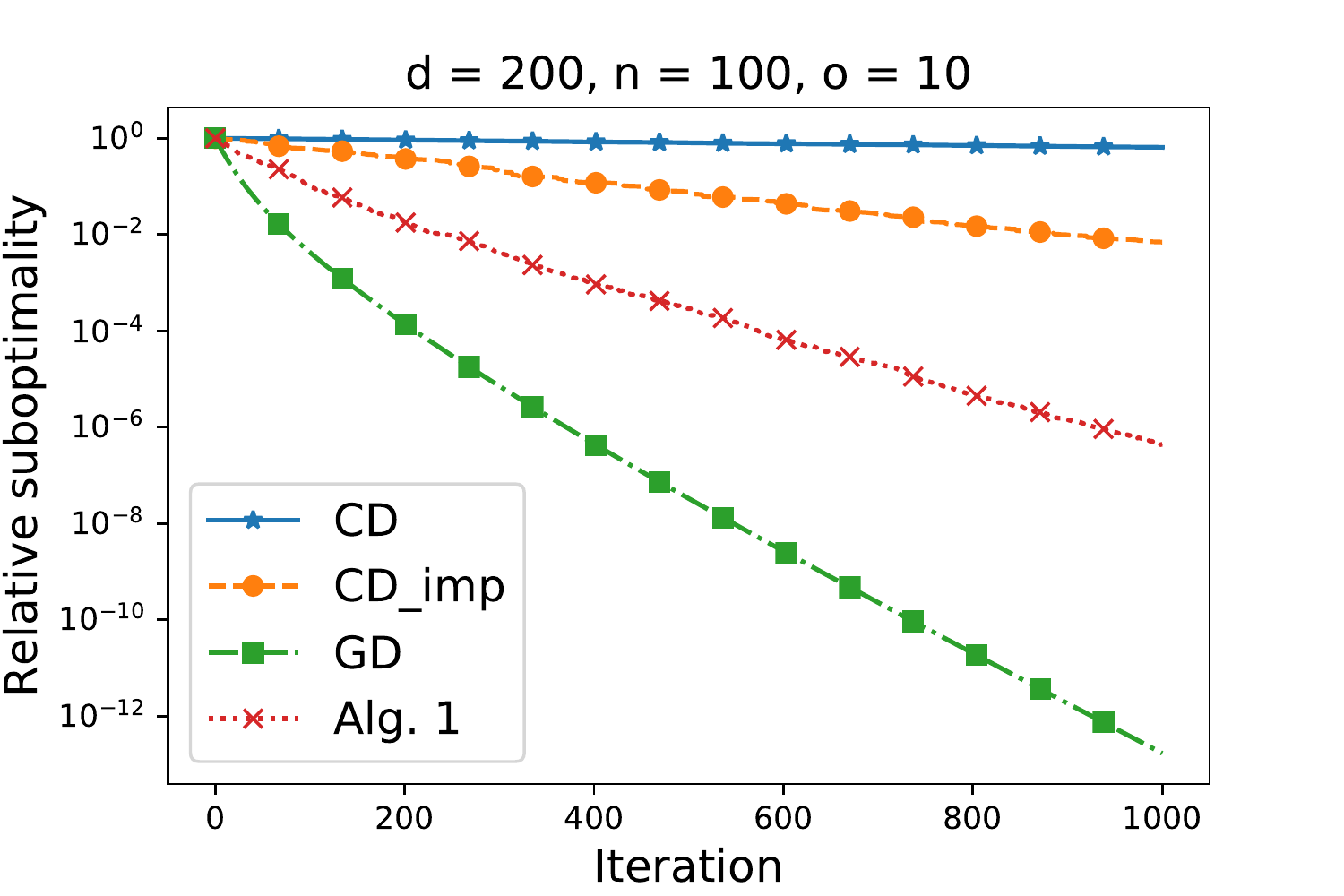}
\end{minipage}%
\begin{minipage}{0.24\textwidth}
  \centering
\includegraphics[width =  \textwidth ]{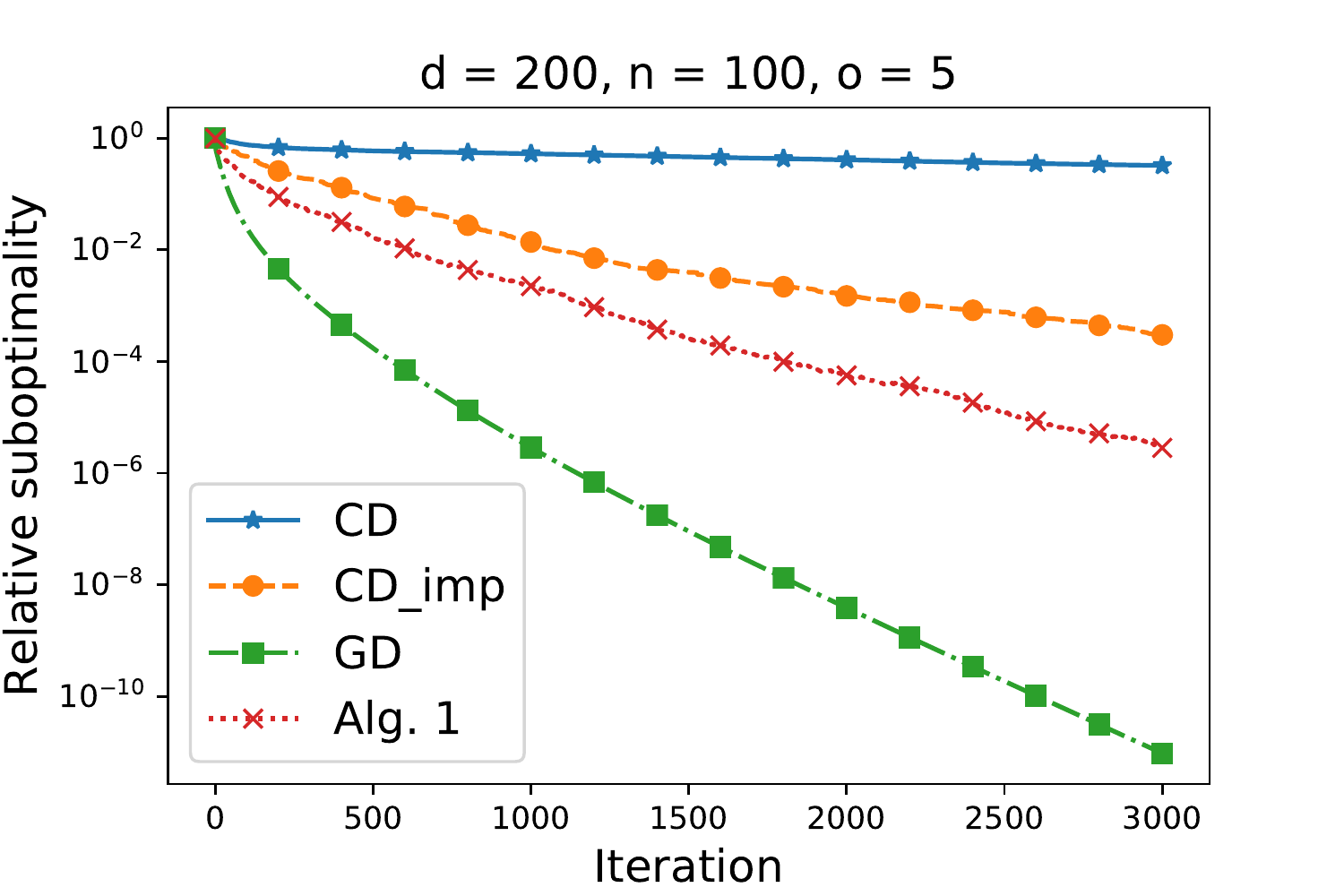}
\end{minipage}%
\begin{minipage}{0.24\textwidth}
  \centering
\includegraphics[width =  \textwidth ]{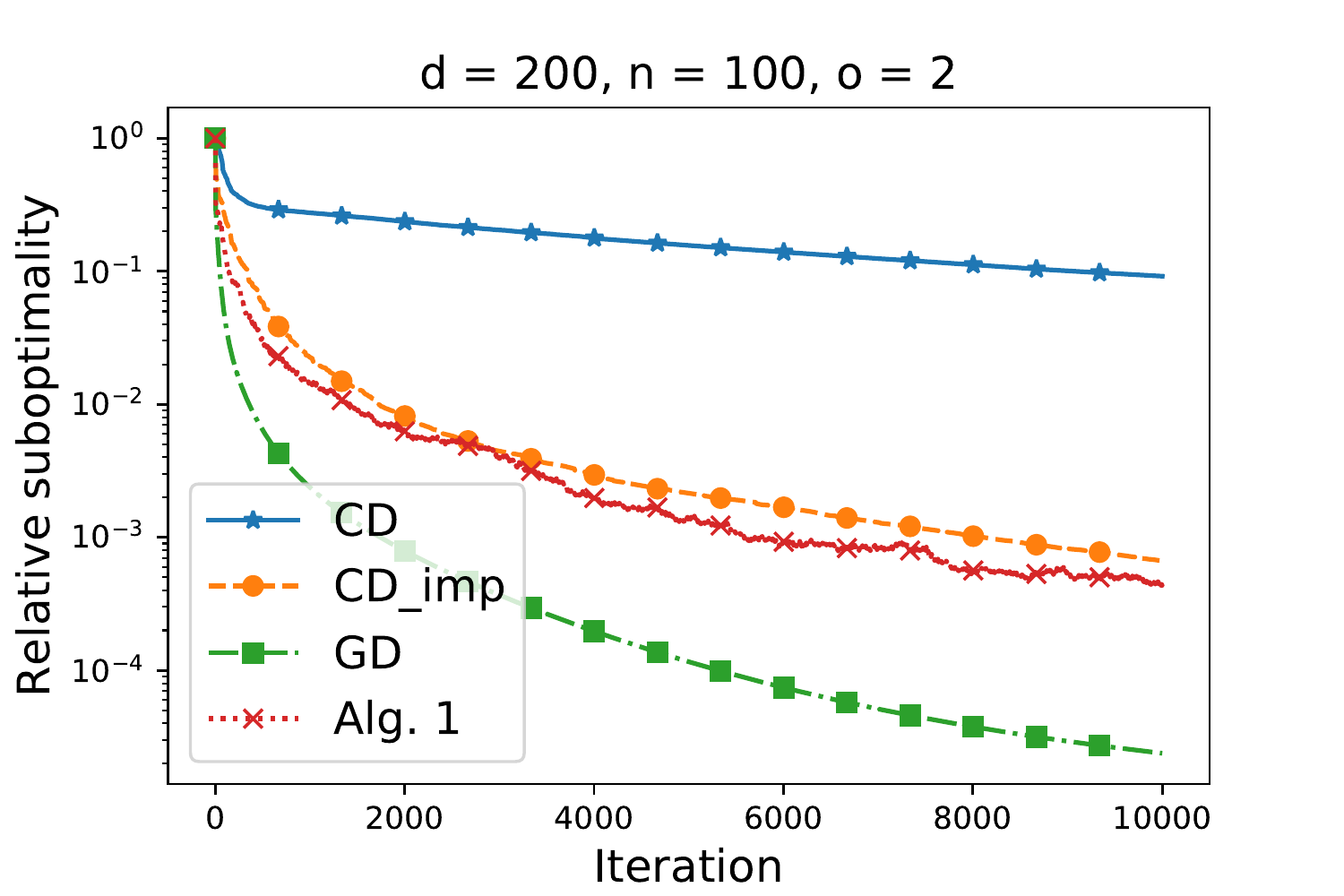}
\end{minipage}%
\\
\caption{Comparison of gradient descent, (standard) coordinate descent, (standard) coordinate descent with importance sampling  and Algorithm~\ref{alg:cd} on artificial quadratic problem~\eqref{eq:quadratic}.}\label{fig:artif_1}
\end{figure}

Next, we study the effect of changing $\tau$ on the iteration complexity of Algorithm~\ref{alg:cd}. Figure~\ref{fig:artif_2} provides the result. The behavior predicted from theory is observed --  increasing  $\tau$ over $n^{-1}$ does not significantly improve the convergence speed, while decreasing it below $n^{-1}$ slows the algorithm notably.

\begin{figure}[H]
\centering
\begin{minipage}{0.24\textwidth}
  \centering
\includegraphics[width =  \textwidth ]{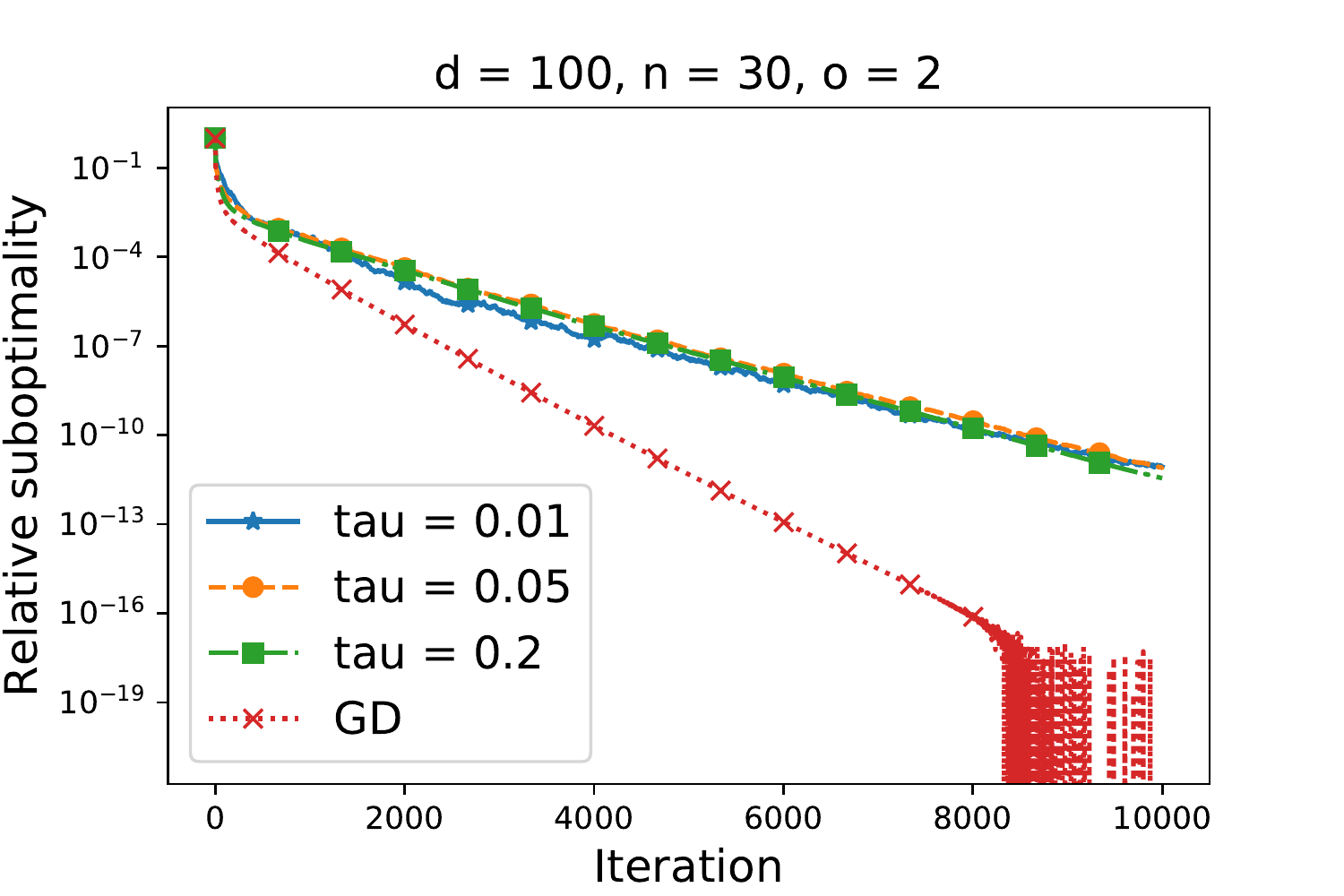}
\end{minipage}%
\begin{minipage}{0.24\textwidth}
  \centering
\includegraphics[width =  \textwidth ]{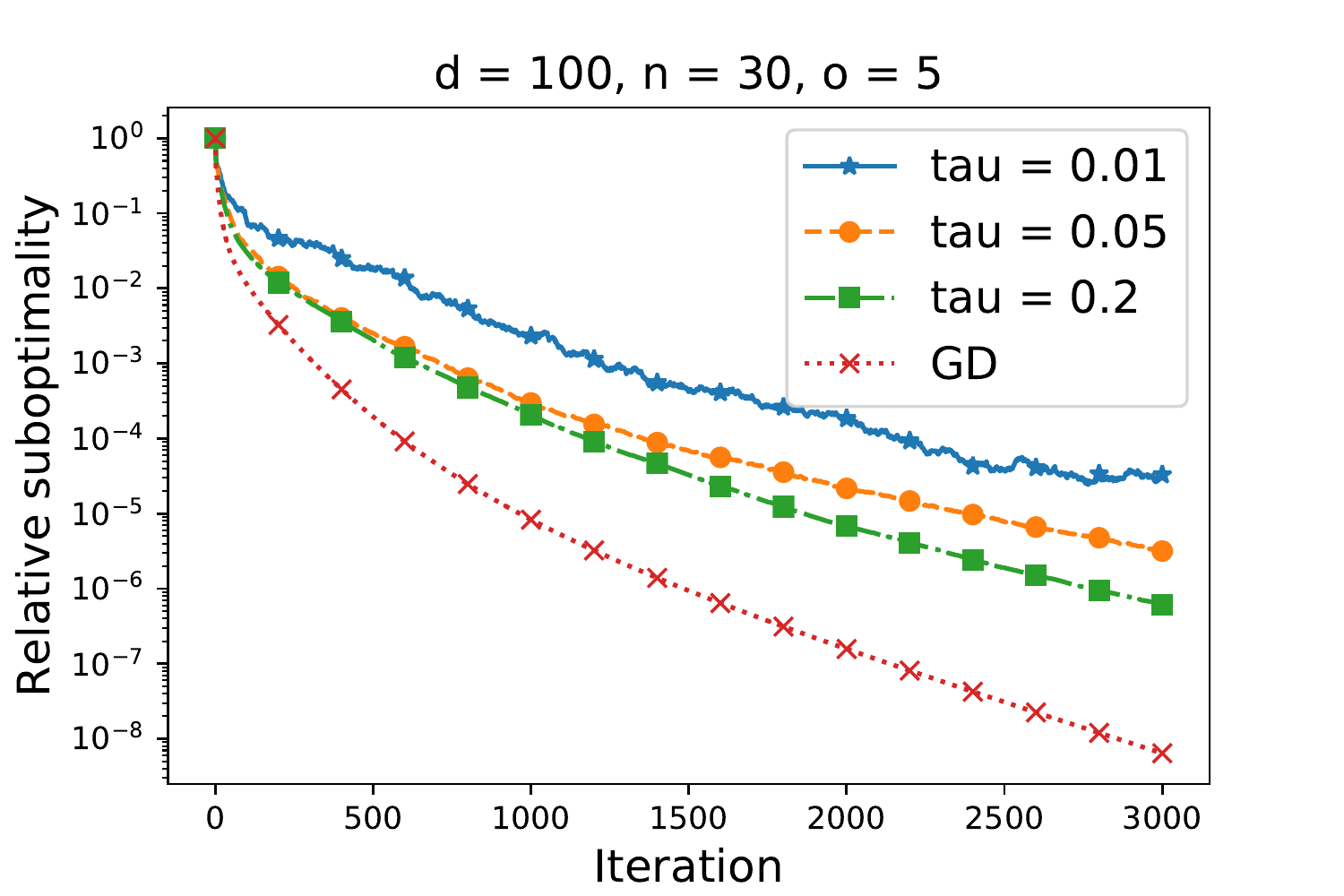}
\end{minipage}%
\begin{minipage}{0.24\textwidth}
  \centering
\includegraphics[width =  \textwidth ]{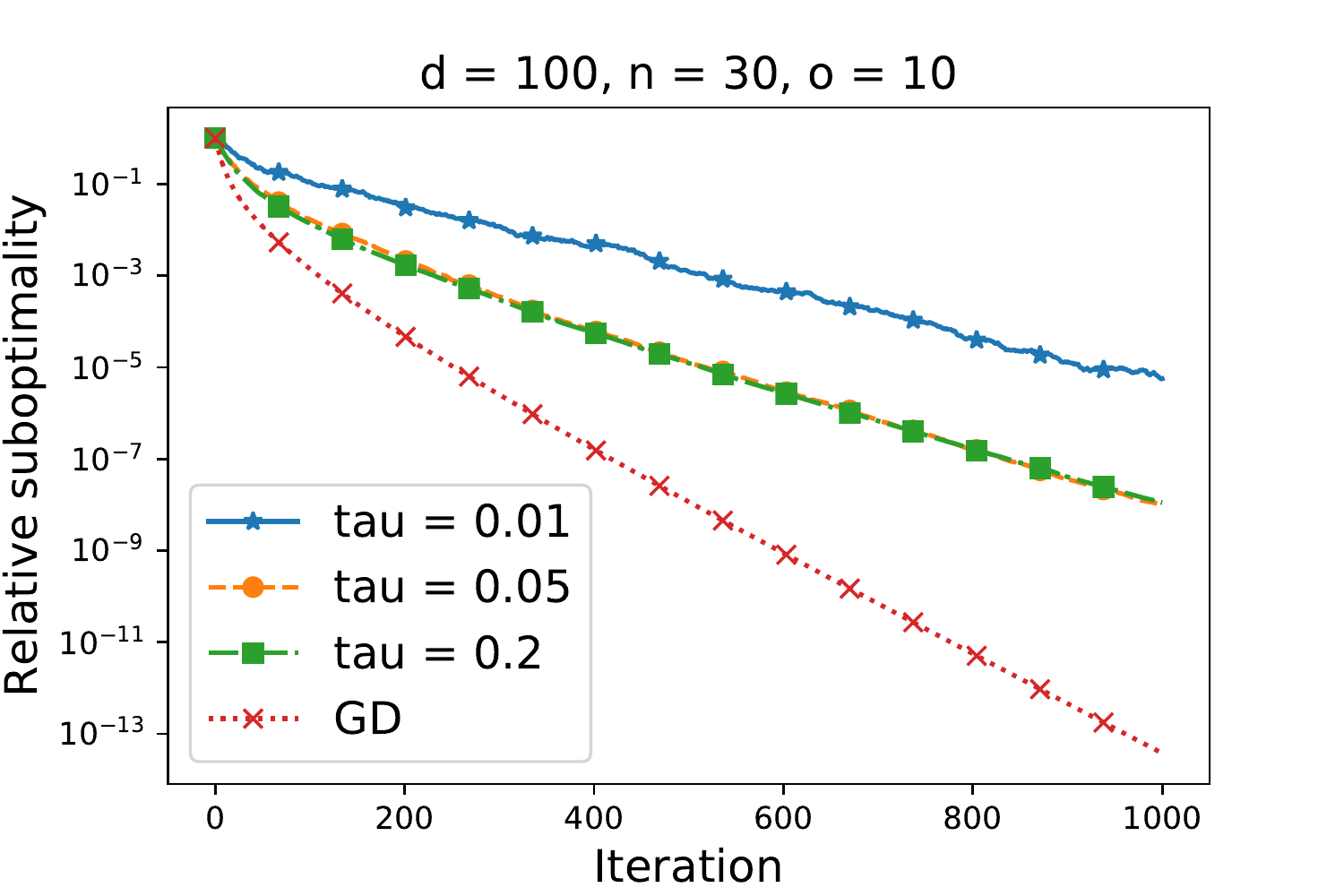}
\end{minipage}%
\begin{minipage}{0.24\textwidth}
  \centering
\includegraphics[width =  \textwidth ]{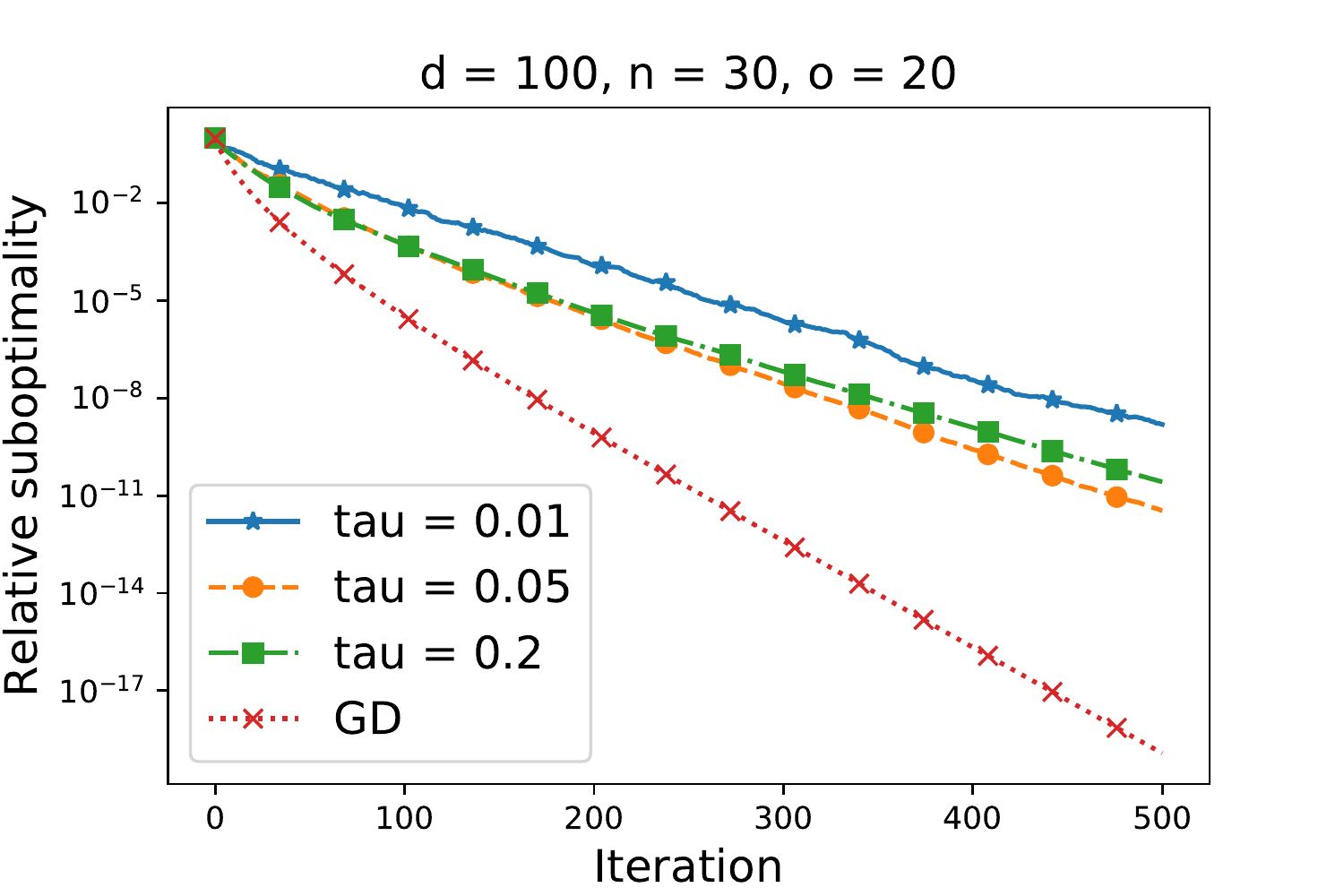}
\end{minipage}%
\\
\begin{minipage}{0.24\textwidth}
  \centering
\includegraphics[width =  \textwidth ]{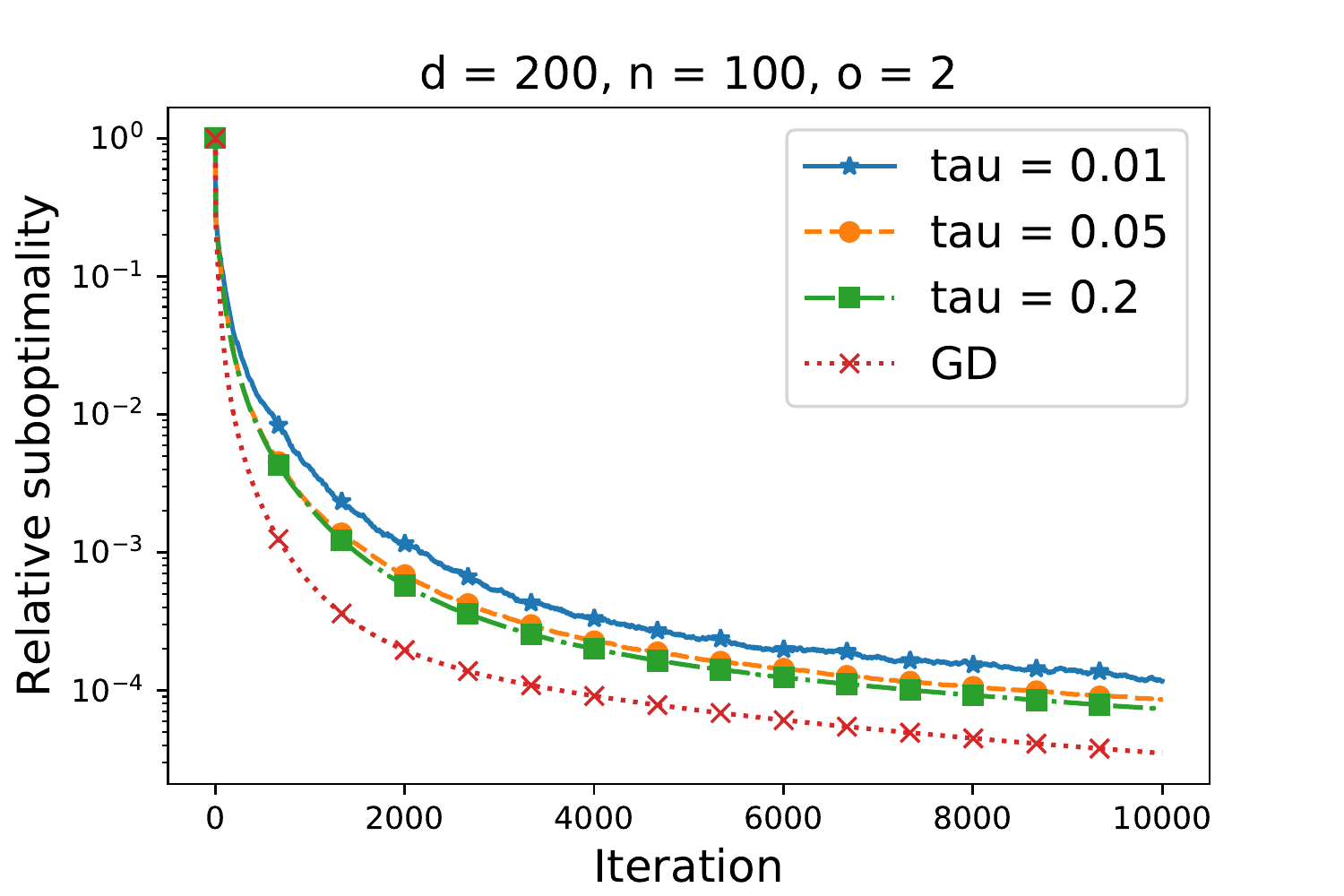}
\end{minipage}%
\begin{minipage}{0.24\textwidth}
  \centering
\includegraphics[width =  \textwidth ]{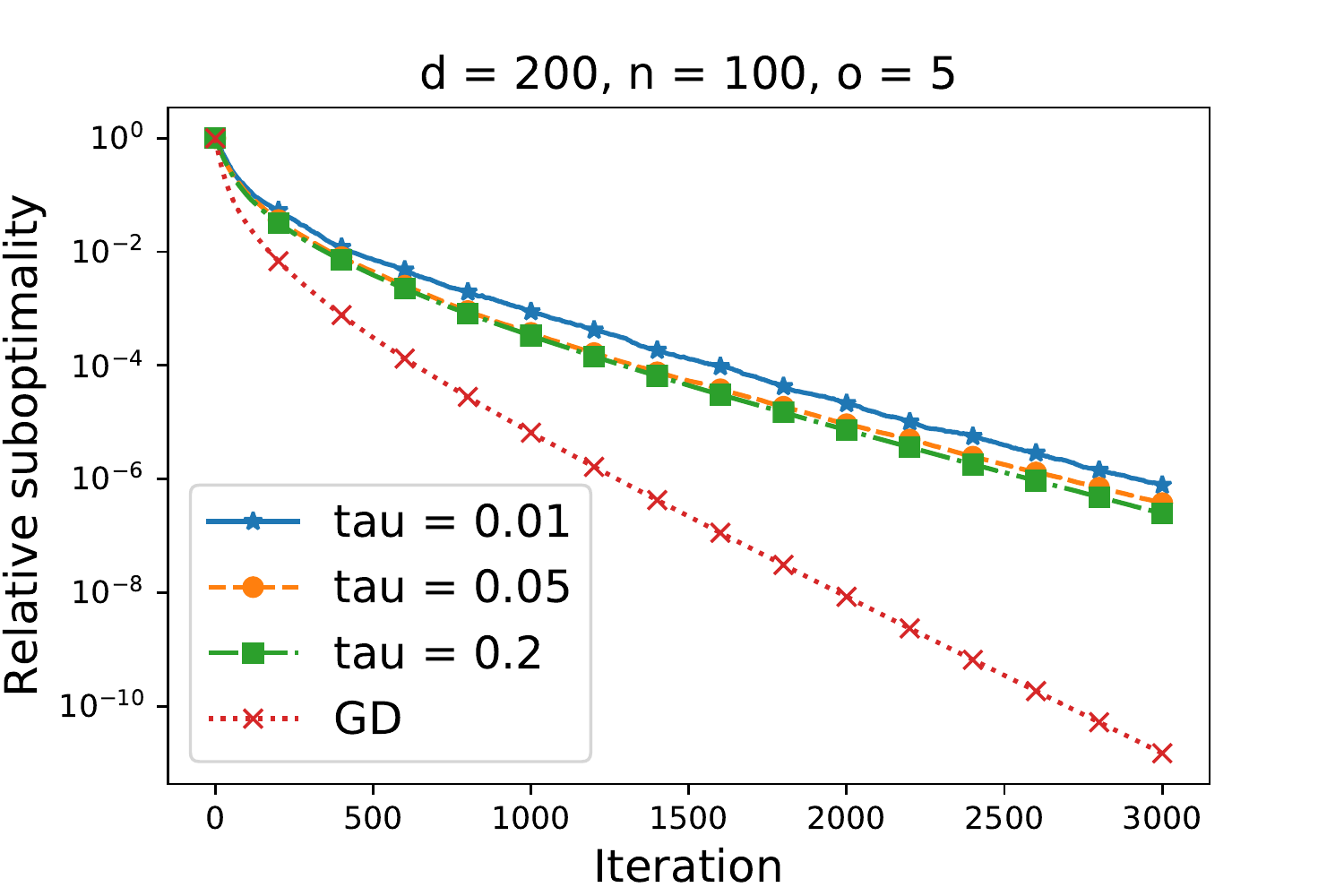}
\end{minipage}%
\begin{minipage}{0.24\textwidth}
  \centering
\includegraphics[width =  \textwidth ]{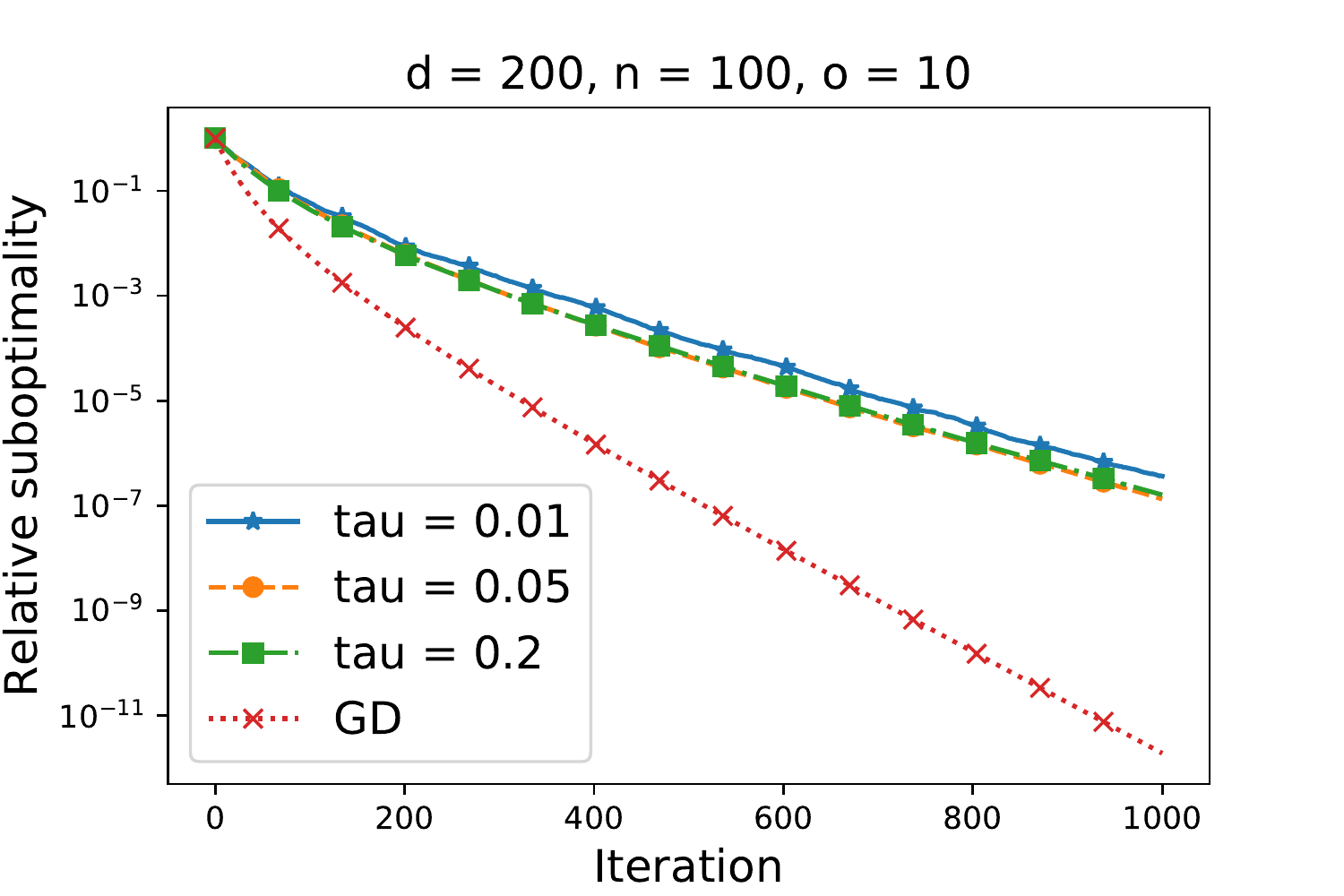}
\end{minipage}%
\begin{minipage}{0.24\textwidth}
  \centering
\includegraphics[width =  \textwidth ]{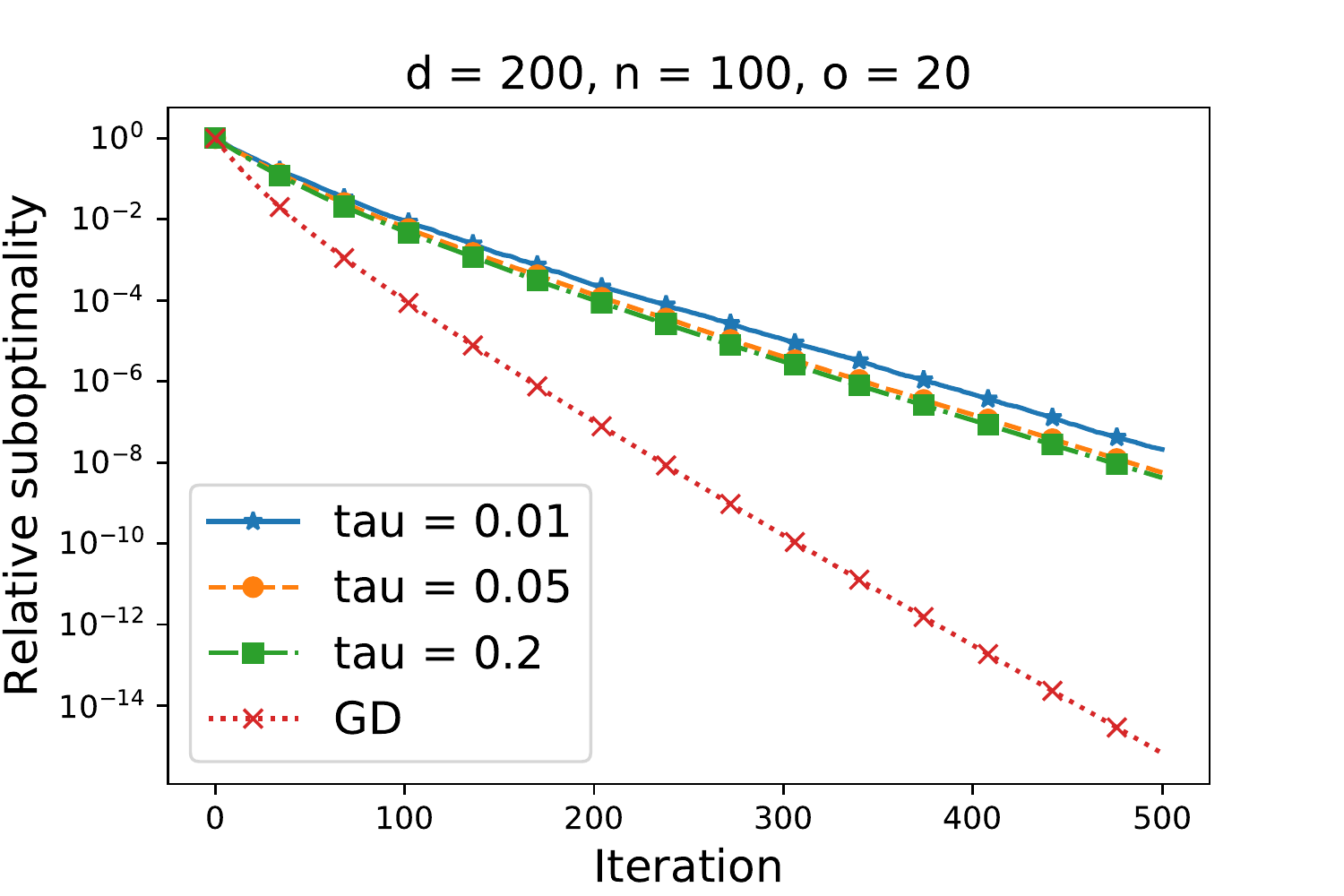}
\end{minipage}%
\\
\begin{minipage}{0.24\textwidth}
  \centering
\includegraphics[width =  \textwidth ]{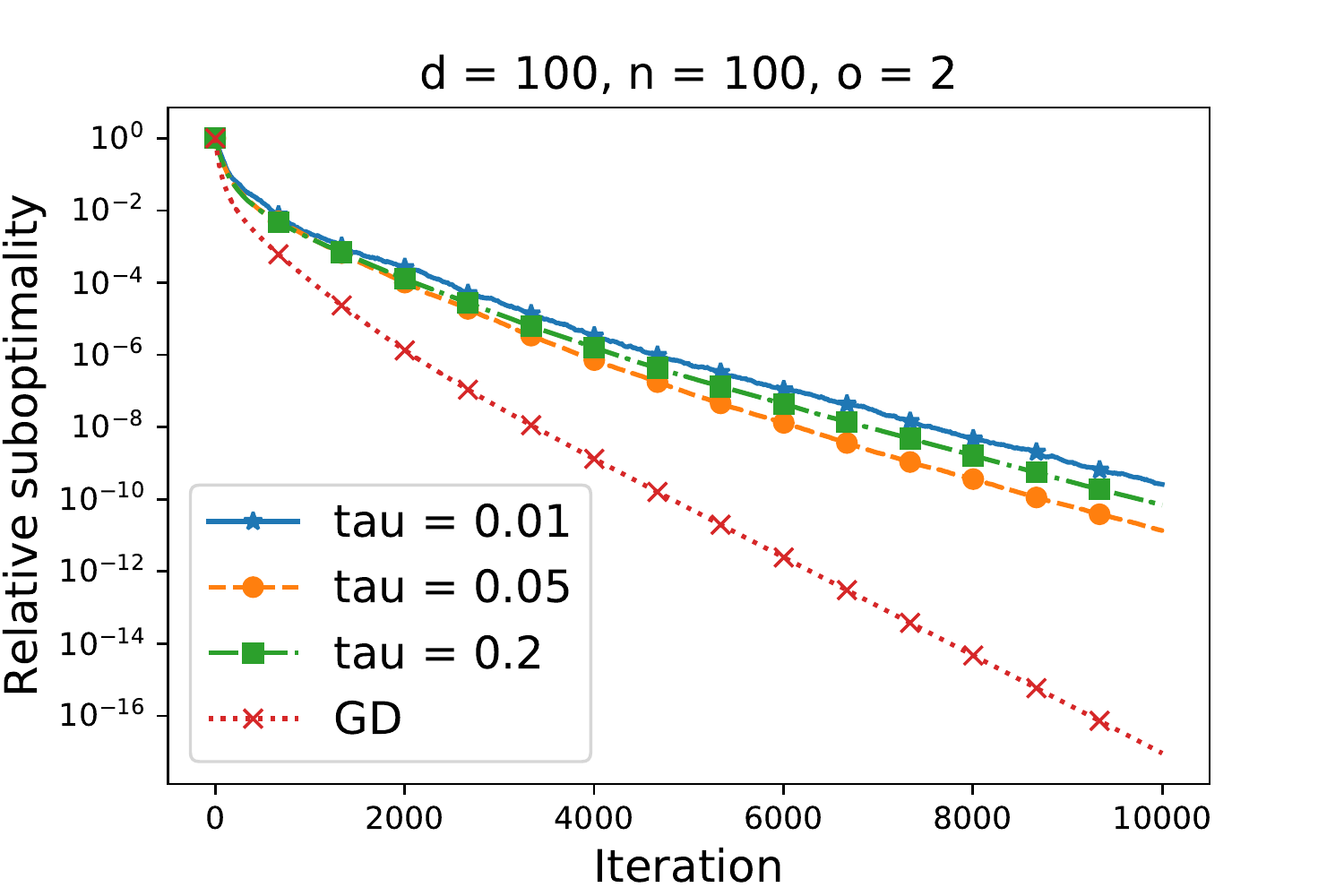}
\end{minipage}%
\begin{minipage}{0.24\textwidth}
  \centering
\includegraphics[width =  \textwidth ]{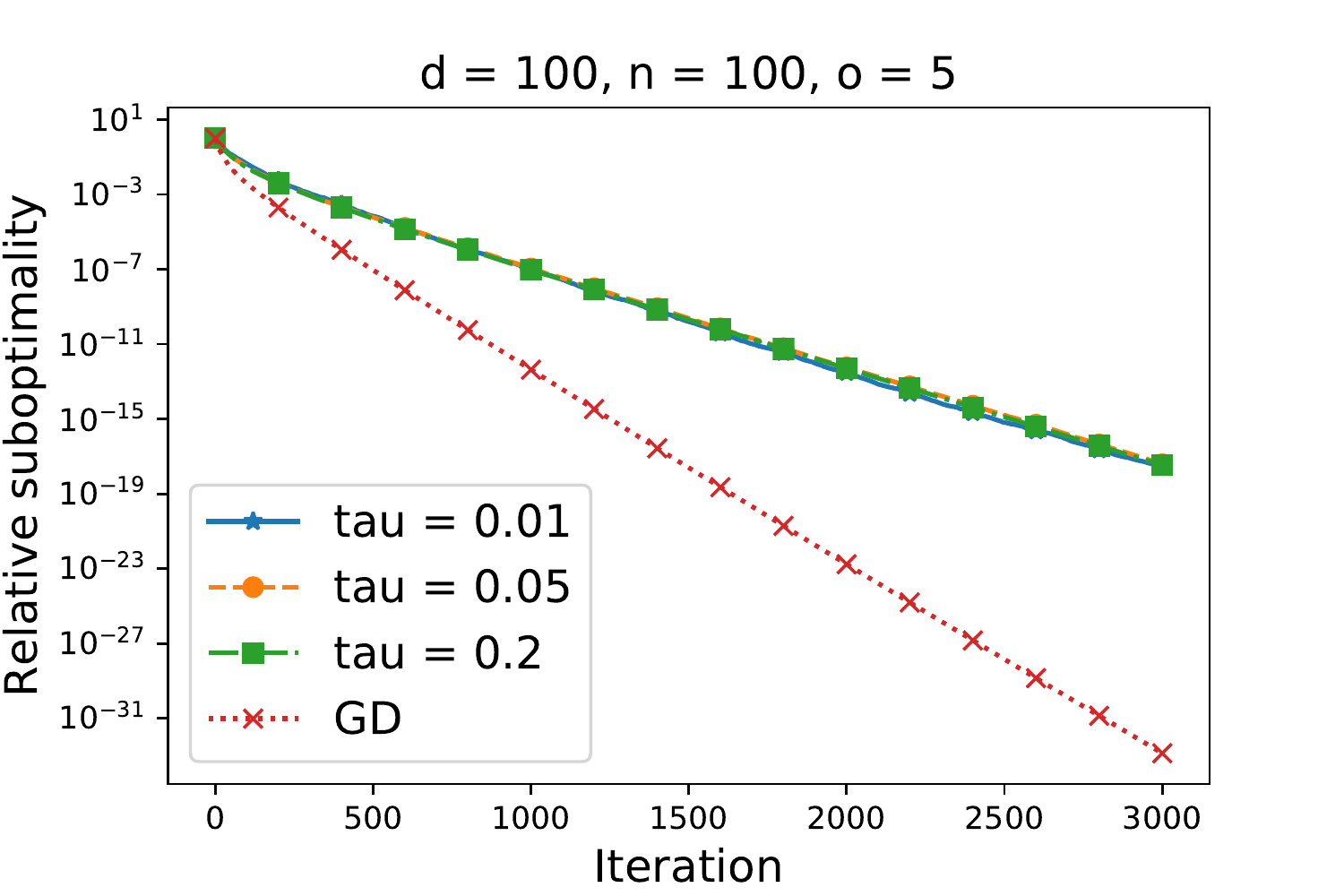}
\end{minipage}%
\begin{minipage}{0.24\textwidth}
  \centering
\includegraphics[width =  \textwidth ]{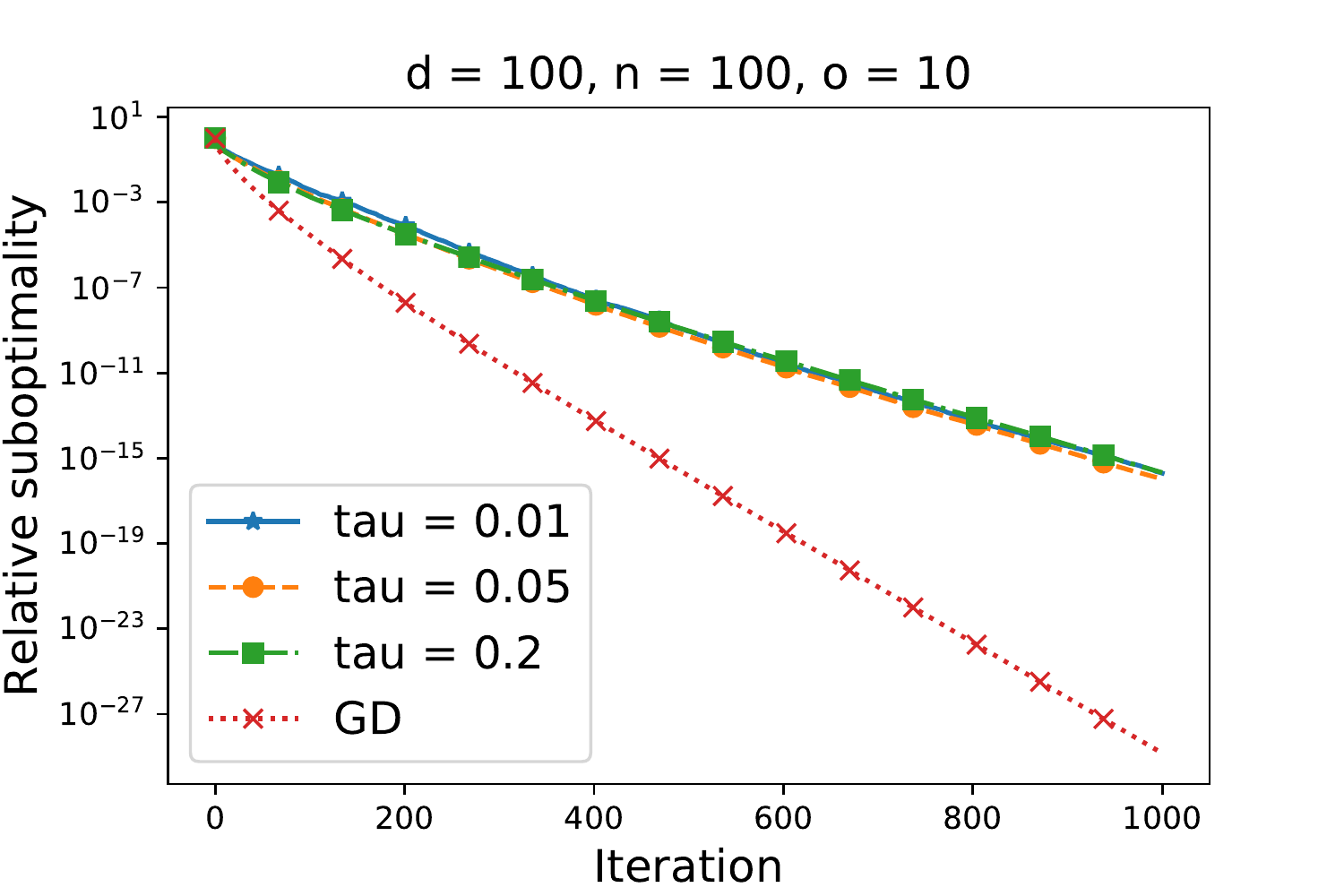}
\end{minipage}%
\begin{minipage}{0.24\textwidth}
  \centering
\includegraphics[width =  \textwidth ]{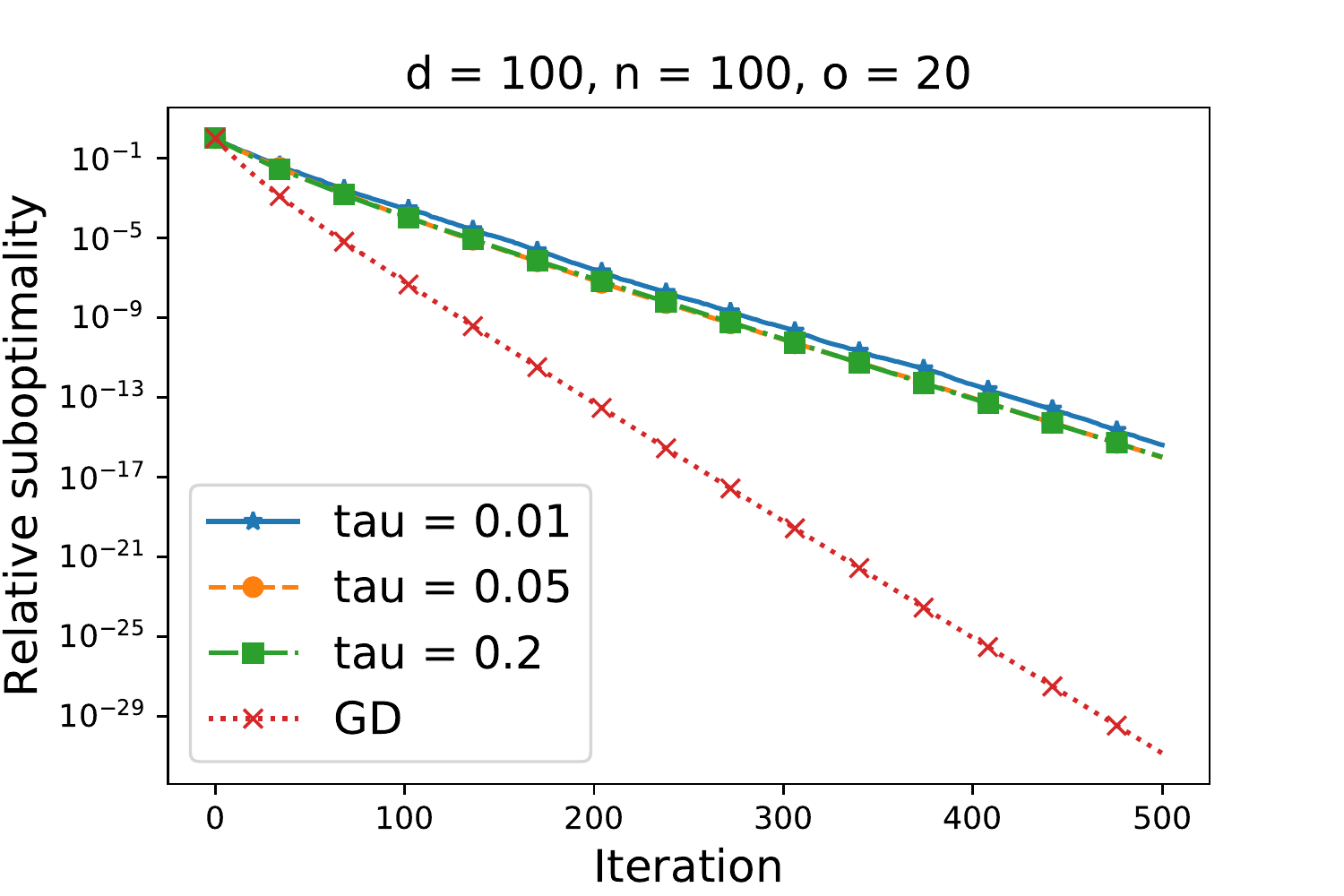}
\end{minipage}%
\\
\caption{Behavior of Algorithm~\ref{alg:cd} for different $\tau$ on a simple artificial quadratic problem~\eqref{eq:quadratic}.}\label{fig:artif_2}
\end{figure}

\subsection{ISGD \label{sec:exp_sgd}}
In this section we numerically test Algorithm~\ref{alg:sgd} for logistic regression problem. As mentioned, $f_i$ consists of set of (uniformly distributed) rows of $A$ from~\eqref{eq:logreg}. We consider the most natural unbiased stochastic oracle for the $\nabla f_i$ -- gradient computed on a subset data points from $f_i$. 

In all experiments of this section, we consider constant step sizes in order to keep the setting as simple as possible and gain as much insight from the experiments as possible. Therefore, one can not expect convergence to the exact optimum. 

In the first experiment, we compare standard SGD (stochastic gradient is computed on single, randomly chosen datapoint every iteration) against Algorithm~\ref{alg:sgd} varying $n$ and choosing $\tau=\frac{1}{n}$ for each $n$. The results are presented by Figure~\ref{fig:sgd1}. We see that, as our theory suggests, SGD and Algorithm~\ref{alg:sgd} have always very similar performance.

\begin{figure}[H]
\centering
\begin{minipage}{0.33\textwidth}
  \centering
\includegraphics[width =  \textwidth ]{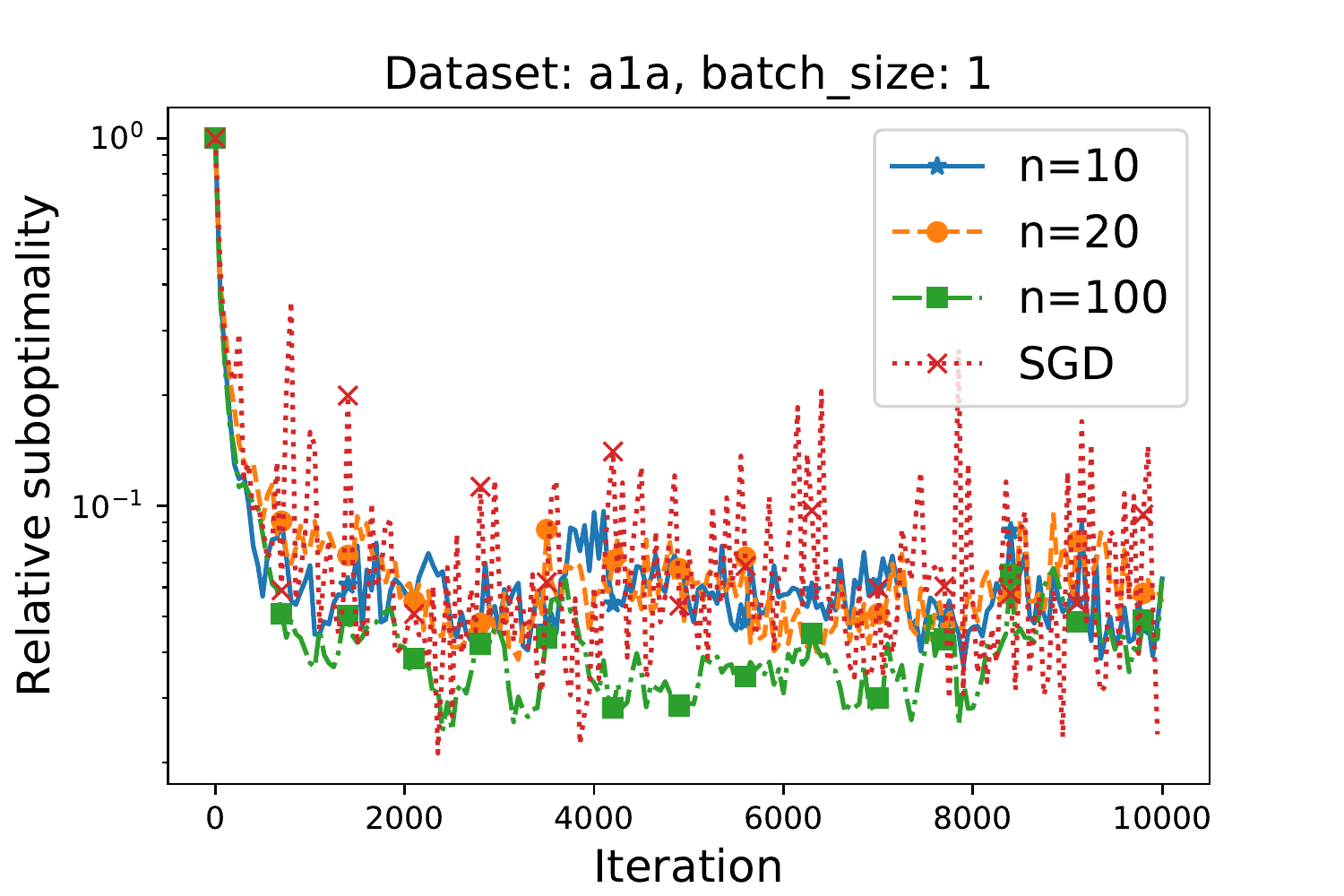}
\end{minipage}%
\begin{minipage}{0.33\textwidth}
  \centering
\includegraphics[width =  \textwidth ]{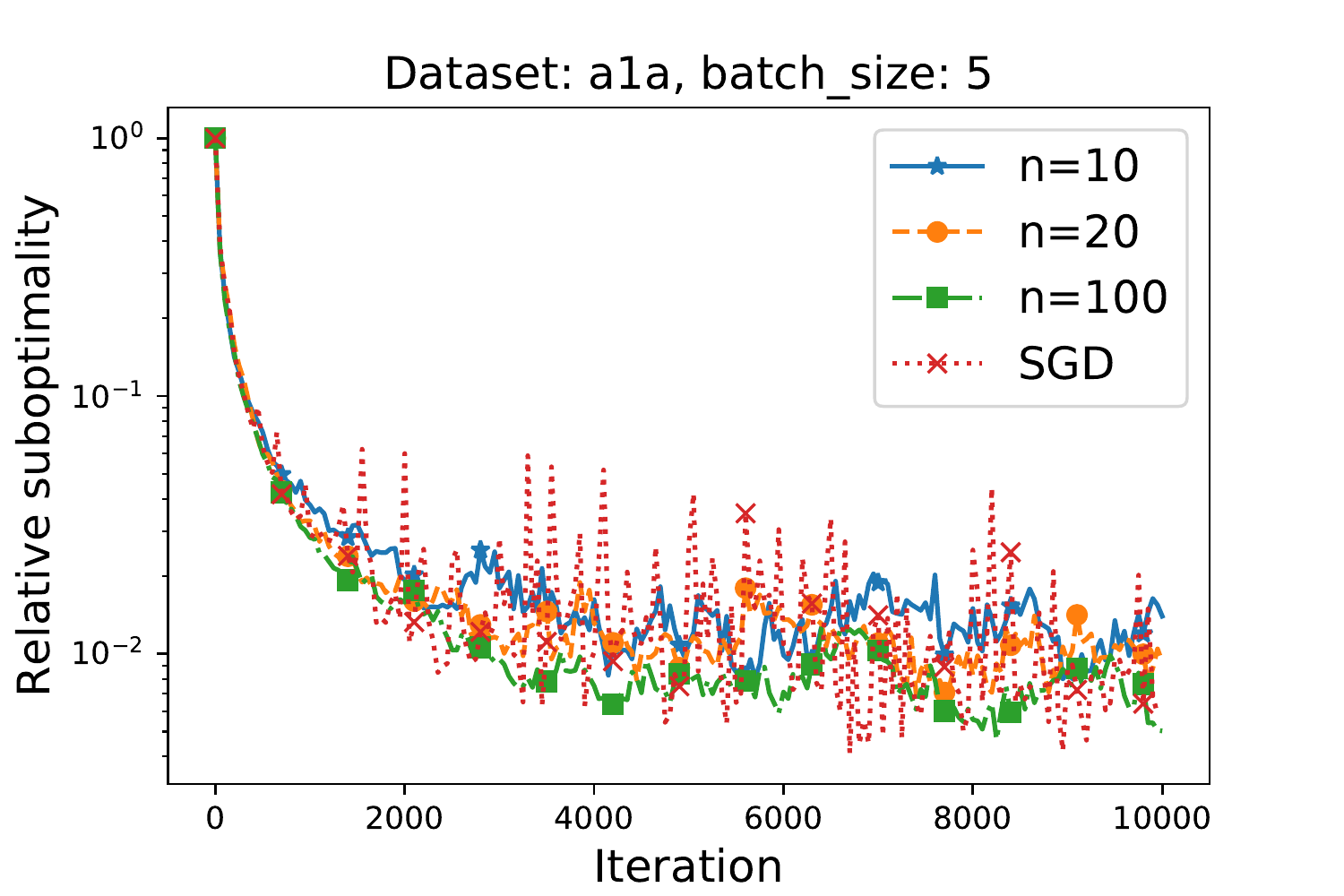}
\end{minipage}%
\begin{minipage}{0.33\textwidth}
  \centering
\includegraphics[width =  \textwidth ]{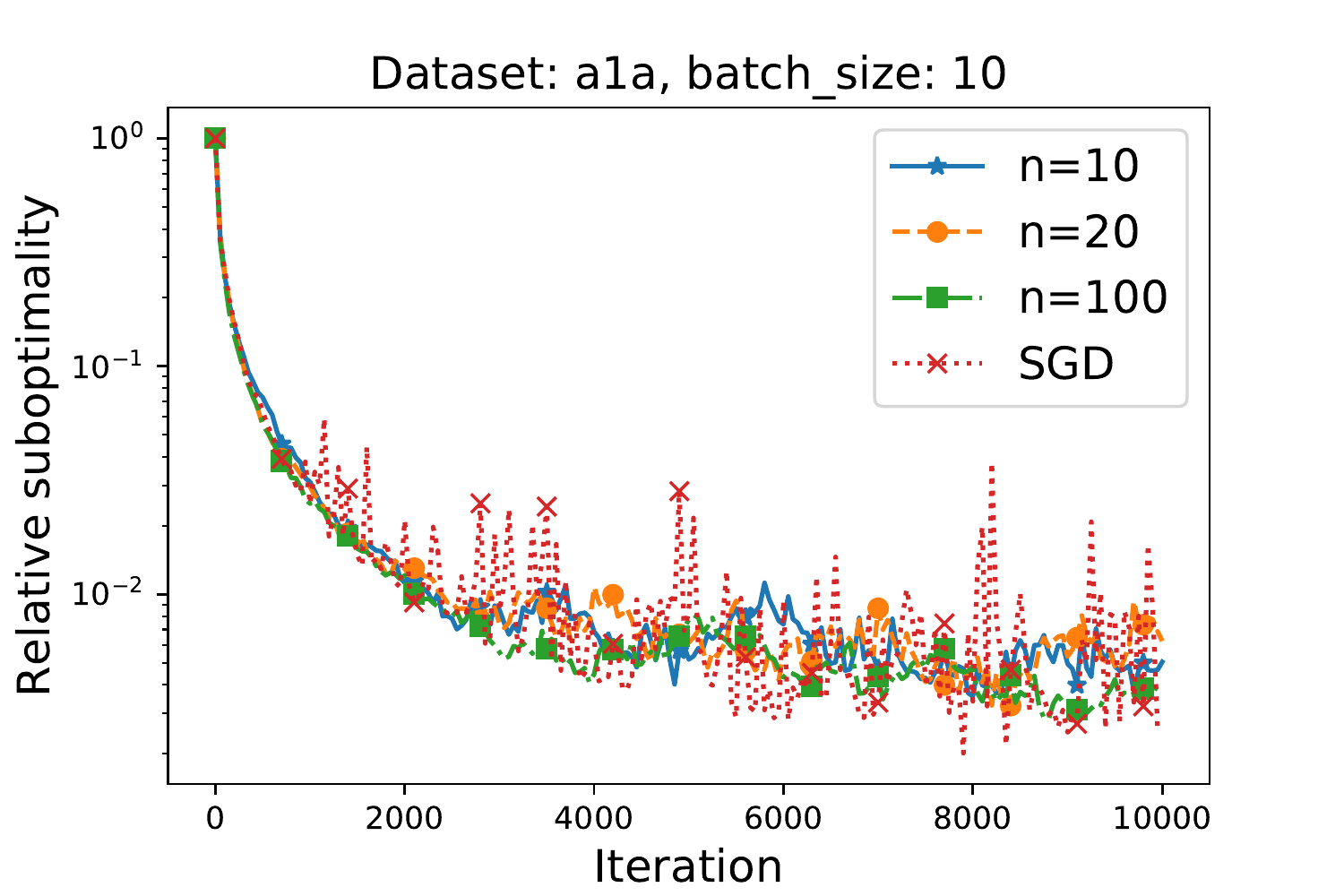}
\end{minipage}%
\\
\begin{minipage}{0.33\textwidth}
  \centering
\includegraphics[width =  \textwidth ]{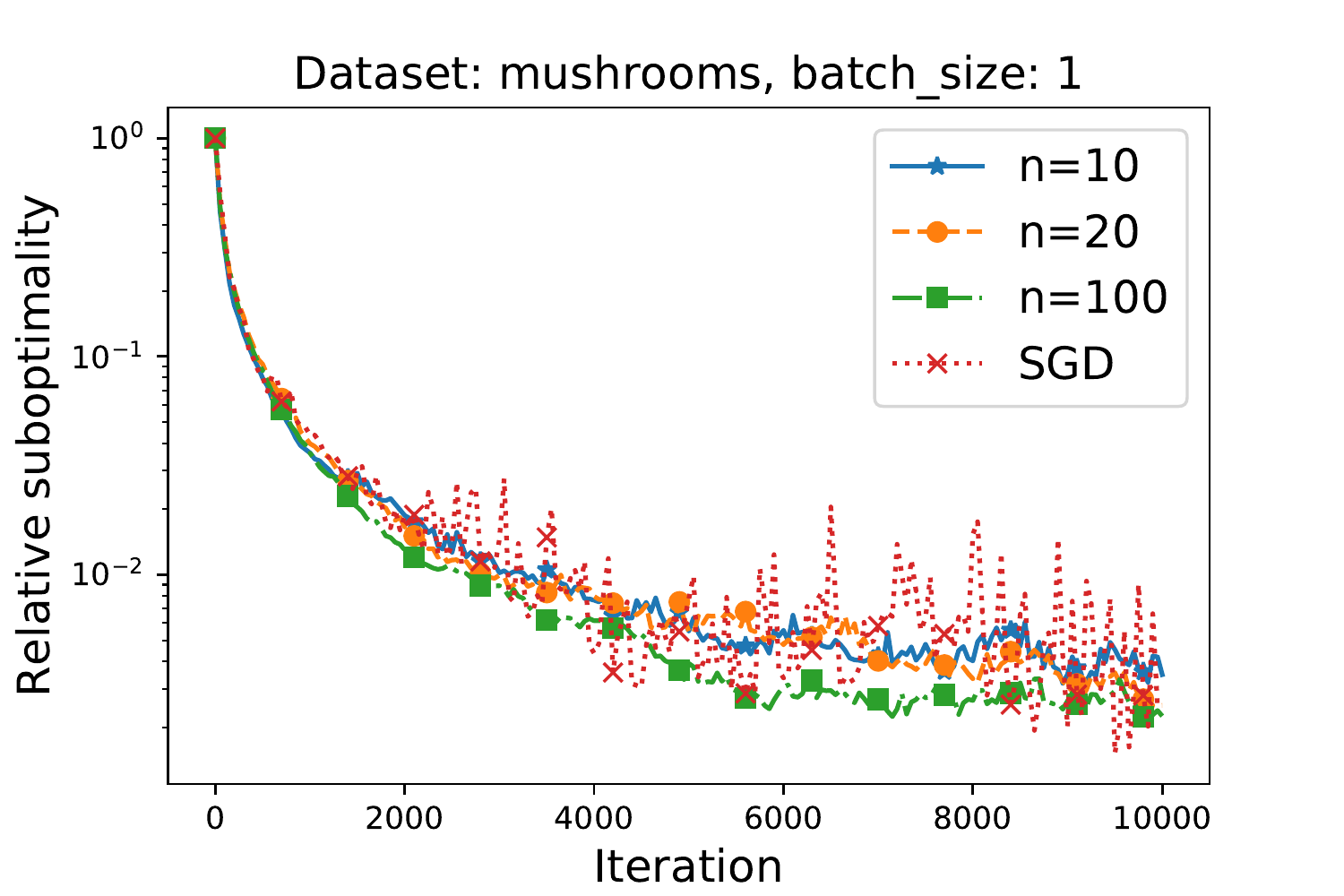}
\end{minipage}%
\begin{minipage}{0.33\textwidth}
  \centering
\includegraphics[width =  \textwidth ]{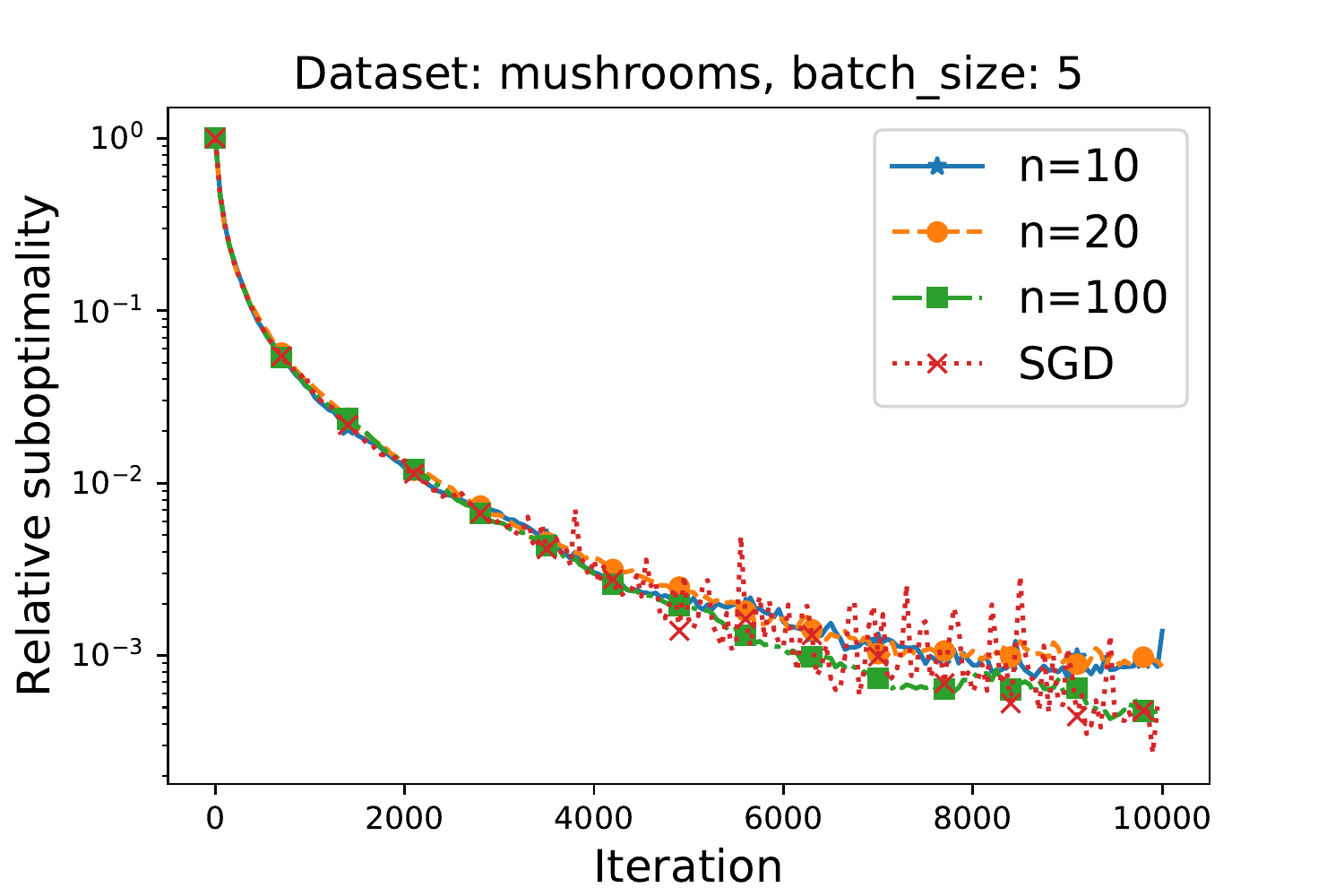}
\end{minipage}%
\begin{minipage}{0.33\textwidth}
  \centering
\includegraphics[width =  \textwidth ]{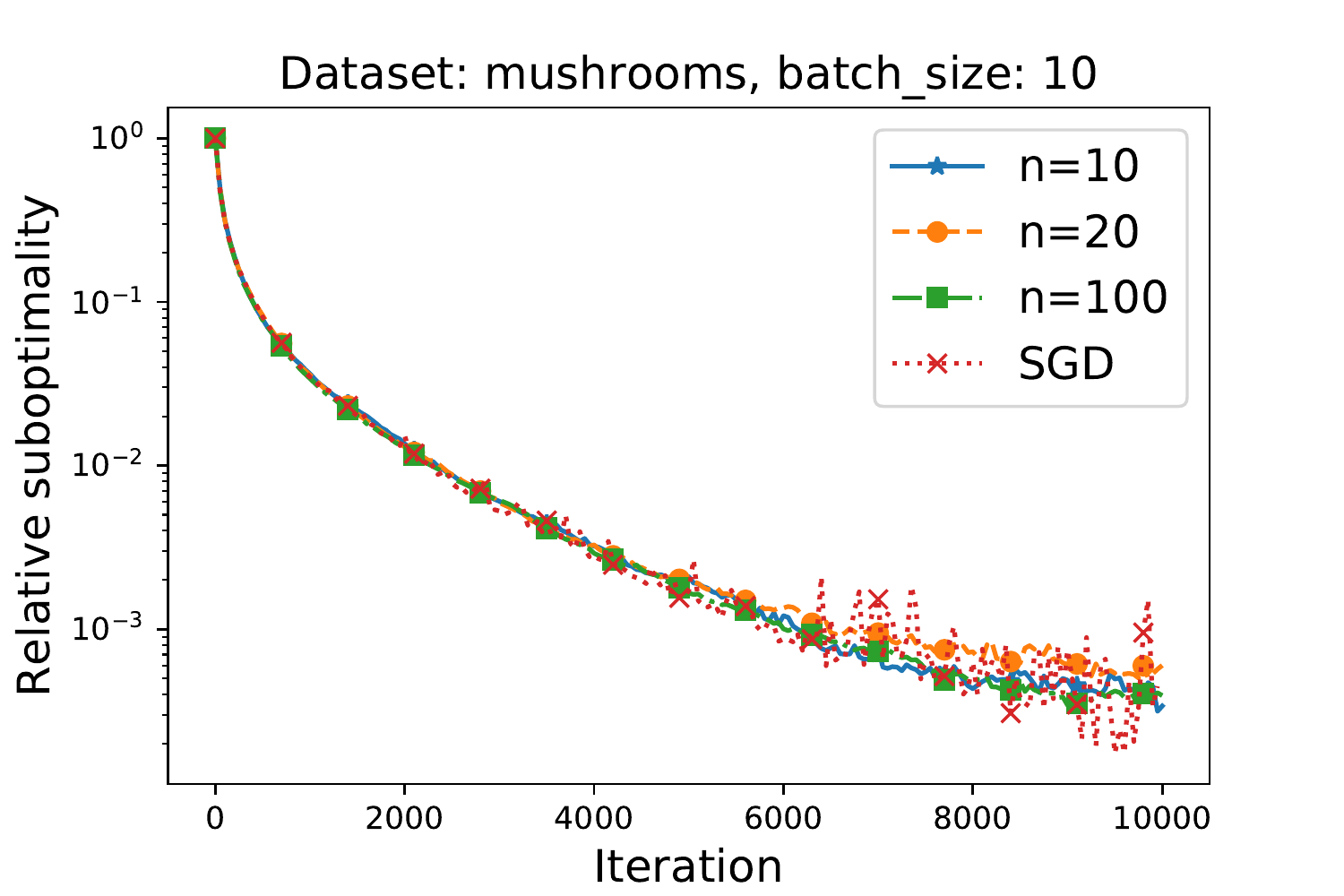}
\end{minipage}%
\\
\begin{minipage}{0.33\textwidth}
  \centering
\includegraphics[width =  \textwidth ]{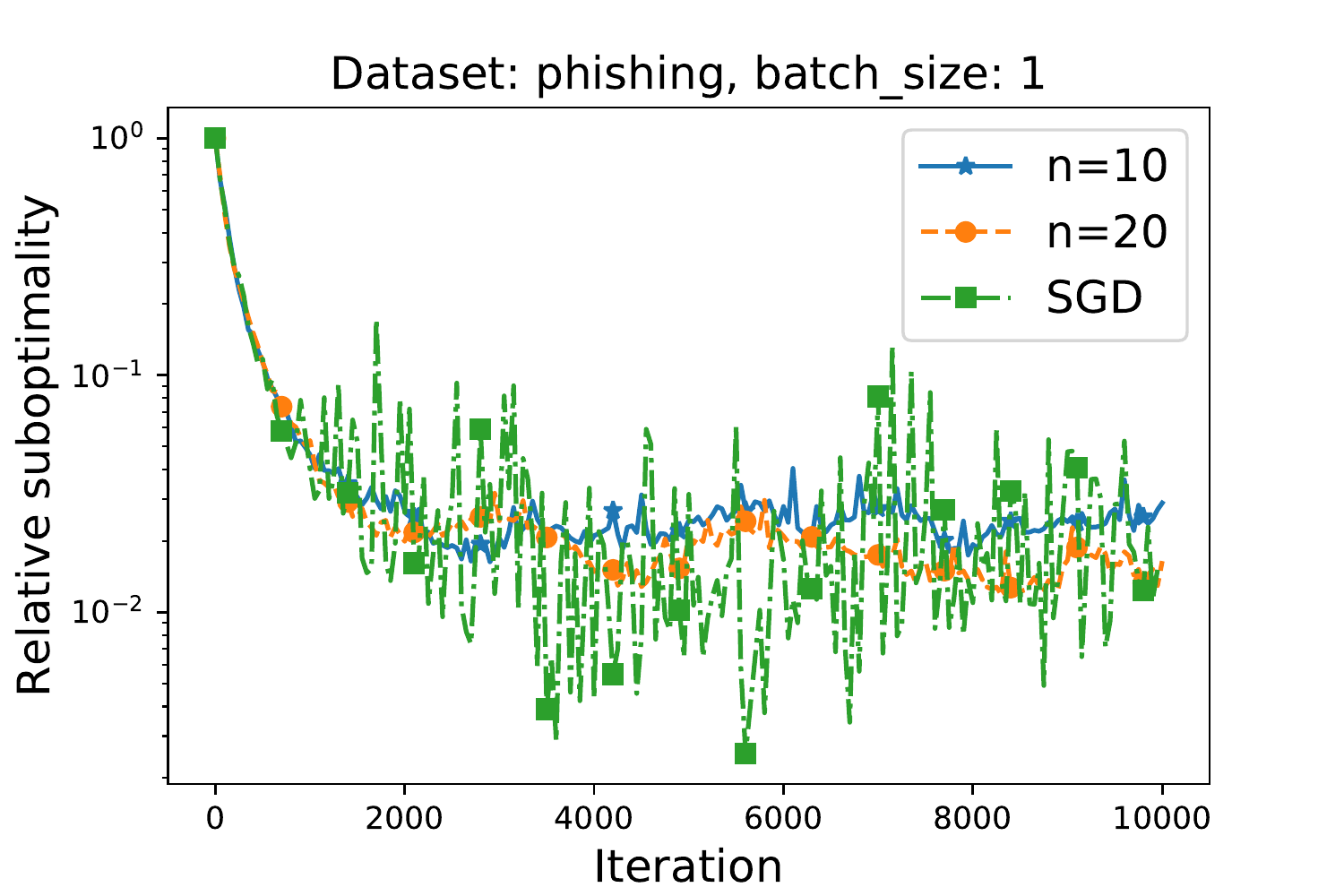}
\end{minipage}%
\begin{minipage}{0.33\textwidth}
  \centering
\includegraphics[width =  \textwidth ]{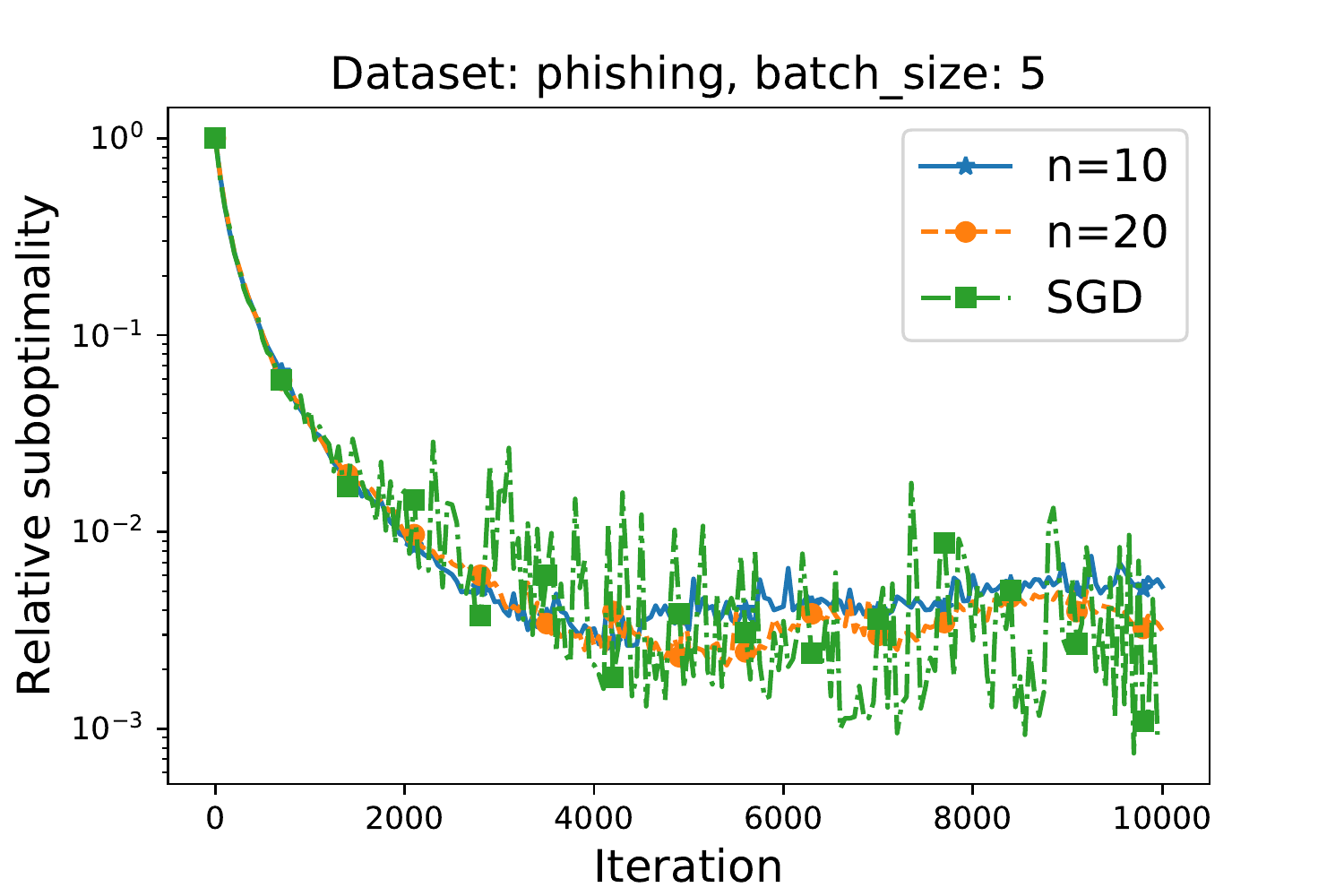}
\end{minipage}%
\begin{minipage}{0.33\textwidth}
  \centering
\includegraphics[width =  \textwidth ]{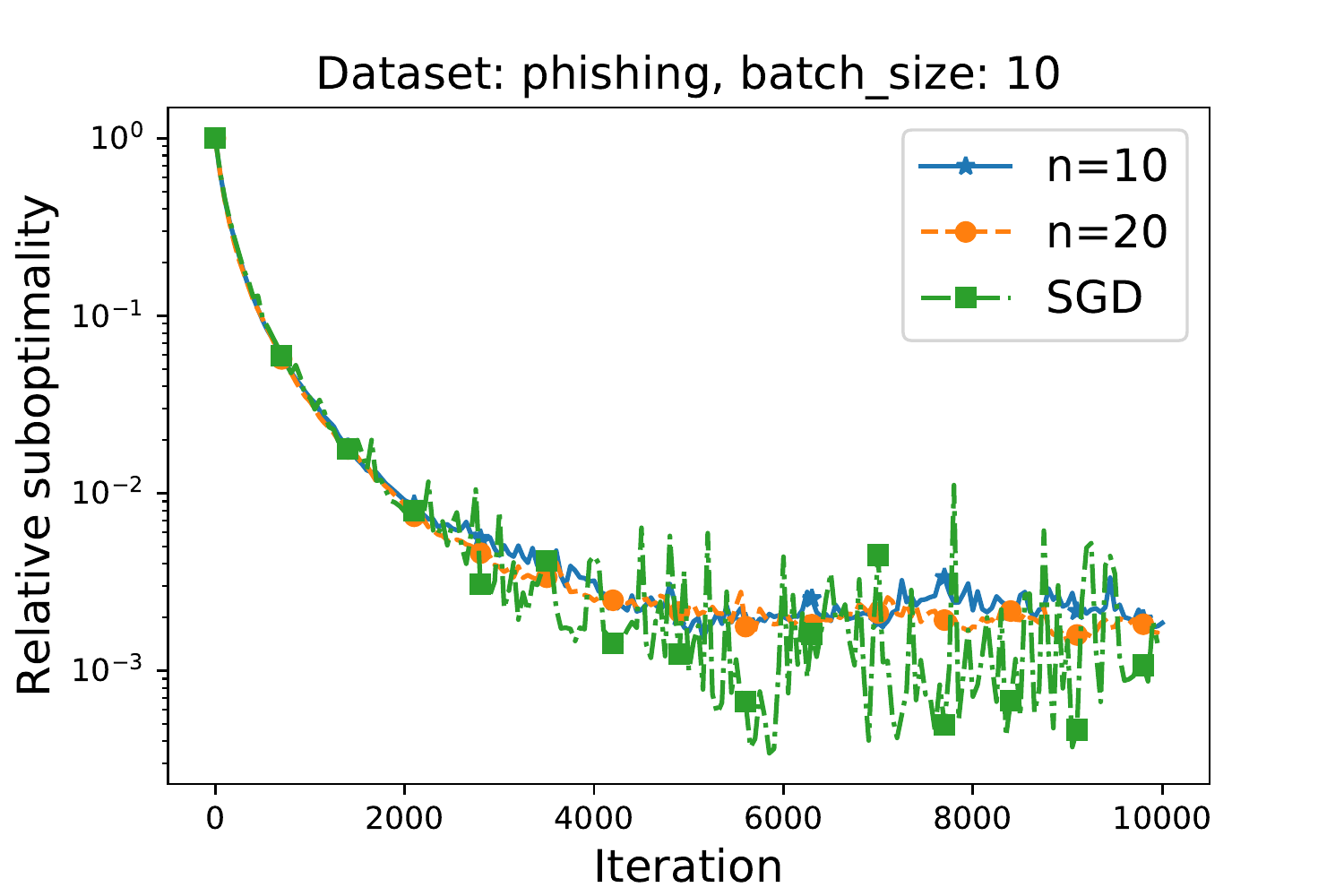}
\end{minipage}%
\\
\begin{minipage}{0.33\textwidth}
  \centering
\includegraphics[width =  \textwidth ]{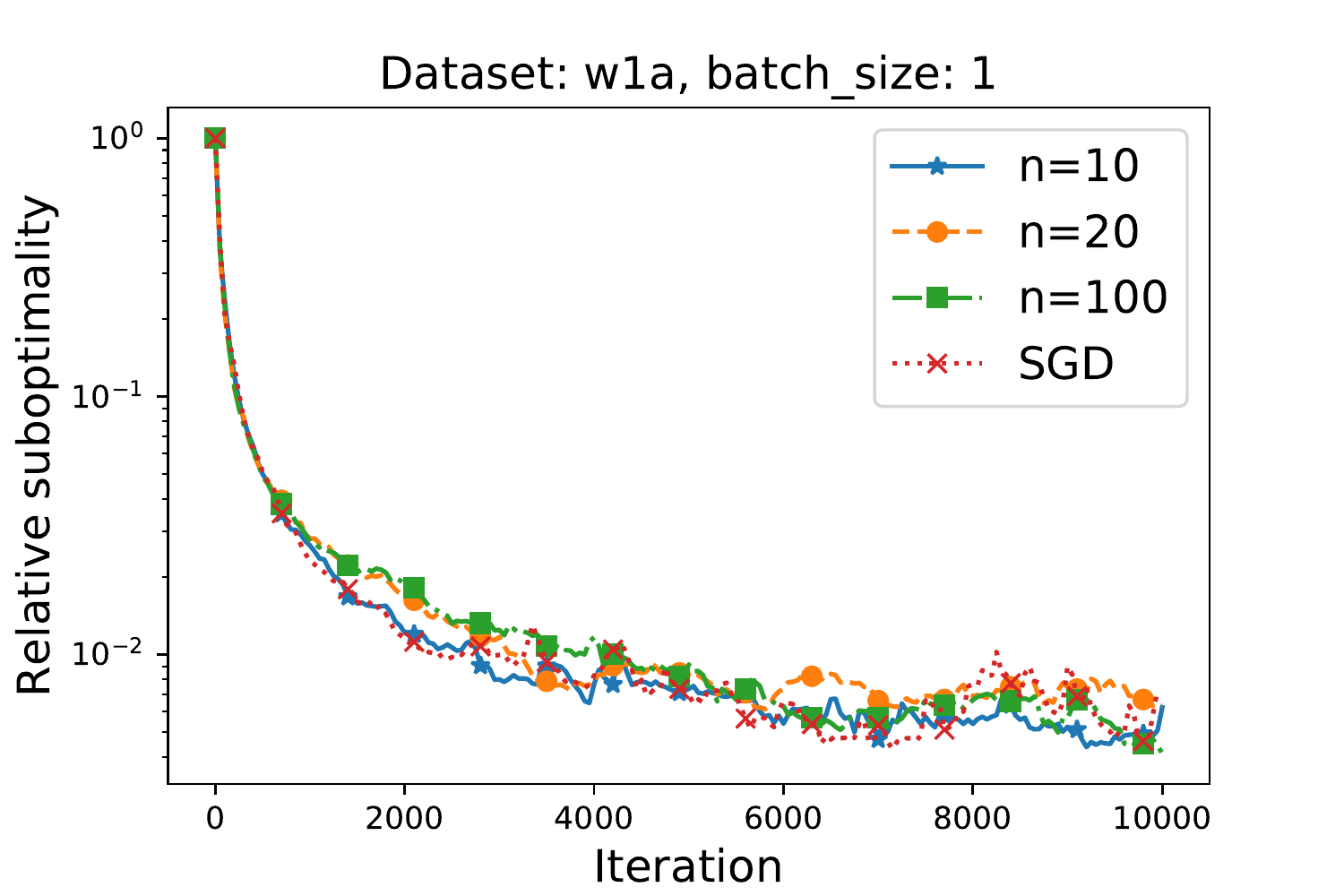}
\end{minipage}%
\begin{minipage}{0.33\textwidth}
  \centering
\includegraphics[width =  \textwidth ]{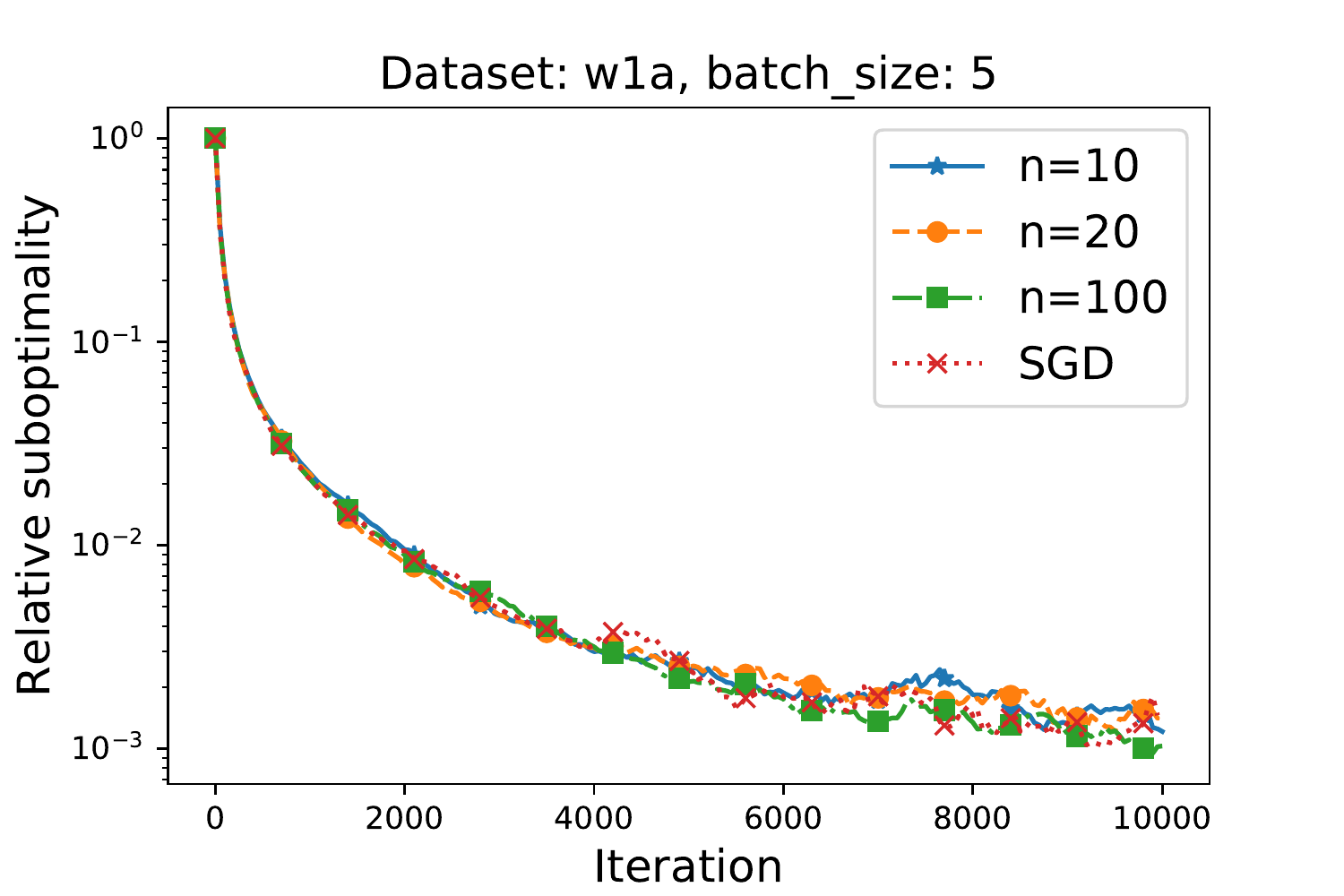}
\end{minipage}%
\begin{minipage}{0.33\textwidth}
  \centering
\includegraphics[width =  \textwidth ]{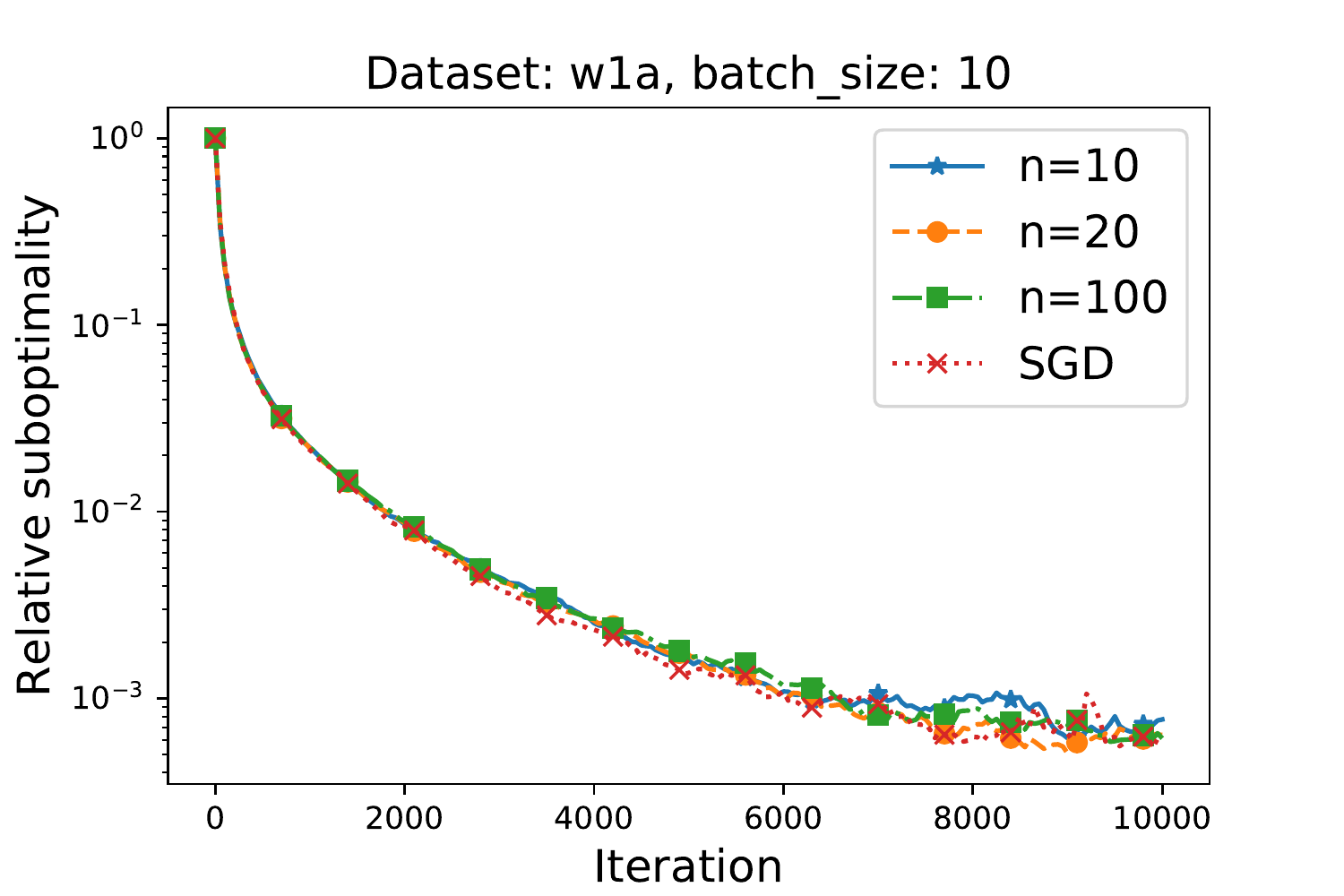}
\end{minipage}%
\\
\caption{Comparison of SGD (gradient evaluated on a single datapoint) and Algorithm~\ref{alg:sgd} with $n\tau=1$. Constant $\gamma  = \frac{1}{5L}$ was used for each algorithm. Label ``batch\_size'' indicates how big minibatch was chosen for stochastic gradient of each worker's objective.} \label{fig:sgd1}
\end{figure}

Next, we study the dependence of the convergence speed on $\tau$ for various values of $n$. Figure~\ref{fig:sgd2} presents the results. In each case, $\tau$ influences the convergence rate (or the region where the iterates oscillate) significantly, however, the effect is much weaker for larger $n$. This is in correspondence with Corollary~\ref{cor:sgd}.

\begin{figure}[H]
\centering
\begin{minipage}{0.33\textwidth}
  \centering
\includegraphics[width =  \textwidth ]{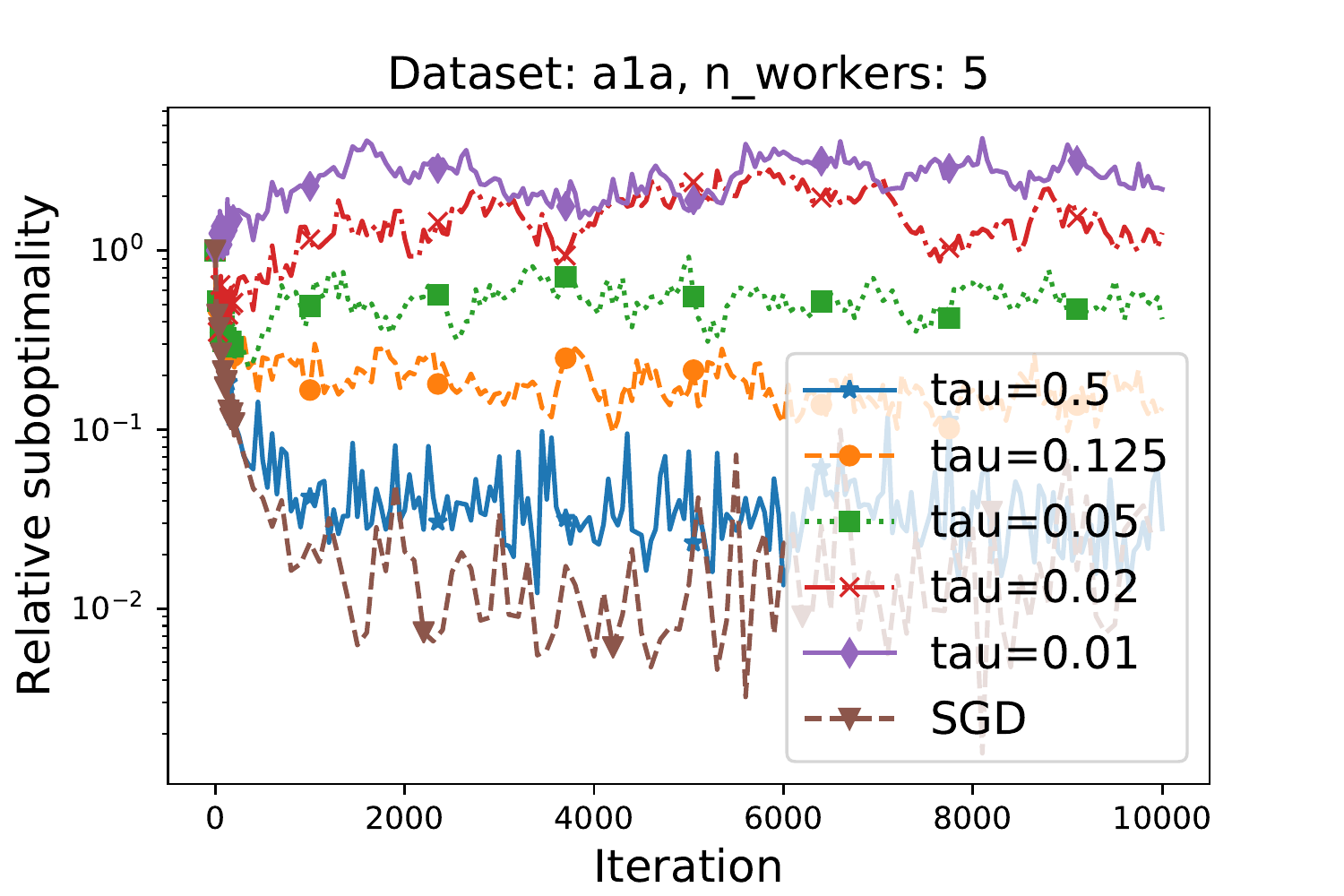}
\end{minipage}%
\begin{minipage}{0.33\textwidth}
  \centering
\includegraphics[width =  \textwidth ]{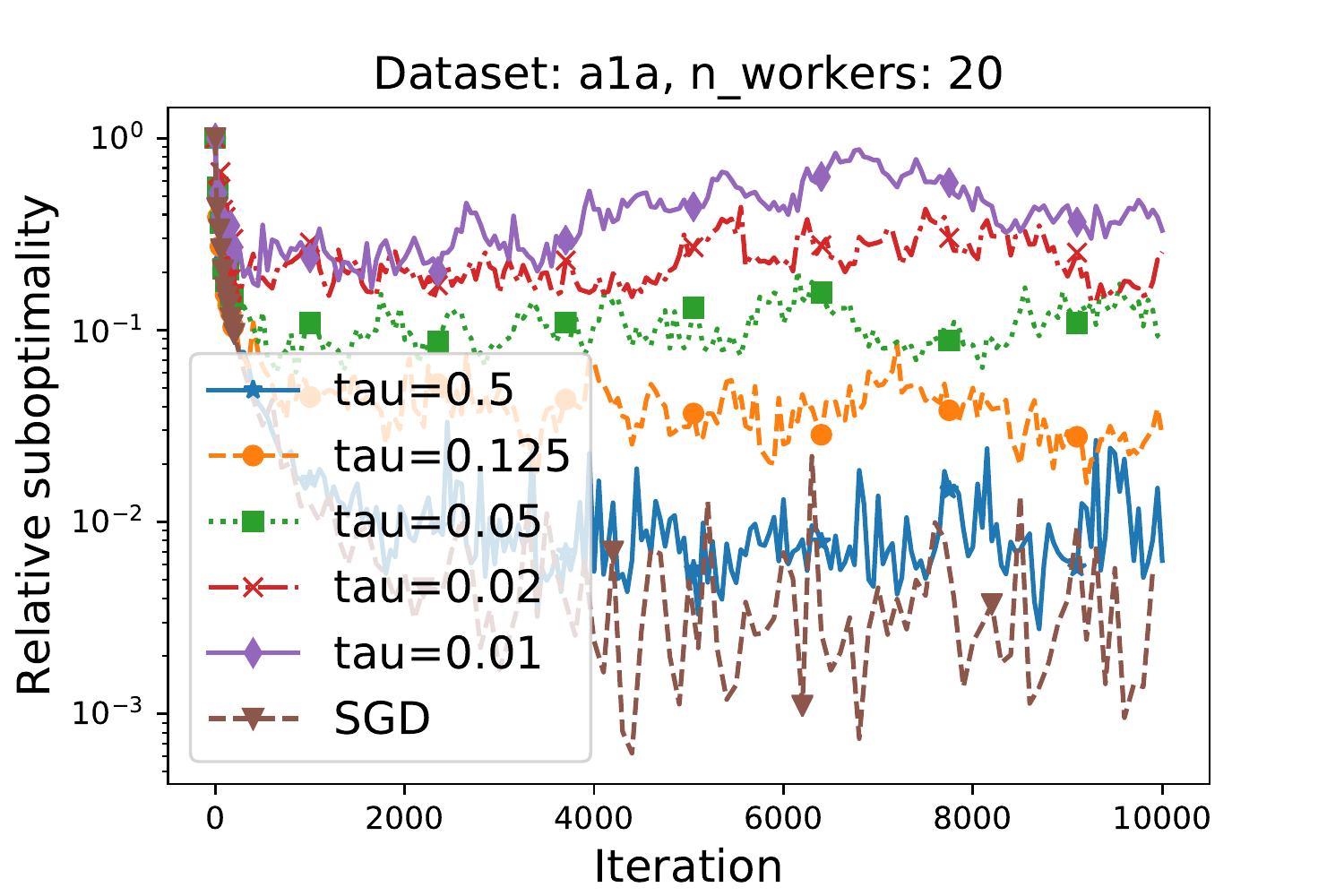}
\end{minipage}%
\begin{minipage}{0.33\textwidth}
  \centering
\includegraphics[width =  \textwidth ]{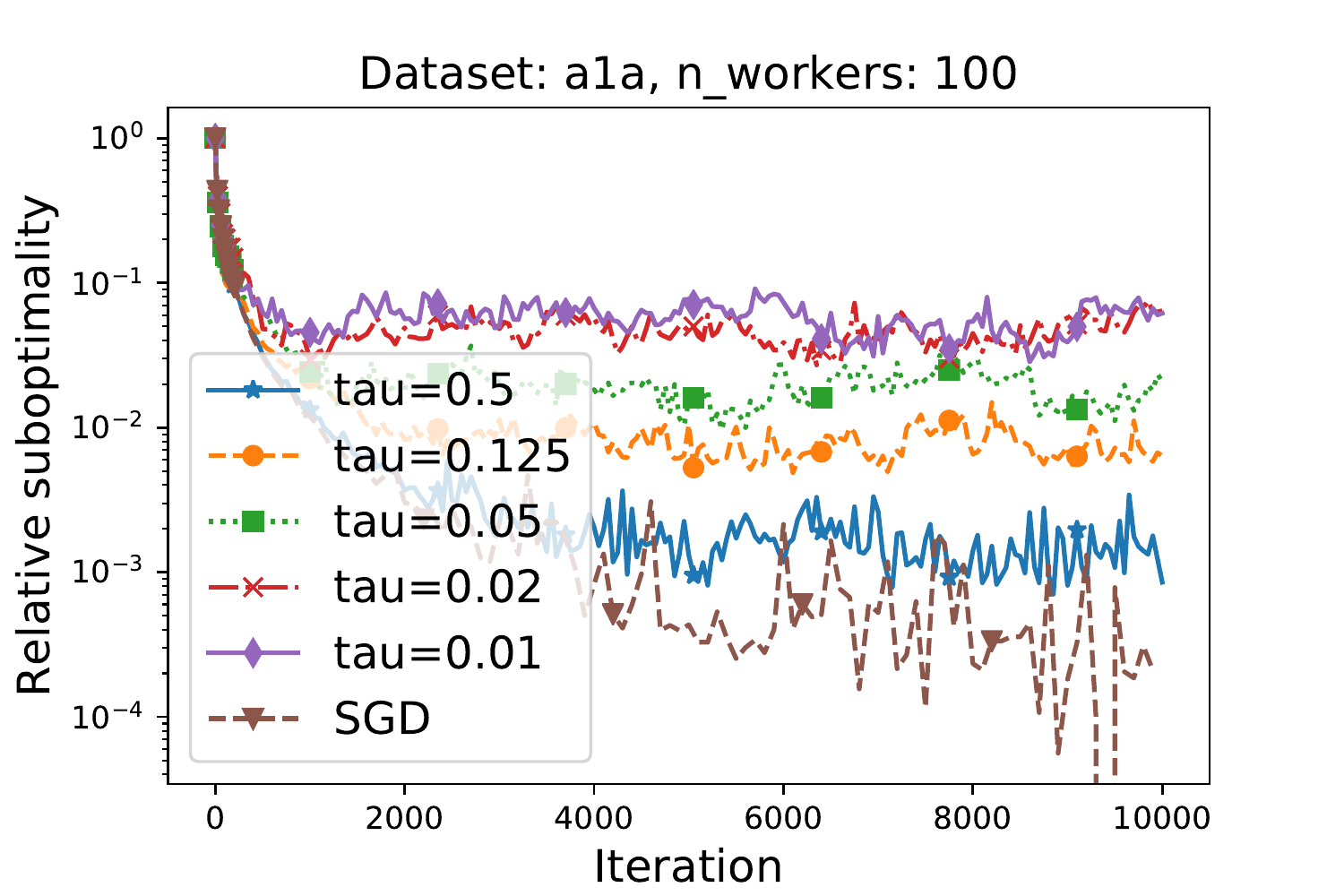}
\end{minipage}%
\\
\begin{minipage}{0.33\textwidth}
  \centering
\includegraphics[width =  \textwidth ]{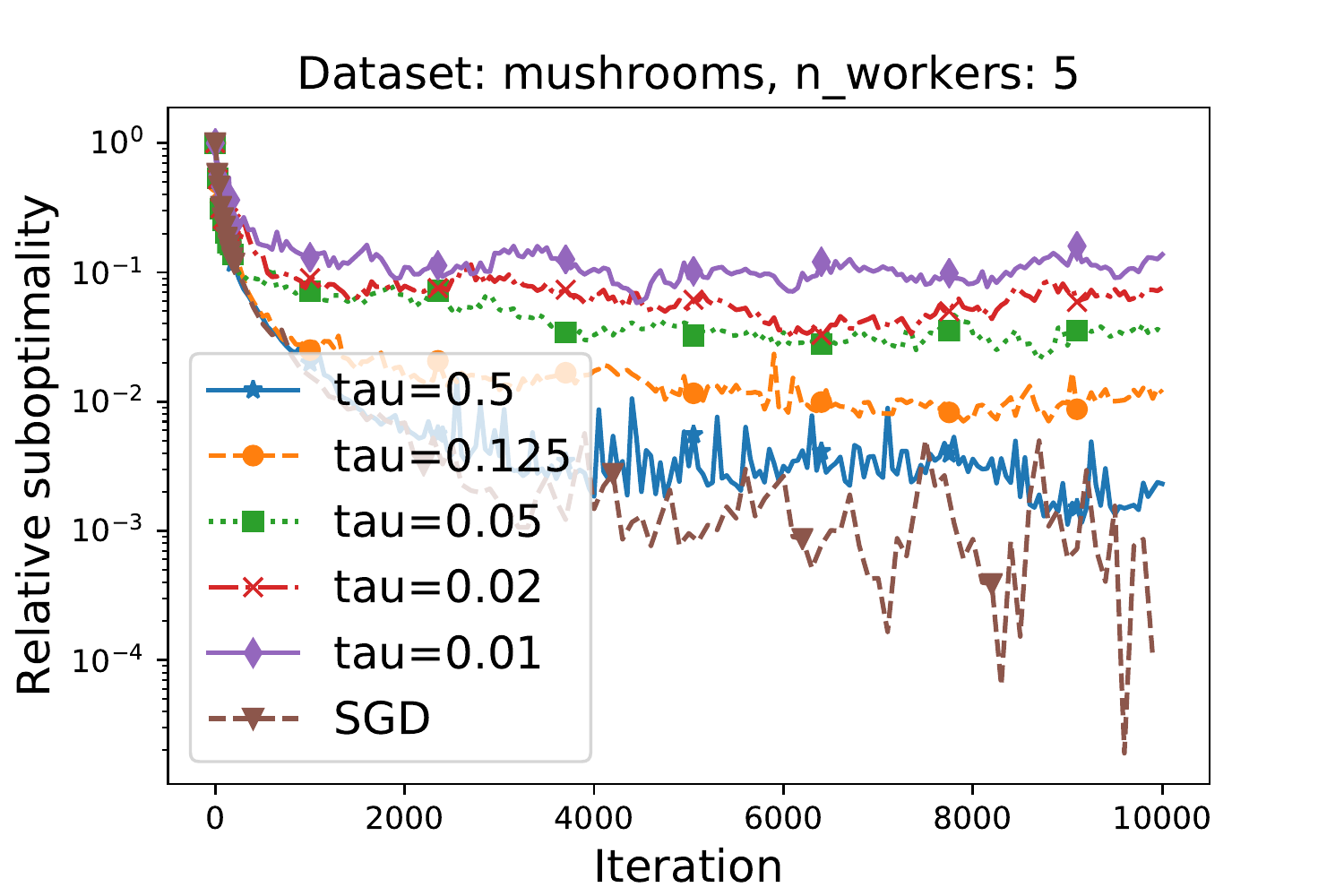}
\end{minipage}%
\begin{minipage}{0.33\textwidth}
  \centering
\includegraphics[width =  \textwidth ]{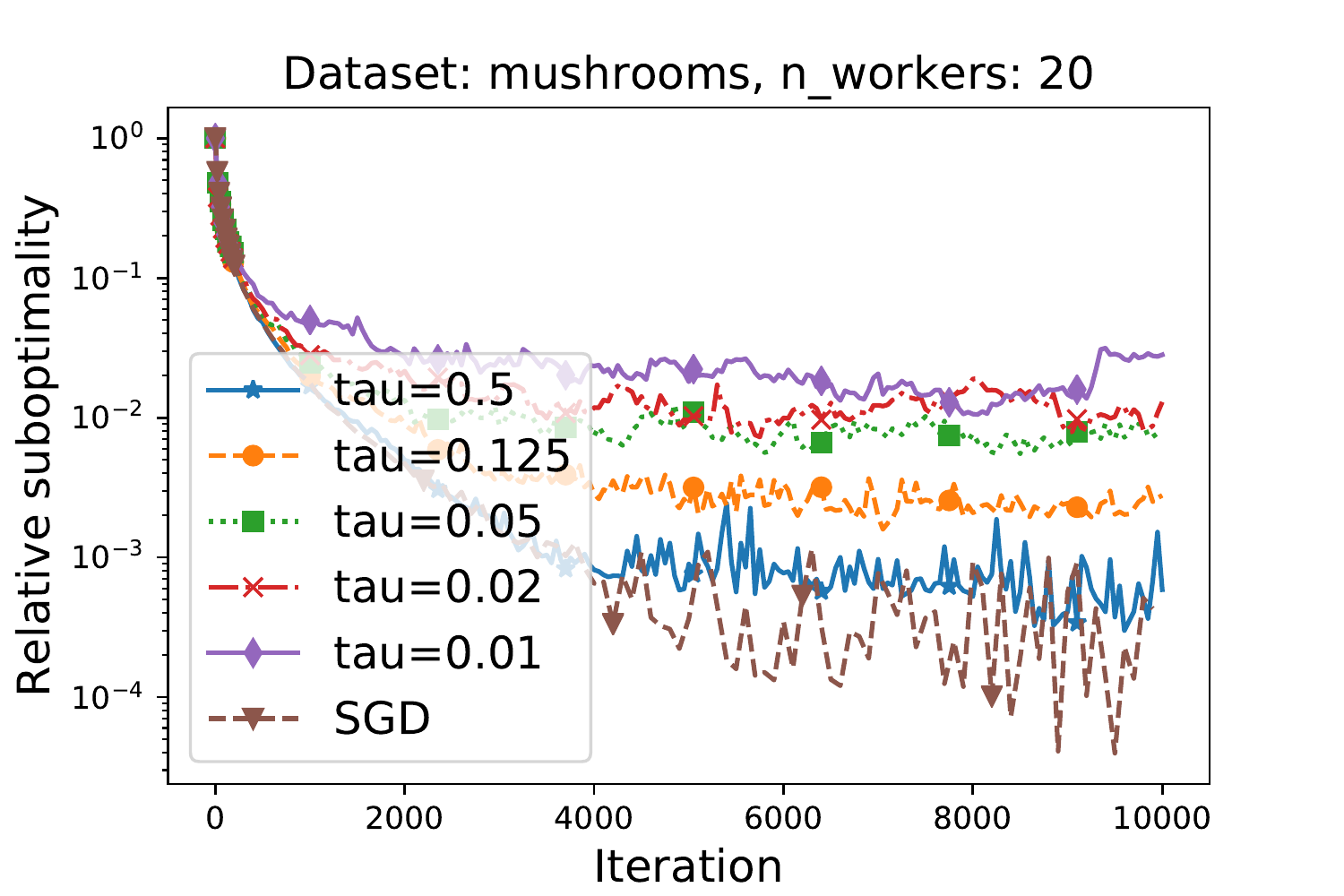}
\end{minipage}%
\begin{minipage}{0.33\textwidth}
  \centering
\includegraphics[width =  \textwidth ]{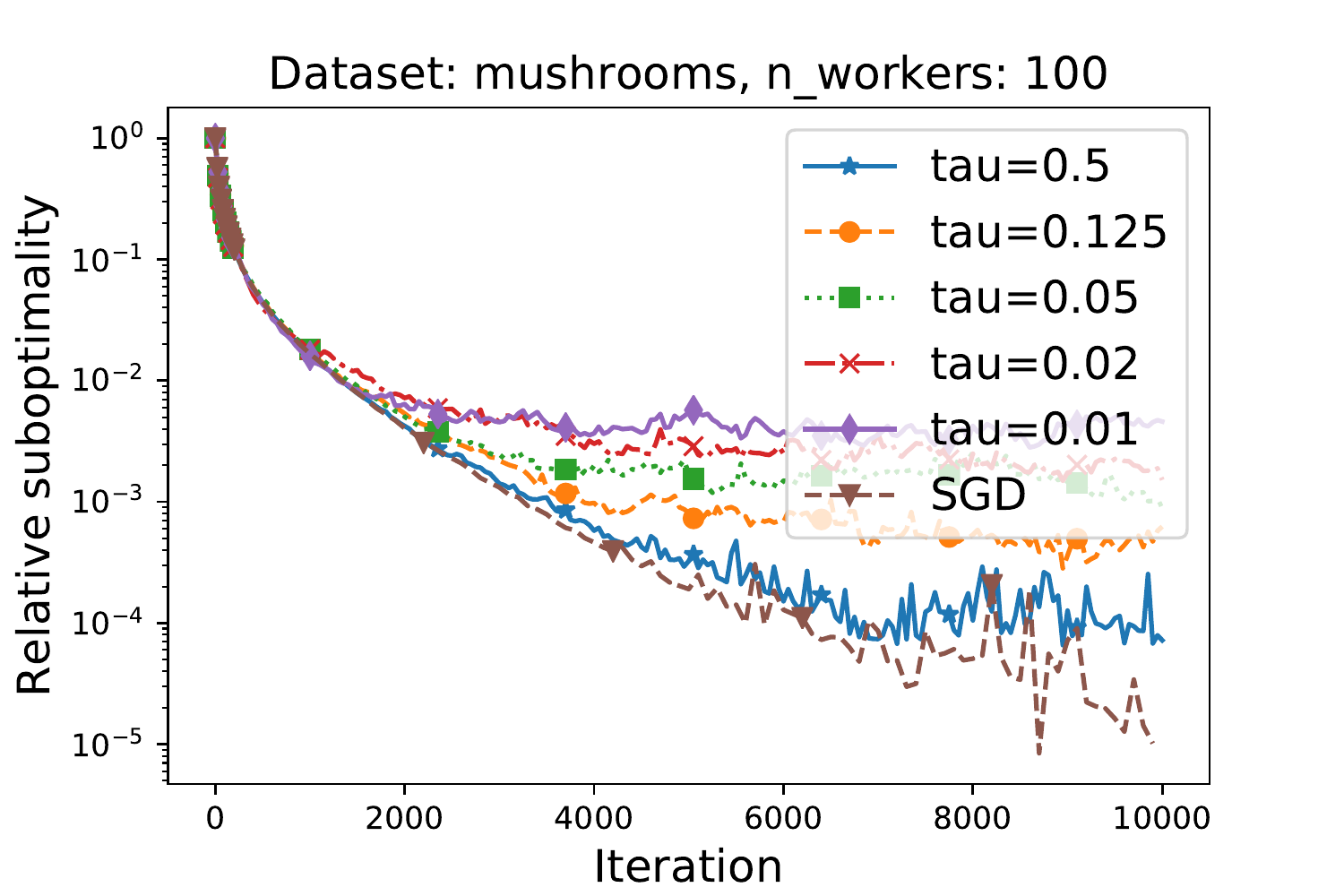}
\end{minipage}%
\\
\begin{minipage}{0.33\textwidth}
  \centering
\includegraphics[width =  \textwidth ]{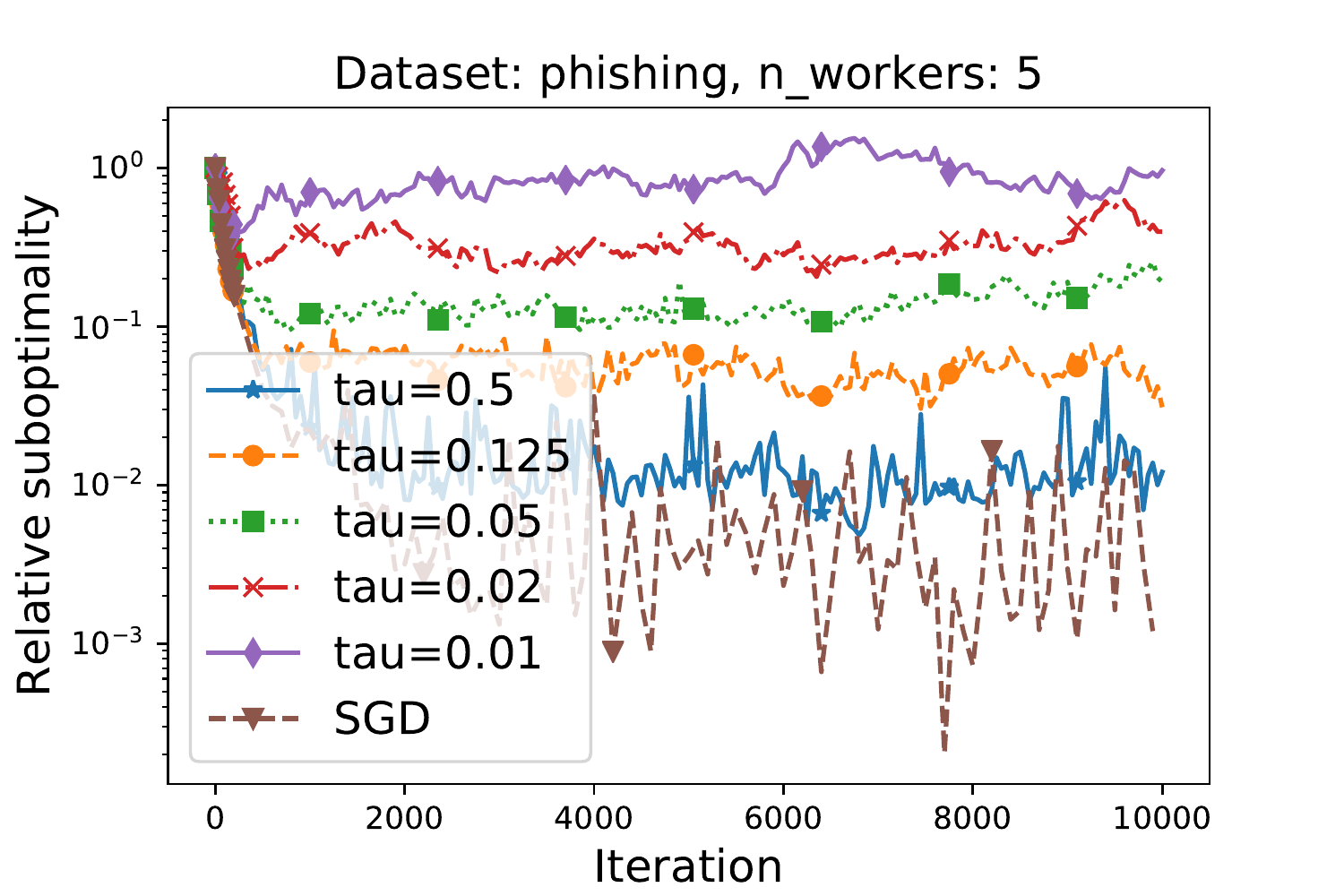}
\end{minipage}%
\begin{minipage}{0.33\textwidth}
  \centering
\includegraphics[width =  \textwidth ]{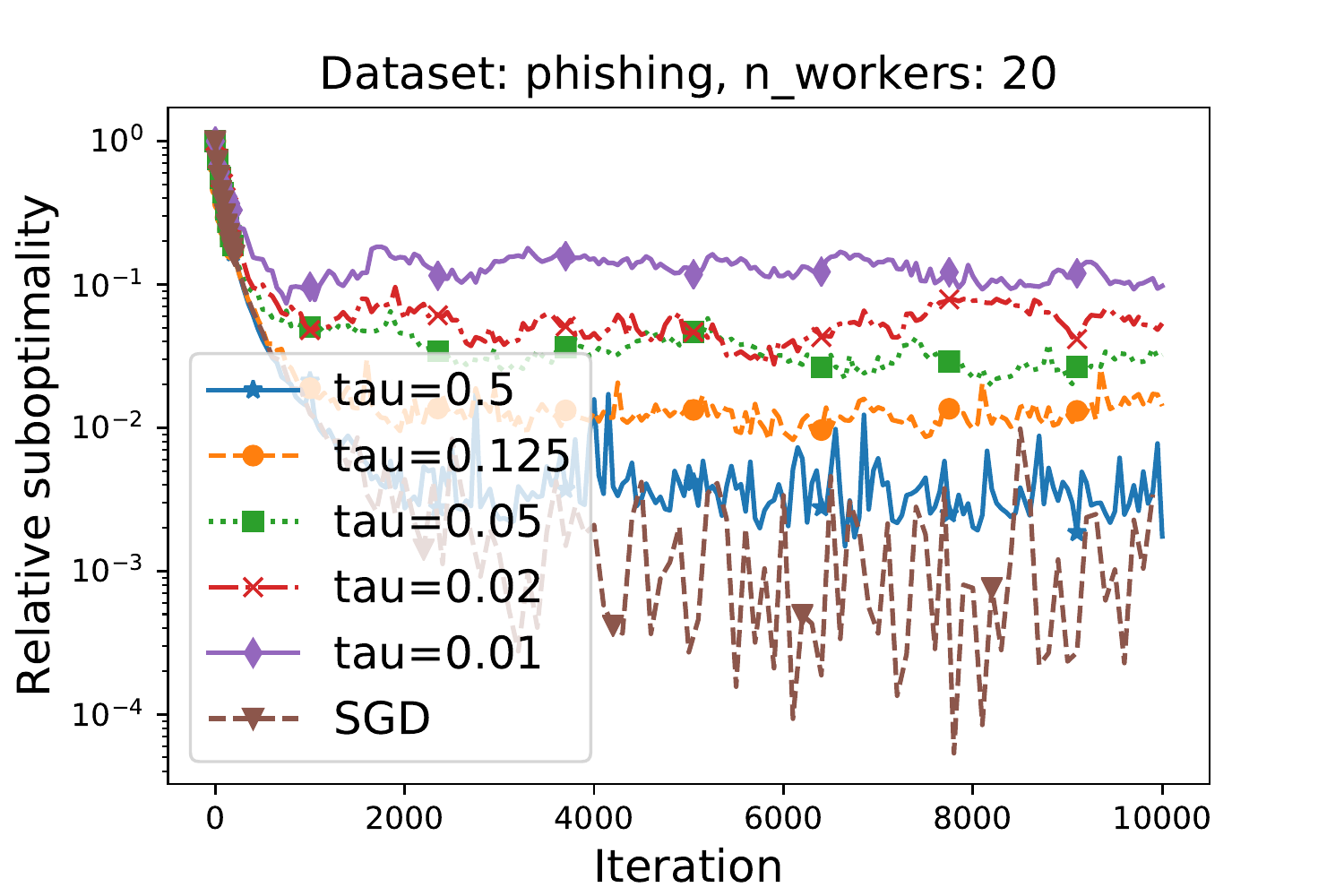}
\end{minipage}%
\\
\begin{minipage}{0.33\textwidth}
  \centering
\includegraphics[width =  \textwidth ]{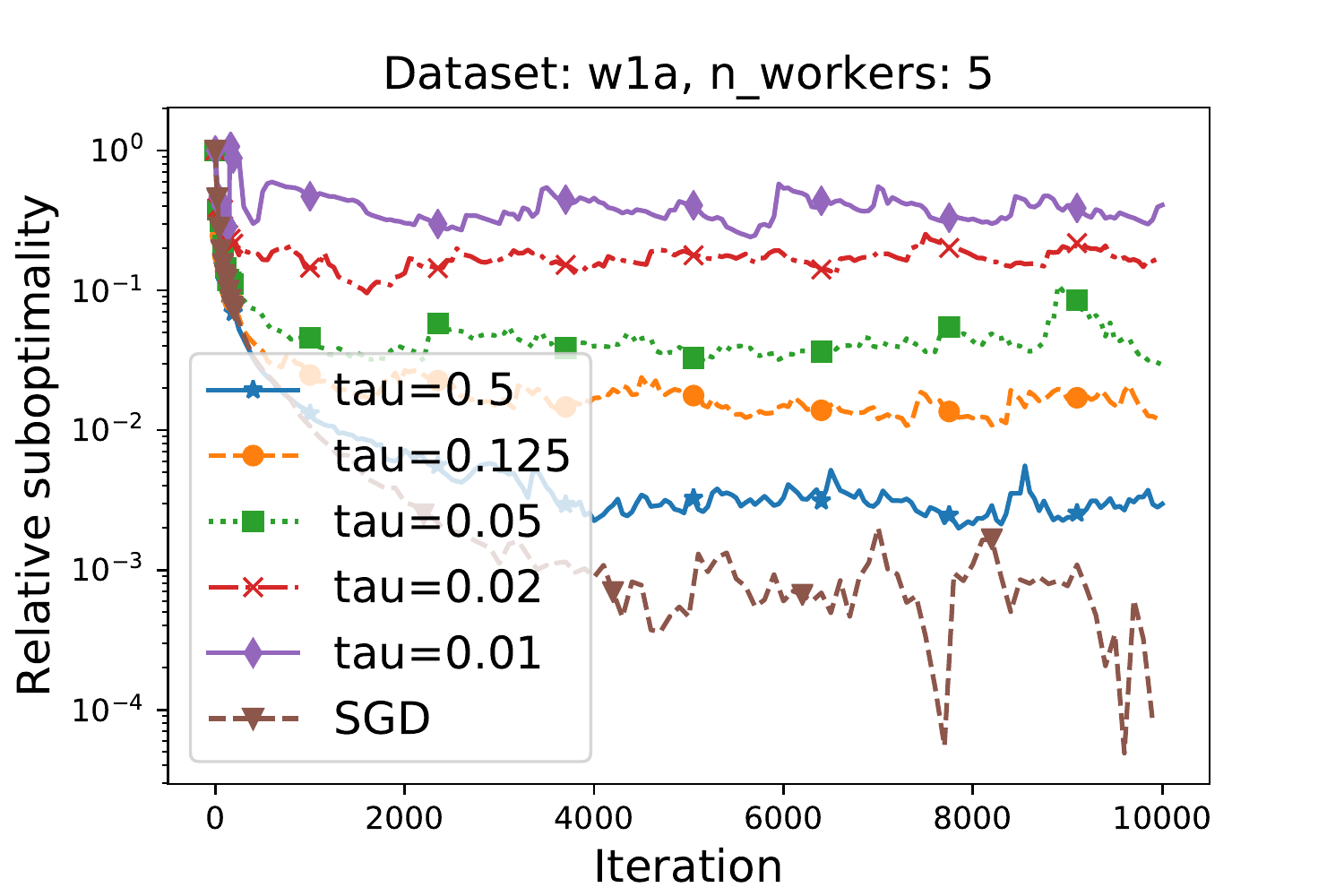}
\end{minipage}%
\begin{minipage}{0.33\textwidth}
  \centering
\includegraphics[width =  \textwidth ]{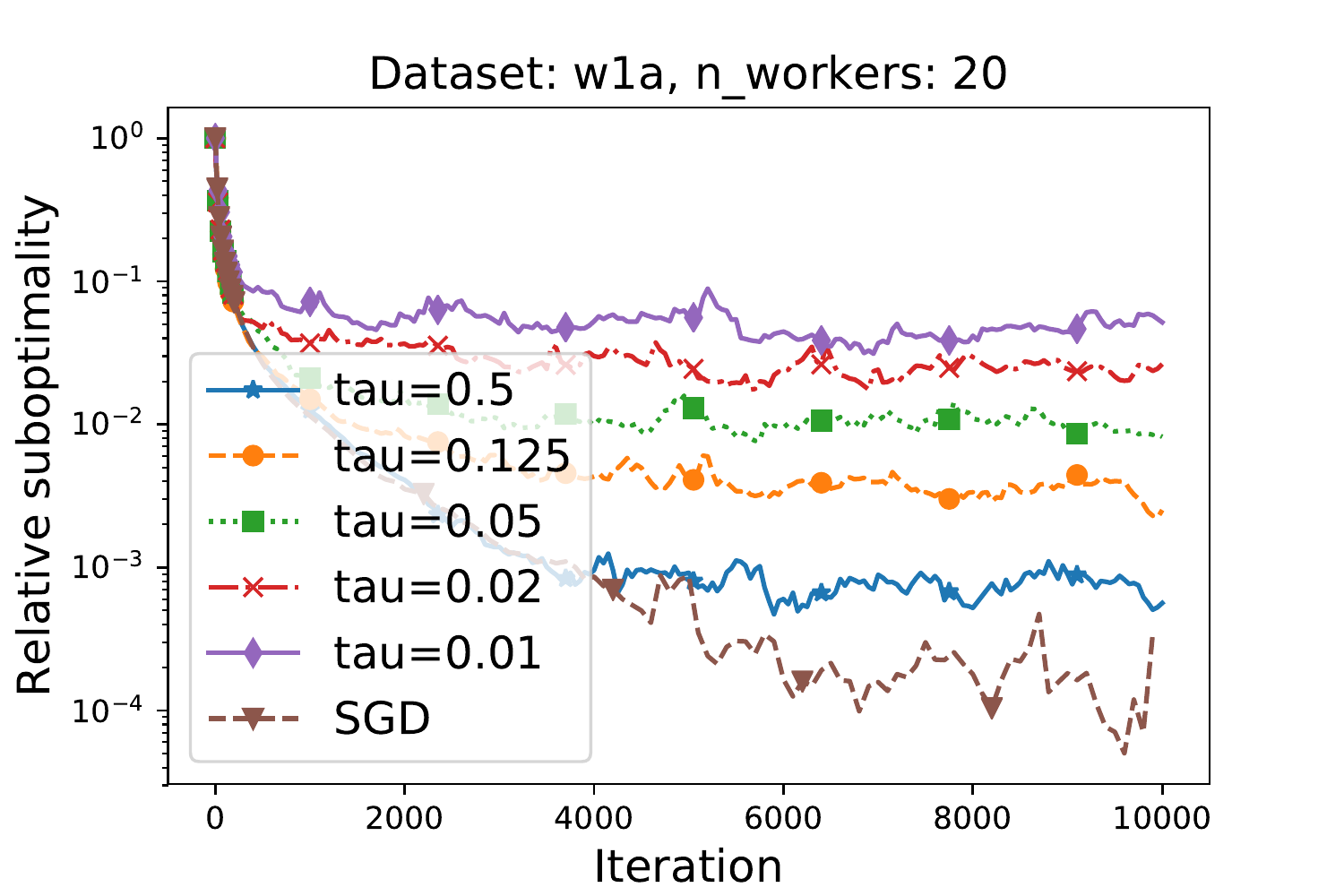}
\end{minipage}%
\begin{minipage}{0.33\textwidth}
  \centering
\includegraphics[width =  \textwidth ]{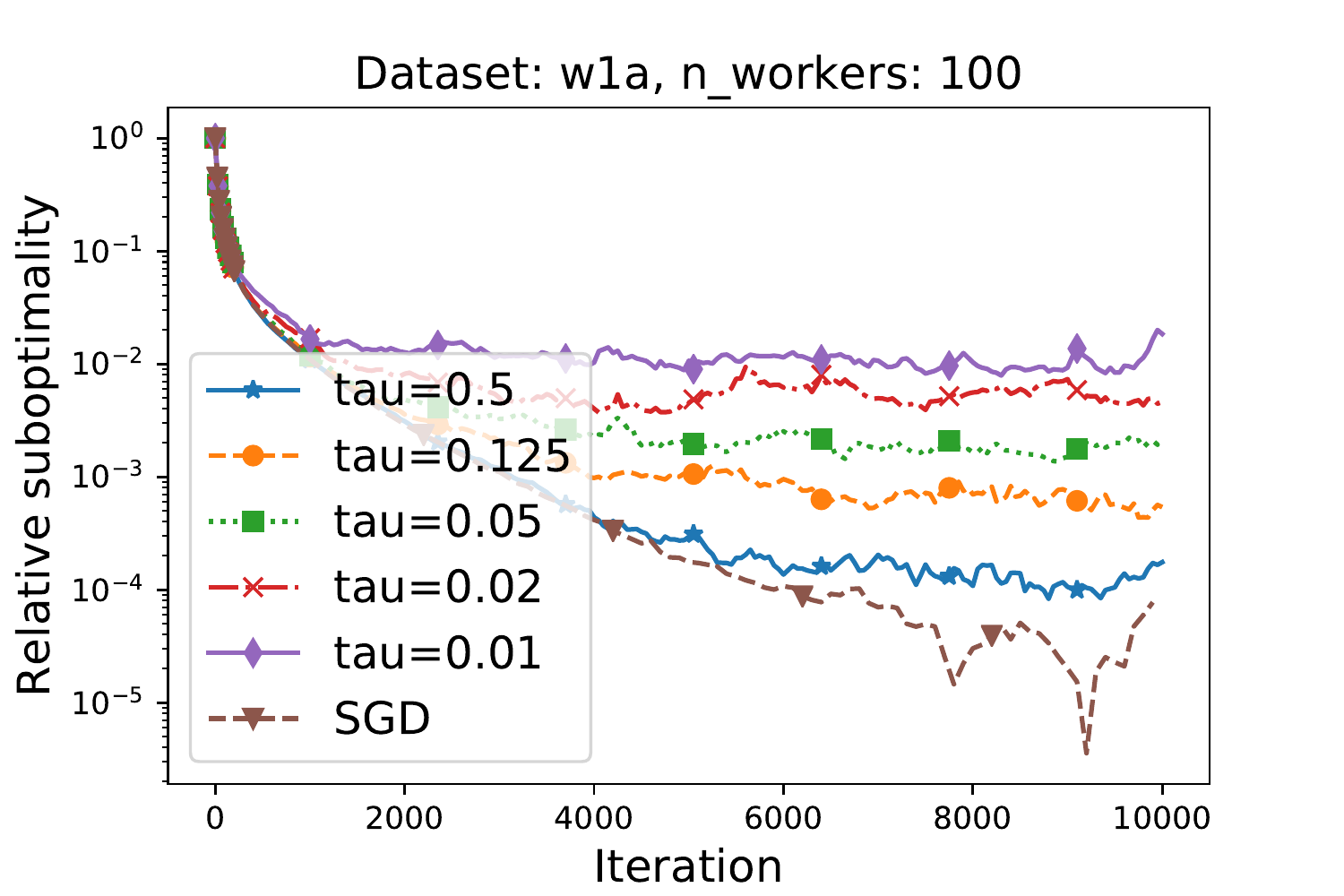}
\end{minipage}%
\\
\caption{Behavior of Algorithm~\ref{alg:sgd} while varying $\tau$. Label ``SGD'' corresponds to the choice $n=1, \tau = 1$. Stepsize $\gamma = \frac1{3L}$ was used in every case.}\label{fig:sgd2}
\end{figure}

\subsection{IASGD \label{sec:exp_asgd}}
In this section we numerically test Algorithm~\ref{alg:acc} for logistic regression problem. As in the last section, $f_i$ consists of set of (uniformly distributed) rows of $A$ from~\eqref{eq:logreg}. The stochastic gradient is taken as a gradient on a subset data points from each $f_i$. Note that Algorithm~\ref{alg:acc} depends on a priori unknown strong growth parameter $\hat{\rho}$ of unbiased stochastic gradient $q$\footnote{Formulas to obtain parameters of Algorithm~\ref{alg:acc} are given in~\cite{vaswani2018fast}. }.  Therefore, we first find empirically optimal $\hat{\rho}$ for each algorithm run by grid search and report only the best performance for each algorithm. 

The first experiment (Figure~\ref{fig:acc1}) verifies the linearity claim -- we vary $(n,\tau)$ such that $n\tau=1$. As predicted by theory, the behavior of presented algorithms is almost indistinguishable. 

\begin{figure}[H]
\centering
\begin{minipage}{0.33\textwidth}
  \centering
\includegraphics[width =  \textwidth ]{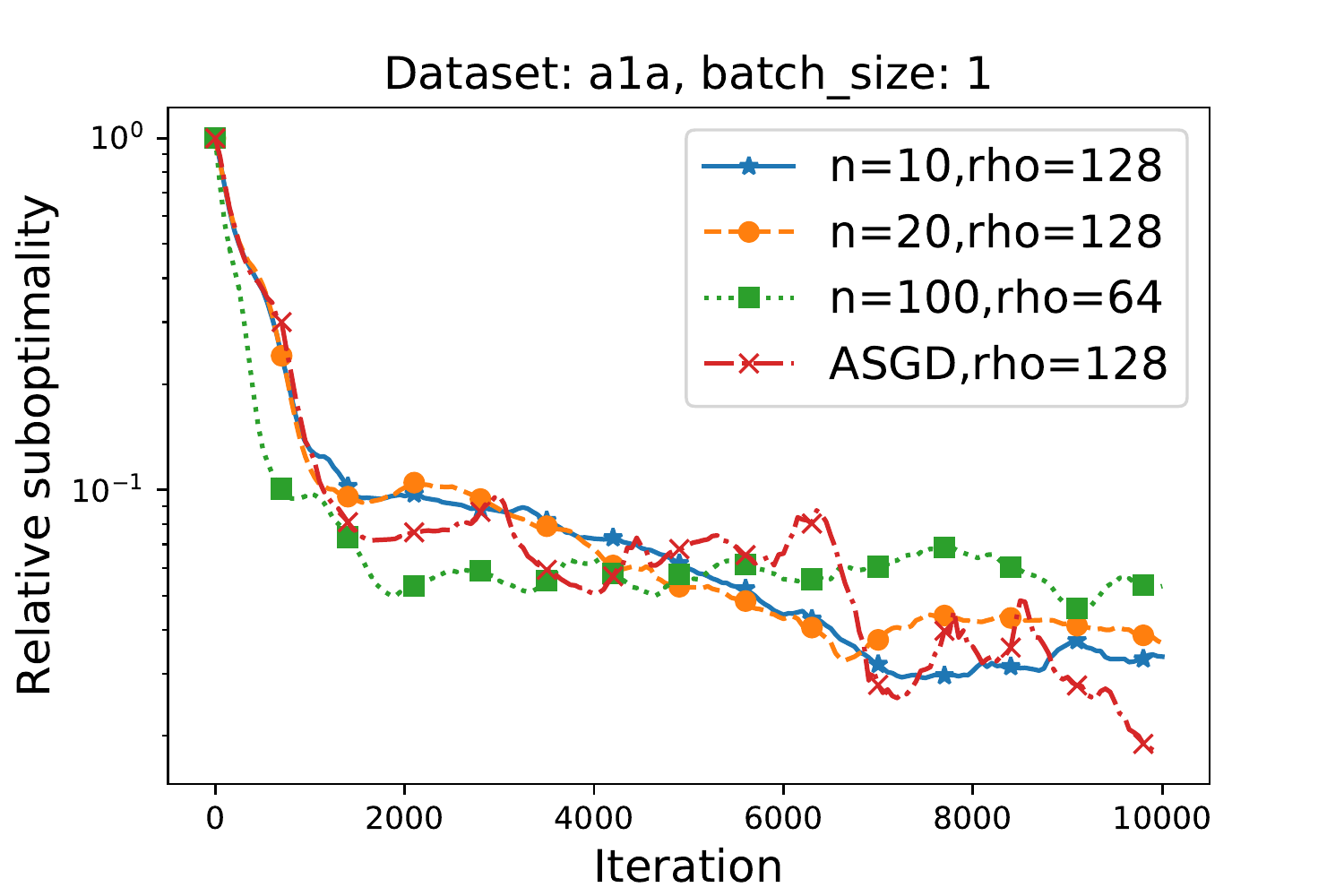}
\end{minipage}%
\begin{minipage}{0.33\textwidth}
  \centering
\includegraphics[width =  \textwidth ]{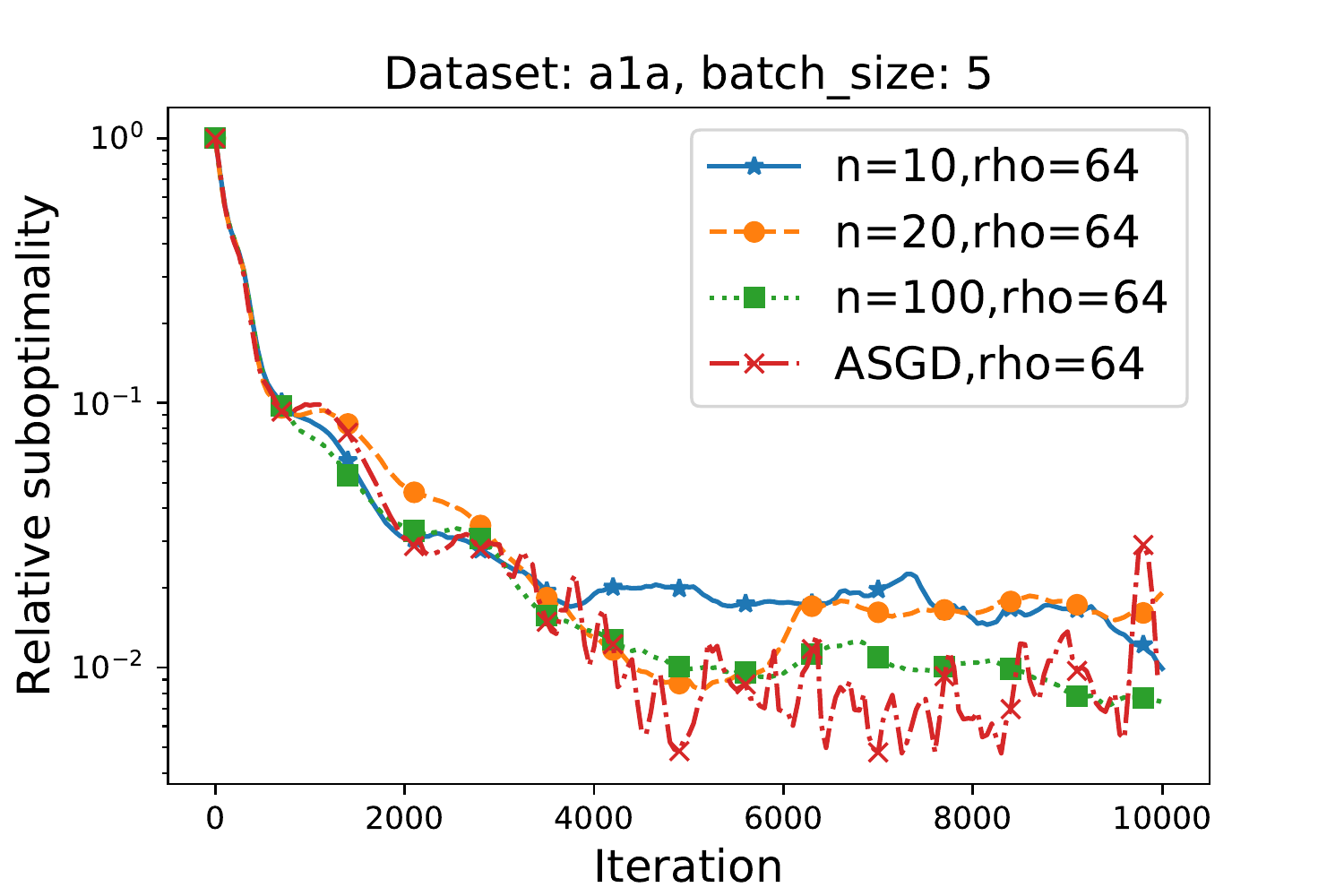}
\end{minipage}%
\begin{minipage}{0.33\textwidth}
  \centering
\includegraphics[width =  \textwidth ]{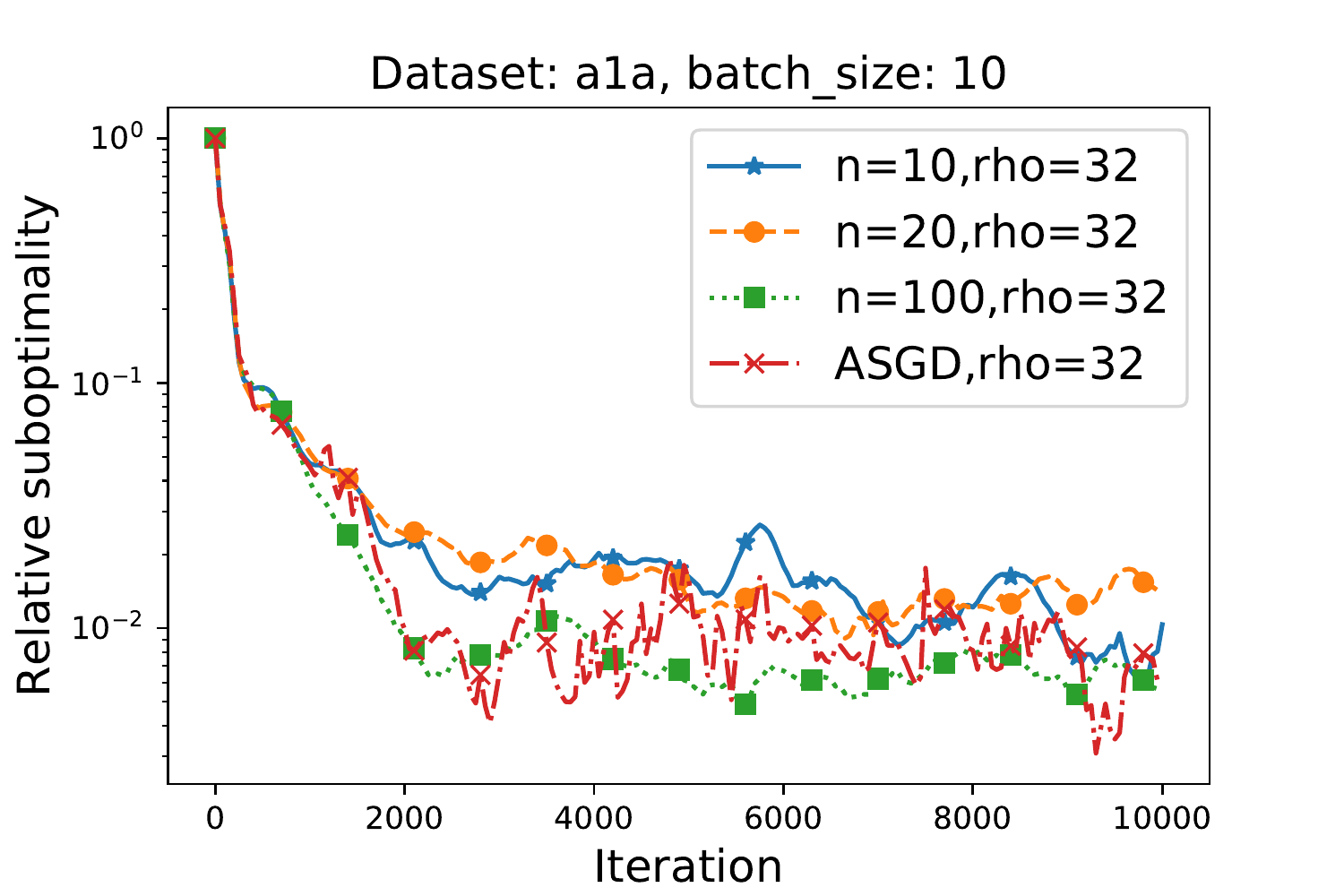}
\end{minipage}%
\\
\begin{minipage}{0.33\textwidth}
  \centering
\includegraphics[width =  \textwidth ]{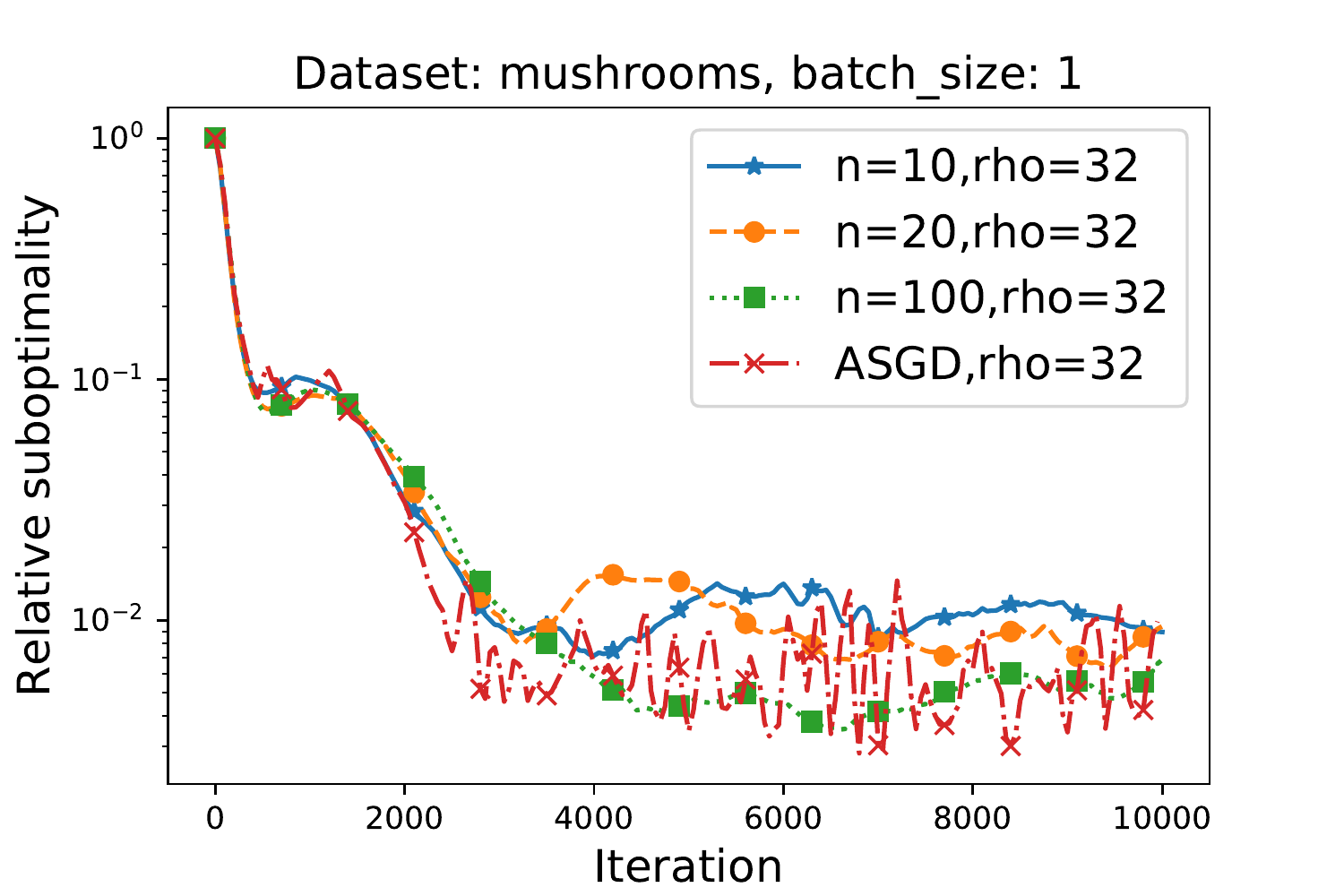}
\end{minipage}%
\begin{minipage}{0.33\textwidth}
  \centering
\includegraphics[width =  \textwidth ]{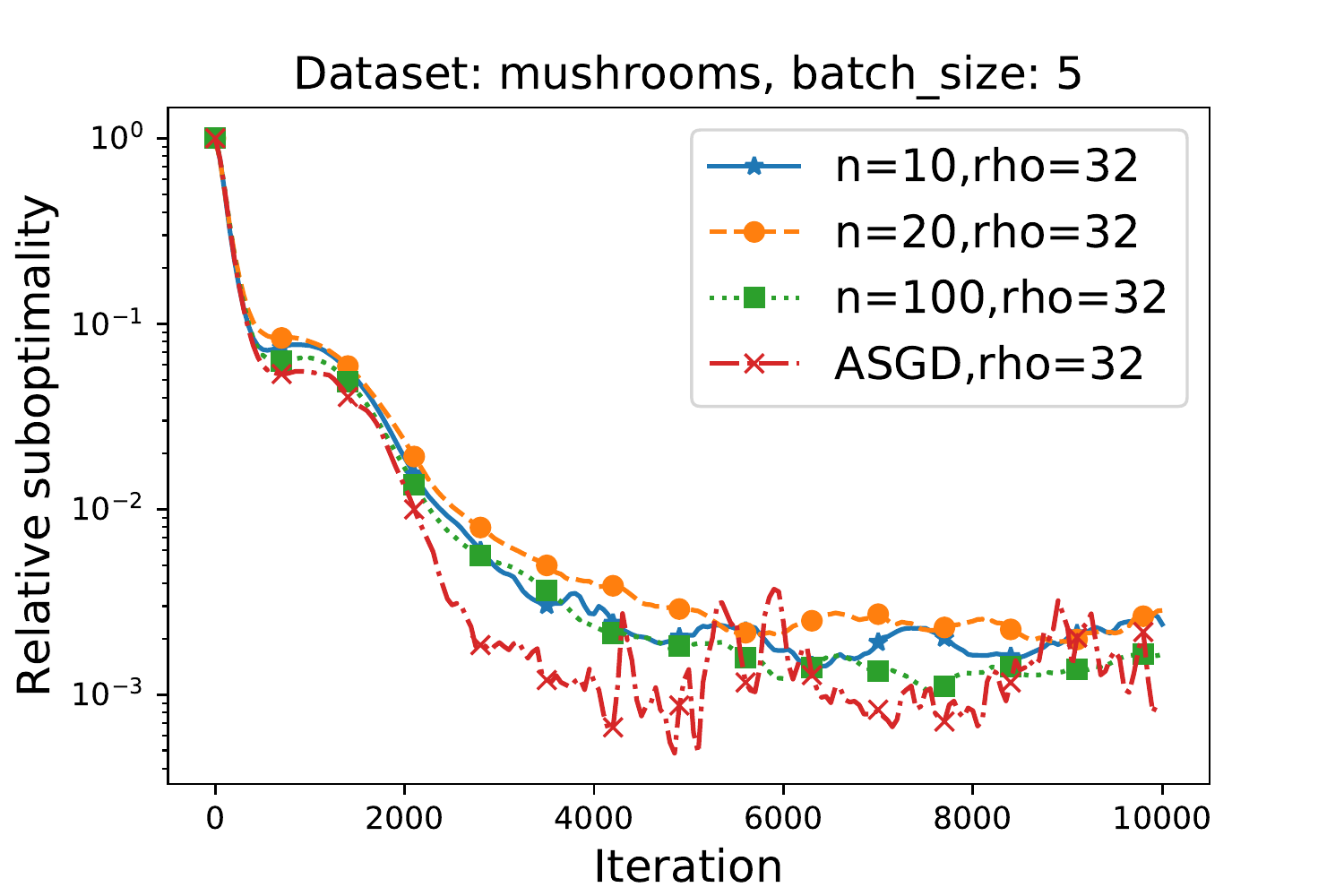}
\end{minipage}%
\begin{minipage}{0.33\textwidth}
  \centering
\includegraphics[width =  \textwidth ]{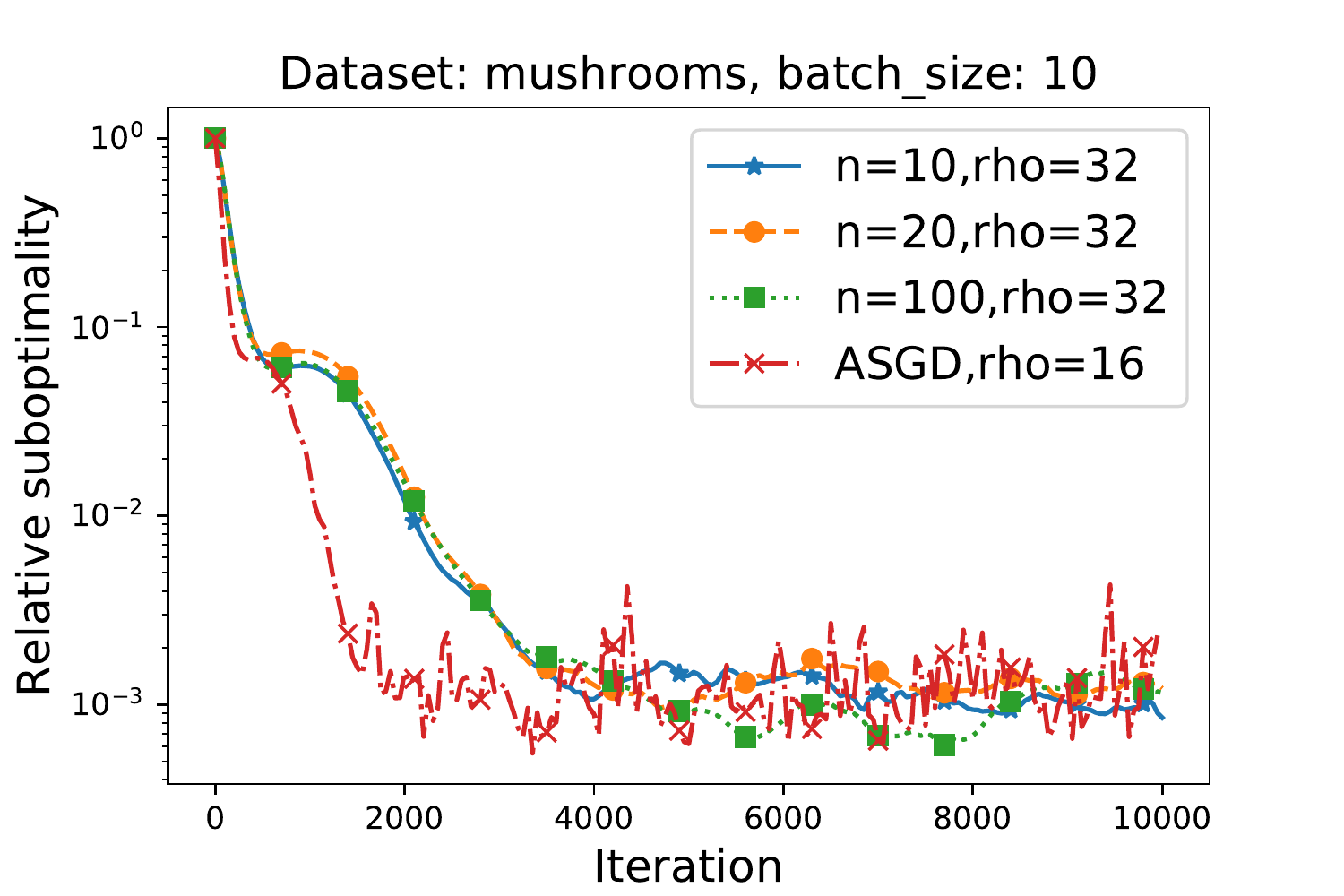}
\end{minipage}%
\\
\begin{minipage}{0.33\textwidth}
  \centering
\includegraphics[width =  \textwidth ]{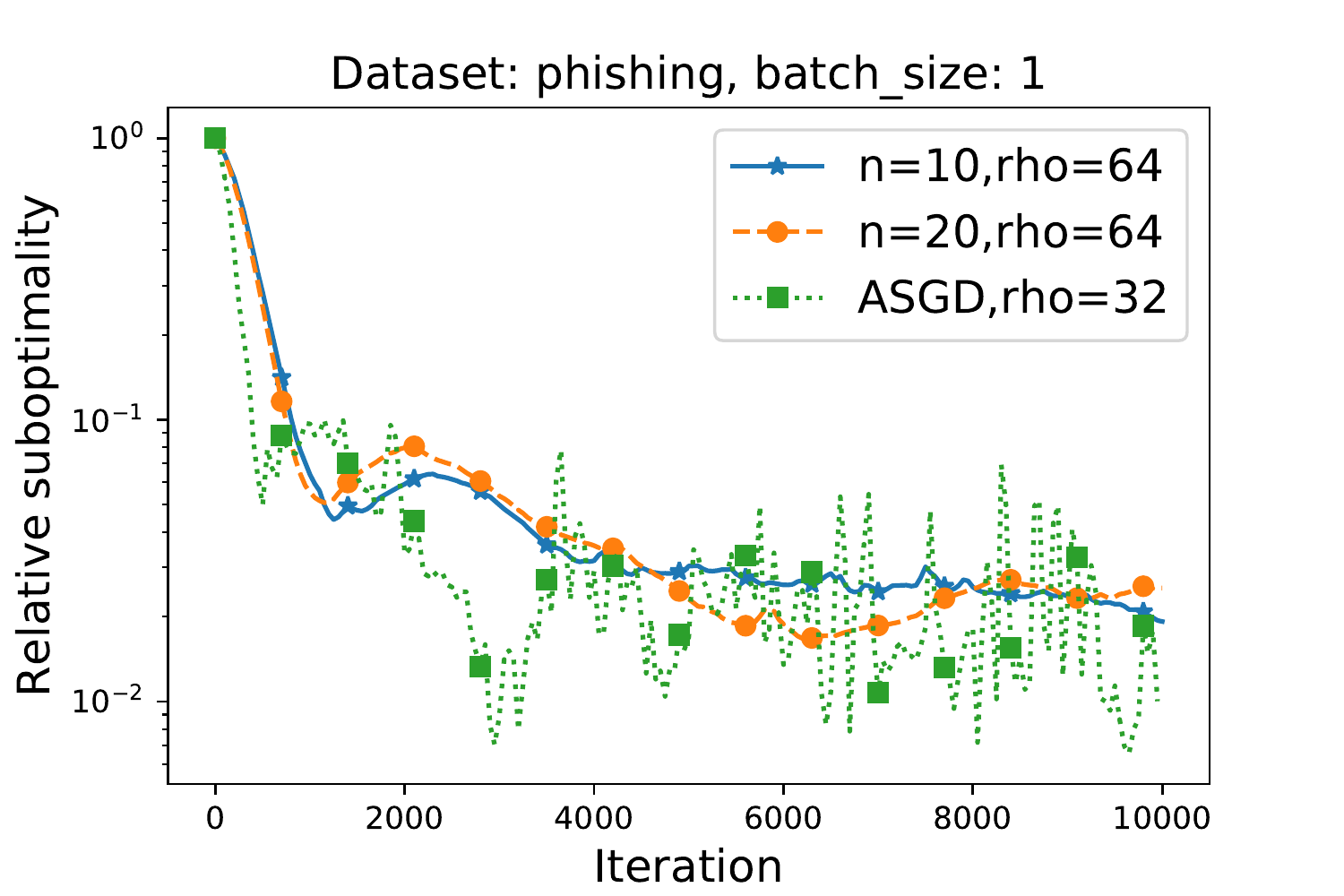}
\end{minipage}%
\begin{minipage}{0.33\textwidth}
  \centering
\includegraphics[width =  \textwidth ]{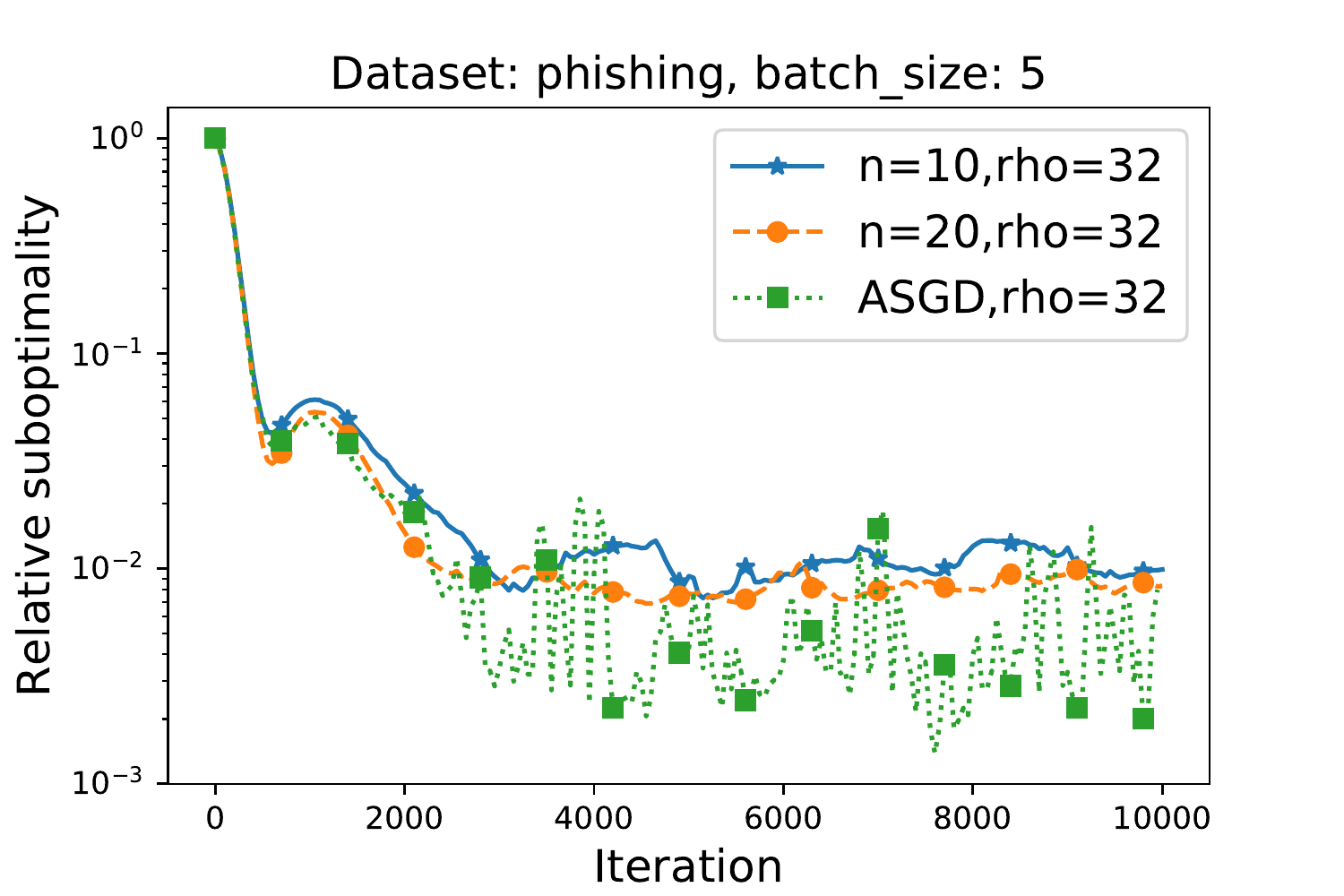}
\end{minipage}%
\begin{minipage}{0.33\textwidth}
  \centering
\includegraphics[width =  \textwidth ]{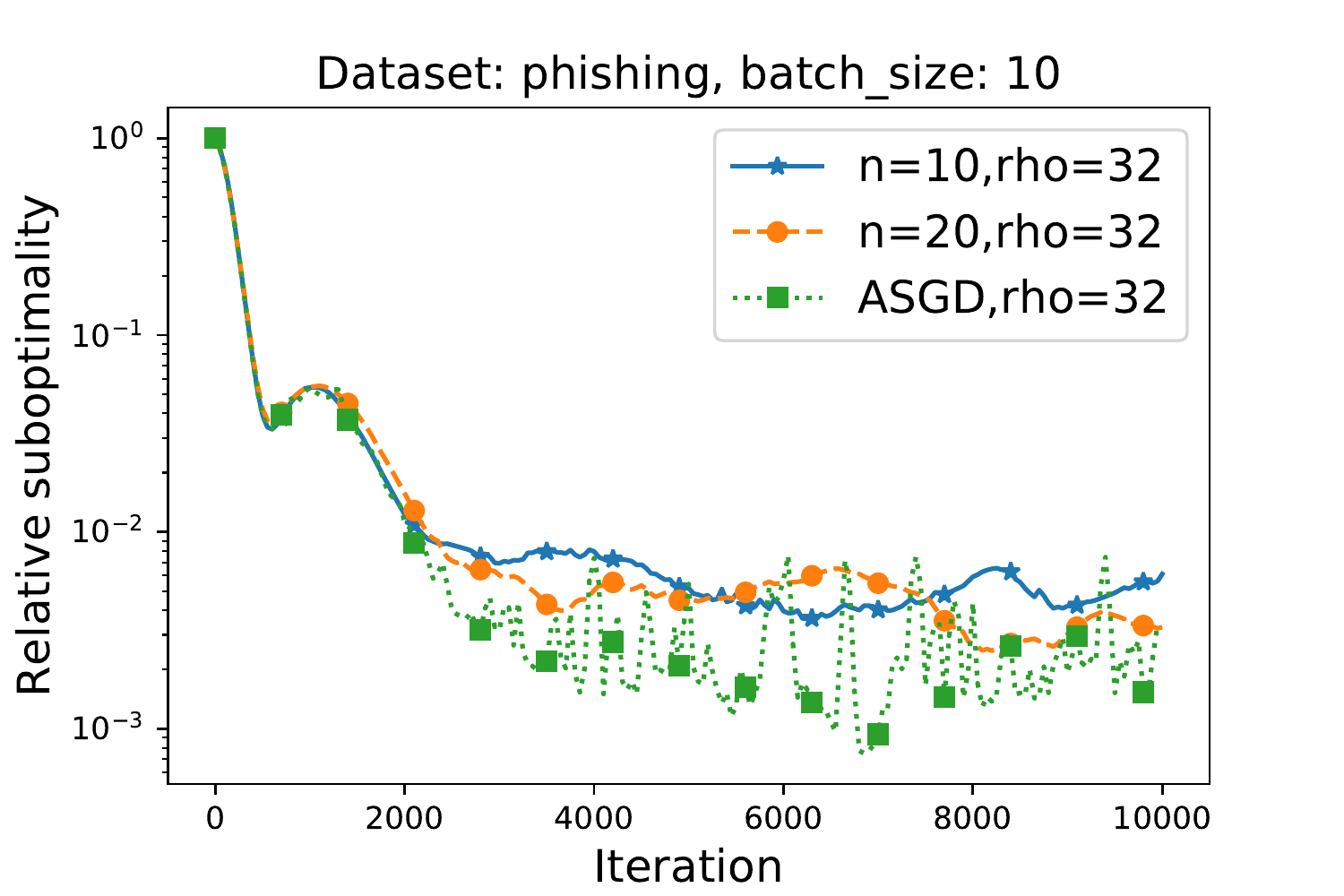}
\end{minipage}%
\\
\begin{minipage}{0.33\textwidth}
  \centering
\includegraphics[width =  \textwidth ]{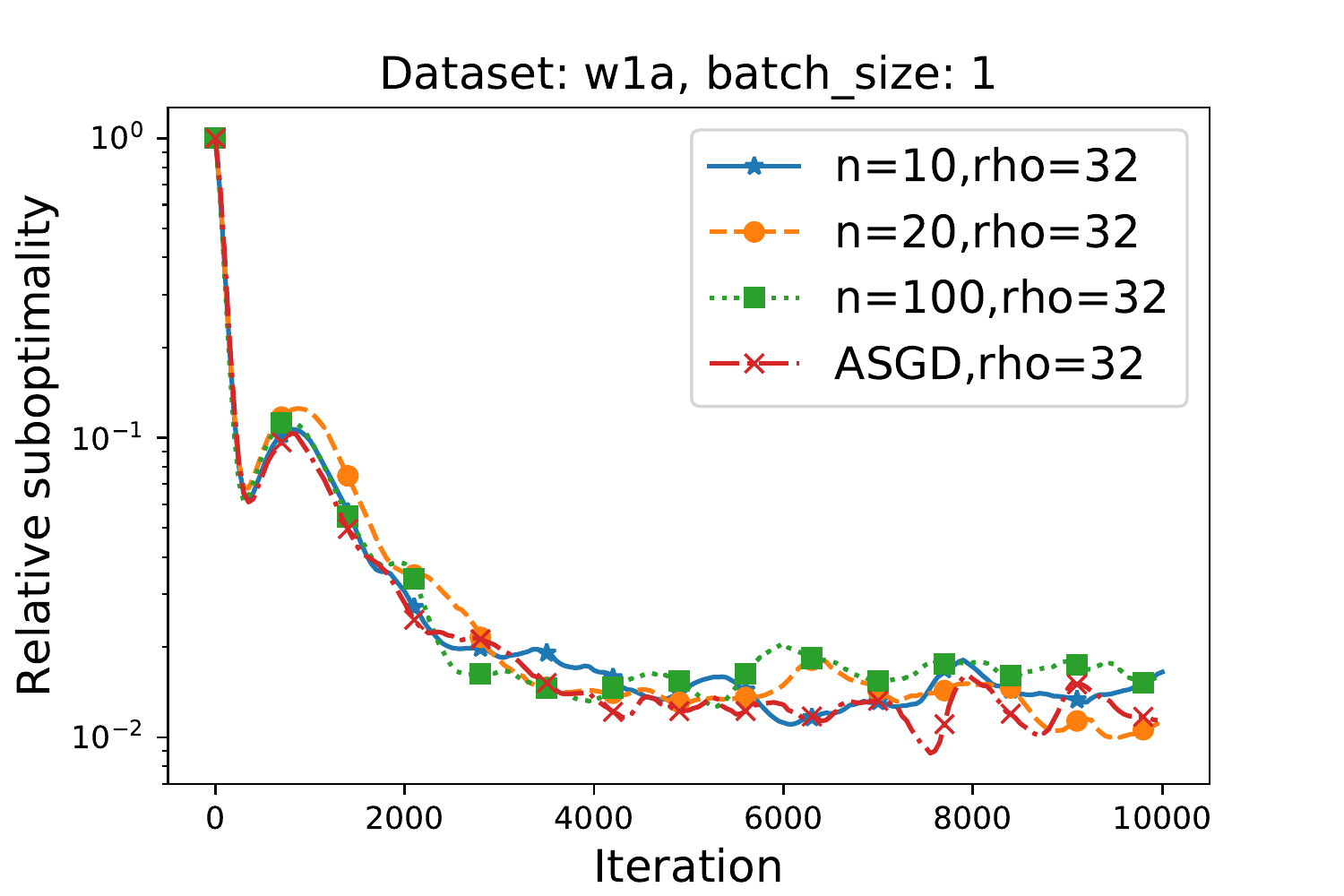}
\end{minipage}%
\begin{minipage}{0.33\textwidth}
  \centering
\includegraphics[width =  \textwidth ]{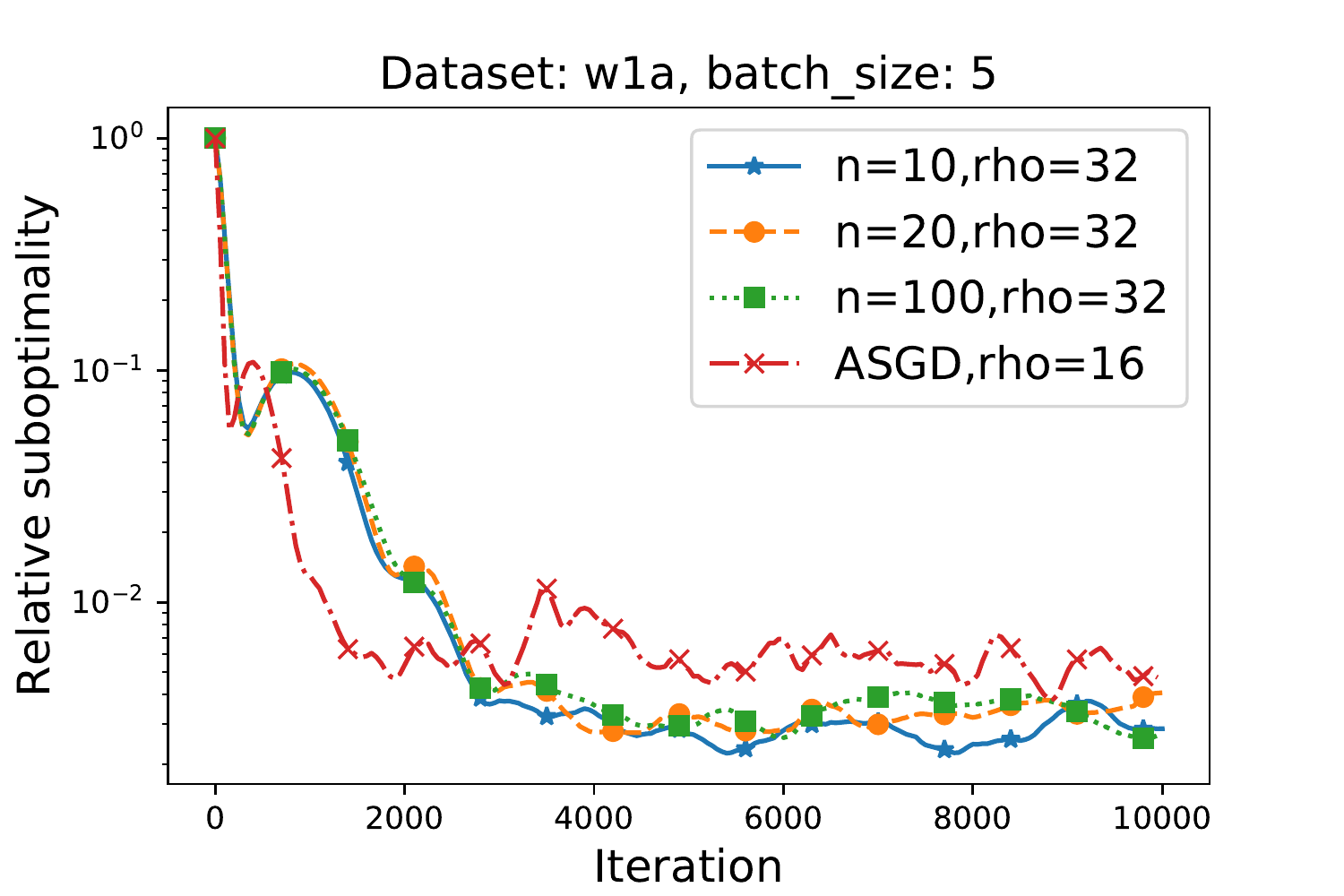}
\end{minipage}%
\begin{minipage}{0.33\textwidth}
  \centering
\includegraphics[width =  \textwidth ]{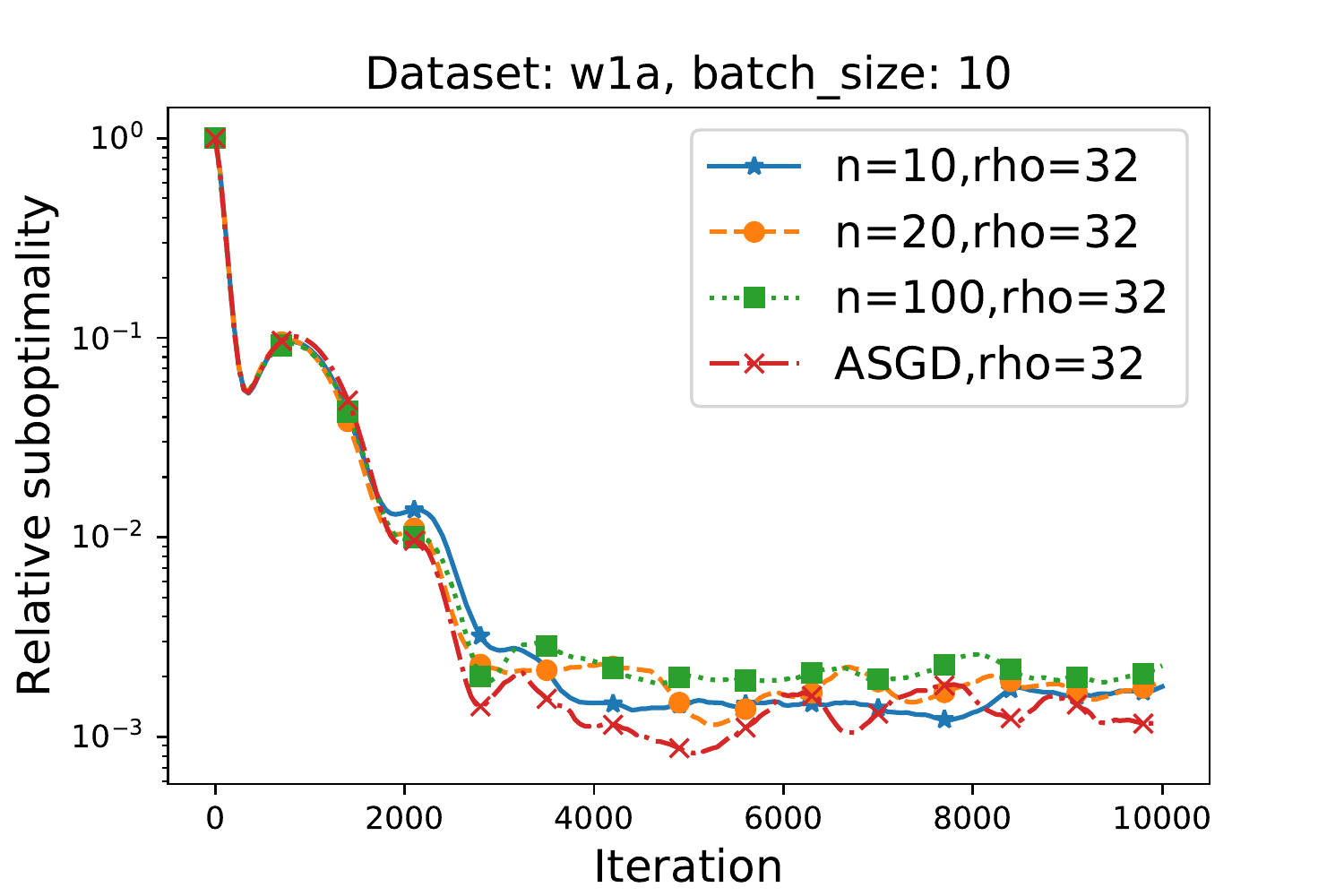}
\end{minipage}%
\\
\caption{Comparison of Algorithm~\ref{alg:acc} for various $(n,\tau)$ such that $n\tau=1$. Label ``ASGD'' corresponds to the choice $n=1, \tau = 1$. Label ``batch\_size'' indicates how big minibatch was chosen for stochastic gradient of each worker's objective. Parameter $\rho$ was chosen by grid search.  } \label{fig:acc1}
\end{figure}

Now, we once again check how different values of $\tau$ affect the convergence speed for several values of $n$. Figure~\ref{fig:acc2} presents the results. In every case, $\tau$ slightly influences the convergence rate (or the region where the iterates oscillate), although the effect is weaker for larger $n$. Note that theory predicts diminishing effect of $\tau$ only above $\frac{\bar{\rho}\tilde{\rho}}{n}$, in contrast to other sections, where the limit is $\frac1n$.

\begin{figure}[H]
\centering
\begin{minipage}{0.33\textwidth}
  \centering
\includegraphics[width =  \textwidth ]{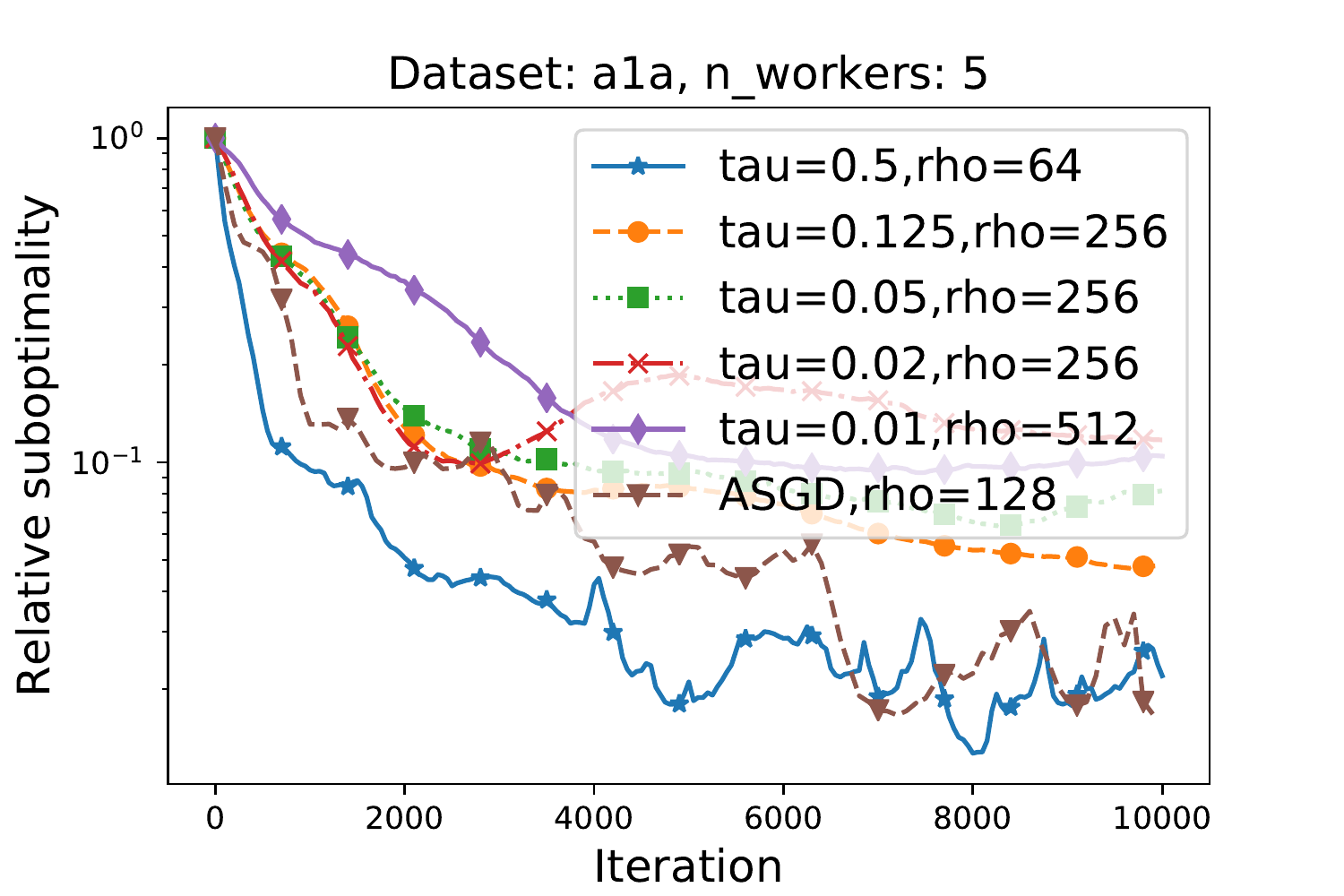}
\end{minipage}%
\begin{minipage}{0.33\textwidth}
  \centering
\includegraphics[width =  \textwidth ]{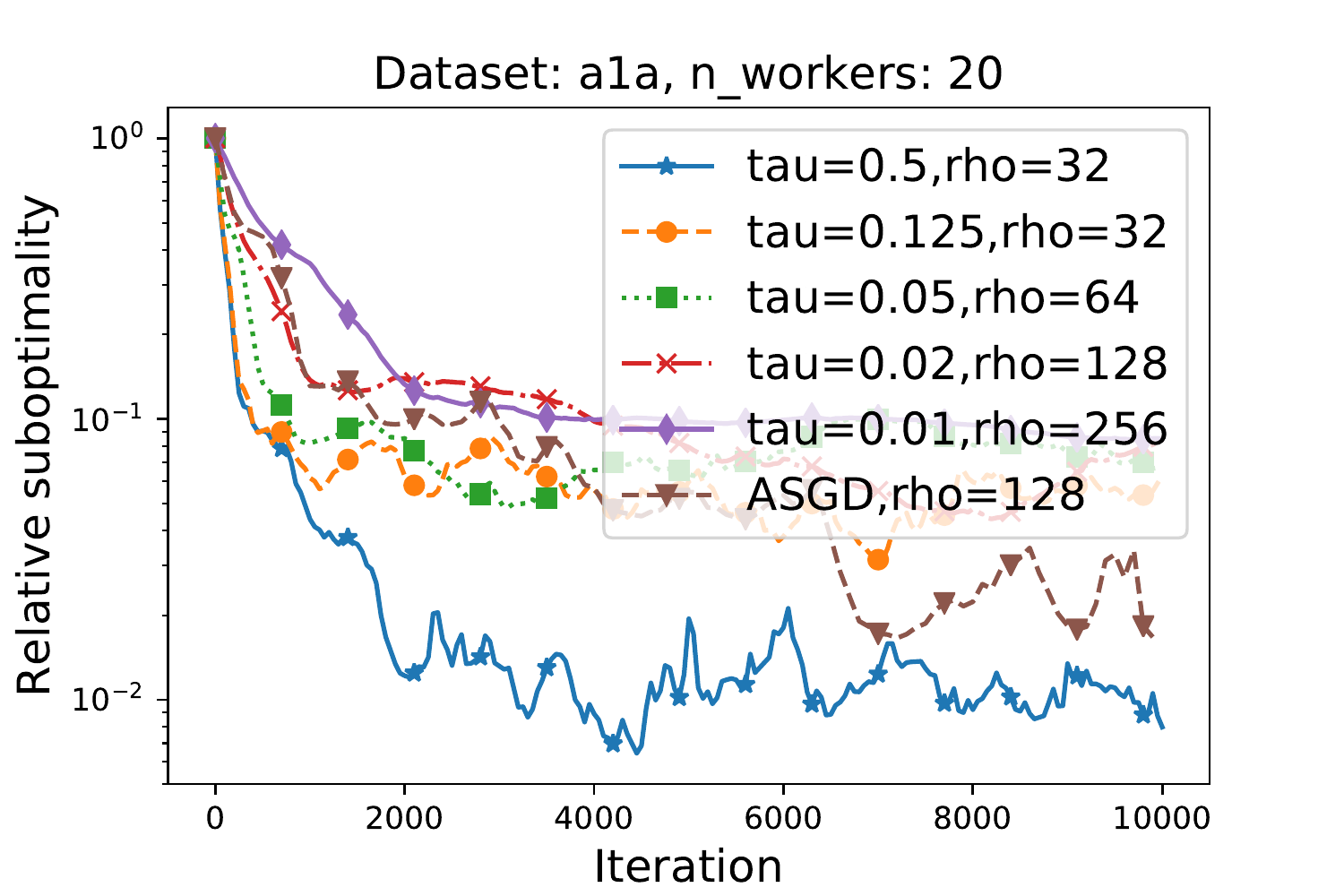}
\end{minipage}%
\begin{minipage}{0.33\textwidth}
  \centering
\includegraphics[width =  \textwidth ]{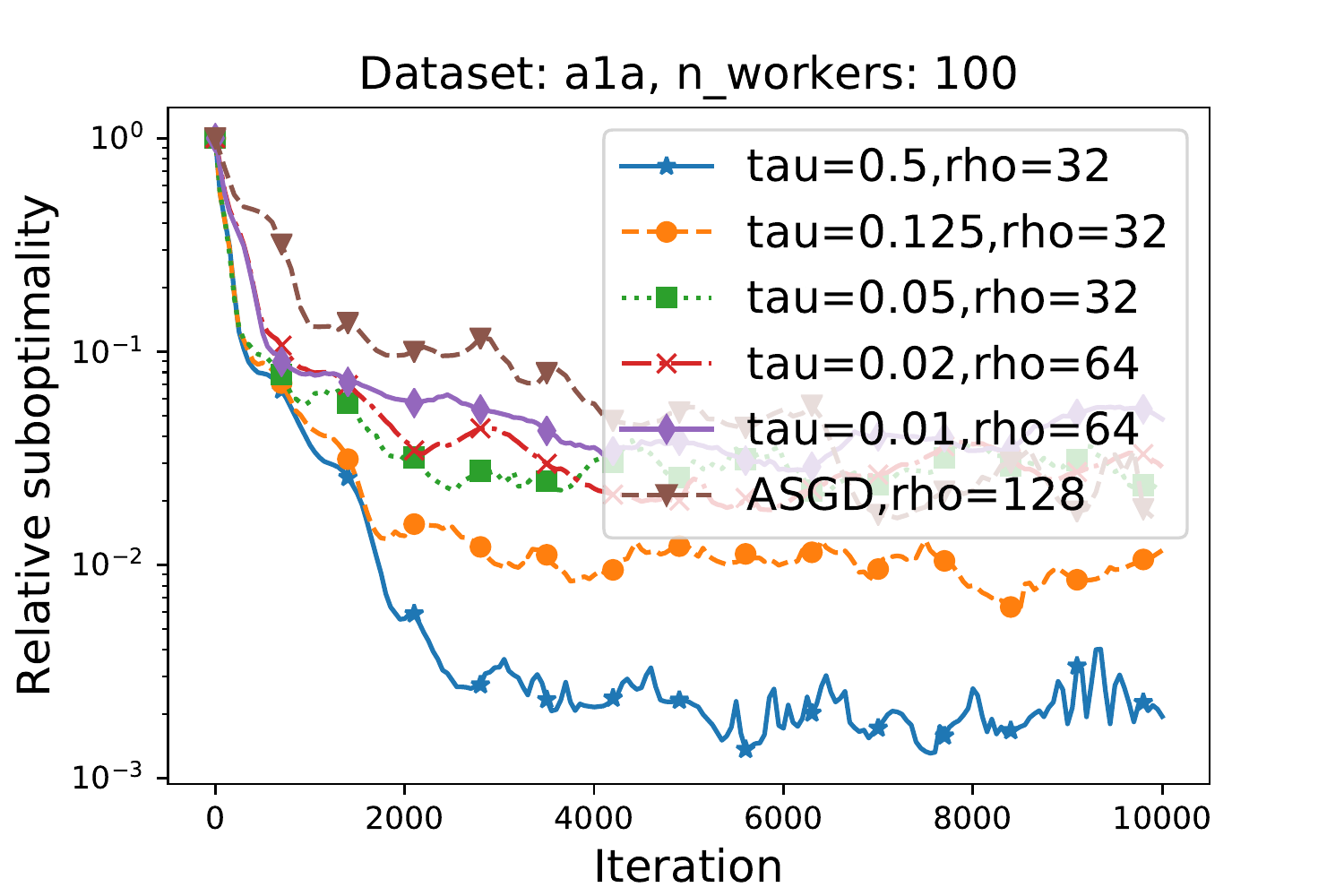}
\end{minipage}%
\\
\begin{minipage}{0.33\textwidth}
  \centering
\includegraphics[width =  \textwidth ]{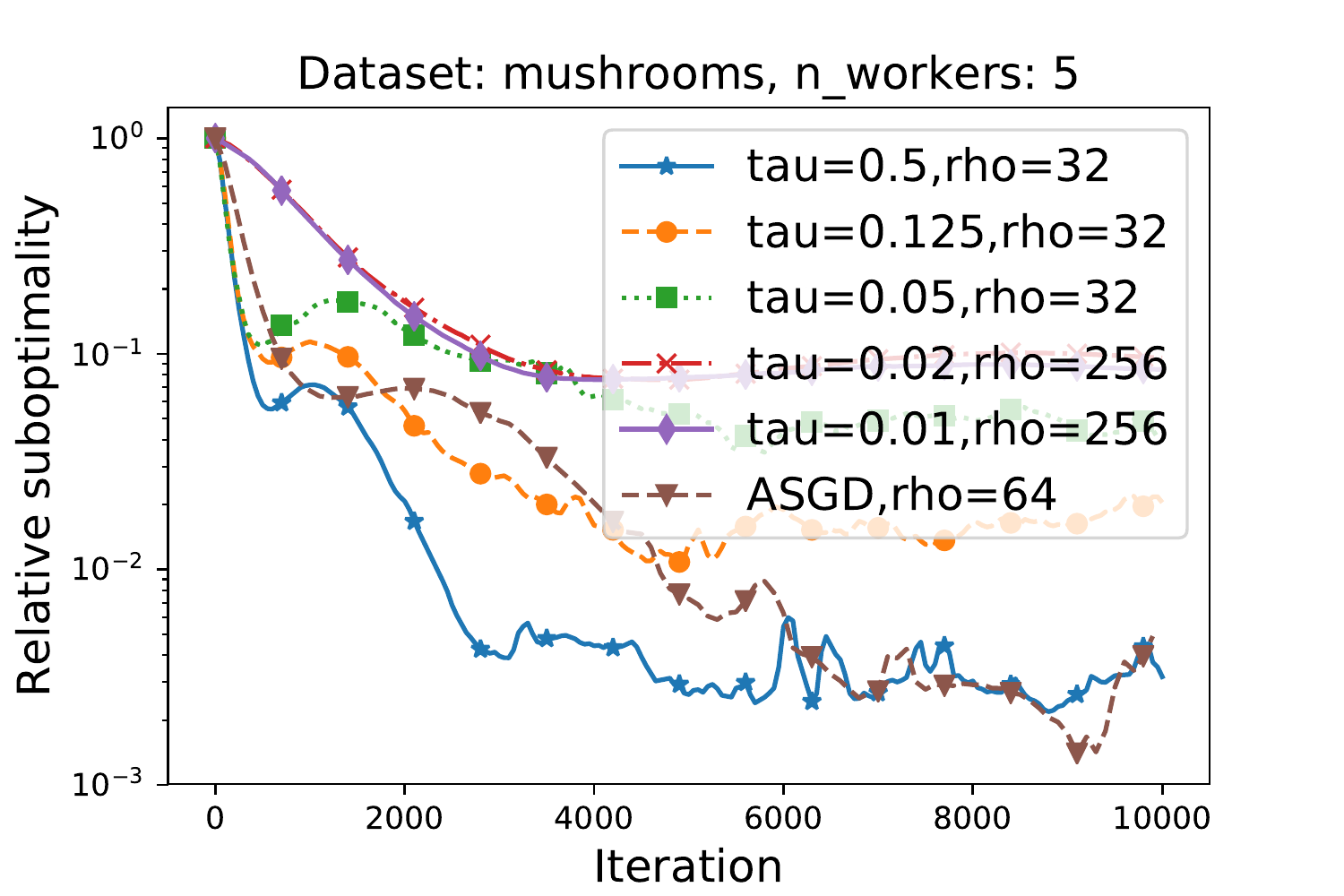}
\end{minipage}%
\begin{minipage}{0.33\textwidth}
  \centering
\includegraphics[width =  \textwidth ]{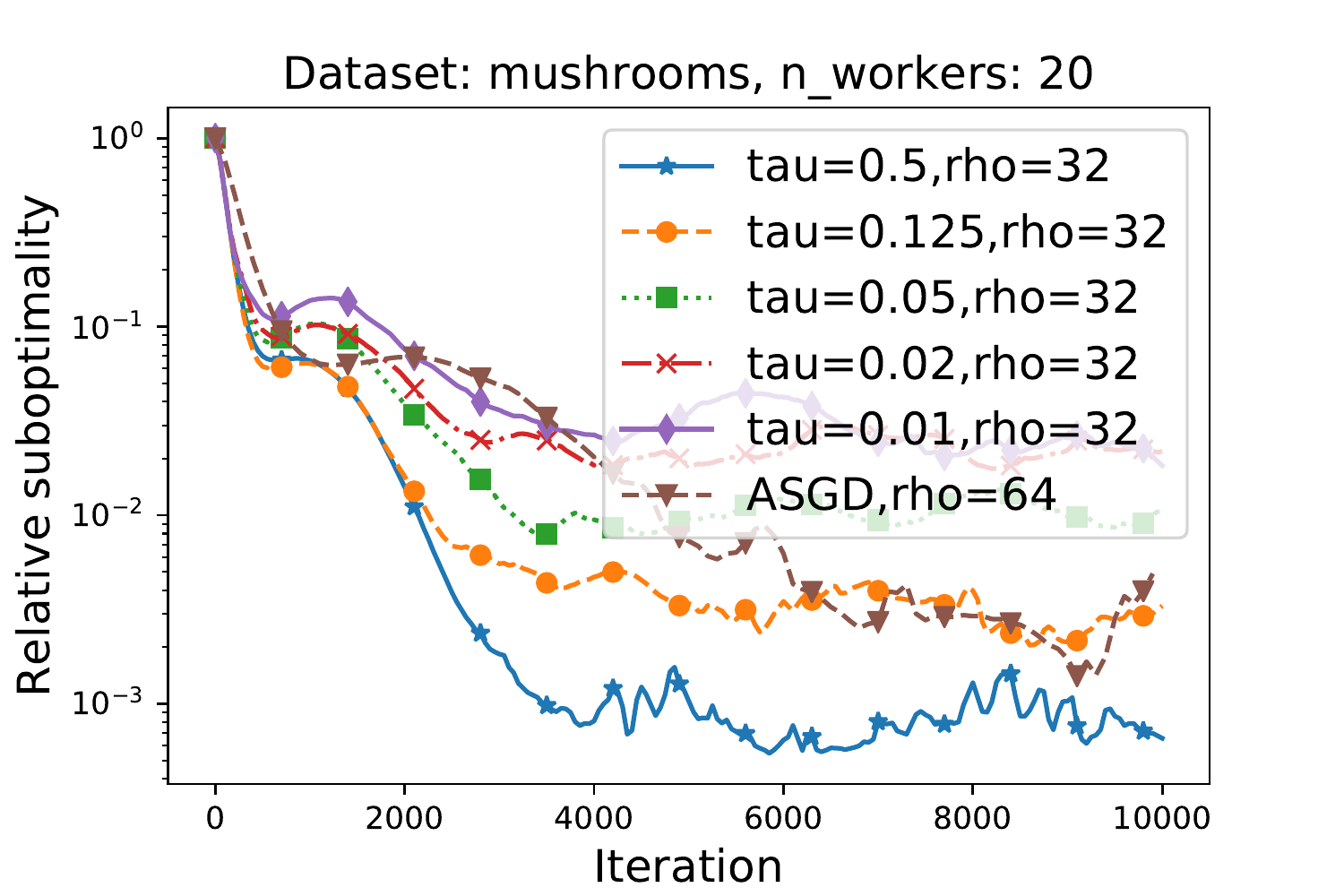}
\end{minipage}%
\begin{minipage}{0.33\textwidth}
  \centering
\includegraphics[width =  \textwidth ]{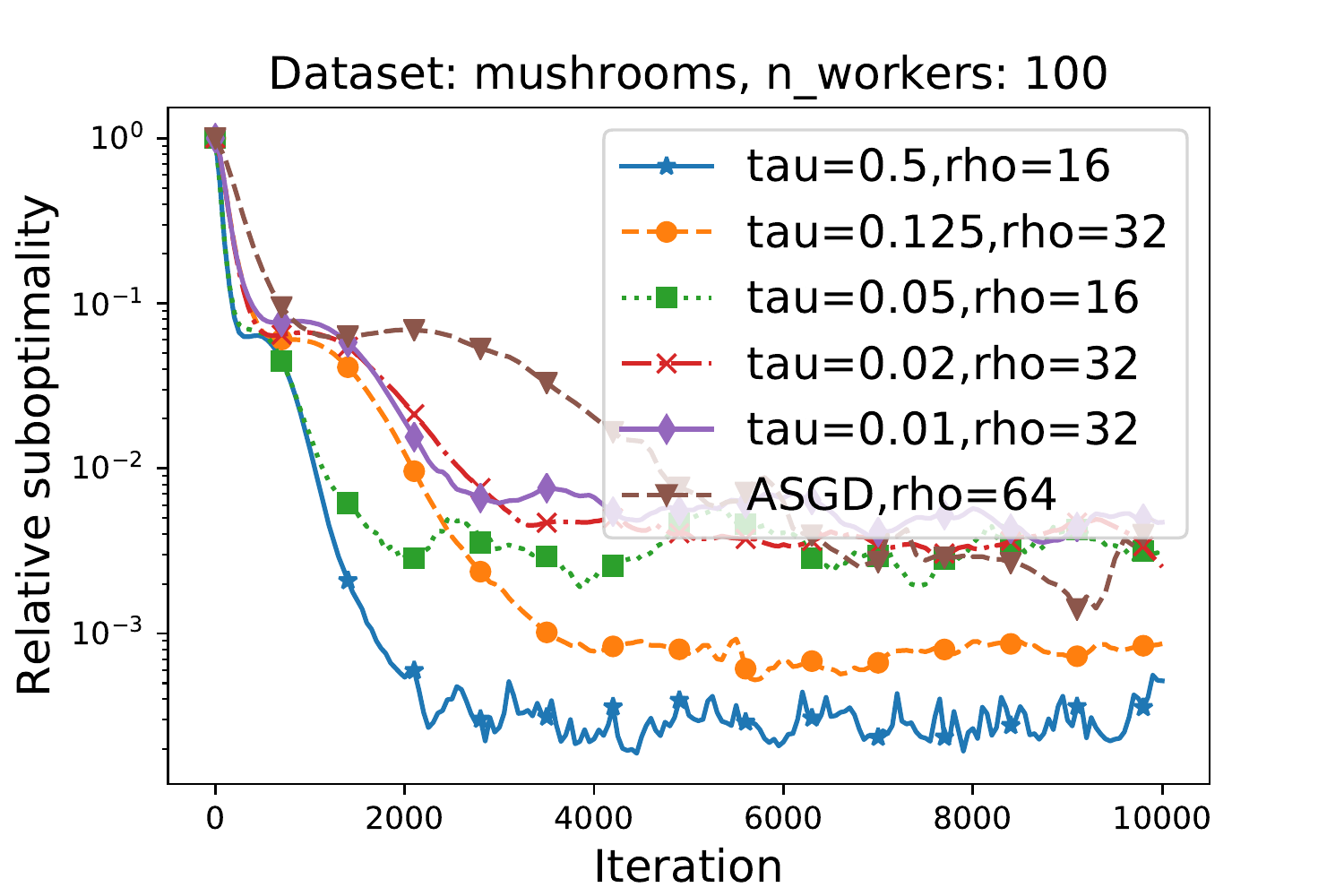}
\end{minipage}%
\\
\begin{minipage}{0.33\textwidth}
  \centering
\includegraphics[width =  \textwidth ]{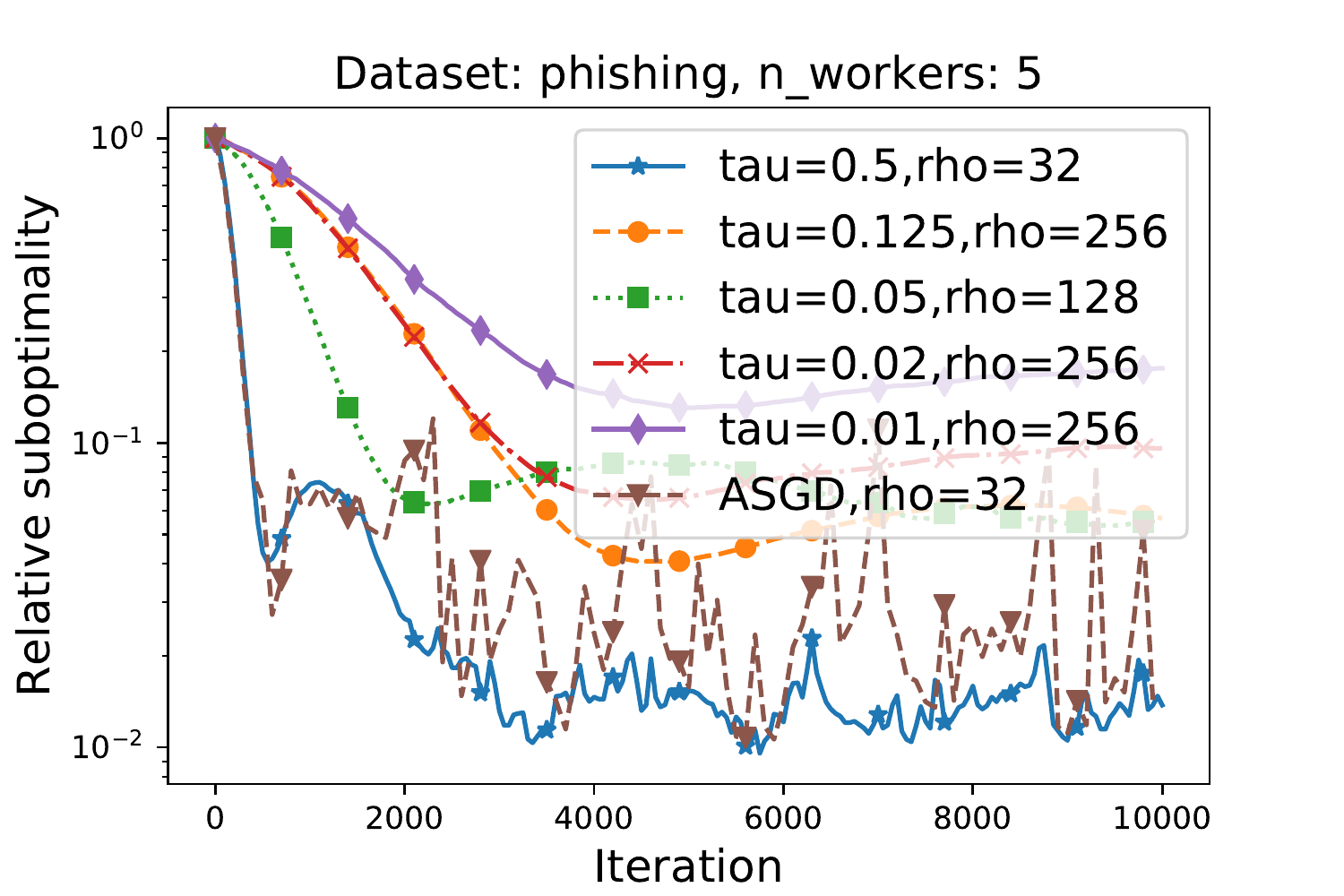}
\end{minipage}%
\begin{minipage}{0.33\textwidth}
  \centering
\includegraphics[width =  \textwidth ]{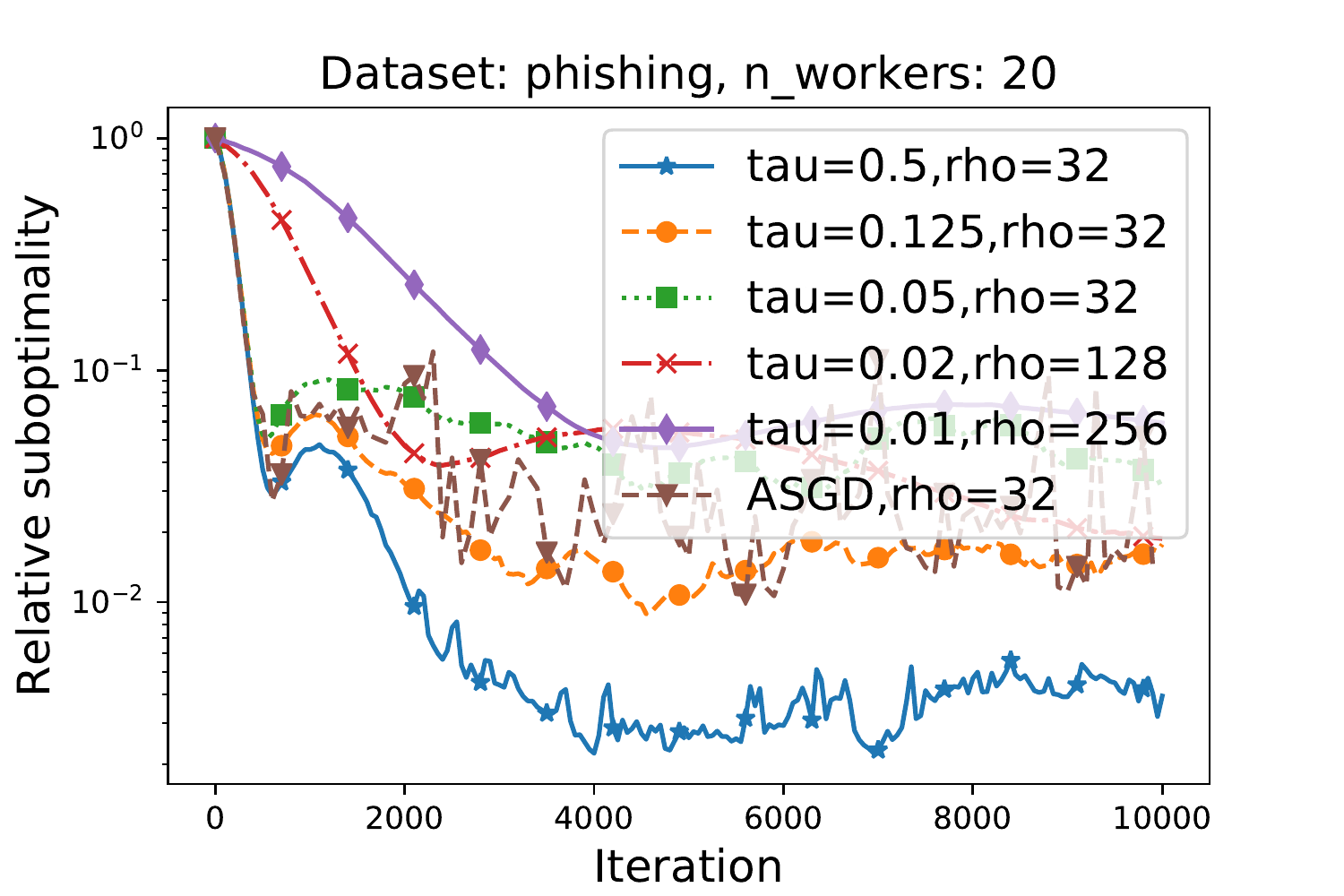}
\end{minipage}%
\\
\begin{minipage}{0.33\textwidth}
  \centering
\includegraphics[width =  \textwidth ]{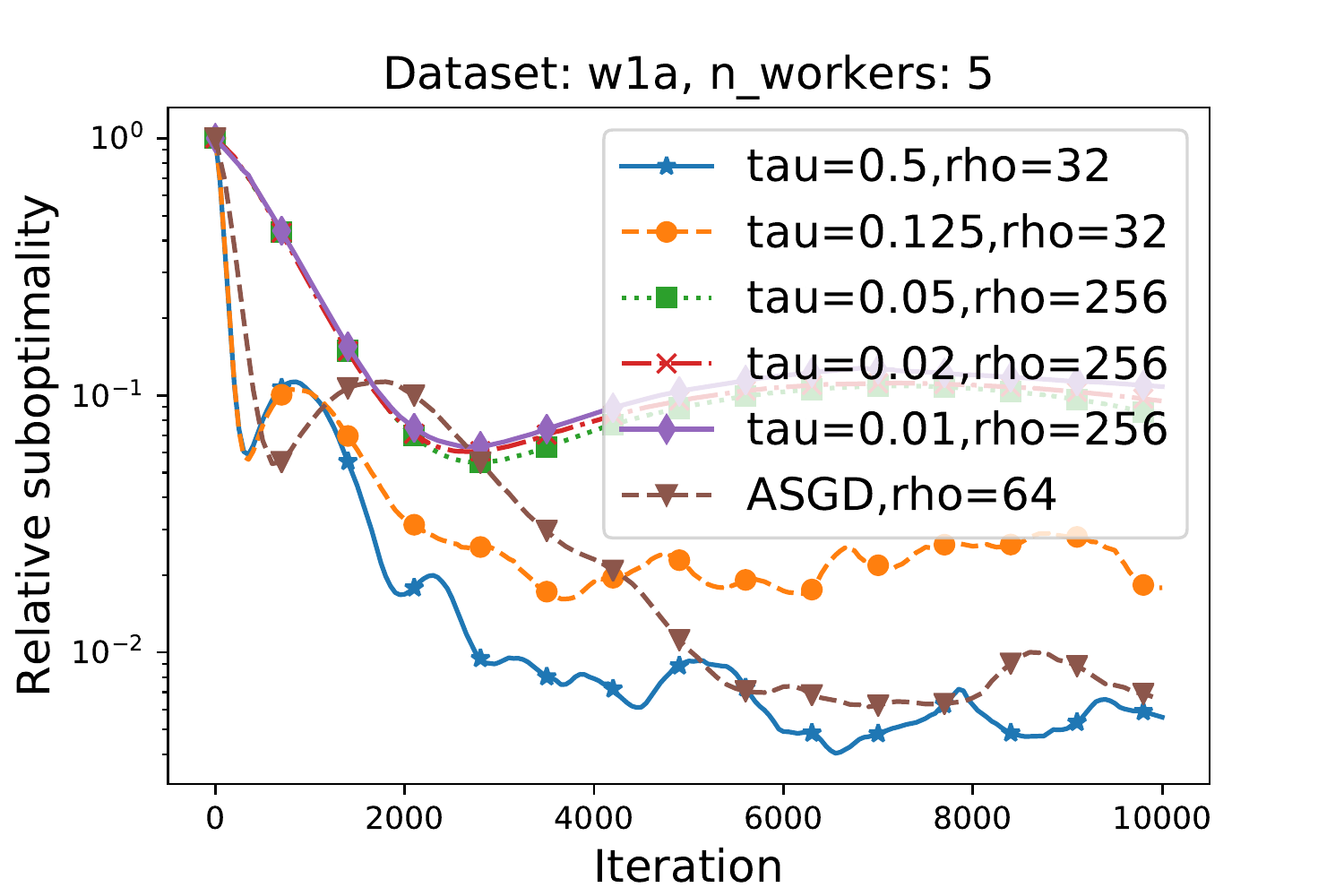}
\end{minipage}%
\begin{minipage}{0.33\textwidth}
  \centering
\includegraphics[width =  \textwidth ]{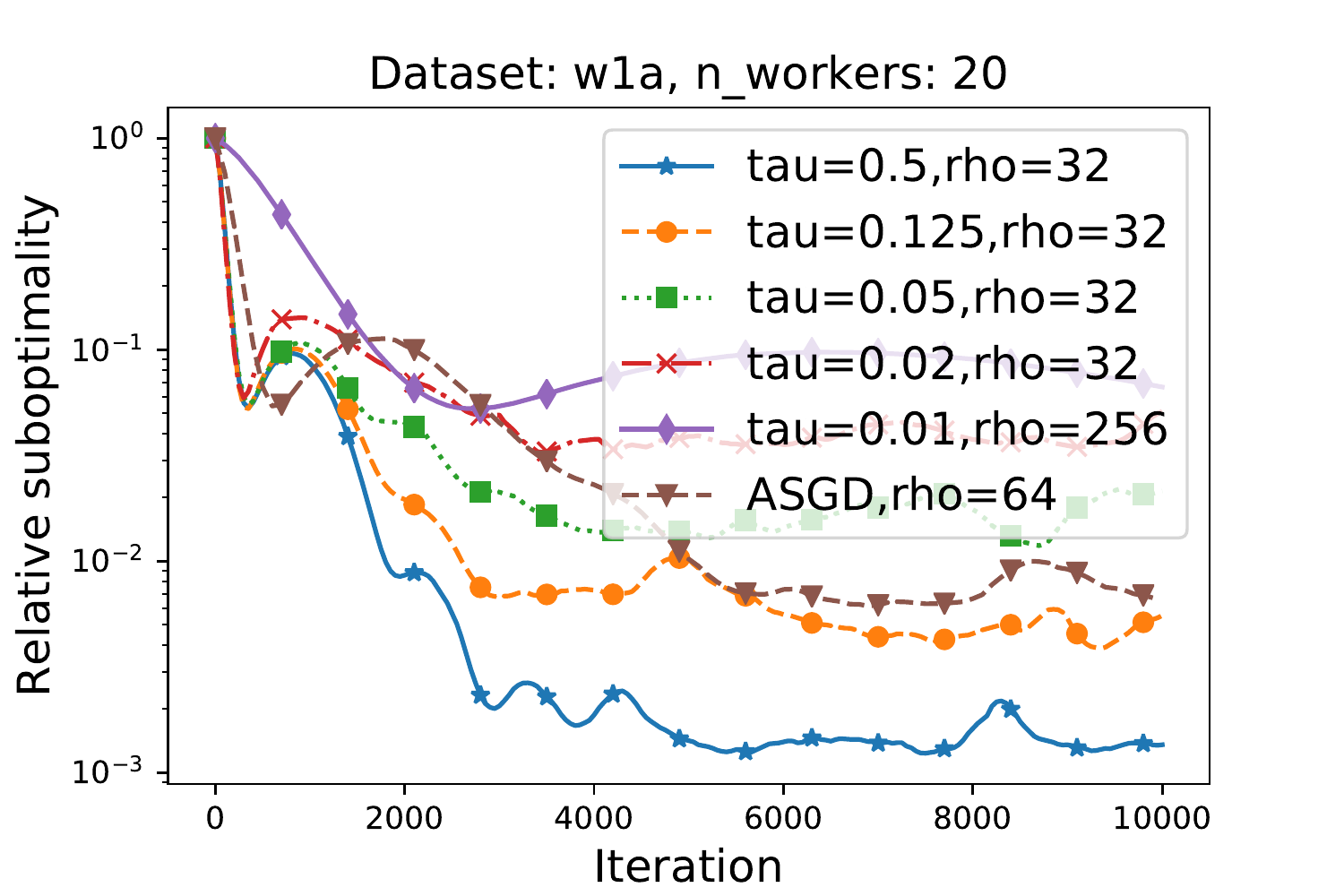}
\end{minipage}%
\begin{minipage}{0.33\textwidth}
  \centering
\includegraphics[width =  \textwidth ]{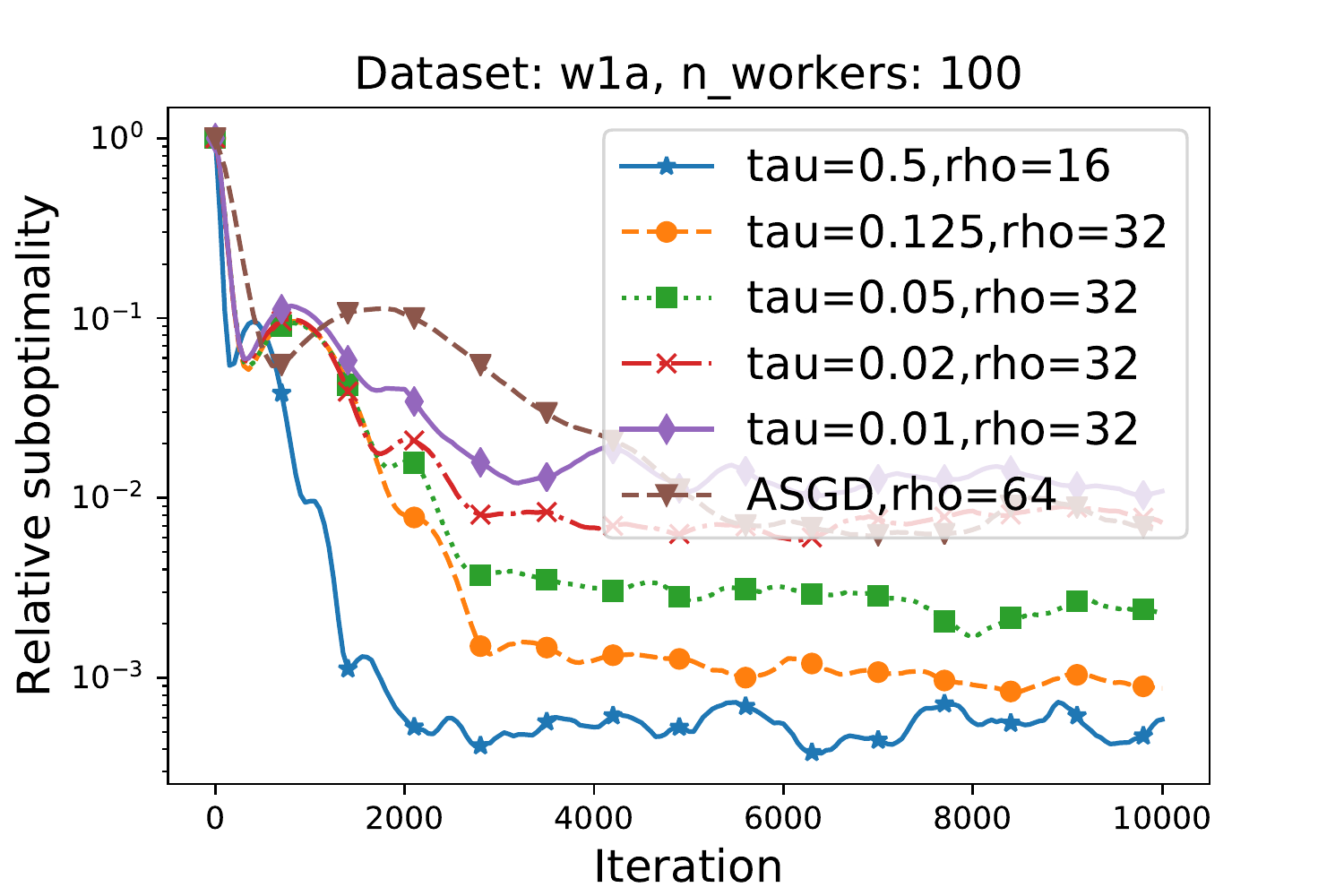}
\end{minipage}%
\\
\caption{Behavior of Algorithm~\ref{alg:acc} while varying $\tau$. Label ``ASGD'' corresponds to the choice $n=1, \tau = 1$. Parameter $\rho$ was chosen by grid search.  } \label{fig:acc2}
\end{figure}

\subsection{ISAGA \label{sec:exp_saga}}
We also study the convergence of shared data ISAGA -- Algorithm~\ref{alg:saga}. As previously, first experiment compares default SAGA against Algorithm~\ref{alg:saga} for various values of $n$ with $\tau = n^{-1}$. Again, the results (Figure~\ref{fig:saga1}\footnote{Figure~\ref{fig:saga1} is identical to Figure~\ref{fig:saga_main}. We present it again for completeness.}) shows what theory claims -- setting $n\tau=1$ does not violate a convergence rate of the original SAGA. 

\begin{figure}[H]
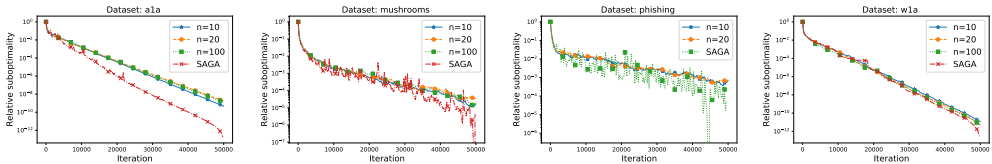

\centering
\begin{minipage}{0.24\textwidth}
  \centering
\includegraphics[width =  \textwidth ]{SAGA1_Dataset_a1a.pdf}
\end{minipage}%
\begin{minipage}{0.24\textwidth}
  \centering
\includegraphics[width =  \textwidth ]{SAGA1_Dataset_mushrooms.pdf}
\end{minipage}%
\begin{minipage}{0.24\textwidth}
  \centering
\includegraphics[width =  \textwidth ]{SAGA1_Dataset_phishing.pdf}
\end{minipage}%
\begin{minipage}{0.24\textwidth}
  \centering
\includegraphics[width =  \textwidth ]{SAGA1_Dataset_w1a.pdf}
\end{minipage}%
\\
\caption{Comparison of SAGA and Algorithm~\ref{alg:saga} for various values $n$ and $\tau=n^{-1}$. Stepsize $\gamma = \frac{1}{L(3n^{-1}+\tau)}$ is chosen in each case. } \label{fig:saga1}
\end{figure}

The second experiment of this section shows the convergence behavior for varying $\tau$ of Algorithm~\ref{alg:saga}. The results (Figure~\ref{fig:saga2}) show that, for small $n$, the ratio of coordinates $\tau$ affects the speed heavily. However, as $n$ increases, the effect of $\tau$ is diminishing. 

\begin{figure}[H]
\centering
\begin{minipage}{0.33\textwidth}
  \centering
\includegraphics[width =  \textwidth ]{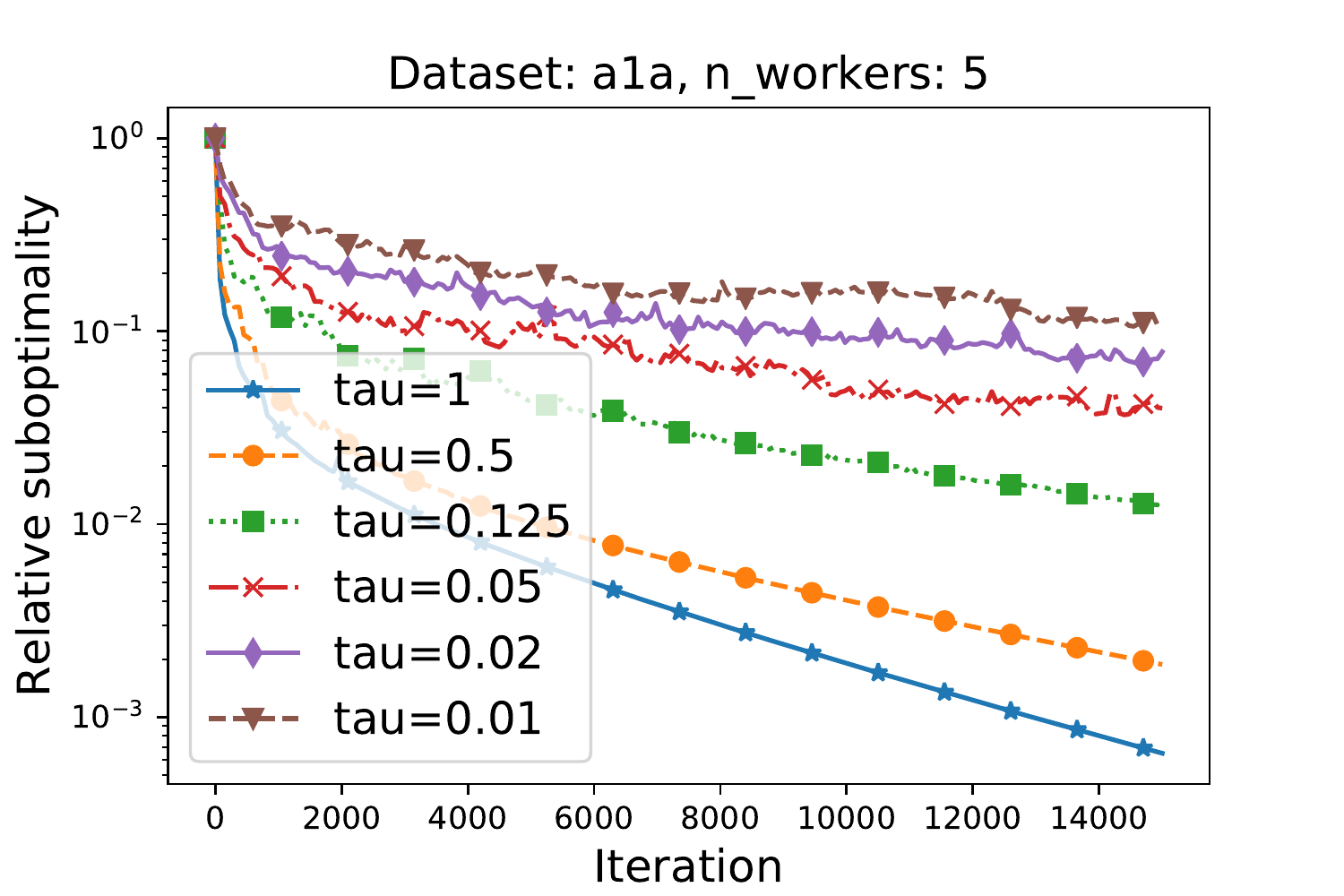}
\end{minipage}%
\begin{minipage}{0.33\textwidth}
  \centering
\includegraphics[width =  \textwidth ]{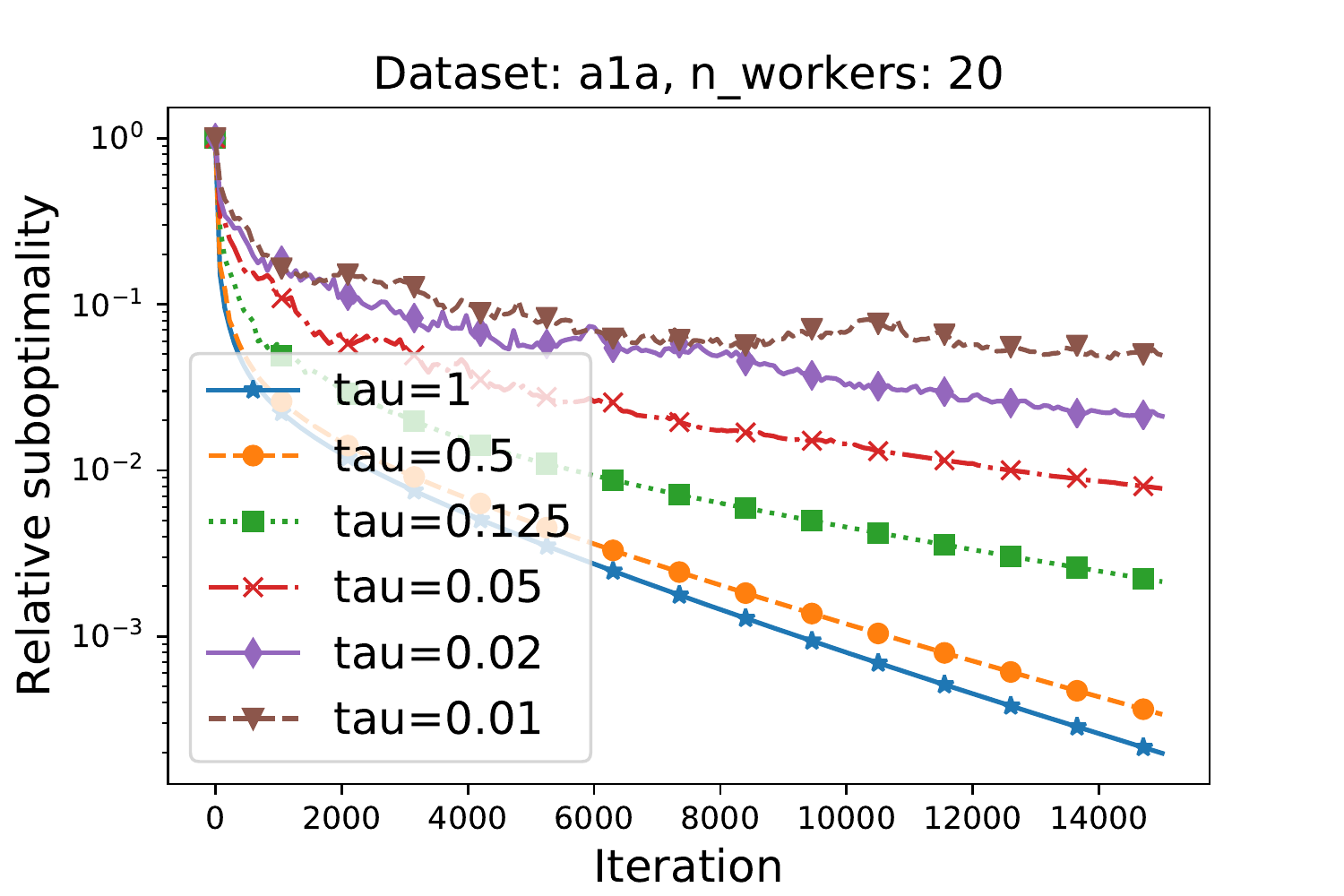}
\end{minipage}%
\begin{minipage}{0.33\textwidth}
  \centering
\includegraphics[width =  \textwidth ]{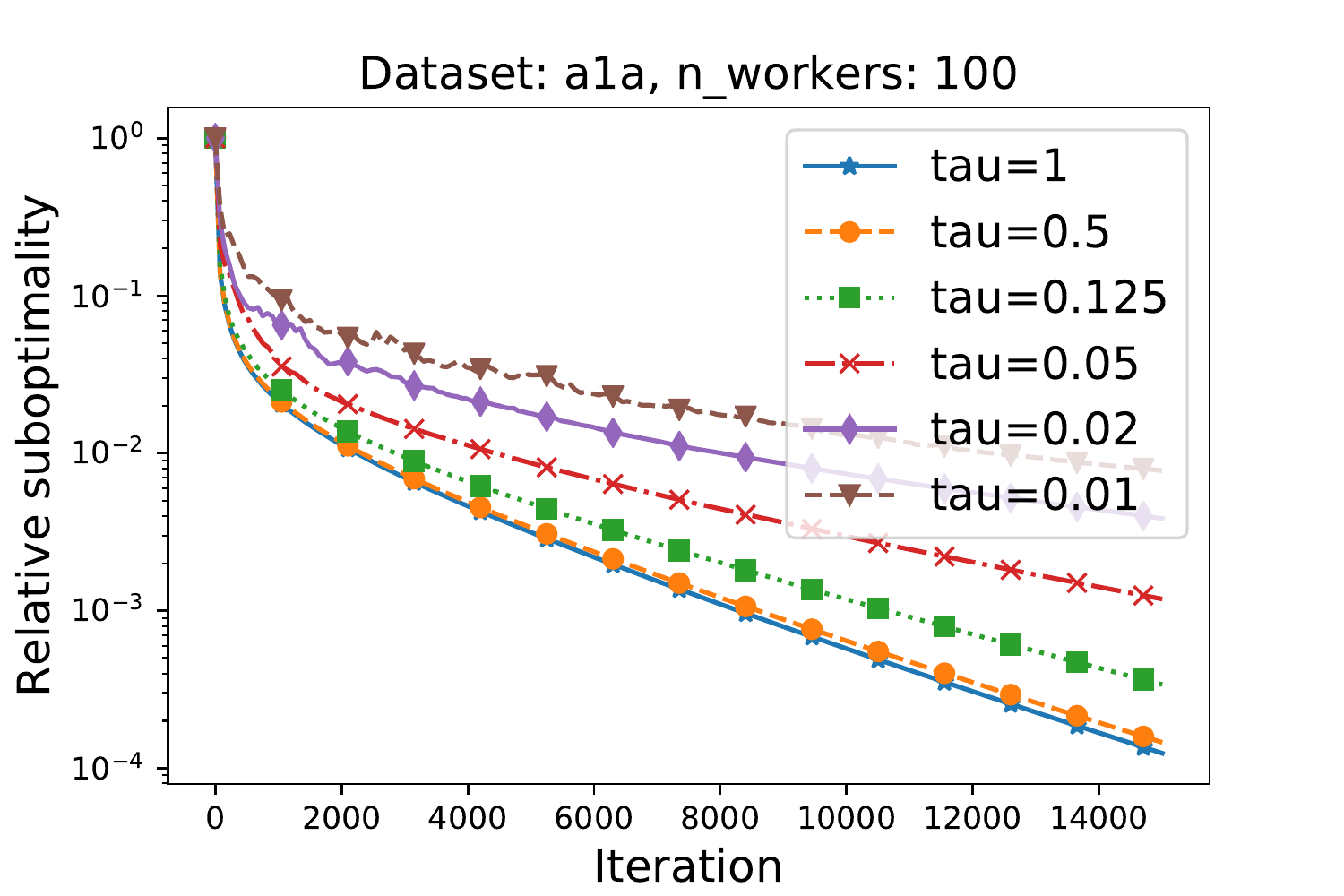}
\end{minipage}%
\\
\begin{minipage}{0.33\textwidth}
  \centering
\includegraphics[width =  \textwidth ]{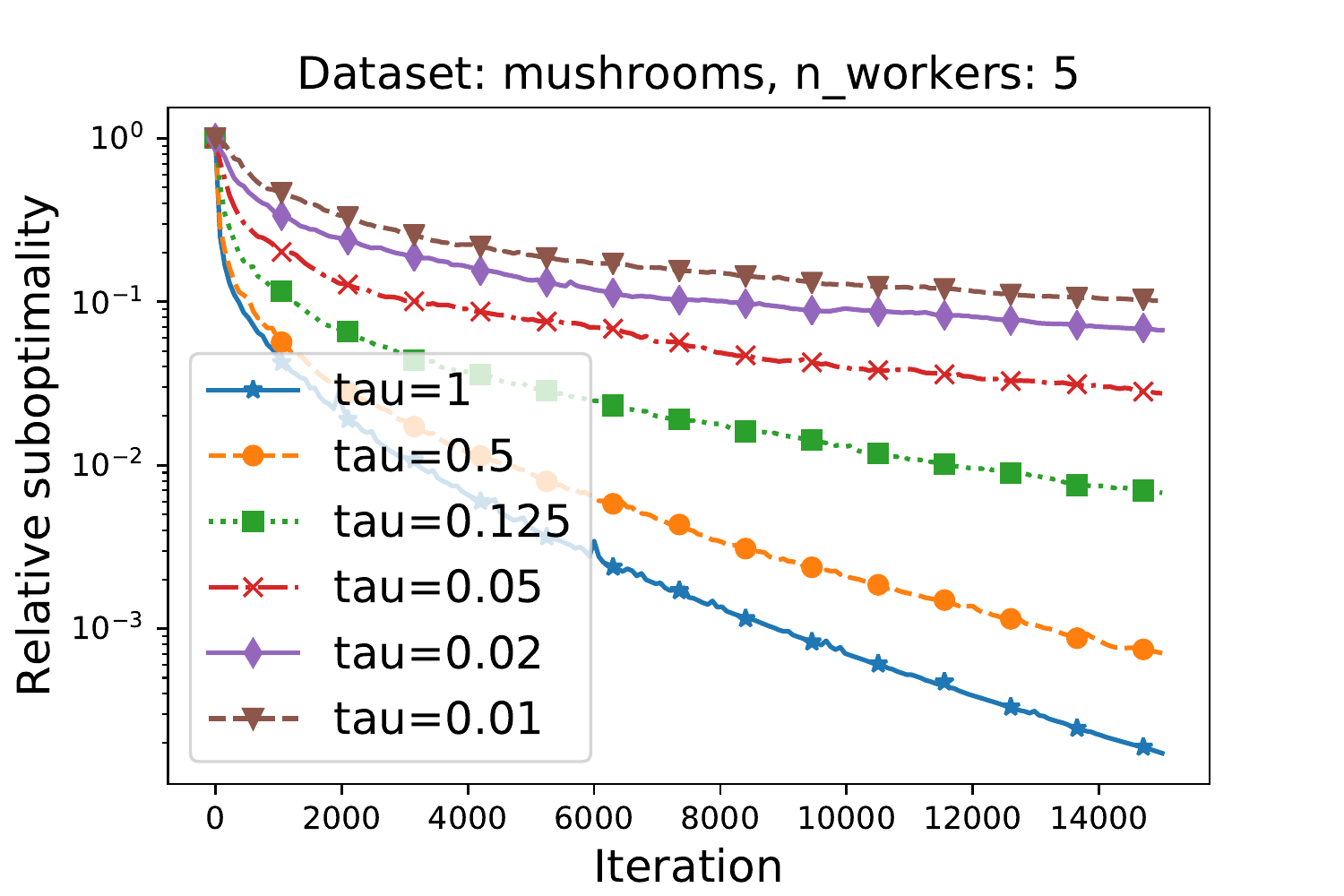}
\end{minipage}%
\begin{minipage}{0.33\textwidth}
  \centering
\includegraphics[width =  \textwidth ]{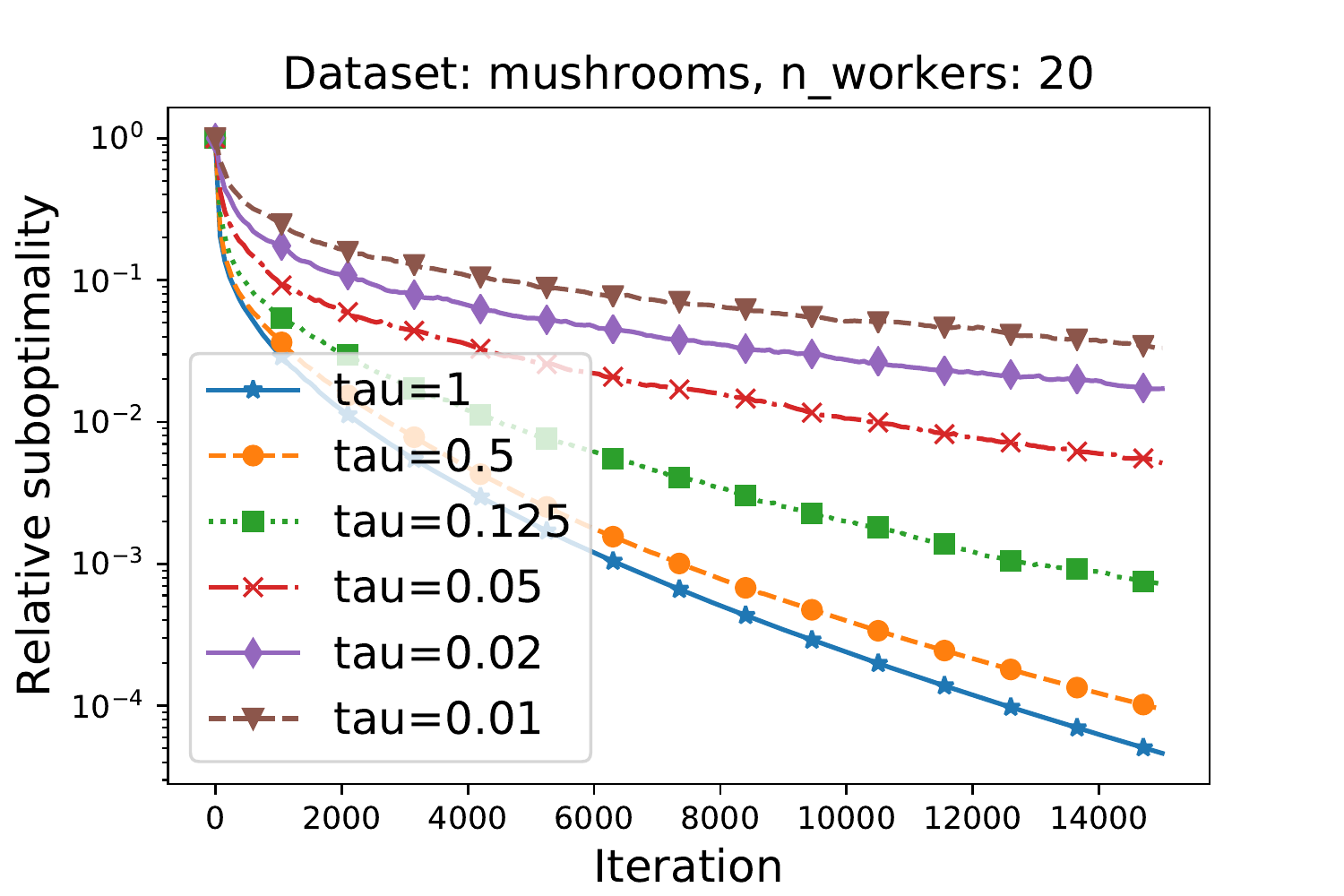}
\end{minipage}%
\begin{minipage}{0.33\textwidth}
  \centering
\includegraphics[width =  \textwidth ]{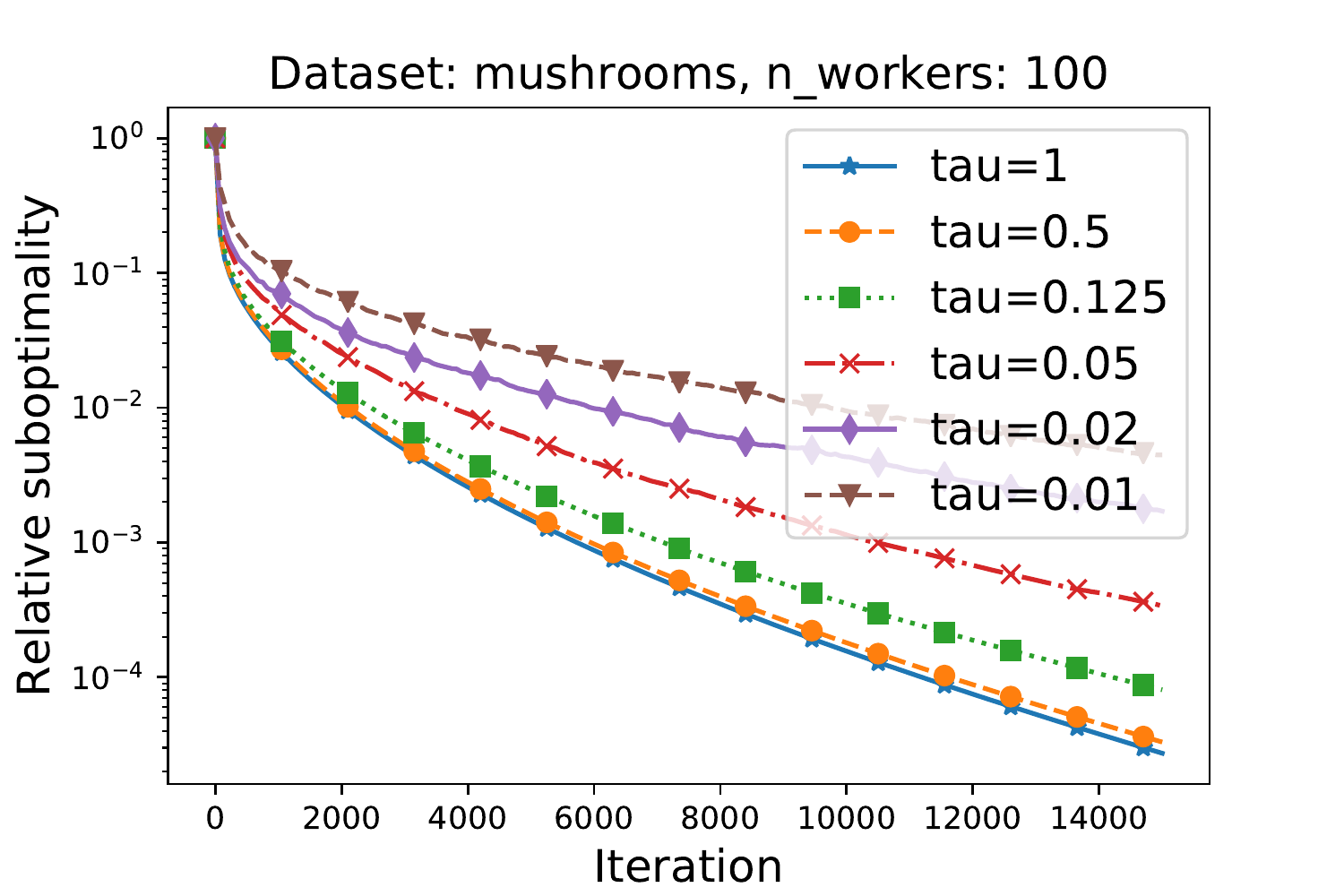}
\end{minipage}%
\\
\begin{minipage}{0.33\textwidth}
  \centering
\includegraphics[width =  \textwidth ]{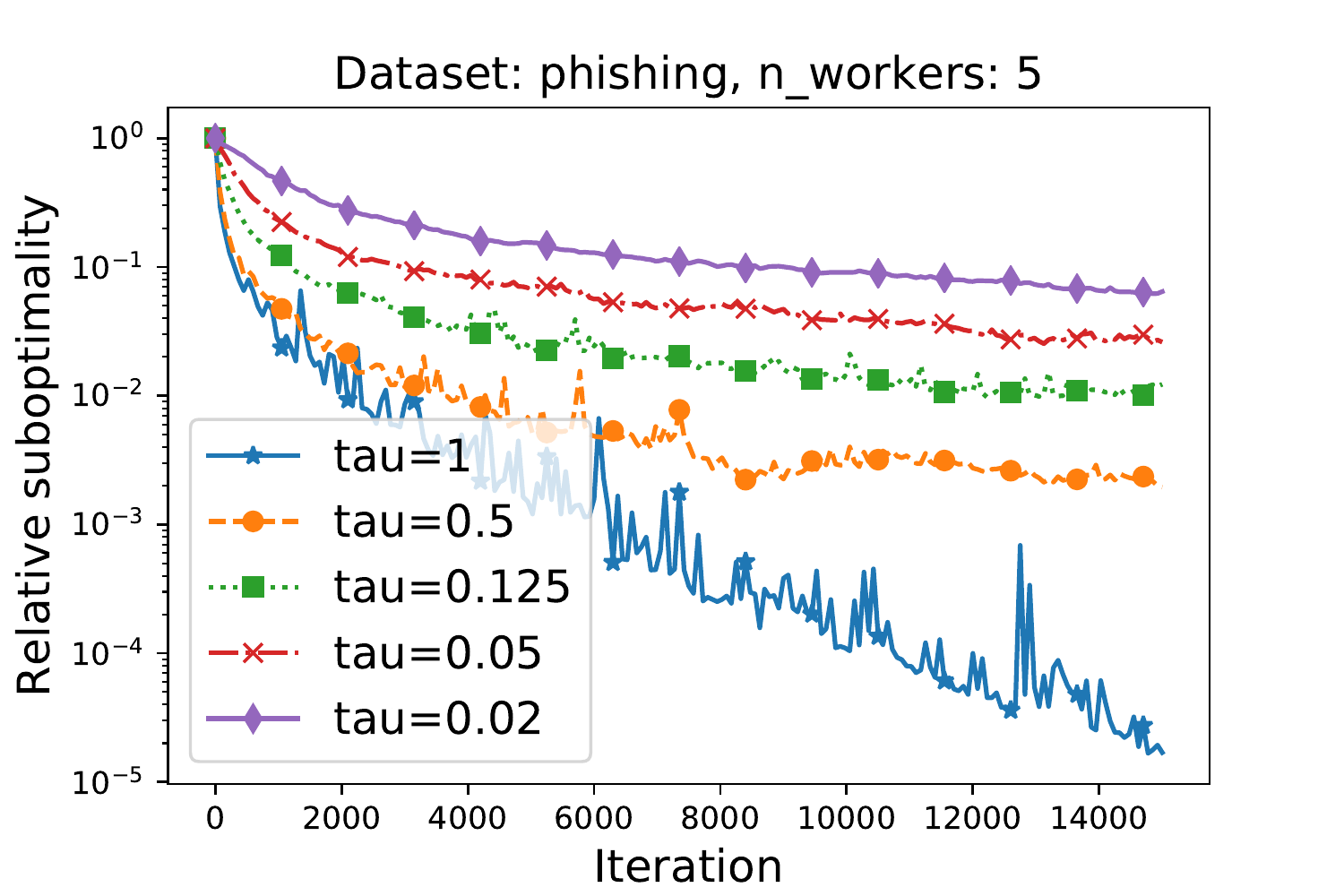}
\end{minipage}%
\begin{minipage}{0.33\textwidth}
  \centering
\includegraphics[width =  \textwidth ]{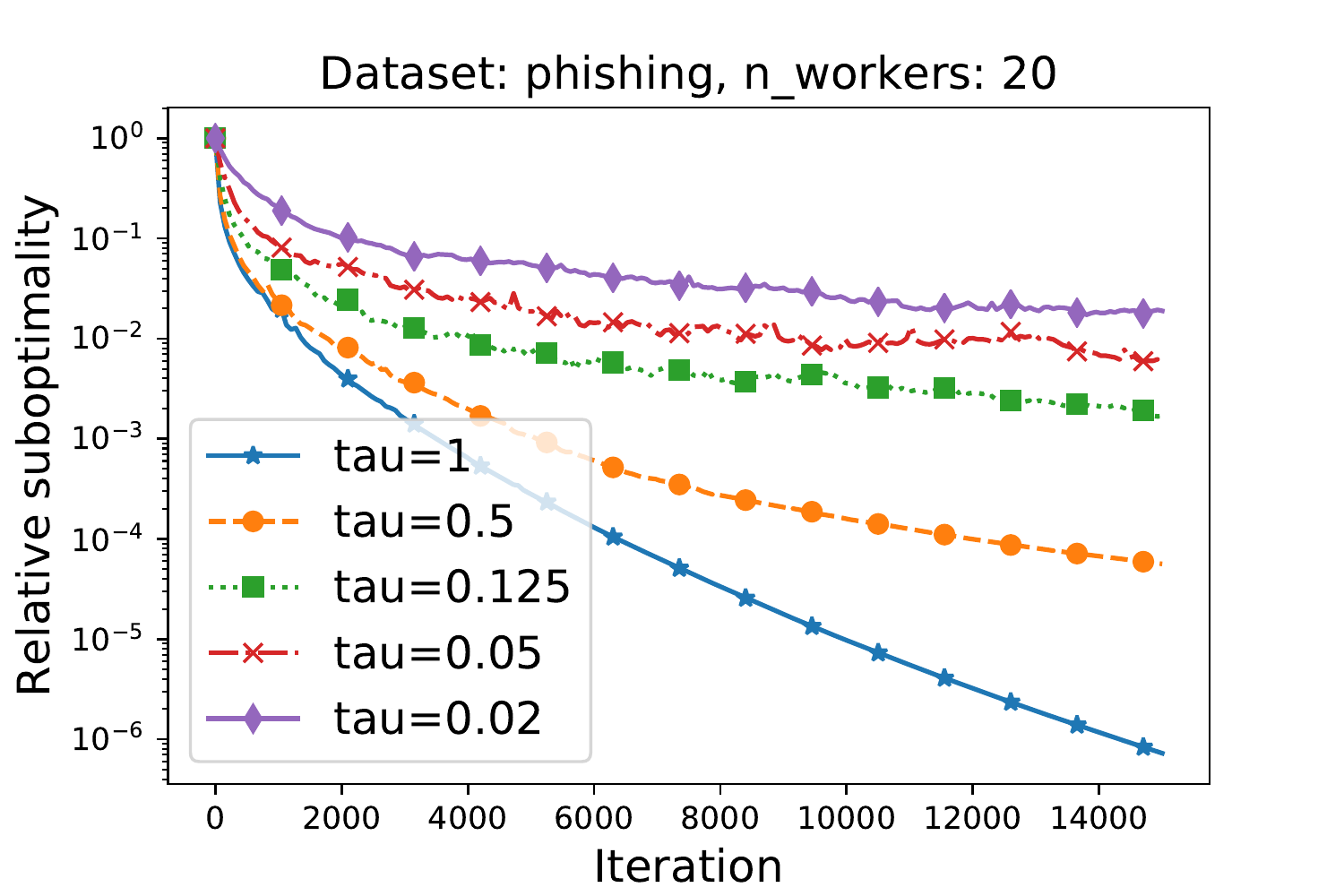}
\end{minipage}%
\begin{minipage}{0.33\textwidth}
  \centering
\includegraphics[width =  \textwidth ]{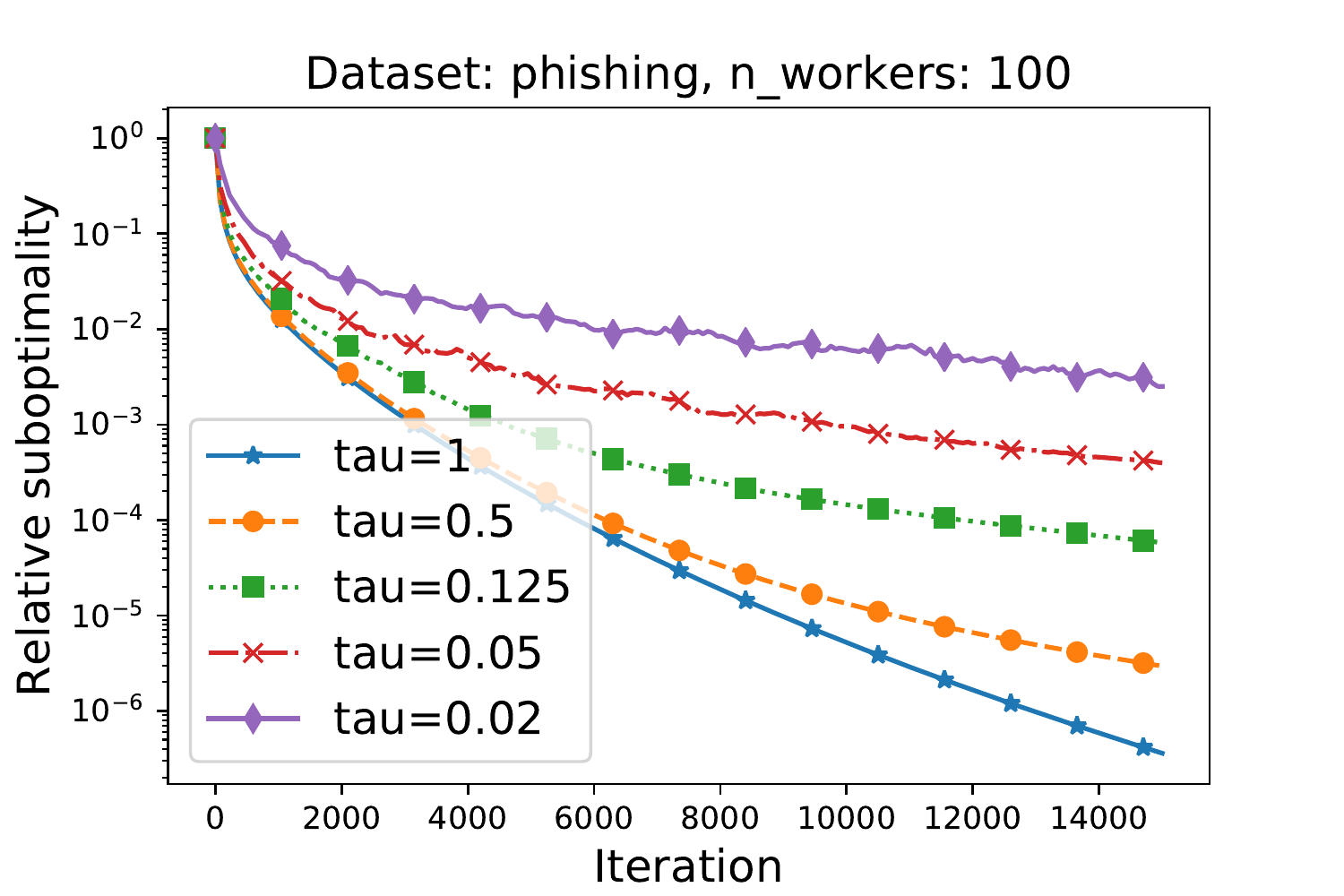}
\end{minipage}%
\\
\begin{minipage}{0.33\textwidth}
  \centering
\includegraphics[width =  \textwidth ]{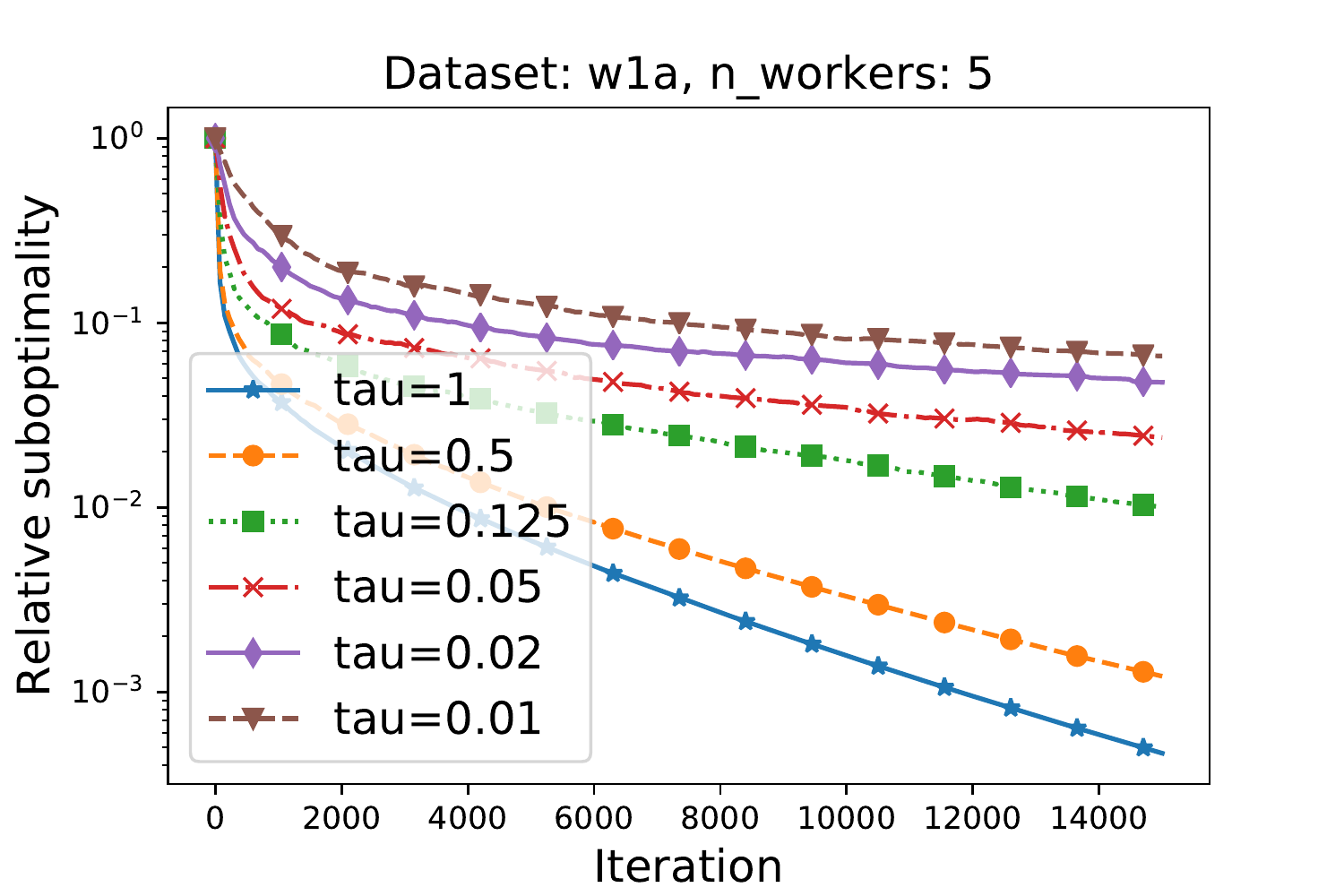}
\end{minipage}%
\begin{minipage}{0.33\textwidth}
  \centering
\includegraphics[width =  \textwidth ]{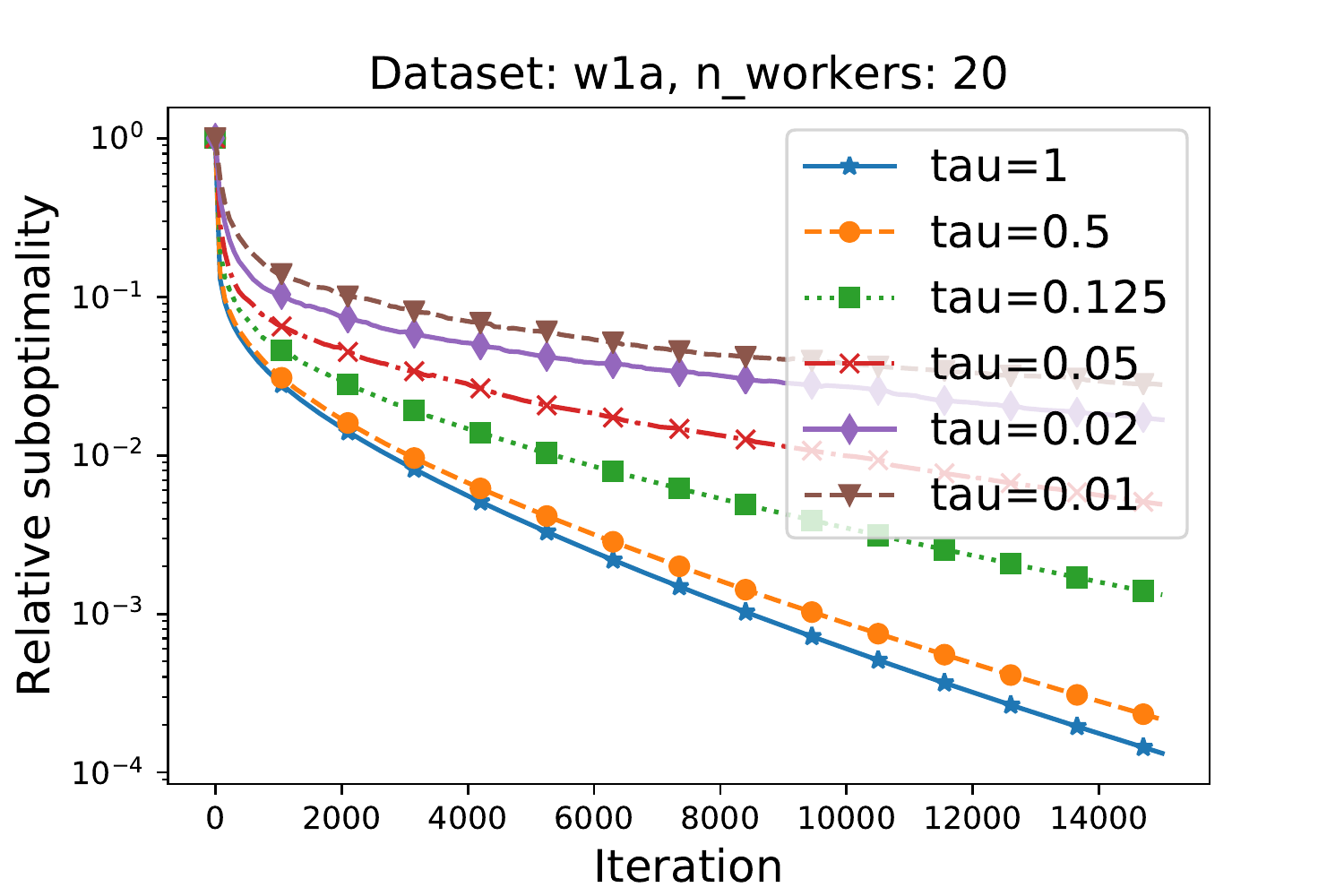}
\end{minipage}%
\begin{minipage}{0.33\textwidth}
  \centering
\includegraphics[width =  \textwidth ]{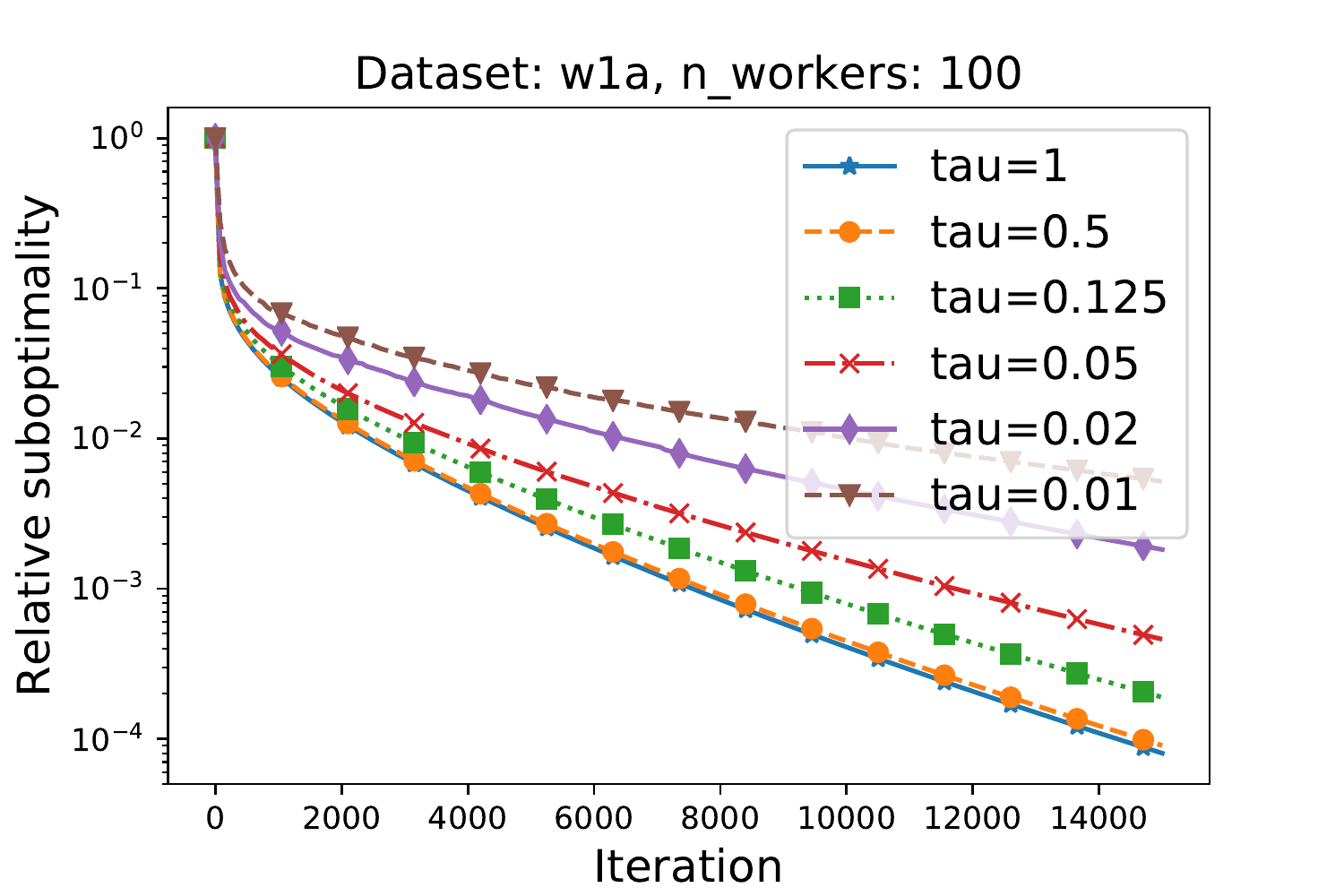}
\end{minipage}%
\\
\caption{Comparison of Algorithm~\ref{alg:saga} for different values of $\tau$. Stepsize $\gamma = \frac{1}{L(3n^{-1}+\tau)}$ is chosen in each case. For this experiment, we choose smaller regularization; $\ell_2  = 0.000025 $. }\label{fig:saga2}
\end{figure}

\subsection{ISEGA \label{sec:exp_sega}}
Lastly, we numerically test Algorithm~\ref{alg:sega}, and its linear convergence without Assumption~\ref{as:zero_grads}. For simplicity, we consider $R(x)=0$ in~\eqref{eq:problem_sega}. 

In the first experiment (Figure~\ref{fig:sega1}), we compare Algorithm~\ref{alg:sega} for various $(n,\tau)$ such that $n\tau=1$. For illustration, we also plot convergence of gradient descent with the analogous stepsize. As theory predicts, the method has almost same convergence speed.\footnote{We have chosen stepsize $\gamma = \frac{1}{2L}$ for GD, as this is the baseline to Algorithm~\ref{alg:sega} with zero variance. One can in fact set $\gamma = \frac{1}{L}$ for GD and get 2 times faster convergence. However, this is still only a constant factor. }

\begin{figure}[H]
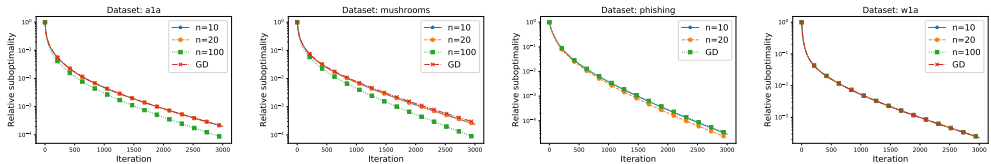

\centering
\begin{minipage}{0.24\textwidth}
  \centering
\includegraphics[width =  \textwidth ]{SEGA_Dataset_a1a.pdf}
\end{minipage}%
\begin{minipage}{0.24\textwidth}
  \centering
\includegraphics[width =  \textwidth ]{SEGA_Dataset_mushrooms.pdf}
\end{minipage}%
\begin{minipage}{0.24\textwidth}
  \centering
\includegraphics[width =  \textwidth ]{SEGA_Dataset_phishing.pdf}
\end{minipage}%
\begin{minipage}{0.24\textwidth}
  \centering
\includegraphics[width =  \textwidth ]{SEGA_Dataset_w1a.pdf}
\end{minipage}%
\\
\caption{Comparison of Algorithm~\ref{alg:sega} for various $(n,\tau)$ such that $n\tau=1$ and GD. Stepsize $\frac{1}{L\left(1+\frac{1}{n\tau}\right)} $ was chosen for Algorithm~\ref{alg:sega} and $\frac1{2L}$ for GD.}\label{fig:sega1}
\end{figure}

The second experiment of this section shows the convergence behavior for varying $\tau$ of Algorithm~\ref{alg:sega}. Again, the results (Figure~\ref{fig:sega2}) indicate that $\tau$ has a heavy impact on the convergence speed for small $n$. However, as $n$ increases, the effect of $\tau$ is diminishing. In particular, for increasing $\tau$ beyond $n^{-1}$ does not yield a significant speedup.

\begin{figure}[H]
\centering
\begin{minipage}{0.33\textwidth}
  \centering
\includegraphics[width =  \textwidth ]{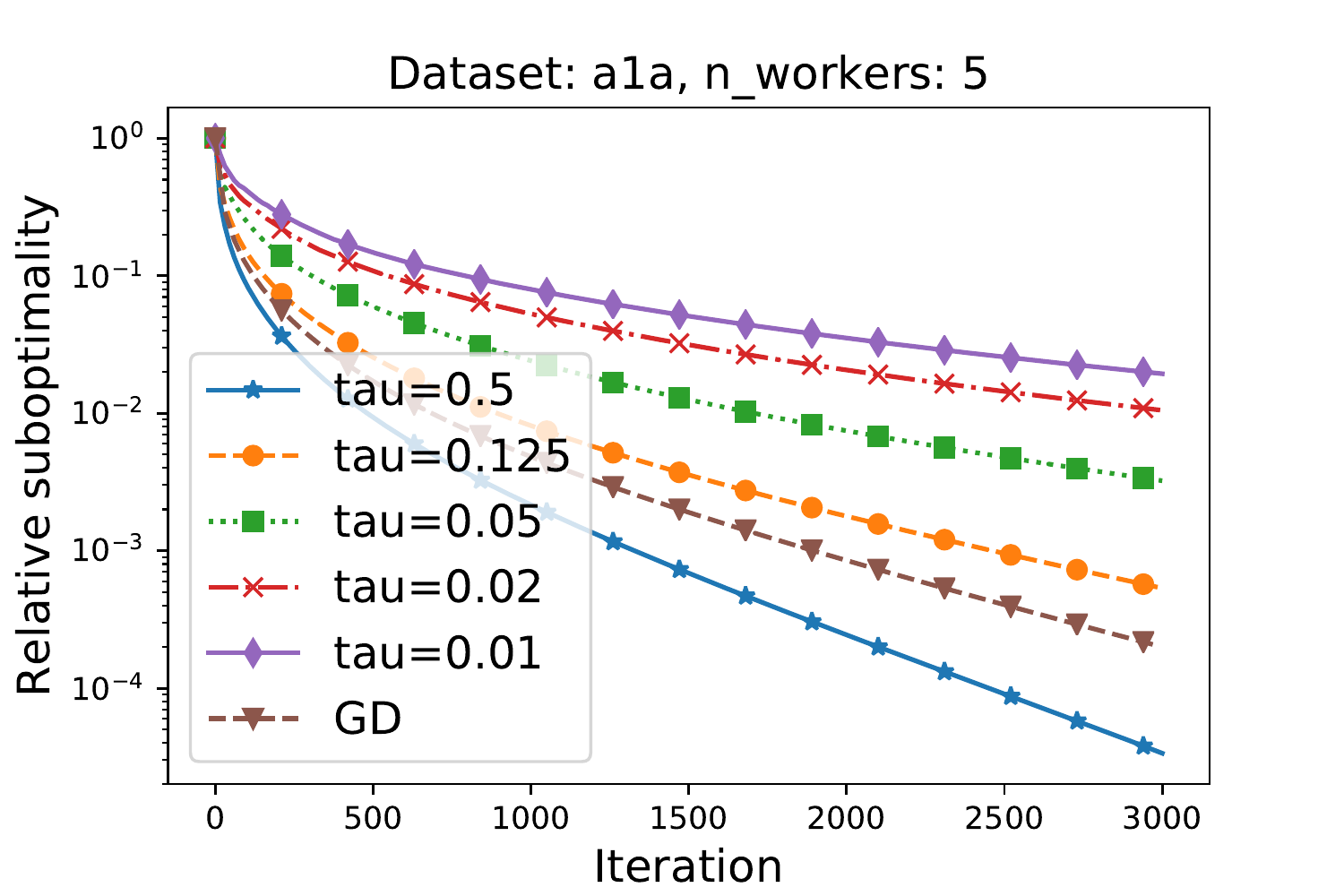}
\end{minipage}%
\begin{minipage}{0.33\textwidth}
  \centering
\includegraphics[width =  \textwidth ]{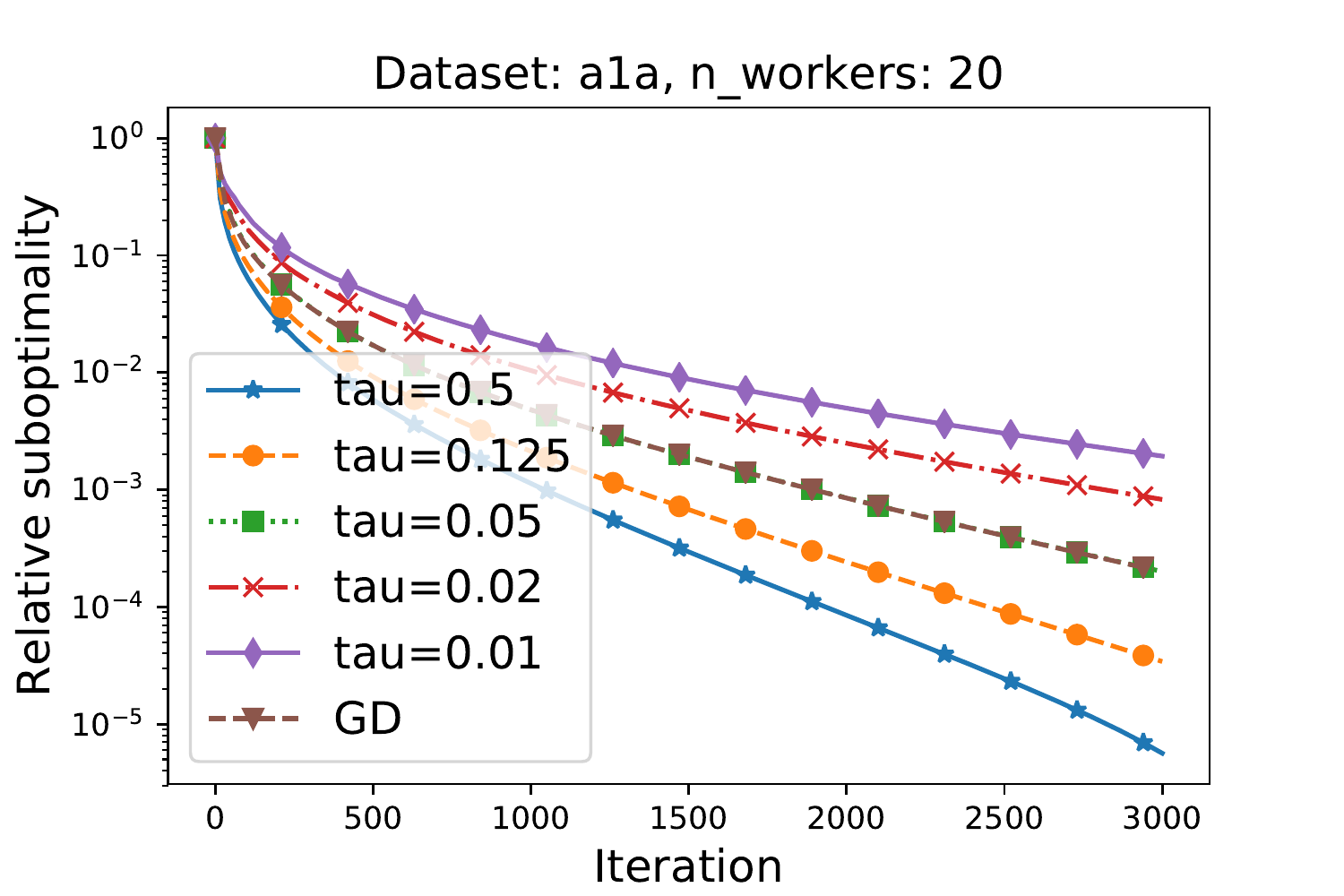}
\end{minipage}%
\begin{minipage}{0.33\textwidth}
  \centering
\includegraphics[width =  \textwidth ]{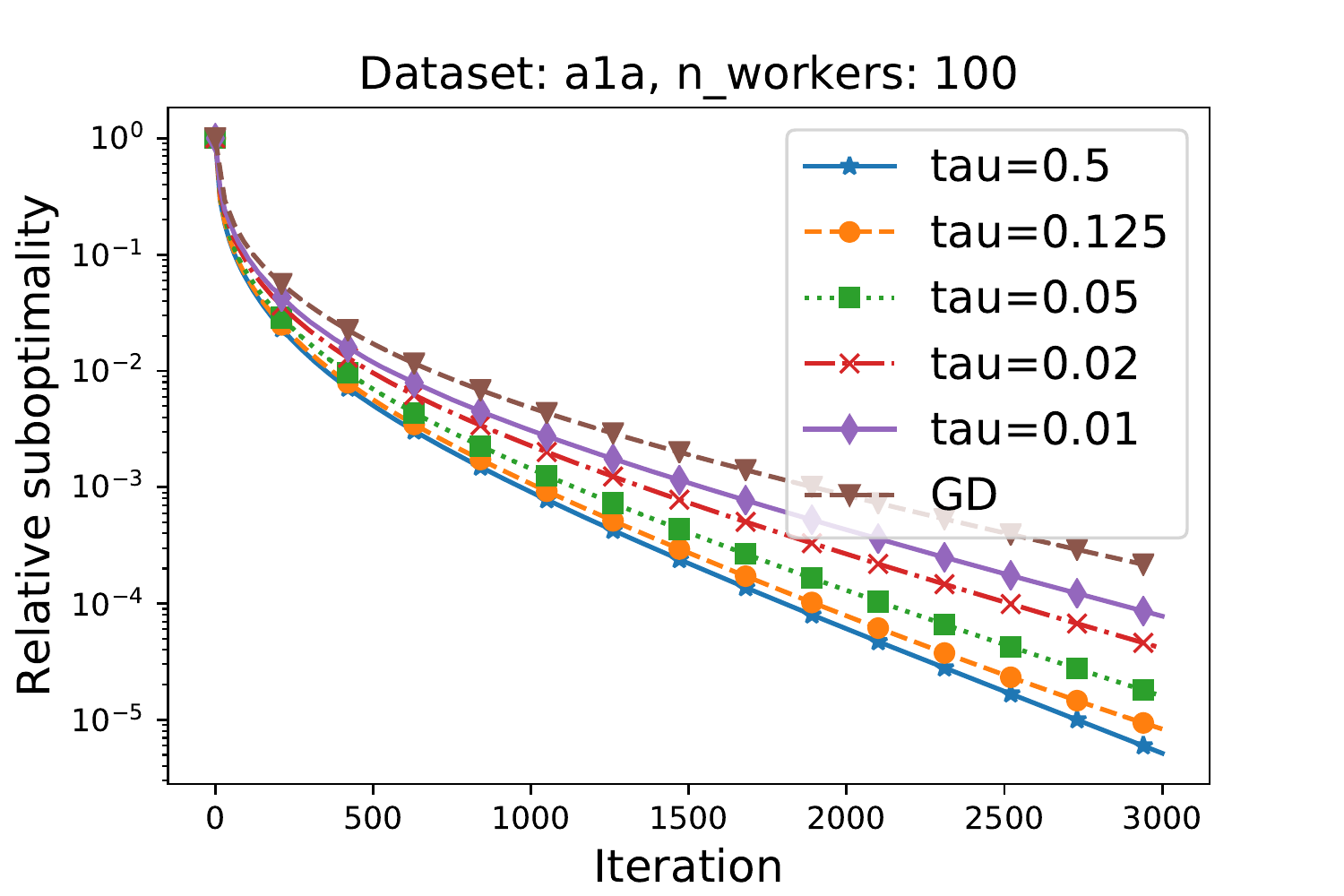}
\end{minipage}%
\\
\begin{minipage}{0.33\textwidth}
  \centering
\includegraphics[width =  \textwidth ]{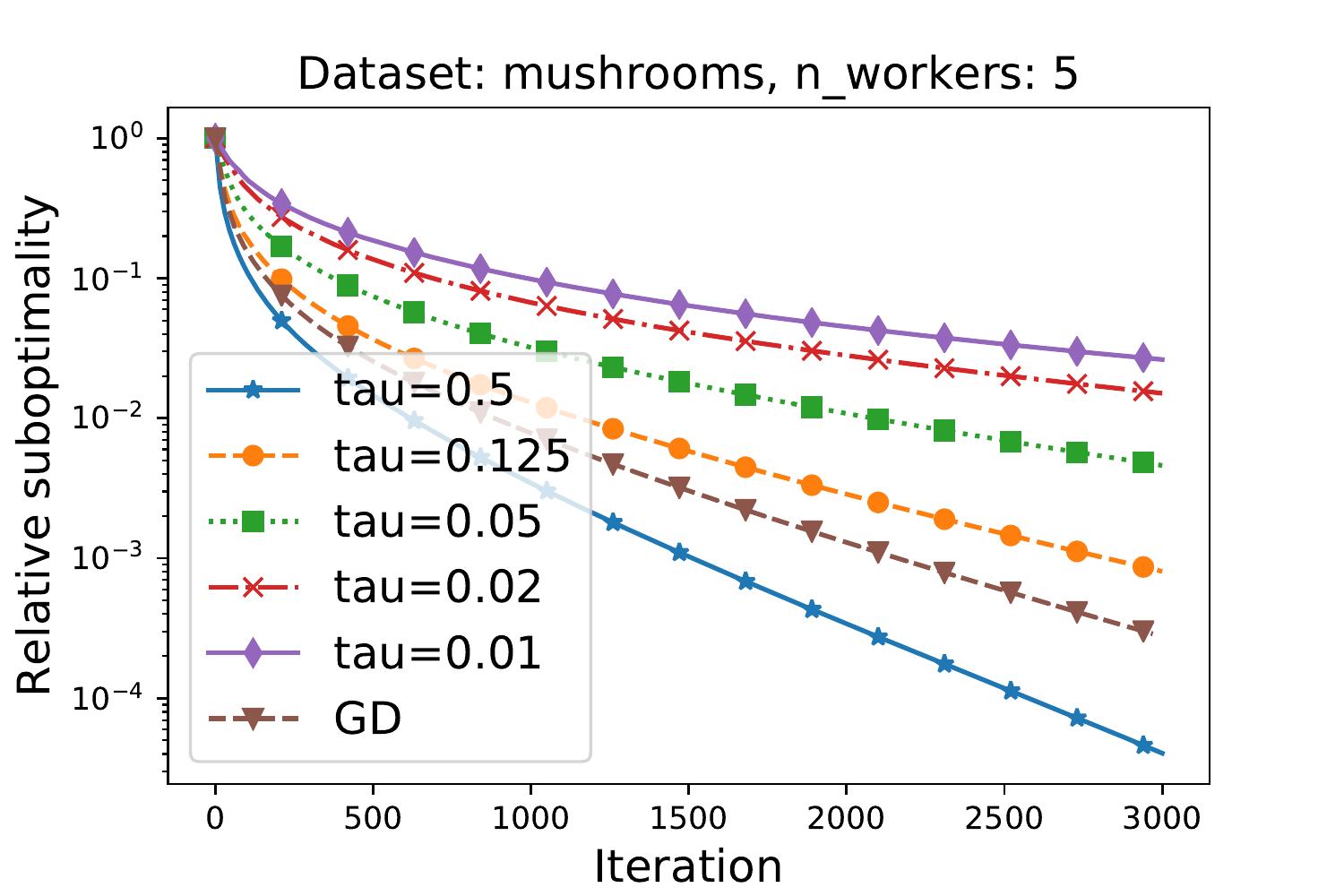}
\end{minipage}%
\begin{minipage}{0.33\textwidth}
  \centering
\includegraphics[width =  \textwidth ]{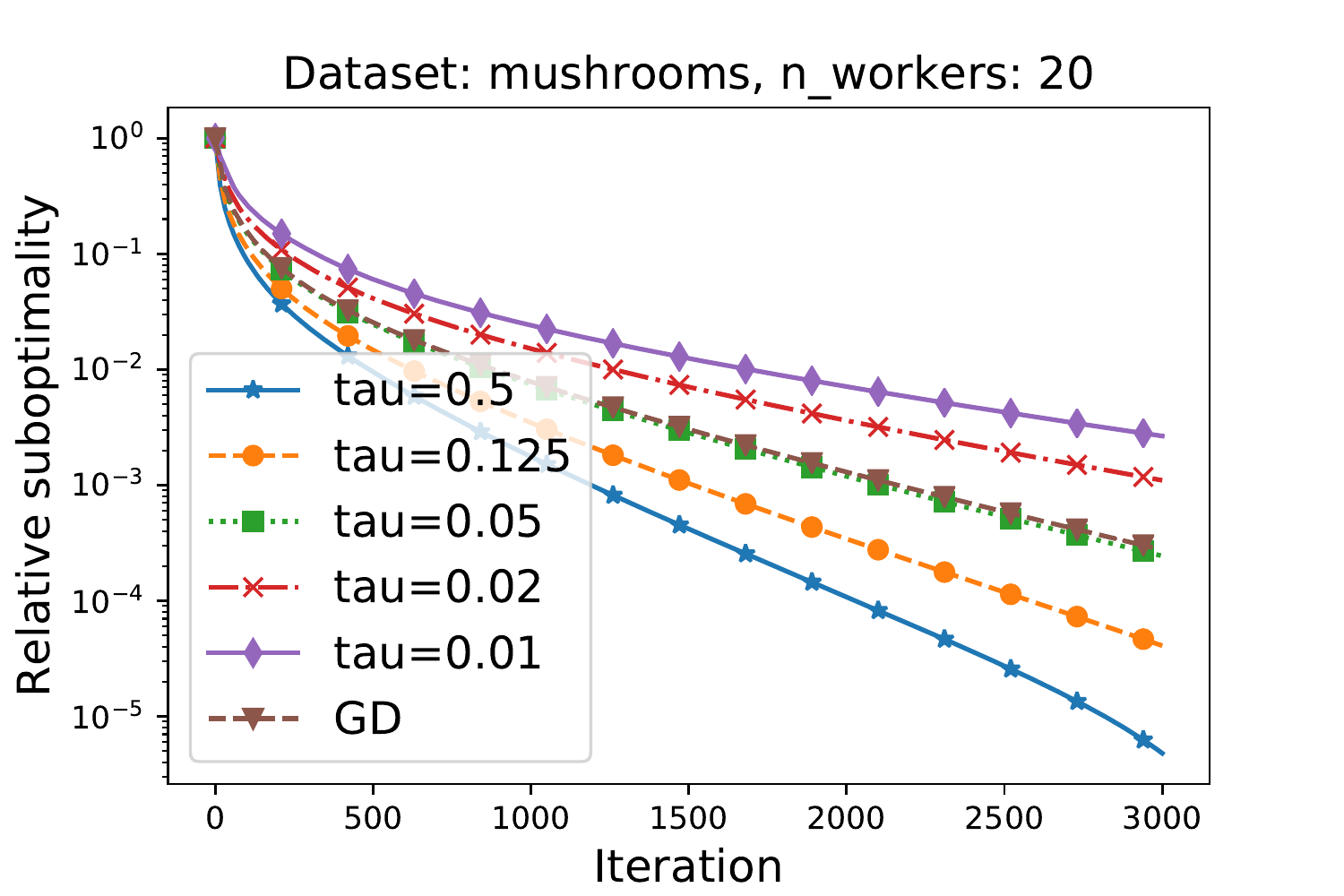}
\end{minipage}%
\begin{minipage}{0.33\textwidth}
  \centering
\includegraphics[width =  \textwidth ]{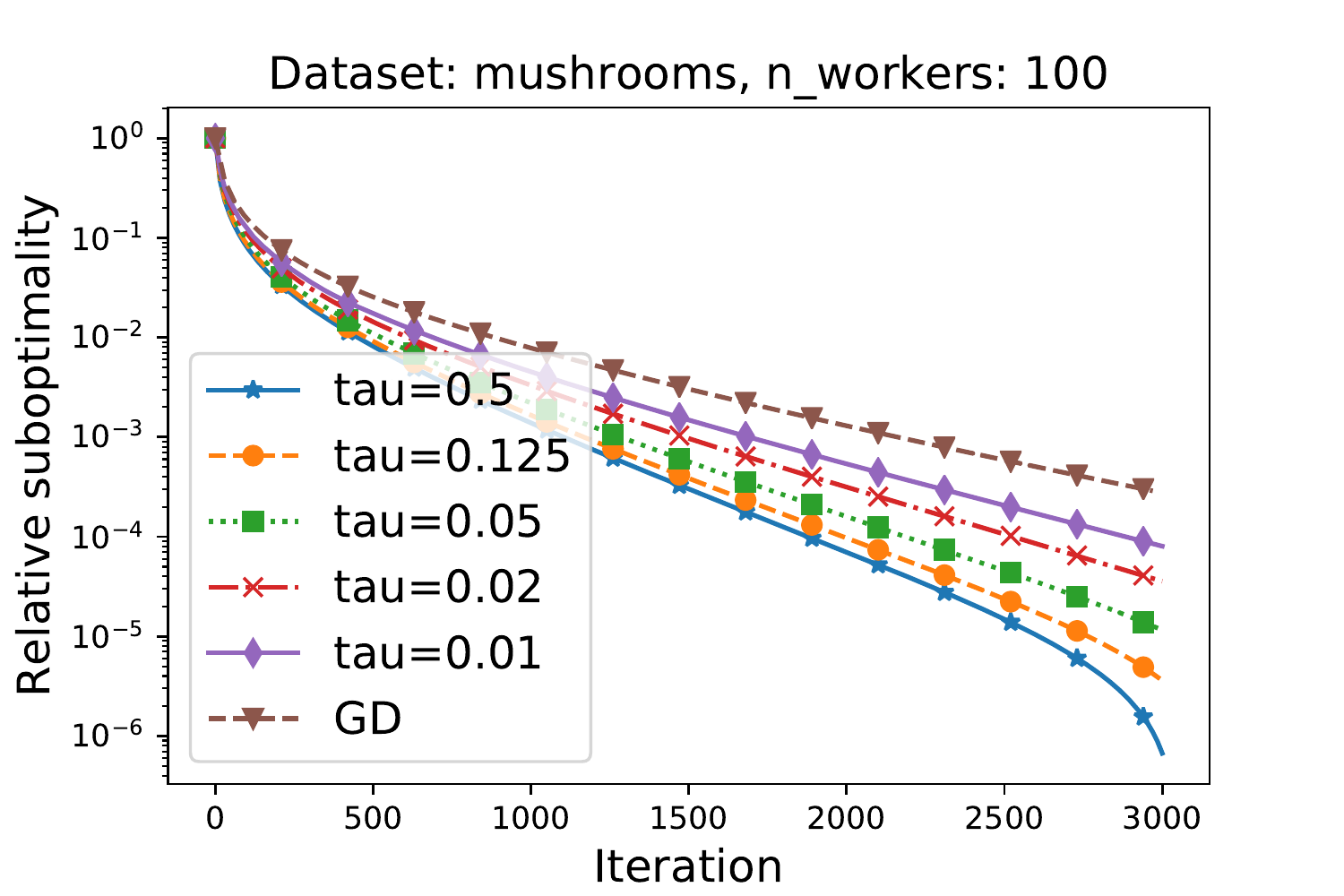}
\end{minipage}%
\\
\begin{minipage}{0.33\textwidth}
  \centering
\includegraphics[width =  \textwidth ]{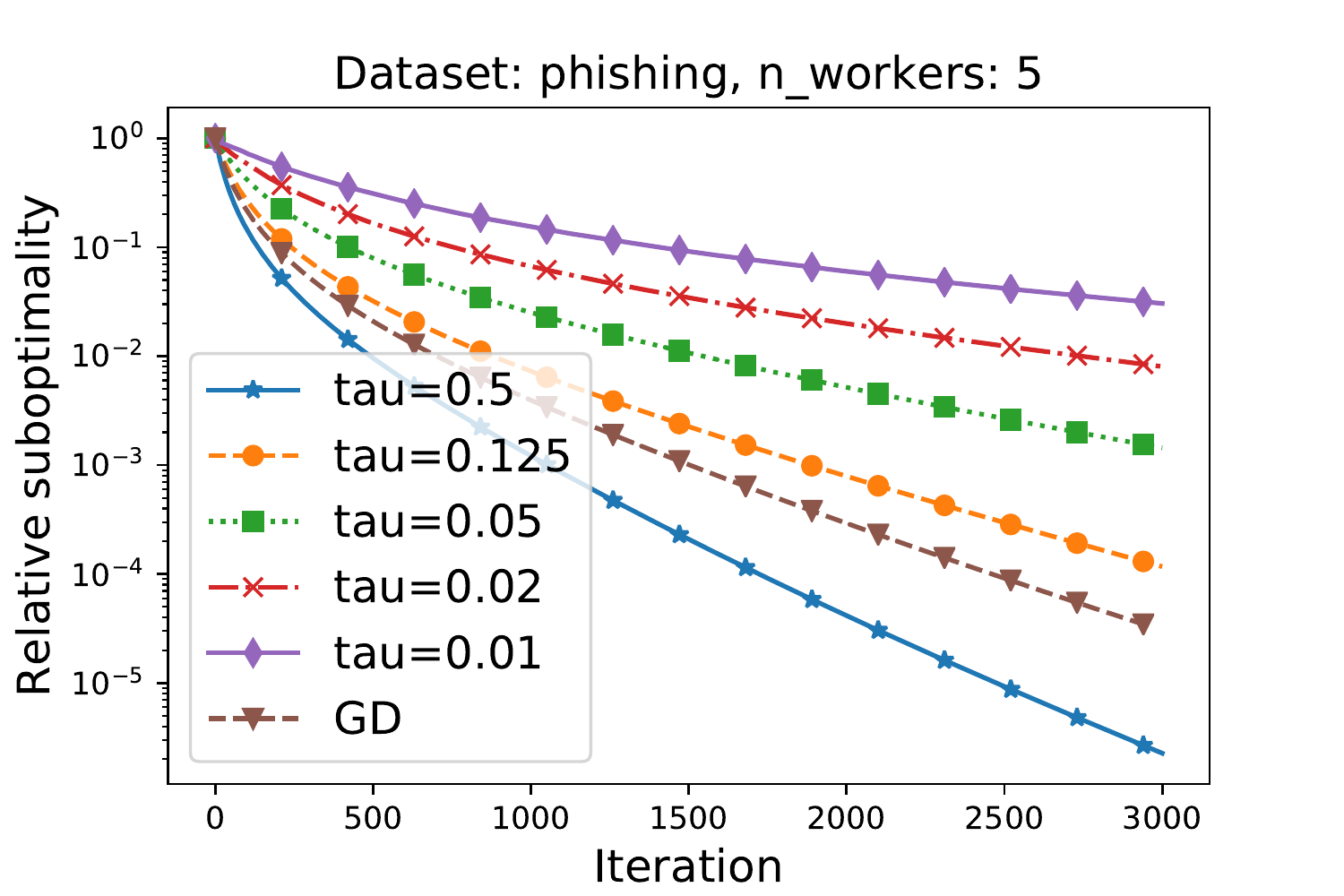}
\end{minipage}%
\begin{minipage}{0.33\textwidth}
  \centering
\includegraphics[width =  \textwidth ]{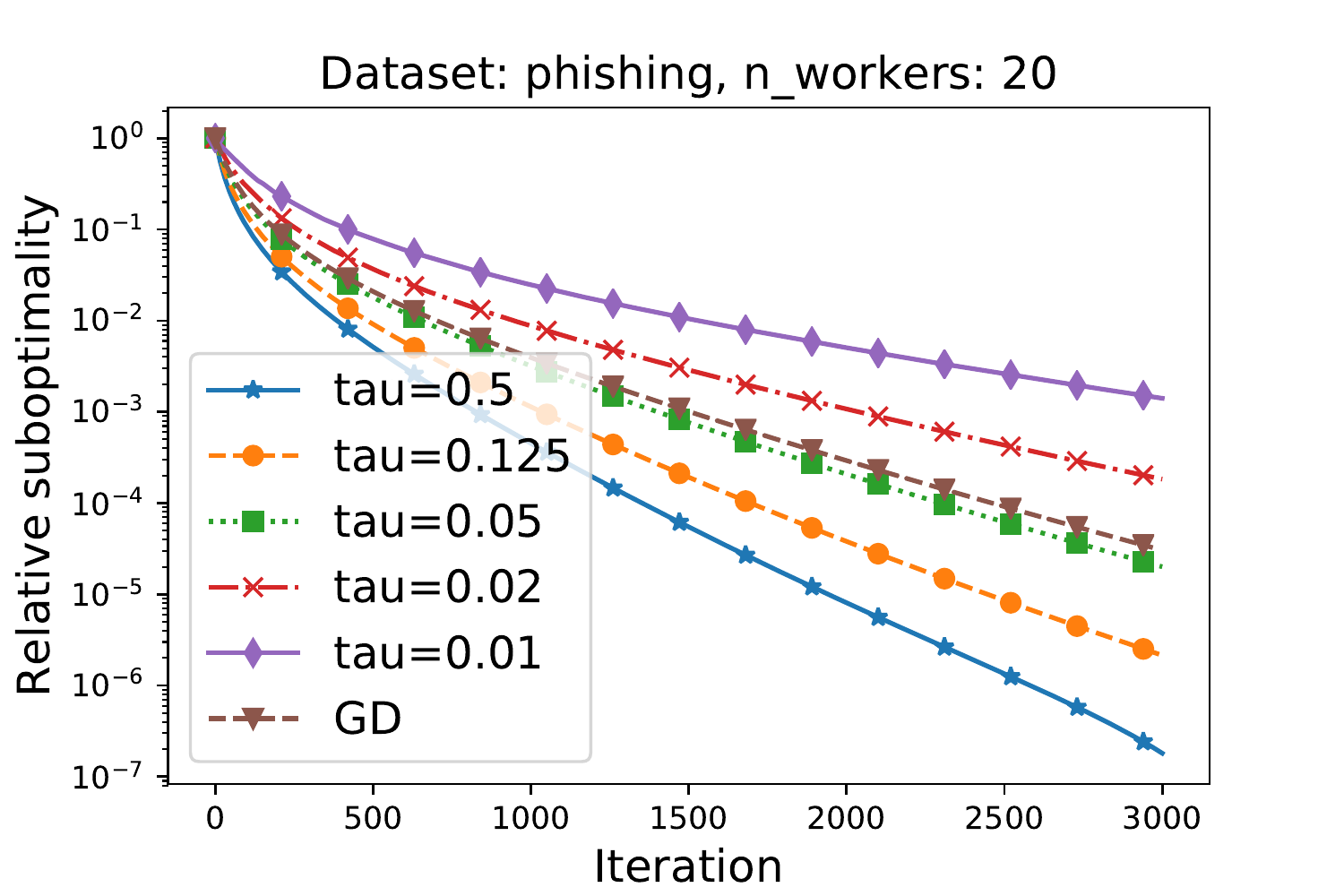}
\end{minipage}%
\\
\begin{minipage}{0.33\textwidth}
  \centering
\includegraphics[width =  \textwidth ]{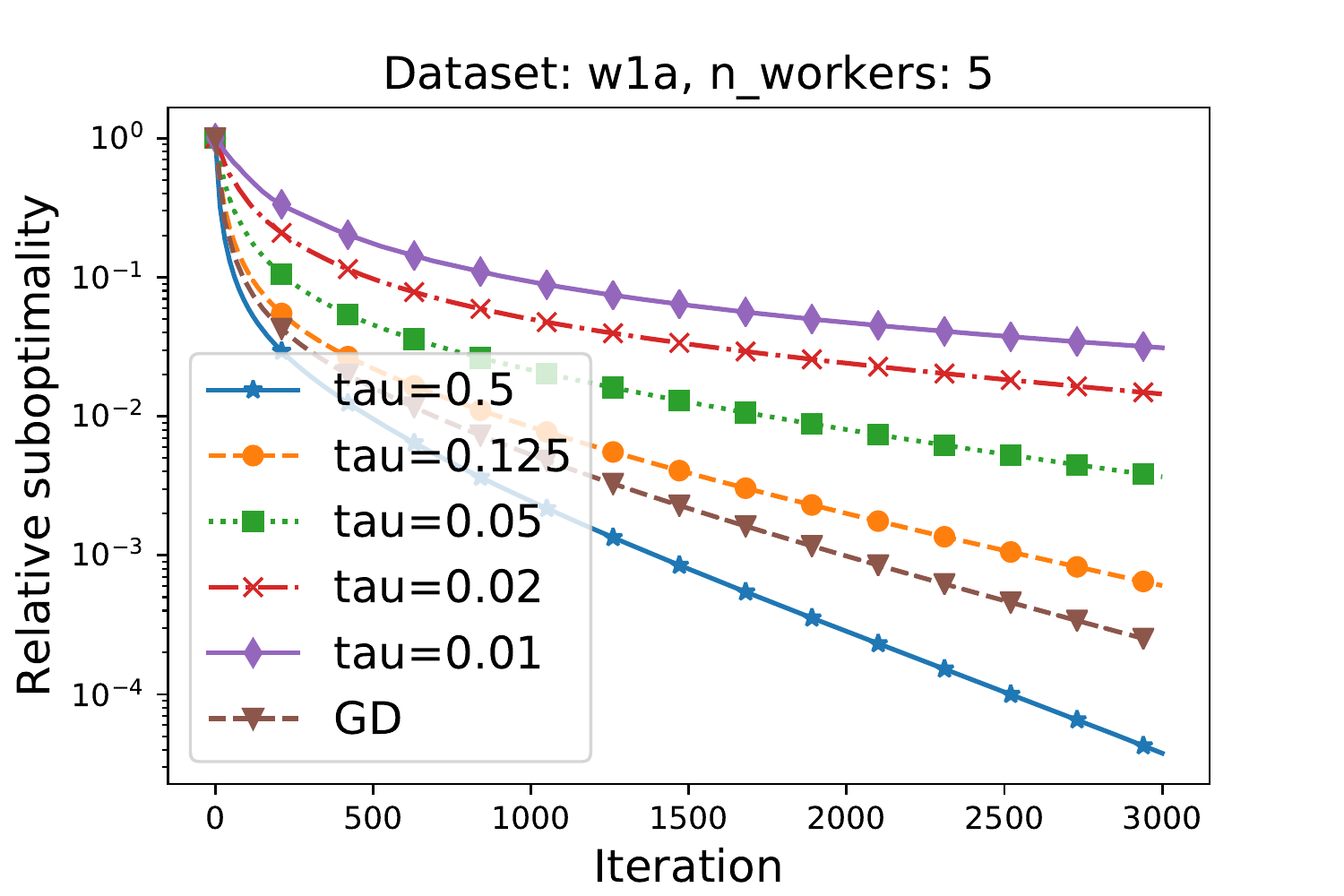}
\end{minipage}%
\begin{minipage}{0.33\textwidth}
  \centering
\includegraphics[width =  \textwidth ]{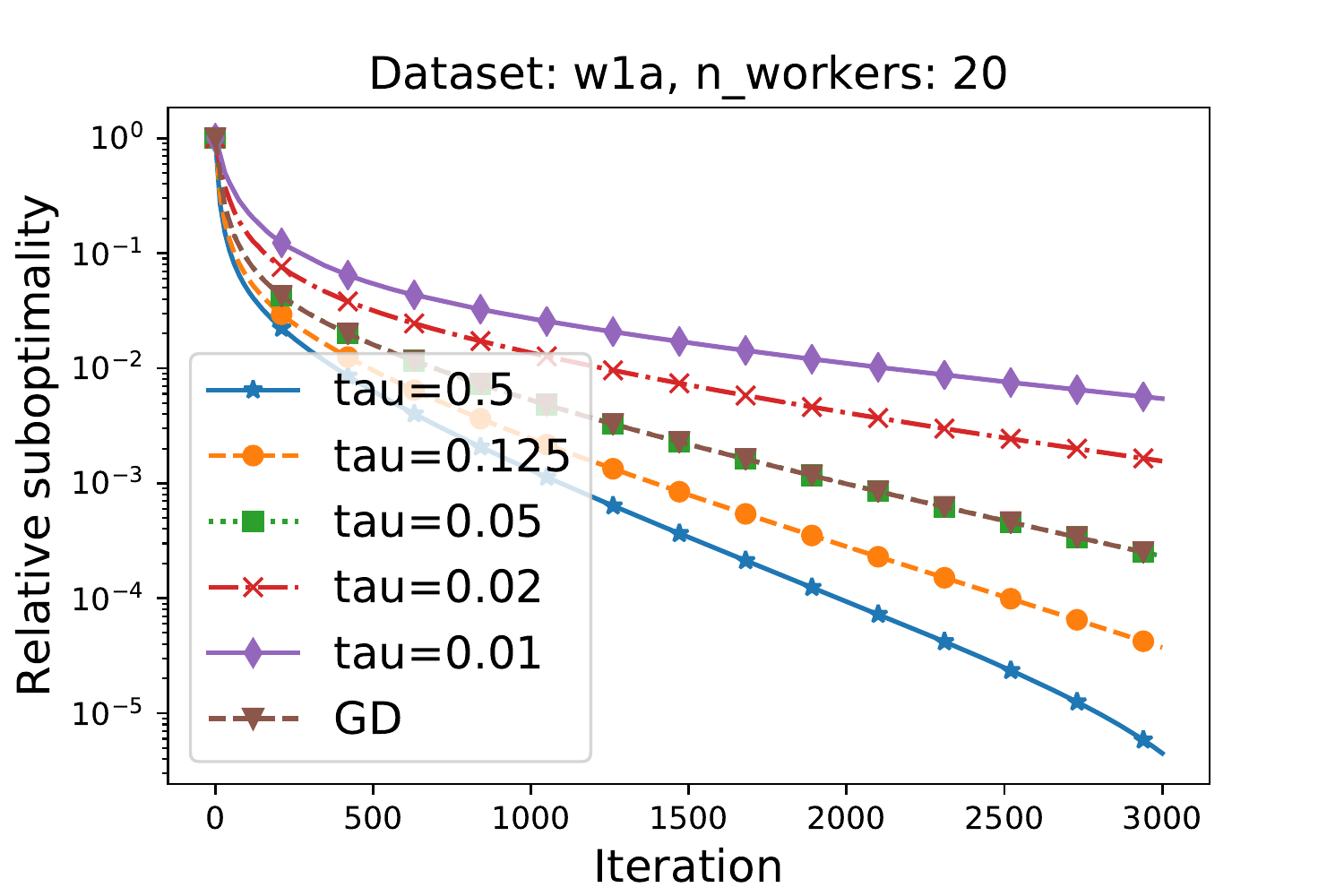}
\end{minipage}%
\begin{minipage}{0.33\textwidth}
  \centering
\includegraphics[width =  \textwidth ]{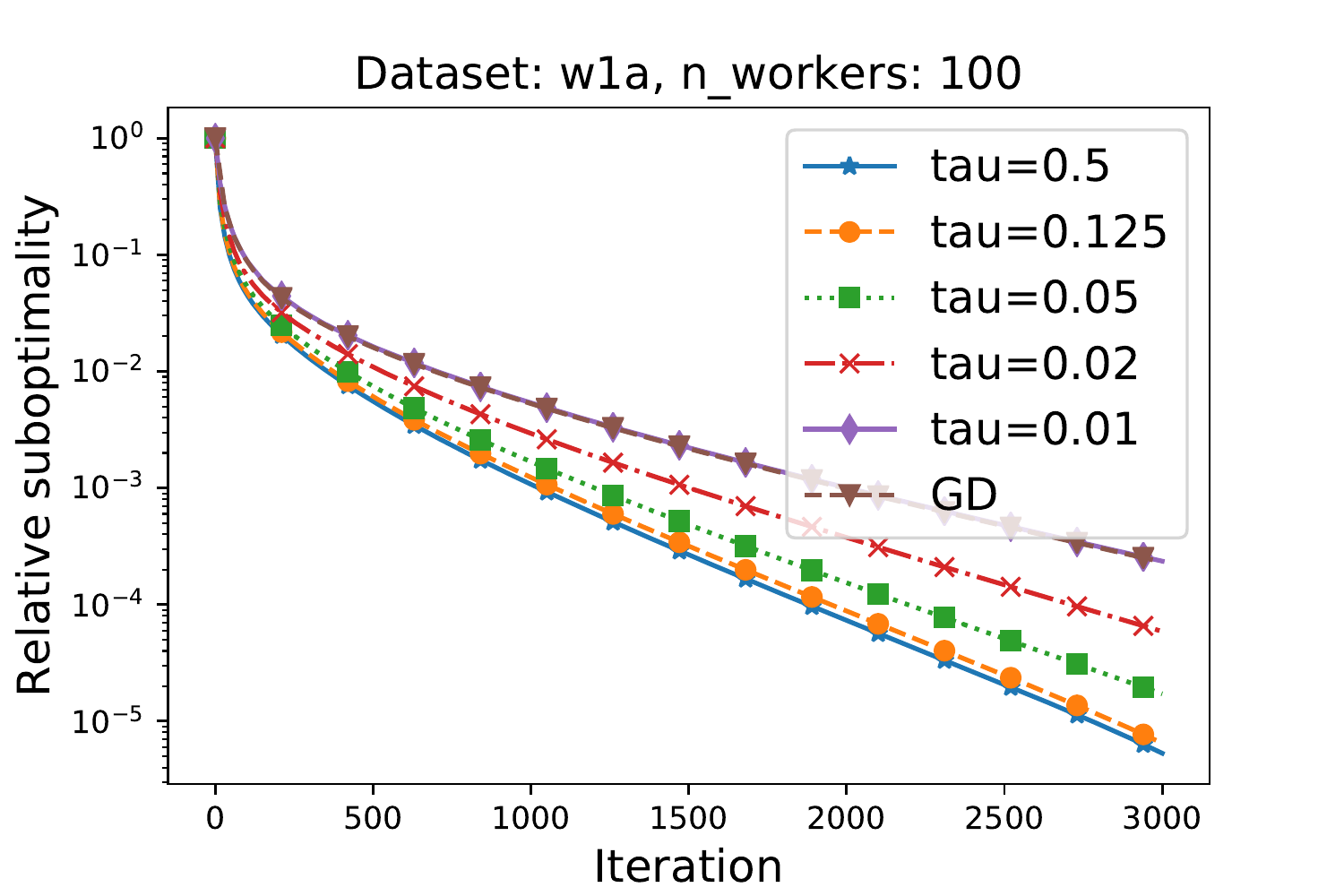}
\end{minipage}%
\caption{Comparison of Algorithm~\ref{alg:sega} for different values of $\tau$. Stepsize $\gamma = \frac{1}{L\left(1+ \frac{1}{n\tau}\right)}$ is chosen in each case.}\label{fig:sega2}
\end{figure}

\section{Proofs for Section~\ref{sec:basic}}

\subsection{Key techniques}

The most important equality used many times to prove the results of this paper is a simple decomposition of expected distances into the distance of expectation and variance:
\begin{align}
    \EE \|X - a\|_2^2 = \|\EE X - a\|_2^2 + \EE \|X - \EE X\|_2^2,\label{eq:variance_decompos}
\end{align}
where $X$ is any random vector with finite variance and $a$ is an arbitrary vector from $\RR^d$. 

As almost every algorithm we propose average all updates coming from workers, it will be useful to bound the expected distance of mean of $n$ random variables from the optimum. Lemma~\ref{lem:expected_distance} provides the result.  

\begin{lemma}\label{lem:expected_distance}
    Suppose that $x^{t+1} = \frac{1}{n}\sumin x_i^t$. Then, we have
    \begin{align*}
        \EE \|x^{t+1} - x^*\|_2^2
        \le & \left\|\frac{1}{n}\sumin \EE x_i^{t+1} - x^* \right\|_2^2 
         + \frac{1}{n^2}\sumin \EE \|x_i^{t+1} - \EE x_i^{t+1}\|_2^2.
    \end{align*}
\end{lemma}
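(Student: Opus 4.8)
The plan is to apply the bias--variance decomposition \eqref{eq:variance_decompos} exactly once, with $X = x^{t+1}$ and $a = x^*$, and then simplify the two resulting terms using the averaging structure $x^{t+1}=\frac1n\sum_{i=1}^n x_i^{t+1}$. This gives
\[
\EE\|x^{t+1}-x^*\|_2^2 = \left\|\EE x^{t+1}-x^*\right\|_2^2 + \EE\|x^{t+1}-\EE x^{t+1}\|_2^2 ,
\]
and by linearity of expectation $\EE x^{t+1}=\frac1n\sum_{i=1}^n\EE x_i^{t+1}$, so the first term already matches the first term on the right-hand side of the claim verbatim. The whole task therefore reduces to controlling the variance term $\EE\|x^{t+1}-\EE x^{t+1}\|_2^2$.

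For this second term I would write $x^{t+1}-\EE x^{t+1}=\frac1n\sum_{i=1}^n Y_i$ with the centered updates $Y_i\eqdef x_i^{t+1}-\EE x_i^{t+1}$, which satisfy $\EE Y_i=0$. Expanding the squared norm of the average produces the diagonal terms $\frac{1}{n^2}\sum_{i=1}^n\EE\|Y_i\|_2^2$ — which is precisely the second term claimed — plus the off-diagonal cross terms $\frac{1}{n^2}\sum_{i\ne j}\EE\<Y_i,Y_j>$. So everything hinges on showing that these cross terms do not contribute positively.

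The hard part — and the conceptual heart of the paper — is disposing of those cross terms. I would argue that, conditionally on $x^t$ (and the history up to iteration $t$), the per-worker updates $x_i^{t+1}$ are independent across $i$, since each worker draws its block sample $U_i^t$ independently of the others; independence of the mean-zero vectors $Y_i$ then forces $\EE\<Y_i,Y_j>=\<\EE Y_i,\EE Y_j>=0$ for $i\ne j$. Hence the cross terms vanish, the variance term collapses to $\frac{1}{n^2}\sum_{i=1}^n\EE\|Y_i\|_2^2$, and the claim follows (in fact with equality). I expect this independence step to be the crux: without it one could only invoke convexity of $\|\cdot\|_2^2$ to obtain the far weaker bound $\frac1n\sum_i\EE\|Y_i\|_2^2$, and it is exactly the factor-$n$ gap between $\frac1n$ and $\frac{1}{n^2}$ that powers the variance reduction exploited throughout the rest of the paper.
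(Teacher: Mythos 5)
Your proposal is correct and follows essentially the same route as the paper's own proof: a single bias--variance decomposition (the paper's equation~\eqref{eq:variance_decompos}) applied to $x^{t+1}$, followed by the observation that the $x_i^{t+1}$ are conditionally independent given $x^t$, which makes the cross terms vanish and collapses the variance of the average to $\frac{1}{n^2}\sumin \EE\|x_i^{t+1}-\EE x_i^{t+1}\|_2^2$. The only difference is cosmetic: you expand the off-diagonal terms explicitly, whereas the paper invokes the independence-based identity~\eqref{eq:variance_decomposition} directly, and indeed both arguments yield equality rather than just the stated inequality.
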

\begin{proof}
    First of all, we have
    \begin{align*}
        \|x^{t + 1} - x^*\|_2
        = \left\|\frac{1}{n}\sumin x_i^t - (x^* - \gamma \nabla f(x^*)) \right\|_2.
    \end{align*}
    Now let us proceed to expectations. Note that for any random vector $X$ we have
    \begin{align*}
        \EE \|X\|_2^2  = \|\EE X\|_2^2 + \EE \|X - \EE X\|_2^2.
    \end{align*}
    Applying this to random vector $X \eqdef \frac{1}{n}\sumin x_i^t - x^* $, we get
    \begin{align*}
        \EE \left\|\frac{1}{n}\sumin x_i^t - x^* \right\|_2^2 = \left\|\EE \frac{1}{n}\sumin x_i^t - x^* \right\|_2^2 + \EE \left\|\frac{1}{n}\sumin x_i^t - \EE \frac{1}{n}\sumin x_i^t\right\|_2^2.
    \end{align*}
    In addition, in all minibatching schemes  $x_i^{t+1}$ are conditionally independent given $x^t$. Therefore, for the variance term we get 
    \begin{align}
    \EE \left \|\frac{1}{n}\sumin x_i^t - \EE \frac{1}{n}\sumin x_i^t \right \|_2^2 = \frac{1}{n^2}\sumin \EE \|x_i^{t+1} - \EE x_i^{t+1}\|_2^2. \label{eq:variance_decomposition}
    \end{align} 
    Plugging it into our previous bounds concludes the proof.
\end{proof}

\subsection{Proof of Theorem~\ref{th:cd}}
\begin{proof}
From Lemma~\ref{lem:sgd_recur}, using $\sigma=0$ and $\nabla f_i(x^*)=0$, we immediately obtain
\[
\EE\left[ \|x^{t+1} -x^*\|_2^2 \,\|\, x^t \right]\leq  (1-\mu \gamma \tau)\|x^{t} -x^*\|_2^2=\left(1-\frac{\mu}{2L} \frac{\tau n}{\tau n + 2(1-\tau)}\right)\|x^{t} -x^*\|_2^2 .
\]
It remains to apply the above inequality recursively. 
\end{proof}

\subsection{Proof of Theorem~\ref{th:ibd}}
\begin{proof}
    Clearly, 
    \begin{align*}
        \EE x_i^{t+1}
        = x^t - \gamma\tau\nabla f_i(x^t).
    \end{align*} 
        Let us now elaborate on the second moments.

    Thus,
\[
        \EE \|x_i^{t+1} - \EE x_i^{t+1}\|_2^2 
        = \gamma^2\EE \left[\| g_i^t - \tau \nabla f_i(x^t)\|_2^2 \right] 
        = \tau (1-\tau) \| \nabla f_i(x)\|^2.
\]
 Therefore, the conditional variance of $x^{t+1}$ variance is equal to
    \begin{align*}
        \EE \|x^{t+1} - \EE x^{t+1}\|_2^2 
        = \frac{1}{n^2}\sumin \EE \|x_i^{t+1} - x_i^{t+1}\|_2^2  = \frac{\tau (1-\tau)}{n^2}\sumin  \| \nabla f_i(x)\|^2.
    \end{align*}
Note that the above equality is exactly~\eqref{eq:x_moments} with $\sigma=0$. Thus, one can use Lemma~\ref{lem:sgd_recur} (with using $\sigma=0$ and $\nabla f_i(x^*)=0$) obtaining
\[
\EE\left[ \|x^{t+1} -x^*\|_2^2 \,\|\, x^t \right]\leq  (1-\mu \gamma \tau)\|x^{t} -x^*\|_2^2=\left(1-\frac{\mu}{2L} \frac{\tau n}{\tau n + 2(1-\tau)}\right)\|x^{t} -x^*\|_2^2 .
\]
It remains to apply the above inequality recursively. 
\end{proof}

\section{Missing Parts from Sections~\ref{sec:saga} and~\ref{sec:saga_dist}}

\subsection{Useful Lemmata}

Let us start with a variance bound, which will be useful for both Algorithm~\ref{alg:saga_dist} and Algorithm~\ref{alg:saga}. 
Define $\Phi(x) = \frac1k \sum_{i=1}^k \phi_i(x)$, and define $x^+ =x - \gamma (\nabla f_j(x)-\alpha_j+\bar{\alpha})_U$ for (uniformly) randomly chosen index $1\leq j \leq k$ and subset of blocks $U$ of size $\tau m $. Define also $\bar{\alpha} = \frac1k \sum_{i=1}^k \alpha_i$.

\begin{lemma}[Variance bound]\label{lem:saga_variance}
Assume $\phi$ is $\mu$-strongly convex and $\phi_j$ is $L$-smooth and convex for all $j$. Suppose that $x^\star = \argmin \Phi(x)$. Then, for any $x$ we have
    \begin{align}\label{eq:saga_variance}
        \EE \|x^{+} - \EE x^{+}\|_2^2
        \le 2\gamma^2\tau\left(2L(\phi(x) - \phi(x^\star) + \frac{1}{k}\sum_{j=1}^k \|\alpha_j - \nabla \phi_j(x^\star)\|_2^2 \right).
    \end{align}
\end{lemma}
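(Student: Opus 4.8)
The plan is to separate the two independent sources of randomness in $x^+$---the SAGA index $j$, uniform on $\{1,\dots,k\}$, and the block mask $U$, a uniformly random set of $\tau m$ blocks---and then reduce the claim to the classical SAGA second-moment estimate. First I would write $g \eqdef (\nabla\phi_j(x) - \alpha_j + \bar\alpha)_U$, so that $x^+ = x - \gamma g$ and hence $x^+ - \EE x^+ = -\gamma(g - \EE g)$. By the variance decomposition~\eqref{eq:variance_decompos}, $\EE\|x^+ - \EE x^+\|_2^2 = \gamma^2\bigl(\EE\|g\|_2^2 - \|\EE g\|_2^2\bigr) \le \gamma^2\EE\|g\|_2^2$, so it suffices to control the second moment $\EE\|g\|_2^2$.

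Next I would dispatch the block sampling. Setting $v \eqdef \nabla\phi_j(x) - \alpha_j + \bar\alpha$ so that $g = v_U$, and using that each block belongs to $U$ with probability $\tau$ independently of $j$, I have $\EE_U\|w_U\|_2^2 = \tau\|w\|_2^2$ for any fixed $w$. The tower property then gives $\EE\|g\|_2^2 = \tau\,\EE_j\|v\|_2^2$, so the entire statement collapses to the index-only bound $\EE_j\|v\|_2^2 \le 2\bigl(2L(\Phi(x) - \Phi(x^\star)) + \tfrac{1}{k}\sum_{j=1}^k\|\alpha_j - \nabla\phi_j(x^\star)\|_2^2\bigr)$, where $\Phi$ is the averaged objective (the function written $\phi$ in the statement).

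For this last estimate I would exploit optimality of $x^\star$, namely $\nabla\Phi(x^\star) = \tfrac{1}{k}\sum_j\nabla\phi_j(x^\star) = 0$. Decomposing $v = \bigl(\nabla\phi_j(x) - \nabla\phi_j(x^\star)\bigr) - \bigl(\alpha_j - \nabla\phi_j(x^\star)\bigr) + \bar\alpha$ and noting that $\EE_j[\alpha_j - \nabla\phi_j(x^\star)] = \bar\alpha$, the vector $v$ is the gradient-difference term minus the \emph{centred} version of $\alpha_j - \nabla\phi_j(x^\star)$. Applying $\|p - q\|_2^2 \le 2\|p\|_2^2 + 2\|q\|_2^2$ together with $\EE\|X - \EE X\|_2^2 \le \EE\|X\|_2^2$ yields $\EE_j\|v\|_2^2 \le 2\,\EE_j\|\nabla\phi_j(x) - \nabla\phi_j(x^\star)\|_2^2 + 2\,\EE_j\|\alpha_j - \nabla\phi_j(x^\star)\|_2^2$. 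The second term is exactly $\tfrac{2}{k}\sum_j\|\alpha_j - \nabla\phi_j(x^\star)\|_2^2$; for the first I would invoke the standard smoothness--convexity (co-coercivity) inequality $\|\nabla\phi_j(x) - \nabla\phi_j(x^\star)\|_2^2 \le 2L\bigl(\phi_j(x) - \phi_j(x^\star) - \langle\nabla\phi_j(x^\star), x - x^\star\rangle\bigr)$, average over $j$, and use $\nabla\Phi(x^\star) = 0$ to obtain $\EE_j\|\nabla\phi_j(x) - \nabla\phi_j(x^\star)\|_2^2 \le 2L(\Phi(x) - \Phi(x^\star))$. Reinstating the prefactor $\gamma^2\tau$ then produces~\eqref{eq:saga_variance}.

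The main obstacle is the bookkeeping around the two expectations rather than any single hard inequality: one must check that the block mask contributes the factor $\tau$ to the \emph{second moment} (whereas applying it to the variance would leave a spurious $-\tau^2\|\nabla\Phi(x)\|_2^2$ correction), and that the cross term between the gradient-difference part and the $\alpha$-error part vanishes in expectation---which is exactly what $\nabla\Phi(x^\star)=0$ secures---so that the crude $\|p-q\|_2^2\le 2\|p\|_2^2+2\|q\|_2^2$ split is tight enough to land on the target constant $2$.
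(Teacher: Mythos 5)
Your proof is correct, and it reaches the paper's bound by a slightly different and in fact cleaner route. The paper does \emph{not} discard the variance structure early: it computes $\EE\|x^+-\EE x^+\|_2^2$ exactly with respect to the block sampling, splitting $\tau\nabla\Phi(x)-(\nabla\phi_j(x)-\alpha_j+\bar\alpha)_U$ into its $U$-supported and complementary parts and tracking the bias $(\tau-1)\nabla\Phi(x)$, which produces several $\|\nabla\Phi(x)\|_2^2$ terms whose combined coefficient $\gamma^2\tau\bigl((1-\tau)^2-2+(1-\tau)\tau\bigr)$ is then verified to be negative and dropped. You instead bound variance by second moment at the very start, so the block sampling contributes a clean factor $\tau$ via $\EE_U\|v_U\|_2^2=\tau\|v\|_2^2$ and no $\|\nabla\Phi(x)\|_2^2$ bookkeeping ever appears; since the paper throws those negative terms away anyway, nothing is lost and the constant $2\gamma^2\tau$ is identical. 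From that point on the two arguments coincide: the same Young split into the gradient-difference term and the centred $\alpha$-error term $q=(\alpha_j-\nabla\phi_j(x^\star))-\bar\alpha$, the same variance-is-at-most-second-moment step for $q$, and the same co-coercivity bound averaged with $\nabla\Phi(x^\star)=0$. One small remark on your closing commentary: the Young inequality $\|p-q\|_2^2\le 2\|p\|_2^2+2\|q\|_2^2$ is valid regardless of whether the cross term vanishes in expectation; what $\nabla\Phi(x^\star)=0$ actually buys you is (i) that $\EE_j q=0$, so the bound $\EE_j\|q\|_2^2\le\frac1k\sum_j\|\alpha_j-\nabla\phi_j(x^\star)\|_2^2$ applies, and (ii) that the inner products $\langle\nabla\phi_j(x^\star),x-x^\star\rangle$ cancel when the co-coercivity inequality is averaged over $j$ --- both of which you do use correctly in the body of the argument.
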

\begin{proof}
Since $x^{+} = x - \gamma (\nabla \phi_{j}(x) - \alpha_j + \overline \alpha)_{U}$ and $\EE x^{+} = x - \gamma \tau \nabla \phi(x)$, we get
    \begin{align*}
        \EE \|x^{+} - \EE x^{+}\|_2^2
        &= \gamma^2 \EE \left\|\tau \nabla \phi(x) -(\nabla \phi_{j}(x) - \alpha_j + \overline \alpha)_{U} \right\|_2^2 \\
        &= \gamma^2 \EE  \|(\tau \nabla \phi(x) - (\nabla \phi_{j}(x) - \alpha_j + \overline \alpha))_{U}\|_2^2
         + \gamma^2 \EE \|\tau\nabla \phi(x)- (\tau \nabla \phi(x))_{U}\|_2^2 \\
        &= \gamma^2 \tau \EE \|\tau \nabla \phi(x) - (\nabla \phi_{j}(x) - \alpha_j + \overline \alpha)\|_2^2
         + \gamma^2(1 - \tau)\tau^2\|\nabla \phi(x)\|_2^2.
    \end{align*}
    We will leave the second term as is for now and obtain a bound for the first one. Note that the expression inside the norm is now biased: $\EE[\tau \nabla \phi(x) - (\nabla \phi_{j}(x) - \alpha_j + \overline \alpha)]=(\tau - 1)\nabla \phi(x)$. Therefore,
    \begin{align*}
        \EE \|\tau \nabla \phi(x) - (\nabla \phi_{j}(x) - \alpha_j + \overline \alpha)\|_2^2
        = (1 - \tau)^2 \|\nabla \phi(x)\|_2^2 + \EE\|\nabla \phi(x) - (\nabla \phi_{j}(x) - \alpha_j + \overline \alpha)\|_2^2.
    \end{align*}
    Now, since $\nabla \phi_{j}(x)$ and $\alpha_j$ are not independent, we shall decouple them using inequality $\|a + b\|_2^2 \le 2\|a\|_2^2 + 2\|b\|_2^2$. In particular,
    \begin{align*}
        \EE \|\nabla \phi(x) - (\nabla \phi_{j}(x) - \alpha_j + \overline \alpha)\|_2^2
        &= \EE \|\nabla \phi(x) - \nabla \phi_{j}(x) + \nabla \phi_{j}(x^\star) - \nabla \phi_{j}(x^\star) + \alpha_j - \overline \alpha\|_2^2\\
        &\le 2\EE \|\nabla \phi(x) - \nabla \phi_{j}(x) + \nabla \phi_{j}(x^\star)\|_2^2 + 2\EE\|\alpha_j - \nabla \phi_{j}(x^\star) - \overline \alpha\|_2^2.
    \end{align*}
    Both terms can be simplified by expanding the squares. For the first one we have:
    \begin{align*}
        \EE \|\nabla \phi(x) - \nabla \phi_{j}(x) + \nabla \phi_{j}(x^\star)\|_2^2
        &= \|\nabla \phi(x)\|_2^2 - 2\< \nabla \phi(x), \EE\left[\nabla \phi_{j}(x) - \nabla \phi_{j}(x^\star) \right]> \\
        &\quad + \EE \|\nabla \phi_{j}(x) - \nabla \phi_{j}(x^\star)\|_2^2 \\
        &= -\|\nabla \phi(x)\|_2^2 + \frac{1}{k}\sum_{j=1}^k \|\nabla \phi_j(x) - \nabla \phi_j(x^\star)\|_2^2.
    \end{align*}
    Similarly,
    \begin{align*}
        \EE\|\alpha_j - \nabla \phi_{j}(x^\star) - \overline \alpha\|_2^2
        &= \frac{1}{k}\sum_{j=1}^k\|\alpha_j - \nabla \phi_{j}(x^\star)\|_2^2 - 2\EE\<\alpha_j - \nabla \phi_{j}(x^\star), \overline \alpha> + \|\overline \alpha\|_2^2 \\
        &=\frac{1}{k}\sum_{j=1}^k\|\alpha_j - \nabla \phi_{j}(x^\star)\|_2^2 - \|\overline \alpha\|_2^2 \\
        &\le \frac{1}{k}\sum_{j=1}^k\|\alpha_j - \nabla \phi_{j}(x^\star)\|_2^2.
    \end{align*}
    Coming back to the first bound that we obtained for this lemma, we deduce
    \begin{align*}
        \EE \|x^{+} - \EE x^{+}\|_2^2
        &\le \gamma^2\tau\left((1 - \tau)^2\|\nabla \phi(x)\|_2^2 - 2\|\nabla \phi(x)\|_2^2 + \frac{2}{k}\sum_{j=1}^k\|\nabla \phi_j(x) - \nabla \phi_j(x^\star)\|_2^2  \right)\\
        &\quad +\gamma^2\tau\frac{2}{k}\sum_{j=1}^k \|\alpha_j - \nabla \phi_j(x^\star)\|_2^2 + \gamma^2(1 - \tau)\tau^2\|\nabla \phi(x)\|_2^2.
    \end{align*}
    The coefficient before $\|\nabla \phi(x)\|_2^2$ is equal to $\gamma^2\tau((1 - \tau)^2 - 2 + (1 - \tau)\tau) = \gamma^2\tau(1 - \tau - 2) < 0$, so we can drop this term. By smoothness of each $\phi_j$,
    \begin{eqnarray}\nonumber
        \frac{1}{k}\sum_{j=1}^k \|\nabla \phi_j(x) - \nabla \phi(x^\star)\|_2^2
        &\stackrel{\eqref{eq:smooth}}{\le}& \frac{2L}{k}\sum_{j=1}^k \left(\phi_j(x) - \phi(x^\star) - \<\nabla \phi_j(x^\star), x - x^\star> \right) \\
        &=& 2L (\phi(x) - \phi(x^\star)), \label{eq:saga_grad_sum}
    \end{eqnarray}
    where in the last step we used $\frac{1}{k}\sum_{j=1}^k\nabla \phi_j(x^\star)=0$.
\end{proof}

\begin{lemma}\label{lem:saga_grads}
For ISAGA with shared data we have (given the setting from Theorem~\ref{th:saga_shared})
    \begin{align*}
        \EE \left[\sum_{j=1}^N \|\alpha_j^{t+1} - \nabla f_j(x^*)\|_2^2 \, \Big| \, x^t \right]
        \le 2\tau Ln (f(x^t) - f(x^*)) + \left(1 - \frac{\tau n}{N}\right) \sum_{j=1}^N \|\alpha_j^{t} - \nabla f_j(x^*)\|_2^2.
    \end{align*}
    On the other hand, for distributed ISAGA we have for all $i$ (given the setting from Theorem~\ref{th:saga_dist}):
        \begin{align*}
        \EE \left[\sum_{j=1}^{l} \|\alpha_{ij}^{t+1} - \nabla f_{ij}(x^*)\|_2^2 \, \Big| \, x^t  \right]
        \le 2\tau L (f_i(x^t) - f_i(x^*)) + \left(1 - \frac{\tau }{l}\right) \sum_{j=1}^{l} \|\alpha_{ij}^{t} - \nabla f_{ij}(x^*)\|_2^2.
    \end{align*}
\end{lemma}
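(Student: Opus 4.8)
The plan is to condition on $x^t$ and track each memory vector $\alpha_j$ (resp.\ $\alpha_{ij}$) block by block. The crucial structural observation is that the update rule~\eqref{eq:saga_alpha_sm} (resp.~\eqref{eq:saga_alpha_dist}) is an idempotent \emph{refresh}: on every block $u$ sampled together with index $j$, the block $(\alpha_j)_u$ is overwritten by $(\nabla f_j(x^t))_u$, and otherwise left untouched. In particular, even if several machines sample the same index $j$ in the shared-data setting, they all write the \emph{same} value $\nabla f_j(x^t)$ on the blocks they touch, so $\alpha_j^{t+1}$ is unambiguous: it equals $\nabla f_j(x^t)$ on the union of the sampled blocks for $j$ and $\alpha_j^t$ elsewhere. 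This reduces the whole computation to identifying, for a fixed index and block, the probability $q$ that the block is refreshed.

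For shared-data ISAGA, a fixed block $u$ of a fixed index $j$ is refreshed precisely when some machine draws index $j$ and includes $u$ in its block set. By symmetry of uniform block sampling this probability is the same for every block, call it $q$, and equals $q=\tfrac{\tau n}{N}$: index $j$ is picked by the $n$ machines with probability $\tfrac{n}{N}$, and, conditioned on being picked, block $u$ is included with probability $\tau$. Using the variance/indicator decomposition, summing the per-block identity over all blocks of $j$ and then over $j$, I obtain
\begin{align*}
\EE\Big[\sum_{j=1}^N \|\alpha_j^{t+1} - \nabla f_j(x^*)\|_2^2 \,\Big|\, x^t\Big]
= q\sum_{j=1}^N\|\nabla f_j(x^t)-\nabla f_j(x^*)\|_2^2 + (1-q)\sum_{j=1}^N\|\alpha_j^t - \nabla f_j(x^*)\|_2^2.
\end{align*}
Since $\tfrac1N\sum_j\nabla f_j(x^*)=\nabla f(x^*)=0$ and each $f_j$ is convex and $L$-smooth, the smoothness-plus-convexity estimate already established in~\eqref{eq:saga_grad_sum} gives $\sum_{j=1}^N\|\nabla f_j(x^t)-\nabla f_j(x^*)\|_2^2\le 2LN\big(f(x^t)-f(x^*)\big)$. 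Substituting $q=\tfrac{\tau n}{N}$ turns the first term into $2\tau Ln\,(f(x^t)-f(x^*))$ and leaves the coefficient $1-\tfrac{\tau n}{N}$ on the memory sum, which is exactly the claim.

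The distributed case is handled identically but is cleaner, because each machine samples a single local index $j_i^t$, so within a machine there are no collisions across indices. For fixed $i$, a block $u$ of $\alpha_{ij}$ is refreshed iff $j_i^t=j$ (probability $\tfrac1l$) and $u$ is selected (probability $\tau$), giving refresh probability $\tfrac{\tau}{l}$ for every block. The same decomposition yields
\begin{align*}
\EE\Big[\sum_{j=1}^{l}\|\alpha_{ij}^{t+1}-\nabla f_{ij}(x^*)\|_2^2\,\Big|\,x^t\Big]
=\tfrac{\tau}{l}\sum_{j=1}^{l}\|\nabla f_{ij}(x^t)-\nabla f_{ij}(x^*)\|_2^2+\big(1-\tfrac{\tau}{l}\big)\sum_{j=1}^{l}\|\alpha_{ij}^t-\nabla f_{ij}(x^*)\|_2^2,
\end{align*}
and applying~\eqref{eq:saga_grad_sum} with $\phi_j=f_{ij}$, $k=l$, $\phi=f_i$ (legitimate because Assumption~\ref{as:zero_grads} guarantees $\tfrac1l\sum_j\nabla f_{ij}(x^*)=\nabla f_i(x^*)=0$) bounds the first sum by $2Ll\,(f_i(x^t)-f_i(x^*))$, producing the stated inequality.

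I expect the only genuinely delicate point to be the shared-data refresh probability: one must argue that the idempotent nature of~\eqref{eq:saga_alpha_sm} makes simultaneous writes by several machines to the same index harmless, and that the effective refresh probability of any block is exactly $\tfrac{\tau n}{N}$ rather than $1-(1-\tfrac{\tau}{N})^n$. Getting this factor right (i.e.\ treating the indices drawn by distinct machines as distinct, so that collisions do not deflate the refresh rate) is what makes the coefficient $1-\tfrac{\tau n}{N}$ come out cleanly; the smoothness bound~\eqref{eq:saga_grad_sum} and the summation are then routine.
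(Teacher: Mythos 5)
Your proof is correct and takes essentially the same route as the paper's: a refresh-probability decomposition (the paper does it per index, conditioning on the sampled $j^t$ and splitting $\tau$ versus $1-\tau$ across blocks; you do it per block, which is an equivalent bookkeeping) followed by the smoothness-plus-convexity bound \eqref{eq:saga_grad_sum}. The collision subtlety you flag as the delicate point is resolved in the paper exactly by the idealization you adopt: its proof assigns each index sampling probability $\frac{n}{N}$ and, when sampled, a $\tau$-fraction refresh by a single machine, i.e.\ it implicitly treats the $n$ indices drawn by the machines as distinct (without-replacement sampling), so the coefficient $1-\frac{\tau n}{N}$ is derived under precisely the same no-collision convention rather than the $1-(1-\frac{\tau}{N})^n$ rate.
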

\begin{proof}
Consider all expectations throughout this proof to be conditioned on $x^t$. 
Let $j^t$ be the function index used to obtain $x_i^{t+1}$ from $x^t$. Then we have $(\alpha_{j^t}^{t+1})_{U_i^t} = (\nabla f_{j^t}(x^t))_{U_i^t}$. In the rest of the blocks, $\alpha_{j^t}^{t+1}$ coincides with its previous value. This implies
    \begin{align}
        \EE \left[\|\alpha_{j^t}^{t+1} - \nabla f_{j^t}(x^*)\|_2^2\mid j^t \right]
        = \tau \|\nabla f_{j^t}(x^t) - \nabla f_{j^t}(x^*)\|_2^2 + (1 - \tau) \|\alpha_{j^t}^{t} - \nabla f_{j^t}(x^*)\|_2^2. \label{eq:saga_134278238213698}
    \end{align}
    Taking expectation with respect to sampling of $j^t$ we obtain for shared data setup:
    \begin{eqnarray*}
        \EE \left[\sum_{j=1}^N \|\alpha_j^{t+1} - \nabla f_j(x^*)\|_2^2 \right]
        & =& 
        \sum_{j=1}^N \EE \left[ \|\alpha_j^{t+1} - \nabla f_j(x^*)\|_2^2 \right] \\ 
       & \stackrel{\eqref{eq:saga_134278238213698}}{=}& \sum_{j=1}^N \frac{n}{N} \left( \tau   \|\nabla f_j(x^t) - \nabla f_j(x^*)\|_2^2 + \left(1 - \tau \right)  \|\alpha_j^{t} - \nabla f_j(x^*)\|_2^2 \right) \\
       && \qquad +
 \sum_{j=1}^N \left(1-\frac{n}{N}\right)    \left(   \left(1 - \tau \right)  \|\alpha_j^{t} - \nabla f_j(x^*)\|_2^2 \right)\\
       & =& \tau \frac{n}{N} \sum_{j=1}^N \|\nabla f_j(x^t) - \nabla f_j(x^*)\|_2^2 + \left(1 - \frac{\tau n}{N}\right) \sum_{j=1}^N \|\alpha_j^{t} - \nabla f_j(x^*)\|_2^2.
    \end{eqnarray*}
Similarly, for distributed setup we get     
   \begin{align*}
        \EE \left[\sum_{j=1}^{l} \|\alpha_{ij}^{t+1} - \nabla f_{ij}(x^*)\|_2^2 \right]
        \le \tau \frac{1}{l} \sum_{j=1}^l \|\nabla f_{ij}(x^t) - \nabla f_{ij}(x^*)\|_2^2+ \left(1 - \frac{\tau }{l}\right) \sum_{j=1}^{l} \|\alpha_{ij}^{t} - \nabla f_{ij}(x^*)\|_2^2
    \end{align*}
    Using~\eqref{eq:saga_grad_sum}, the first sum of right hand side can be bounded by $2LN(f(x^t) - f(x^*))$ or $2Ll(f(x^t) - f(x^*))$. 
\end{proof}
\subsection{Proof of Theorem~\ref{th:saga_dist}}
\begin{proof}
    First of all, let us verify that it indeed holds $c>0$ and $\rho\ge 0$. As $\gamma \le \frac{1}{L\left(\frac{3}{n} + \tau\right)}$, we have $c = \frac{1}{n}\left( \frac{1}{\gamma L} - \frac{1}{n} - \tau\right) \ge \frac{1}{n}\left( \frac{3}{n} + \tau - \frac{1}{n} - \tau\right) > 0$. Furthermore, $\gamma\mu \ge 0$, so to show $\rho\ge 0$ it is enough to mention $\frac{1}{l} - \frac{2}{n^2lc} = \frac{1}{l} - \frac{2}{n^2l\left(\frac{1}{\gamma L} - \frac{1}{n} - \tau \right)} \ge \frac{1}{l} - \frac{2}{n^2l\left(\frac{3}{n} + \tau - \frac{1}{n} - \tau \right)} = 0$.
    
    Now we proceed to the proof of convergence. We are going to decompose the expected distance from $x^{t+1}$ to $x^*$ into its variance and the distance of expected iterates, so let us analyze them separately. The variance can be bounded as follows:
    \begin{eqnarray} 
               \EE \left[\|x^{t+1} - \EE [x^{t+1} \mid x^t]\|_2^2\mid x^t \right]
        &\stackrel{\eqref{eq:variance_decomposition}+\eqref{eq:saga_variance}}{\le} &
2\frac{\gamma^2\tau}{n}\left(2L(f(x) - f(x^*) + \frac{1}{ln}\sum_{j=1}^n\sum_{j=1}^l \|\alpha_{ij} - \nabla f_{ij}(x^*)\|_2^2 \right). \label{eq:saga_distrib_variance}
    \end{eqnarray}
    For the distance of the expected iterates we write
    \begin{align*}
        \|\EE [x^{t+1} \mid x^t] - x^*\|_2^2
        &= \|x^t - \gamma\tau \nabla f(x^t) - x^*\|_2^2 \\
        &\le (1 - \gamma\tau\mu)\|x^t - x^*\|_2^2 - 2\gamma\tau (f(x^t) - f(x^*)) + 2\gamma^2\tau^2 L(f(x^t) - f(x^*)).
    \end{align*}
    As is usually done for SAGA, we are going to prove convergence using a Lyapunov function. Namely, let us define
    \begin{align}\label{eq:saga_lyapunov}
        \cL^t \eqdef \EE \left[\|x^t - x^*\|_2^2 + c \gamma^2\sum_{i=1}^n\sum_{j=1}^l \|\alpha_{ij}^t - \nabla f_{ij}(x^*)\|_2^2 \right],
    \end{align}
    where $c =  \frac{1}{n}\left( \frac{1}{\gamma L} - \frac{1}{n} - \tau\right)$. Using Lemma~\ref{lem:saga_grads} together with the bounds above, we get
    \begin{align*}
        \cL^{t+1} 
        &\le \EE \left[(1 - \gamma\tau\mu)\|x^t - x^*\|_2^2 + \left(\frac{2\gamma^2\tau}{n^2l} + c\gamma^2\left(1 - \frac{\tau }{l}\right)\right)\sum_{i=1}^n\sum_{j=1}^l \|\alpha_{ij}^t - \nabla f_{ij}(x^*)\|_2^2 \right] \\
        &\quad + 2\gamma\tau\EE\Bigl[\Bigl(\underbrace{\gamma\tau L + \frac{\gamma L}{n} + c\gamma L n - 1}_{= 0\text{ by our choice of } c } \Bigr) (f(x^t) - f(x^*)) \Bigr].
    \end{align*}
    In fact, we chose $c$ exactly to make the last expression equal to zero.
    After dropping it, we reduce the bound to
    \begin{align*}
        \cL^{t+1} 
        \le (1 - \rho)\EE \left[ \|x^t - x^*\|_2^2 + c\gamma^2\sum_{i=1}^n\sum_{j=1}^l \|\alpha_{ij}^t - \nabla f_{ij}(x^*)\|_2^2 \right]
        = (1 - \rho)\cL^{t},
    \end{align*}
    where $\rho = \min\left\{\gamma\tau\mu, \frac{\tau }{l} - \frac{2\tau}{n^2lc} \right\}$. Note that $\EE \|x^t - x^*\|_2^2\le \cL^t \le (1 - \rho)^t\cL^0$ by induction, so we have the stated linear rate.
\end{proof}
\subsection{Proof of Theorem~\ref{th:saga_shared}}
\begin{proof}
    First of all, let us verify that it indeed holds $c>0$ and $\rho\ge 0$. As $\gamma \le \frac{1}{L\left(\frac{3}{n} + \tau\right)}$, we have $c = \frac{1}{n}\left( \frac{1}{\gamma L} - \frac{1}{n} - \tau\right) \ge \frac{1}{n}\left( \frac{3}{n} + \tau - \frac{1}{n} - \tau\right) > 0$. Furthermore, $\gamma\mu \ge 0$, so to show $\rho\ge 0$ it is enough to mention $\frac{n}{N} - \frac{2}{nNc} = \frac{n}{N} - \frac{2}{N\left(\frac{1}{\gamma L} - \frac{1}{n} - \tau \right)} \ge \frac{n}{N} - \frac{2}{N\left(\frac{3}{n} + \tau - \frac{1}{n} - \tau \right)} = 0$.
    
    Now we proceed to the proof of convergence. We are going to decompose the expected distance from $x^{t+1}$ to $x^*$ into its variance and the distance of expected iterates, so let us analyze them separately. The variance term can be bounded as follows
    \begin{eqnarray} \label{eq:saga_shared_variance}
        \EE \left[\|x^{t+1} - \EE [x^{t+1} \mid x^t]\|_2^2\mid x^t \right]
        &\stackrel{\eqref{eq:variance_decomposition}+\eqref{eq:saga_variance}}{\le} &
        \frac{2\gamma^2\tau}{n}\left(2L(f(x^t) - f(x^*) + \frac{1}{N}\sum_{j=1}^N \|\alpha_j^t - \nabla f_j(x^*)\|_2^2 \right) 
    \end{eqnarray}
    For the distance of the expected iterates we write
    \begin{align*}
        \|\EE [x^{t+1} \mid x^t] - x^*\|_2^2
        &= \|x^t - \gamma\tau \nabla f(x^t) - x^*\|_2^2 \\
        &\le (1 - \gamma\tau\mu)\|x^t - x^*\|_2^2 - 2\gamma\tau (f(x^t) - f(x^*)) + 2\gamma^2\tau^2 L(f(x^t) - f(x^*)).
    \end{align*}
    As is usually done for SAGA, we are going to prove convergence using a Lyapunov function. Namely, let us define
    \begin{align*}
        \cL^t \eqdef \EE \left[\|x^t - x^*\|_2^2 + c \gamma^2\sum_{j=1}^N \|\alpha_j^t - \nabla f_j(x^*)\|_2^2 \right],
    \end{align*}
    where $c =  \frac{1}{n}\left( \frac{1}{\gamma L} - \frac{1}{n} - \tau\right)$. Using Lemma~\ref{lem:saga_grads} together with the bounds above, we get
    \begin{align*}
        \cL^{t+1} 
        &\le \EE \left[(1 - \gamma\tau\mu)\|x^t - x^*\|_2^2 + \left(\frac{2\gamma^2\tau}{nN} + c\gamma^2\left(1 - \frac{\tau n}{N}\right)\right)\sum_{j=1}^N\|\alpha_j^t - \nabla f_j(x^*)\|_2^2 \right] \\
        &\quad + 2\gamma\tau\EE\Bigl[\Bigl(\underbrace{\gamma\tau L + \frac{\gamma L}{n} + c\gamma L n - 1}_{= 0\text{ by our choice of } c } \Bigr) (f(x^t) - f(x^*)) \Bigr].
    \end{align*}
    In fact, we chose $c$ exactly to make the last expression equal to zero.
    After dropping it, we reduce the bound to
    \begin{align*}
        \cL^{t+1} 
        \le (1 - \rho)\EE \left[ \|x^t - x^*\|_2^2 + c\gamma^2\sum_{j=1}^N\|\alpha_j^t - \nabla f_j(x^*)\|_2^2\right]
        = (1 - \rho)\cL^{t},
    \end{align*}
    where $\rho = \min\left\{\gamma\tau\mu, \frac{\tau n}{N} - \frac{2\tau}{nNc} \right\}$. Note that $\EE \|x^t - x^*\|_2^2\le \cL^t \le (1 - \rho)^t\cL^0$ by induction, so we have the stated linear rate.
\end{proof}

\section{Proofs for Section~\ref{sec:sgd}}
\subsection{Useful Lemmata}

The next lemma is a key technical tool to analyze Algorithm~\ref{alg:sgd}. It provides a better expression for first and second moments of algorithm iterates.

\begin{lemma}[SGD moments]\label{lem:moments}
    Consider the randomness of the update of Algorithm~\ref{alg:sgd} at moment $t$. The first moments of the generated iterates are simply $\EE x_i^{t+1} = x^t - \gamma\tau \nabla f_i(x^t)$ and $\EE x^{t+1} = x^t - \gamma\tau \nabla f(x^t)$, while their second moments are:
    \begin{align}
        \EE\left[ \|x_i^{t+1} - \EE x_i^{t+1}\|_2^2 \, | \, x^t\right]&= \gamma^2 \tau\Big(\left(1 - \tau \right) \|\nabla f_i(x^t)\|_2^2  + \EE \|g_i^t - \nabla f_i(x^t)\|_2^2 \Big), \label{eq:x_i_moments}\\
        \EE\left[\|x^{t+1} - \EE x^{t+1}\|_2^2 \, | \, x^t\right] &= \gamma^2 \frac{\tau}{n^2}\sumin \Big((1 - \tau)\|\nabla f_i(x^t)\|_2^2  + \EE \|g_i^t - \nabla f_i(x^t)\|_2^2 \Big) \label{eq:x_moments}.
    \end{align}
\end{lemma}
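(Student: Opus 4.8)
The plan is to track the two independent sources of randomness in the update $x_i^{t+1} = x^t - \gamma (g_i^t)_{U_i^t}$ separately: the uniform block sampling $U_i^t$ and the stochastic gradient $g_i^t$. The basic fact I would establish first is that, for the uniform sampling of $\tau m$ out of $m$ blocks, every block lies in $U_i^t$ with marginal probability $\tau$. Consequently, conditioning on $x^t$, for any fixed vector $v$ one has $\EE[(v)_{U_i^t}] = \tau v$ and, since $(\cdot)_{U_i^t}$ merely retains a subset of blocks, $\EE\|(v)_{U_i^t}\|_2^2 = \tau\|v\|_2^2$. These two identities, together with the independence of $U_i^t$ from $g_i^t$, are the only sampling facts I will need.

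For the first moments, I would condition on $x^t$ and first take expectation over $g_i^t$, which by linearity of the block restriction gives $(\nabla f_i(x^t))_{U_i^t}$, and then over $U_i^t$, obtaining $\EE x_i^{t+1} = x^t - \gamma\tau\nabla f_i(x^t)$; averaging over $i$ and using $\frac1n\sumin\nabla f_i = \nabla f$ yields $\EE x^{t+1} = x^t - \gamma\tau\nabla f(x^t)$.

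For the per-worker second moment, I would write $x_i^{t+1} - \EE x_i^{t+1} = -\gamma\bigl((g_i^t)_{U_i^t} - \tau\nabla f_i(x^t)\bigr)$ and expand the squared norm using $\EE[(g_i^t)_{U_i^t}] = \tau\nabla f_i(x^t)$; the constant and cross terms then combine to $-\tau^2\|\nabla f_i(x^t)\|_2^2$, while the leading term is $\EE\|(g_i^t)_{U_i^t}\|_2^2$. Here I would use independence to peel off the sampling first, $\EE\|(g_i^t)_{U_i^t}\|_2^2 = \tau\,\EE\|g_i^t\|_2^2$, and then apply the variance decomposition~\eqref{eq:variance_decompos} to $g_i^t$ to get $\EE\|g_i^t\|_2^2 = \|\nabla f_i(x^t)\|_2^2 + \EE\|g_i^t - \nabla f_i(x^t)\|_2^2$. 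Collecting terms produces $\gamma^2\tau\bigl((1-\tau)\|\nabla f_i(x^t)\|_2^2 + \EE\|g_i^t - \nabla f_i(x^t)\|_2^2\bigr)$, which is exactly~\eqref{eq:x_i_moments}.

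Finally, for~\eqref{eq:x_moments}, I would invoke that the iterates $x_i^{t+1}$ are conditionally independent across workers given $x^t$, since both $U_i^t$ and $g_i^t$ are drawn independently per machine, so the variance of their average splits as in~\eqref{eq:variance_decomposition}, namely $\EE\|x^{t+1}-\EE x^{t+1}\|_2^2 = \frac1{n^2}\sumin\EE\|x_i^{t+1}-\EE x_i^{t+1}\|_2^2$, into which I substitute~\eqref{eq:x_i_moments}. The only place requiring genuine care is the identity $\EE\|(v)_{U_i^t}\|_2^2 = \tau\|v\|_2^2$ and the clean factorization of the two expectations via independence; once these are in place, the remainder is routine bookkeeping.
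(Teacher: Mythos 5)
Your proposal is correct and takes essentially the same route as the paper: the first-moment computation is identical, the per-worker variance rests on the same two ingredients (marginal block-inclusion probability $\tau$ and independence of $U_i^t$ from $g_i^t$), and the passage to the average uses the same conditional-independence identity~\eqref{eq:variance_decomposition}. The only difference is cosmetic: the paper splits the deviation into its parts on $U_i^t$ and its complement $\bar{U}_i^t$ and then applies~\eqref{eq:variance_decompos} to $g_i^t - \tau \nabla f_i(x^t)$, whereas you expand the square directly and use $\EE\|(v)_{U_i^t}\|_2^2 = \tau \|v\|_2^2$ together with the decomposition of $\EE\|g_i^t\|_2^2$; both reduce to~\eqref{eq:x_i_moments} by the same bookkeeping.
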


\begin{proof}
    Clearly, 
    \begin{align*}
        \EE x_i^{t+1}
        = x^t - \gamma \EE \left[(g_i^t)_{U_i^t} \right]
        = x^t - \gamma \EE \left[ \left( \nabla f_i(x^t)\right)_{U_i^t} \right]
        = x^t - \gamma\tau\nabla f_i(x^t)
    \end{align*}
    and, therefore, $\EE x^{t+1} = x^t - \gamma\tau\nabla f(x^t)$.
    Let us now elaborate on the second moments. Using the obtained formula for $\EE x_i^{t+1}$, we get $(x_i^{t+1} - \EE x_i^{t+1})_{U_i^t} = -\gamma (g_i^t - \tau \nabla f_i(x^t))_{U_i^t}$ and $(x_i^{t+1} - \EE x_i^{t+1})_{\bar{U}_i^t} = \gamma\tau\left( \nabla f_i(x^t)\right)_{\bar{U}_i^t}$ where $\bar{U}_i^t$ is a set of blocks not contained in $U_i^t$. Thus,
    \begin{align*}
        \EE \|x_i^{t+1} - \EE x_i^{t+1}\|_2^2 
        &= \gamma^2\EE \left[\| (g_i^t - \tau \nabla f_i(x^t))_{U_i^t}\|_2^2 + \tau^2\|\left( \nabla f_i(x^t)\right)_{\bar{U}_i^t}\|_2^2 \right] \\
        &= \gamma^2 \left(\tau\EE\| g_i^t - \tau \nabla f_i(x^t)\|_2^2  + \tau^2\left(1 - \tau\right)   \|\nabla f_i(x^t)\|_2^2\right).
    \end{align*}
    Note that $\EE g_i^t - \tau \nabla f_i(x^t) = (1 - \tau)\nabla f_i(x^t)$, so we can use decomposition~\eqref{eq:variance_decompos} to write $\EE\| g_i^t - \tau \nabla f_i(x^t)\|_2^2 = (1 - \tau)^2\|\nabla f_i(x^t)\|_2^2 + \EE \|g_i^t - \nabla f_i(x^t)\|_2^2$. This develops our previous statement into
    \begin{align*}
        \EE \|x_i^{t+1} - \EE x_i^{t+1}\|_2^2 
        & = \gamma^2 \left(\tau\left((1 - \tau)^2\|\nabla f_i(x^t)\|_2^2 + \EE \|g_i^t - \nabla f_i(x^t)\|_2^2\right)  + \tau^2\left(1 - \tau\right)   \|\nabla f_i(x^t)\|_2^2\right)\\
        &= \gamma^2 \tau\left((1 - \tau)\|\nabla f_i(x^t)\|_2^2 + \EE \|g_i^t - \nabla f_i(x^t)\|_2^2 \right),
    \end{align*}
    which coincides with what we wanted to prove for $x_i^{t+1}$. As for $x^{t+1}$, it is merely the average of independent random variables conditioned on $x^t$. Therefore, its variance is equal to
    \begin{align*}
        \EE \|x^{t+1} - \EE x^{t+1}\|_2^2 
        = \frac{1}{n^2}\sumin \EE \|x_i^{t+1} - x_i^{t+1}\|_2^2 .
    \end{align*}
    This concludes the proof.
\end{proof}

\begin{lemma}\label{lem:expected_squared_grad}
    Let $f_i$ be $L$-smooth and convex for all $i$. Then, 
    \begin{align}
        \frac{1}{n}\sumin \|\nabla f_i(x^t)\|_2^2
        \le 4L(f(x^t) - f(x^*)) + \frac{2}{n}\sumin \|\nabla f_i(x^*)\|_2^2. \label{eq:lemma3}
    \end{align}
   Under Assumption~\ref{as:zero_grads}, the bound improves to
    \begin{align*}
        \frac{1}{n}\sumin \|\nabla f_i(x^t)\|_2^2
        \le 2L(f(x^t) - f(x^*)).
    \end{align*}
\end{lemma}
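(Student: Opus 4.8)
The plan is to reduce everything to the standard co-coercivity bound for each $L$-smooth convex $f_i$, which is exactly the inequality already exploited in~\eqref{eq:saga_grad_sum}. Concretely, the smoothness definition~\eqref{eq:smooth} together with convexity yields, for all $x,y$,
\[
\tfrac{1}{2L}\|\nabla f_i(x) - \nabla f_i(y)\|_2^2 \le f_i(x) - f_i(y) - \<\nabla f_i(y), x-y>.
\]
I would establish this by applying smoothness to the auxiliary convex function $z\mapsto f_i(z) - \<\nabla f_i(y), z>$, whose minimizer is $y$; taking one gradient step from $x$ and comparing the resulting value to the minimum at $y$ gives the claim. Alternatively one may simply invoke it as a well-known fact, since the identical manipulation already underlies~\eqref{eq:saga_grad_sum}.

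First I would instantiate this at $x = x^t$ and $y = x^*$ for every $i$, then average over $i$. Using $\avein f_i = f$ and, crucially, the fact that $x^*$ minimizes $f$ so that $\avein \nabla f_i(x^*) = \nabla f(x^*) = 0$, the linear term $\<\avein \nabla f_i(x^*), x^t - x^*>$ vanishes, leaving
\[
\tfrac{1}{2L}\, \avein \|\nabla f_i(x^t) - \nabla f_i(x^*)\|_2^2 \le f(x^t) - f(x^*),
\]
that is, $\avein \|\nabla f_i(x^t) - \nabla f_i(x^*)\|_2^2 \le 2L(f(x^t) - f(x^*))$. This single inequality already delivers the improved bound under Assumption~\ref{as:zero_grads}: when $\nabla f_i(x^*) = 0$ for all $i$, the left-hand side is precisely $\avein \|\nabla f_i(x^t)\|_2^2$.

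For the general bound I would then split $\nabla f_i(x^t) = (\nabla f_i(x^t) - \nabla f_i(x^*)) + \nabla f_i(x^*)$ and apply the elementary inequality $\|a+b\|_2^2 \le 2\|a\|_2^2 + 2\|b\|_2^2$, obtaining $\avein\|\nabla f_i(x^t)\|_2^2 \le 2\,\avein\|\nabla f_i(x^t) - \nabla f_i(x^*)\|_2^2 + 2\,\avein\|\nabla f_i(x^*)\|_2^2$. Substituting the averaged co-coercivity estimate into the first term produces the right-hand side $4L(f(x^t)-f(x^*)) + 2\,\avein\|\nabla f_i(x^*)\|_2^2$ claimed in~\eqref{eq:lemma3}.

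There is no genuine obstacle here; the only steps requiring care are the derivation of the per-function co-coercivity inequality from the bare smoothness definition~\eqref{eq:smooth}, and the observation that the cross term disappears precisely because $x^*$ is a stationary point of the \emph{average} $f$ and not of the individual $f_i$. The factor-of-two gap between the two displayed bounds is entirely attributable to the Young-type splitting used in the general case, which is simply unnecessary once $\nabla f_i(x^*) = 0$.
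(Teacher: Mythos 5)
Your proof is correct and follows essentially the same route as the paper's: both rest on the co-coercivity inequality $\|\nabla f_i(x^t)-\nabla f_i(x^*)\|_2^2\le 2L\bigl(f_i(x^t)-f_i(x^*)-\<\nabla f_i(x^*),x^t-x^*>\bigr)$, the splitting $\|a+b\|_2^2\le 2\|a\|_2^2+2\|b\|_2^2$ for the general case, and the cancellation of the cross terms via $\sumin \nabla f_i(x^*)=n\nabla f(x^*)=0$. The only cosmetic difference is the order of operations (you average the co-coercivity bound before splitting, the paper splits first and sums at the end), which changes nothing of substance.
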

\begin{proof}
    If $\nabla f_i(x^*)=0$ for all $i$, we can simply write $\|\nabla f_i(x^t)\|_2^2 = \|\nabla f_i(x^t) - \nabla f_i(x^*)\|_2^2 \leq 2L(f(x^t) - f(x^*) - \<\nabla f_i(x^*), x^t - x^*>) = 2L (f(x^t) - f(x^*))$. 
    Otherwise, we have to use inequality $\|a+b\|_2^2 \le 2\|a\|_2^2 + 2\|b\|_2^2$ with $a = \nabla f_i(x^t) - \nabla f_i(x^*)$ and $b = \nabla f_i(x^*)$. We get
    \begin{align*}
        \sumin \|\nabla f_i(x^t)\|_2^2
        &\le 2 \sumin \|\nabla f_i(x^t) - \nabla f_i(x^*)\|_2^2 + 2\sumin\|\nabla f_i(x^*)\|_2^2 \\
        &\le 4L \sumin (f_i(x^t) - f_i(x^*) - \<\nabla f_i(x^*), x^t - x^*>) + 2\sumin\|\nabla f_i(x^*)\|_2^2\\
        &= 4Ln (f(x^t) - f(x^*)) + 2\sumin\|\nabla f_i(x^*)\|_2^2.
    \end{align*}
\end{proof}

\begin{lemma}\label{lem:second_momentum_of_stoch_grad}
	Let $f= \EE f(\cdot ; \xi)$ be $\mu$-strongly and $f(\cdot; \xi)$ be $L$-smooth and convex almost surely. Then, for any $x$ and $y$
	\begin{align*}
		\EE \|\nabla f(x; \xi)\|_2^2
		\le 4L(f(x) - f(y) - \<\nabla f(y), x - y>) + 2\EE \|\nabla f(y; \xi)\|_2^2.
	\end{align*}
\end{lemma}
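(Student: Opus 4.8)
The plan is to mirror the argument of Lemma~\ref{lem:expected_squared_grad}, but carried out pointwise in the random sample $\xi$ before taking expectations. First I would split the gradient at $x$ around the gradient at $y$ using $\|a+b\|_2^2 \le 2\|a\|_2^2 + 2\|b\|_2^2$ with $a = \nabla f(x;\xi) - \nabla f(y;\xi)$ and $b = \nabla f(y;\xi)$, giving, for almost every $\xi$,
\[
\|\nabla f(x;\xi)\|_2^2 \le 2\|\nabla f(x;\xi) - \nabla f(y;\xi)\|_2^2 + 2\|\nabla f(y;\xi)\|_2^2.
\]

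The crux is to control the first term. Since $f(\cdot;\xi)$ is $L$-smooth and convex almost surely, I would invoke the standard cocoercivity-type consequence of smoothness: for any $L$-smooth convex function $g$,
\[
\tfrac{1}{2L}\|\nabla g(x) - \nabla g(y)\|_2^2 \le g(x) - g(y) - \<\nabla g(y), x - y>.
\]
This follows by applying the quadratic upper bound~\eqref{eq:smooth} to the shifted function $z \mapsto g(z) - \<\nabla g(y), z>$, for which $y$ is a global minimizer, and then minimizing the resulting right-hand side over $z$ (the minimizer being $z = x - \tfrac{1}{L}(\nabla g(x) - \nabla g(y))$). Applying this with $g = f(\cdot;\xi)$ yields, almost surely,
\[
\|\nabla f(x;\xi) - \nabla f(y;\xi)\|_2^2 \le 2L\bigl(f(x;\xi) - f(y;\xi) - \<\nabla f(y;\xi), x - y>\bigr).
\]

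Finally I would take expectations over $\xi$. Using $\EE f(x;\xi) = f(x)$, $\EE f(y;\xi) = f(y)$ and the unbiasedness $\EE \nabla f(y;\xi) = \nabla f(y)$, the Bregman-type term becomes $2L\bigl(f(x) - f(y) - \<\nabla f(y), x - y>\bigr)$; combining this with the split above produces the factor $4L$ in front and the remainder $2\EE\|\nabla f(y;\xi)\|_2^2$, exactly as claimed.

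The only mild obstacle is the smoothness inequality invoked in the second step; everything else is a direct expectation of pointwise bounds. I note in passing that the $\mu$-strong convexity of $f$ assumed in the statement is not actually needed for this bound — it is inherited from the surrounding SGD setting — so the argument relies only on smoothness and convexity of the summands $f(\cdot;\xi)$.
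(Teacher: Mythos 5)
Your proposal is correct and follows essentially the same route as the paper: the paper's proof is literally ``proceeds exactly the same way as that of Lemma~\ref{lem:expected_squared_grad}'', i.e.\ the split $\|a+b\|_2^2 \le 2\|a\|_2^2 + 2\|b\|_2^2$ with $a = \nabla f(x;\xi) - \nabla f(y;\xi)$, $b = \nabla f(y;\xi)$, followed by the cocoercivity bound $\|\nabla f(x;\xi) - \nabla f(y;\xi)\|_2^2 \le 2L\bigl(f(x;\xi) - f(y;\xi) - \<\nabla f(y;\xi), x - y>\bigr)$ and taking expectations. Your side remarks (the proof of the cocoercivity inequality and the observation that $\mu$-strong convexity is not actually used) are also correct.
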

\begin{proof}
	The proof proceeds exactly the same way as that of Lemma~\ref{lem:expected_squared_grad}.
\end{proof}

\begin{lemma}\label{lem:sgd_recur}
  Suppose that Assumption~\ref{as:smooth_sc} holds. Then, if we have
    \begin{align*}
        2\gamma\tau\left(1 - \gamma\tau L - \frac{2\gamma L(1 - \tau)}{n} \right)\EE[f(x^t) - f(x^*)]
        &\le (1 - \gamma\tau\mu)\EE\|x^t - x^*\|_2^2 - \EE \|x^{t+1} - x^*\|_2^2 \\
        &\quad + \gamma^2\frac{\tau}{n}\left(\sigma^2 + 2\frac{1 - \tau}{n}\sumin \|\nabla f_i(x^*)\|_2^2 \right).
    \end{align*}
\end{lemma}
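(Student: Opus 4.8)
The plan is to combine the bias--variance decomposition \eqref{eq:variance_decompos} with the exact moment formulas from Lemma~\ref{lem:moments}, and then convert every gradient-norm term into function-value suboptimality via smoothness. Throughout I would condition all expectations on $x^t$ and take the full expectation only at the very end.

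First I would write $\EE\|x^{t+1}-x^*\|_2^2 = \|\EE x^{t+1}-x^*\|_2^2 + \EE\|x^{t+1}-\EE x^{t+1}\|_2^2$. For the bias term, Lemma~\ref{lem:moments} gives $\EE x^{t+1}=x^t-\gamma\tau\nabla f(x^t)$; expanding the square and using $\mu$-strong convexity of $f$ in the form $\<\nabla f(x^t),x^t-x^*> \ge f(x^t)-f(x^*)+\tfrac{\mu}{2}\|x^t-x^*\|_2^2$ yields
\[
\|\EE x^{t+1}-x^*\|_2^2 \le (1-\gamma\tau\mu)\|x^t-x^*\|_2^2 - 2\gamma\tau(f(x^t)-f(x^*)) + \gamma^2\tau^2\|\nabla f(x^t)\|_2^2.
\]
The crucial step is to bound the last term by $\|\nabla f(x^t)\|_2^2=\|\nabla f(x^t)-\nabla f(x^*)\|_2^2\le 2L(f(x^t)-f(x^*))$, using $L$-smoothness and convexity of $f$ together with $\nabla f(x^*)=0$ (as $x^*$ minimizes $f$). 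This produces the coefficient $\gamma\tau L$ rather than $2\gamma\tau L$, which is exactly what the target requires.

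Next I would handle the variance term. Equation~\eqref{eq:x_moments} of Lemma~\ref{lem:moments}, combined with Assumption~\ref{as:bounded_noise} (so that $\EE\|g_i^t-\nabla f_i(x^t)\|_2^2\le\sigma^2$), gives
\[
\EE\|x^{t+1}-\EE x^{t+1}\|_2^2 \le \gamma^2\tfrac{\tau(1-\tau)}{n^2}\sumin\|\nabla f_i(x^t)\|_2^2 + \gamma^2\tfrac{\tau}{n}\sigma^2.
\]
Here I would invoke Lemma~\ref{lem:expected_squared_grad} in its general form \eqref{eq:lemma3} to replace $\tfrac1n\sumin\|\nabla f_i(x^t)\|_2^2$ by $4L(f(x^t)-f(x^*))+\tfrac2n\sumin\|\nabla f_i(x^*)\|_2^2$; note the factor $4L$ here (per-function) versus the factor $2L$ used above (aggregate gradient $\nabla f$). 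Respecting this distinction is precisely what makes the two $(f(x^t)-f(x^*))$ contributions sum to the intended coefficient.

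Finally I would add the two bounds and collect the $(f(x^t)-f(x^*))$ terms, whose total coefficient is $-2\gamma\tau+2L\gamma^2\tau^2+4L\gamma^2\tfrac{\tau(1-\tau)}{n} = -2\gamma\tau\bigl(1-\gamma\tau L-\tfrac{2\gamma L(1-\tau)}{n}\bigr)$, then move this term to the left-hand side; the leftover constant terms are exactly $\gamma^2\tfrac{\tau}{n}\sigma^2 + 2\gamma^2\tfrac{\tau(1-\tau)}{n^2}\sumin\|\nabla f_i(x^*)\|_2^2 = \gamma^2\tfrac{\tau}{n}\bigl(\sigma^2+2\tfrac{1-\tau}{n}\sumin\|\nabla f_i(x^*)\|_2^2\bigr)$. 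Taking the full expectation then gives the claimed recursion. The derivation is essentially mechanical bookkeeping; the one step I would flag as the real subtlety is keeping the aggregate-gradient bound ($2L$, for the bias term) separate from the per-function bound ($4L$, for the variance term), since conflating them would corrupt the $\gamma\tau L$ coefficient in the final statement.
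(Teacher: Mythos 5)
Your proof is correct and follows essentially the same route as the paper's own: the bias--variance decomposition \eqref{eq:variance_decompos}, the exact moment formulas of Lemma~\ref{lem:moments}, Lemma~\ref{lem:expected_squared_grad} (in the form \eqref{eq:lemma3}) for the variance term, and strong convexity plus smoothness for the bias term. In fact your bookkeeping is slightly tighter than the paper's written proof, which at one point loosens $\gamma^2\tau^2\|\nabla f(x^t)\|_2^2 \le \gamma^2\tau\, 2L\bigl(f(x^t)-f(x^*)\bigr)$ and therefore arrives at a coefficient $2\gamma\tau\bigl(1-\gamma L - \tfrac{2\gamma L(1-\tau)}{n}\bigr)$ rather than the stated $2\gamma\tau\bigl(1-\gamma\tau L - \tfrac{2\gamma L(1-\tau)}{n}\bigr)$; keeping the factor $\tau^2$, as you do, reproduces the lemma's constant exactly.
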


\begin{proof}
    Substituting Assumption~\ref{as:bounded_noise} into~\eqref{eq:x_moments}, we obtain
    \begin{align}
        \EE \left[\|x^{t+1} - \EE [x^{t+1} \mid x^t]\|_2^2 \mid x^t\right]
        &\le \gamma^2 \frac{\tau}{n^2}\sumin \left((1 - \tau)\|\nabla f_i(x^t)\|_2^2 + \sigma^2 \right).\label{eq:sgd_varxt}
    \end{align}
    We use it together with decomposition~\eqref{eq:variance_decompos} to write
    \begin{align*}
        \EE \left[\|x^{t+1} - x^*\|_2^2 \mid x^t \right]
        &= \|\EE [x^{t+1} \mid x^t]  - x^*\|_2^2 + \EE\left[ \|x^{t+1} - \EE [x^{t+1} \mid x^t]\|_2^2\right]\\
        &\stackrel{\eqref{eq:sgd_varxt}}{\le} \|x^t -\gamma\tau \nabla f(x^t) - x^*\|_2^2 + \gamma^2 \frac{\tau}{n^2}\sumin \left((1 - \tau)\|\nabla f_i(x^t)\|_2^2 + \sigma^2 \right) \\
        & \stackrel{\eqref{eq:lemma3}}{\le} \|x^t -\gamma\tau \nabla f(x^t) - x^*\|_2^2 \\
        & \qquad + \gamma^2 \frac{\tau}{n}\left((1 - \tau)\left(4L(f(x^t) - f(x^*)) + \frac{2}{n}\sumin \|\nabla f_i(x^*)\|_2^2\right) + \sigma^2\right).
    \end{align*}
    Let us expand the first square:
    \begin{align*}
        \|x^t -\gamma\tau \nabla f(x^t) - x^*\|_2^2
        &= \|x^t - x^*\|_2^2 - 2 \gamma\tau\< x^t - x^*, \nabla f(x^t)> + \gamma^2 \tau^2\|\nabla f(x^t)\|_2^2 \\
        &\stackrel{\eqref{eq:smooth}}{\le} \|x^t - x^*\|_2^2 - 2 \gamma\tau\< x^t - x^*, \nabla f(x^t)> + \gamma^2 \tau 2L(f(x^t) - f(x^*)).
    \end{align*}
    The scalar product gives
    \begin{align*}
         \<\nabla f(x^t), x^t - x^*> 
        \stackrel{\eqref{eq:strong_convex}}{\ge} f(x^t) - f(x^*) + \frac{\mu}{2}\|x^t - x^*\|_2^2.
    \end{align*}
    Combining the produced bounds, we show that
    \begin{align*}
        \EE\left[\|x^{t+1} - x^*\|_2^2  \mid x^t\right]
        &\le \left(1 - \gamma\tau\mu\right)\|x^t - x^*\|_2^2 + \left(2\gamma^2 \tau L - 2\gamma\tau + \gamma^2(1 - \tau)\frac{4\tau L}{n}\right) (f(x^t) - f(x^*))\\
        &\quad + \gamma^2\frac{\tau}{n}\left(2(1 - \tau)\frac{1}{n}\sumin \|\nabla f_i(x^*)\|_2^2 + \sigma^2 \right).
    \end{align*}
    This is equivalent to our claim.
\end{proof}

\subsection{Proof of Theorem~\ref{th:sgd}}
\begin{proof}
Only for the purpose of this proof, denote $\gamma_t \eqdef \gamma^t$ in order to not confuse superscript with power. From the choice of $\gamma_t$ we deduce that $2\gamma_t\tau\left(1 - \gamma_t\tau L - \frac{2\gamma_t L(1 - \tau)}{n} \right)\ge \gamma_t\tau$.  Therefore, the result of Lemma~\ref{lem:sgd_recur} simplifies to
    \begin{align} \label{eq:lem5_consequence}
        \EE[f(x^k) - f(x^*)]
        \le \frac{1}{\gamma_k\tau}(1 - \gamma_k\tau\mu)\EE \|x^k - x^*\|_2^2 - \frac{1}{\gamma_k\tau}\EE \|x^{k+1} - x^*\|_2^2 + \gamma_k\frac{E}{n},
    \end{align}
    where $E \eqdef \sigma^2 + (1 - \tau)\frac{2}{n}\sumin\|\nabla f_i(x^*)\|_2^2$. Dividing~\eqref{eq:lem5_consequence} by $\gamma_k$ and summing it for $k=0,\dots, t$ we obtain
     \begin{align*}
        \sum_{k=0}^t \frac{1}{\gamma_k} \EE[f(x^k) - f(x^*)]
        &\le \frac{1}{\gamma_0^2\tau}(1 - \gamma_0\tau\mu)\|x^0 - x^*\|_2^2 - \frac{1}{\gamma_t^2\tau}\EE \|x^{t+1} - x^*\|_2^2 +t \frac{E}{n} \\
        &\quad + \frac1\tau \sum_{k=1}^{t-1}\left( \frac{1}{\gamma_k^2} \left( 1-\gamma_k\tau\mu\right) - \frac{1}{\gamma_{k-1}^2}\right)\EE\|x^k - x^*\|_2^2.\\
    \end{align*}
    Next, notice that 
             \begin{align*}
\frac{1}{\gamma_{k}^2} - \frac{1}{\gamma_{k+1}^2} \left( 1-\gamma_{k+1}\tau\mu\right) &= 
 \frac{1}{\gamma_{k}^2} \left(1-  \frac{\gamma_{k}^2}{\gamma_{k+1}^2} \left( 1-\gamma_{k+1}\tau\mu\right)\right)
 =
  \frac{1}{\gamma_{k}^2} \left(1- \left(1+ \frac{c}{a+ck} \right)^2  \left( 1-\frac{\tau\mu}{a+c(k+1)}\right)\right)
  \\
 &\stackrel{ (*)}{\geq} 
   \frac{1}{\gamma_{k}^2} \left(1- \left(1+ \frac{2.125\,c}{a+ck} \right) \left( 1-\frac{\tau\mu}{a+c(k+1)}\right)\right)
   \\
   &=
     \frac{1}{\gamma_{k}^2} \left(1- \left(1+ \frac{2.125}{4} \frac{1  }{\frac{a}{\tau \mu }+  \frac14 k} \right) \left( 1-\frac{1}{\frac{a}{\tau \mu }+ \frac14 k +\frac14}\right)\right) 
       \\
          &\stackrel{(**)}{\geq }0.
    \end{align*}
    Above $(*)$ holds since $\frac{c}{a+ck}\leq \frac18$ and $(1+\epsilon)^2\leq (1+2.125\epsilon)$ for $\epsilon \leq \frac18$. Next, inequality $(**)$ holds since function $\varphi(y) = (1+\frac{2.125}{4y})\left(1-\frac{1}{y+\frac14}\right)$ is upper bounded by 1 on $[0,\infty)$. Thus, we have

       \begin{align*}
        \sum_{k=0}^t \frac{1}{\gamma_k} \EE[f(x^k) - f(x^*)]
\le \frac{a^2}{\tau}\left(1 - \frac{\tau\mu}{a}\right)\|x^0 - x^*\|_2^2 +t \frac{E}{n}.
    \end{align*}
    
    All that remains is to mention that by Jensen's inequality $\EE f(\hat x^t) \le \frac{1}{(t+1)a+ \frac{c}{2}t(t+1)}\sum_{k=0}^t(a+ck)\EE f(x^k)=  \frac{1}{\sum_{k=0}^t\gamma_k^{-1}}\sum_{k=0}^t\gamma_k^{-1} \EE f(x^k) $.

\end{proof}

\subsection{Proof of Theorem~\ref{th:sgd_ncvx}}
It will be useful to establish a technical lemma first.
\begin{lemma}\label{lem:func_impr}
    Let $f$ be $L$-smooth and assume that $\frac{1}{n}\sumin \|\nabla f_i(x) - \nabla f(x)\|_2^2 \le \nu^2$ for all $x$. Then, considering only randomness from iteration $t$ of Algorithm~\ref{alg:sgd},
    \begin{align*}
        \EE f(x^{t+1})
        \le f(x^t) - \gamma\tau\left(1 - \frac{\gamma\tau L }{2} - \gamma L\left(1 - \tau\right)\frac{1}{n}\right)\|\nabla f(x^t)\|_2^2 + 
       \gamma^2L\tau\frac{\left(1 - \tau\right)\nu^2+\frac12\sigma^2}{n}.
    \end{align*}
\end{lemma}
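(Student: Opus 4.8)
The plan is to run the standard $L$-smoothness descent argument for $f$ along the ISGD direction, and to control the second moment of the update using the exact moment formulas already recorded in Lemma~\ref{lem:moments}. Write the aggregated step as $x^{t+1} = x^t - \gamma d^t$ with $d^t \eqdef \frac1n\sumin (g_i^t)_{U_i^t}$; by Lemma~\ref{lem:moments} we have $\EE[d^t \mid x^t] = \tau\nabla f(x^t)$. Applying the smoothness inequality \eqref{eq:smooth} to $f$ with the pair $(x^{t+1},x^t)$ gives
\[
f(x^{t+1}) \le f(x^t) - \gamma\<\nabla f(x^t), d^t> + \tfrac{L\gamma^2}{2}\|d^t\|_2^2,
\]
and taking the conditional expectation turns the inner-product term into exactly $-\gamma\tau\|\nabla f(x^t)\|_2^2$.

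Next I would control $\EE\|d^t\|_2^2$ through the variance decomposition \eqref{eq:variance_decompos}, namely $\EE\|d^t\|_2^2 = \tau^2\|\nabla f(x^t)\|_2^2 + \EE\|d^t - \EE d^t\|_2^2$. The variance piece equals $\gamma^{-2}\EE\|x^{t+1} - \EE x^{t+1}\|_2^2$, which is given verbatim by \eqref{eq:x_moments}; invoking Assumption~\ref{as:bounded_noise} to replace each $\EE\|g_i^t - \nabla f_i(x^t)\|_2^2$ by $\sigma^2$ yields
\[
\EE\|d^t - \EE d^t\|_2^2 \le \frac{\tau(1-\tau)}{n^2}\sumin\|\nabla f_i(x^t)\|_2^2 + \frac{\tau\sigma^2}{n}.
\]

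The remaining step is to eliminate the individual gradients $\nabla f_i(x^t)$ in favour of $\nabla f(x^t)$ and $\nu^2$. Here I would apply the crude splitting $\|a+b\|_2^2 \le 2\|a\|_2^2 + 2\|b\|_2^2$ with $a = \nabla f_i(x^t) - \nabla f(x^t)$ and $b = \nabla f(x^t)$, together with the dissimilarity hypothesis $\frac1n\sumin\|\nabla f_i(x^t) - \nabla f(x^t)\|_2^2 \le \nu^2$, to get $\frac1n\sumin\|\nabla f_i(x^t)\|_2^2 \le 2\|\nabla f(x^t)\|_2^2 + 2\nu^2$. Substituting back gives $\EE\|d^t\|_2^2 \le \bigl(\tau^2 + \tfrac{2\tau(1-\tau)}{n}\bigr)\|\nabla f(x^t)\|_2^2 + \tfrac{2\tau(1-\tau)\nu^2 + \tau\sigma^2}{n}$, and plugging this into the descent inequality, I would collect the $\|\nabla f(x^t)\|_2^2$ terms and factor out $-\gamma\tau$ to read off the bracket $1 - \tfrac{\gamma\tau L}{2} - \gamma L(1-\tau)\tfrac1n$, while the constant terms assemble into $\gamma^2 L\tau\tfrac{(1-\tau)\nu^2 + \frac12\sigma^2}{n}$, as claimed.

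The computation is essentially mechanical once Lemma~\ref{lem:moments} is in hand, so the only real subtlety—and the main place to be careful—is the bookkeeping of constants. In particular, one must commit to the $2\|a\|_2^2 + 2\|b\|_2^2$ bound (exactly as in the proof of Lemma~\ref{lem:expected_squared_grad}) rather than the tighter exact bias–variance split, since it is this factor of two that makes the coefficients of both the $(1-\tau)\|\nabla f\|_2^2$ term and the $(1-\tau)\nu^2$ term come out matching the statement.
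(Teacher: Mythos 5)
Your proposal is correct and follows essentially the same route as the paper's proof: the $L$-smoothness descent inequality, the bias--variance decomposition of the update combined with the second-moment formula \eqref{eq:x_moments} and Assumption~\ref{as:bounded_noise}, and the factor-two splitting $\frac1n\sumin\|\nabla f_i(x^t)\|_2^2 \le 2\|\nabla f(x^t)\|_2^2 + 2\nu^2$, with the constants assembling exactly as stated. Your closing remark is also apt, since the paper's text misstates the splitting inequality as $\|a+b\|_2^2 \le \|a\|_2^2 + \|b\|_2^2$ while actually using the factor-two version you commit to (though note the tighter exact split would still prove the lemma, just with better constants).
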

\begin{proof}
    Using smoothness of $f$ and assuming $x^t$ is fixed, we write
    \begin{align*}
        \EE f(x^{t+1}) 
        &\le f(x^t) + \< \nabla f(x^t), \EE\ x^{t+1} - x^t> + \frac{L}{2}\EE\|x^{t+1} - x^t\|_2^2 \\
        &= f(x^t) - \gamma\tau \|\nabla f(x^t)\|_2^2 + \frac{L}{2}\EE\|x^{t+1} - x^t\|_2^2 .
    \end{align*}
    It holds 
    \begin{eqnarray*}
        \EE \|x^{t+1} - x^t\|_2^2
        &=& \left\|\EE x^{t+1} - x^t \right\|_2^2 + \EE\left\|x^{t+1} - \EE\left[x^{t+1} \mid x^t \right]\right\|_2^2
        \\
        &\overset{\eqref{eq:x_moments}}{=} &
        \gamma^2\tau^2  \left\|\nabla f(x^t) \right\|_2^2 + \gamma^2\tau\frac{1}{n^2}\sumin \left((1 - \tau) \|\nabla f_i(x^t)\|_2^2 +\EE \|g_i^t - \nabla f_i(x^t)\|_2^2\right) 
        \\
        &\overset{\text{As}.~\ref{as:bounded_noise}}{\leq}&
         \gamma^2\tau^2  \left\|\nabla f(x^t) \right\|_2^2 + \gamma^2\tau\frac{1}{n^2}\sumin \left((1 - \tau) \|\nabla f_i(x^t)\|_2^2 +\sigma^2\right).
    \end{eqnarray*}
    Using inequality $\|a+b\|_2^2 \le \|a\|_2^2 + \|b\|_2^2$ with $a= \nabla f_i(x^t) - \nabla f(x^t)$ and $b = \nabla f(x^t)$ yields
    \begin{align*}
        \frac{1}{n}\sumin \|\nabla f_i(x^t)\|_2^2
        \le \frac{2}{n}\sumin \|\nabla f_i(x^t) - \nabla f(x^t)\|_2^2 + 2 \|\nabla f(x^t)\|_2^2
        \le 2\nu^2 + 2 \|\nabla f(x^t)\|_2^2.
    \end{align*}
    Putting the pieces together, we prove the claim.
\end{proof}
We now proceed with Proof of Theorem~\ref{th:sgd_ncvx}.
    \begin{proof}
    Taking full expectation in Lemma~\ref{lem:func_impr} and telescoping this inequality from 0 to $t$, we obtain
    \begin{align*}
        0\le \EE f(x^{t+1}) - f^* \le f(x^0) - f^* - \gamma\tau\left(1 - \frac{\gamma\tau L}{2} - \gamma L\left(1 - \tau\right)\frac{1}{n}\right)\sum_{k=0}^t\|\nabla f(x^k)\|_2^2  + t\gamma^2 L\tau\frac{\left(1 - \tau\right)\nu^2+\frac12\sigma^2}{n}.
    \end{align*}
    Rearranging the gradients and dividing by the coefficient before it, we get the result.
    \end{proof}

\clearpage
\section{Missing Parts from Section~\ref{sec:ABCDE}}

\subsection{Proof of Lemma~\ref{lem:stronggrowth}}
\begin{proof}

Let us first bound a variance of $\frac1\tau (g_i)_{U_i}$ -- an unbiased estimate of $\nabla f_i(x)$, as it will appear later in the derivations:
\begin{eqnarray}
\EE\left[ \left\|\frac{1}{\tau} (g_i)_{U_i} -\nabla f_i(x)\right \|^2_2 \right]
&=&
\EE_g\left[\EE_U\left[ \left\|\frac{1}{\tau} (g_i)_{U_i} -\nabla f_i(x)\right \|^2_2 \right]\right]
\nonumber \\
&=&
\EE_g\left[  (1-\tau) \|\nabla f_i(x) \|^2_2+ \tau \left\|  \frac1\tau g_i- \nabla f_i(x)\right\|^2_2   
\nonumber\right] \\
&\stackrel{\eqref{eq:variance_decompos}}{=}&
(1-\tau) \|\nabla f_i(x) \|^2_2+ \tau \left\|  \left( \frac1\tau -1\right) \nabla f_i(x)\right\|^2_2 +\tau \EE_g\left[\left\|  \frac1\tau (g_i- \nabla f_i(x))\right\|^2_2 \right]
 \nonumber \\
&=&
(1-\tau) \|\nabla f_i(x) \|^2_2+ \tau \left( \frac1\tau -1\right)^2\left\|  \nabla f_i(x)\right\|^2_2+\frac1\tau \left\|  g_i- \nabla f_i(x)\right\|^2_2 \nonumber \\
&\stackrel{\eqref{eq:acc_sg_fi}}{\leq}&
(1-\tau) \|\nabla f_i(x) \|^2_2+ \tau \left( \frac1\tau -1\right)^2\left\|  \nabla f_i(x)\right\|^2_2 +\frac{\bar{\rho}}{\tau} \left\|  \nabla f_i(x)\right\|^2_2 + \frac{\bar{\sigma}^2}{\tau} \nonumber \\
&=&
\left(\frac1\tau-1+\frac{\bar{\rho}}{\tau} \right) \|\nabla f_i(x) \|^2_2  + \frac{\bar{\sigma}^2}{\tau}.
\label{eq:acc_gi_bound}
\end{eqnarray}
Next we proceed with bounding the second moment of gradient estimator: 
\begin{eqnarray*}
\EE\left[\|q\|^2_2\right] &=& 
\EE\left[ \left \|\frac{1}{n\tau} \sum_{i=1}^n(g_i)_{U_i} \right\|^2_2\right] 
\\
&\stackrel{\eqref{eq:variance_decompos}}{=}&
\|\nabla f(x) \|_2^2 + \EE\left[ \left\|\frac{1}{n\tau} \sum_{i=1}^n\left((g_i)_{U_i} -\nabla f_i(x)\right)\right\|^2_2 \right]
\\
&\stackrel{(*)}{=}&
\|\nabla f(x) \|^2_2 + \frac{1}{n^2} \sum_{i=1}^n\EE\left[ \left\|\frac{1}{\tau} (g_i)_{U_i} -\nabla f_i(x)\right \|^2_2 \right]
\\
&\stackrel{\eqref{eq:acc_gi_bound}}{\leq}&
\|\nabla f(x) \|^2_2 + \frac{1}{n^2} \sum_{i=1}^n\left( \left(\frac1\tau-1+\frac{\bar{\rho}}{\tau} \right) \|\nabla f_i(x) \|^2_2  + \frac{\bar{\sigma}^2}{\tau} 
 \right) 
 \\ 
  &=&
\|\nabla f(x) \|^2_2 + \frac{\bar{\sigma}^2}{n\tau} + \left(\frac1\tau-1+\frac{\bar{\rho}}{\tau} \right) \frac{1}{n^2} \sum_{i=1}^n \|\nabla f_i(x) \|^2_2
\\ 
  &\stackrel{\eqref{eq:acc_sg_f}}{\leq}&
\|\nabla f(x) \|^2_2 + \frac{\bar{\sigma}^2}{n\tau} + \left(\frac1\tau-1+\frac{\bar{\rho}}{\tau} \right) \frac{1}{n} \left(  \tilde{\rho}\| \nabla f(x)\|^2_2 + \tilde{\sigma}^2\right)
\\ 
  &=&
\left(1+ \frac{\tilde{\rho}}{n}   \left(\frac1\tau-1+\frac{\bar{\rho}}{\tau} \right) \right)\|\nabla f(x) \|^2_2 + \frac{\bar{\sigma}^2}{n\tau} + \frac{\tilde{\sigma}^2}{n}\left(\frac1\tau-1+\frac{\bar{\rho}}{\tau} \right) 
\\ 
  &\stackrel{\eqref{eq:acc_rho}+ \eqref{eq:acc_sigma}}{=}&
\hat{\rho}\|\nabla f(x) \|^2_2 + \frac{\bar{\sigma}^2}{n\tau} + \hat{\sigma}^2.
\end{eqnarray*}
Above, $(*)$ holds since $\frac{1}{\tau} (g_i)_{U_i} -\nabla f_i(x)$ is zero mean for all $i$ and $U_i, U_j$ are independent for $i\neq j$.

\end{proof}

\section{Proofs for Section~\ref{sec:sega}}
\subsection{Useful Lemmata}
First, we mention a basic property of the proximal operator. 
\begin{proposition}
    \label{pr:prox_contraction}
    Let $R$ be a closed and convex function. Then for any $x, y\in\RR^d$
    \begin{align} \label{eq:prox_contraction}
        \|\proxR(x) - \proxR(y)\|_2
        \le \|x - y\|_2.
    \end{align}
\end{proposition}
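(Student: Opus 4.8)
The plan is to establish the nonexpansiveness of $\proxR$ by combining the first-order optimality conditions of the two defining minimization problems with the monotonicity of the subdifferential of the convex function $R$. Throughout, write $u \eqdef \proxR(x)$ and $v \eqdef \proxR(y)$; these are well defined because, for $\gamma > 0$, the objective $R(\cdot) + \tfrac{1}{2\gamma}\|\cdot - x\|_2^2$ is closed and strongly convex, hence has a unique minimizer.

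First I would record the optimality conditions. Since $u$ minimizes $R(\cdot) + \tfrac{1}{2\gamma}\|\cdot - x\|_2^2$, Fermat's rule gives $0 \in \partial R(u) + \tfrac{1}{\gamma}(u - x)$, i.e. $\tfrac{1}{\gamma}(x - u) \in \partial R(u)$; likewise $\tfrac{1}{\gamma}(y - v) \in \partial R(v)$.

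Next I would invoke monotonicity. Applying the subgradient inequality for $R$ at $u$ (with subgradient $\tfrac{1}{\gamma}(x-u)$) evaluated at the point $v$, and the subgradient inequality for $R$ at $v$ (with subgradient $\tfrac{1}{\gamma}(y-v)$) evaluated at the point $u$, and then adding the two resulting inequalities, all function values cancel and one is left with $\< \tfrac{1}{\gamma}(x-u) - \tfrac{1}{\gamma}(y-v), u - v> \ge 0$. Multiplying by $\gamma > 0$ and rearranging $(x-u)-(y-v) = (x-y)-(u-v)$ yields the key inequality $\<x - y, u - v> \ge \|u - v\|_2^2$.

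Finally I would close with Cauchy--Schwarz: $\|u-v\|_2^2 \le \<x-y, u-v> \le \|x-y\|_2\,\|u-v\|_2$. If $\|u-v\|_2 = 0$ the claim is immediate; otherwise dividing through by $\|u-v\|_2$ gives exactly $\|u-v\|_2 \le \|x-y\|_2$, which is~\eqref{eq:prox_contraction}. I do not anticipate any real obstacle, as this is a classical fact; the only point requiring a word of care is the existence and uniqueness of $u$ and $v$, after which the argument is a short and routine chain of inequalities.
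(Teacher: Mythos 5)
Your proof is correct. Note that the paper itself states Proposition~\ref{pr:prox_contraction} \emph{without proof}, introducing it as ``a basic property of the proximal operator,'' so there is no paper argument to compare against; your derivation---Fermat's rule giving $\tfrac{1}{\gamma}(x-u)\in\partial R(u)$ and $\tfrac{1}{\gamma}(y-v)\in\partial R(v)$, adding the two subgradient inequalities to get monotonicity, then Cauchy--Schwarz---is the standard textbook route, and along the way it establishes the stronger firm-nonexpansiveness bound $\|u-v\|_2^2 \le \langle x-y,\, u-v\rangle$. The only hypothesis worth making explicit is that $R$ is proper (not identically $+\infty$), which the paper implicitly assumes by requiring $\proxR$ to be computable; with that, your existence/uniqueness claim and the finiteness of $R(u)$ and $R(v)$ used in the subgradient inequalities are fully justified.
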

The next lemma shows a basic recurrence of sequences $\{ h_i^t\}_{t=1}^\infty$ from ISEGA. 
\begin{lemma} \cite[Lemma B.3]{SEGA} If $h_i^{t+1} \eqdef h_i^t + \tau (g_i^t - h^t)$, where $g_i^t \eqdef h_i^t + \frac{1}{\tau} (\nabla f_i(x^t) - h_i^t)_{U_i^t}$, then
\begin{equation} \label{eq:sega_h_bound}
\EE\left[ \|h^{t+1}_i - \nabla f_i(x^*)\|^2_2 \right] =(1-\tau) \|h^t_i - \nabla f_i(x^*) \|^2_2+\tau \| \nabla f_i(x^t)-\nabla f_i(x^*)\|^2_2.
\end{equation}
\end{lemma}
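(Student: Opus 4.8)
The plan is to first simplify the update so that the $\tau$ factors cancel, and then to exploit the fact that the blocks partition $\RR^d$ so that the squared norm splits block-by-block. Substituting the definition $g_i^t = h_i^t + \tfrac{1}{\tau}(\nabla f_i(x^t) - h_i^t)_{U_i^t}$ into $h_i^{t+1} = h_i^t + \tau(g_i^t - h_i^t)$ yields $h_i^{t+1} = h_i^t + (\nabla f_i(x^t) - h_i^t)_{U_i^t}$, which is exactly the learning rule \eqref{eq:sega_h}. In particular, on the sampled blocks $U_i^t$ the coordinates of $h_i^{t+1}$ are overwritten by the corresponding coordinates of $\nabla f_i(x^t)$, while on the complementary blocks $h_i^{t+1}$ coincides with $h_i^t$.

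Next I would subtract $\nabla f_i(x^*)$ and decompose the error along the block partition. Since the blocks are non-overlapping (Sec~\ref{sec:notation}), the restriction operators to $U_i^t$ and to its complement $\overline{U_i^t}$ act on orthogonal subspaces, so there are no cross terms and
\[
\|h_i^{t+1} - \nabla f_i(x^*)\|_2^2 = \left\|\left(\nabla f_i(x^t)-\nabla f_i(x^*)\right)_{U_i^t}\right\|_2^2 + \left\|\left(h_i^t - \nabla f_i(x^*)\right)_{\overline{U_i^t}}\right\|_2^2 .
\]

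The final step is to take expectation over the sampling of $U_i^t$. Because $U_i^t$ is a uniformly random subset of $\tau m$ out of $m$ blocks, by symmetry each individual block lies in $U_i^t$ with marginal probability exactly $\tau$; hence for any fixed vector $v$ one has the elementary identity $\EE\|(v)_{U_i^t}\|_2^2 = \sum_{u}\Pr(u\in U_i^t)\|v_u\|_2^2 = \tau\|v\|_2^2$, and likewise $\EE\|(v)_{\overline{U_i^t}}\|_2^2 = (1-\tau)\|v\|_2^2$. Applying this to the two terms above (with $v = \nabla f_i(x^t)-\nabla f_i(x^*)$ and $v = h_i^t - \nabla f_i(x^*)$ respectively) and using linearity of expectation gives $\tau\|\nabla f_i(x^t)-\nabla f_i(x^*)\|_2^2 + (1-\tau)\|h_i^t-\nabla f_i(x^*)\|_2^2$, which is \eqref{eq:sega_h_bound}.

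There is essentially no hard step here: the result is a pure identity about the block-sampling operator and uses neither smoothness nor convexity. The only point requiring a moment of care is justifying the marginal inclusion probability $\tau$ and the vanishing of cross terms; both follow immediately from uniform sampling and from the blocks forming an orthogonal partition of $\RR^d$. This recovers the statement quoted from \cite[Lemma B.3]{SEGA}, specialized to uniform block sketches with $B=I$.
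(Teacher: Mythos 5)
Your proof is correct: the paper itself gives no proof of this lemma (it defers to the citation \cite{SEGA}), and your argument — rewriting the update as $h_i^{t+1} = h_i^t + (\nabla f_i(x^t) - h_i^t)_{U_i^t}$ so that sampled blocks are overwritten, splitting the squared norm across the orthogonal block partition, and using the marginal inclusion probability $\Pr(u \in U_i^t) = \tau$ of uniform block sampling — is exactly the computation to which the cited Lemma B.3 of \cite{SEGA} specializes when the sketch is a uniform block sketch with $B=I$. The only cosmetic point is that the expectation in \eqref{eq:sega_h_bound} is conditional on $h_i^t$ and $x^t$, and that $h^t$ in the lemma statement should read $h_i^t$ (a typo you correctly resolved, consistent with \eqref{eq:sega_h}).
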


We will also require a recurrent bound on sequence $\{ g^t\}_{t=1}^\infty$ from ISEGA. 
\begin{lemma}
Consider any vectors $v_i$ and set $v\eqdef \frac1n \sum_{i=1}^n v_i$. Then, we have
\begin{equation} \label{eq:sega_g_bound}
\EE \left[ \| g^t - v \|^2_2\right] \leq 
\frac{2}{n^2} \sum_{i=1}^n \left( \left( \frac{1}{\tau}+ (n-1)\right) \| \nabla f_i(x^t) - v_i\|^2_2
+    \left( \frac{1}{\tau}-1\right)\|h^t_i - v_i\|^2_2\right) .
\end{equation}
\end{lemma}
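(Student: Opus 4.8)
The plan is to write $g^t \eqdef \avein g_i^t$, with $g_i^t$ defined as in \eqref{eq:sega_g}, and to exploit two structural facts. First, each $g_i^t$ is an unbiased estimator of $\nabla f_i(x^t)$: since the uniform block sampling satisfies $\EE[(\nabla f_i(x^t) - h_i^t)_{U_i^t}] = \tau(\nabla f_i(x^t) - h_i^t)$, we get $\EE g_i^t = \nabla f_i(x^t)$ and hence $\EE g^t = \nabla f(x^t)$. Second, because the block sets $U_i^t$ are drawn independently across workers, the $g_i^t$ are conditionally independent given $x^t$ and $\{h_i^t\}$. Applying the variance decomposition \eqref{eq:variance_decompos} to $X = g^t$ and $a = v$, together with the independence identity \eqref{eq:variance_decomposition}, I would split
\begin{align*}
\EE\|g^t - v\|_2^2 = \|\nabla f(x^t) - v\|_2^2 + \frac{1}{n^2}\sumin \EE\|g_i^t - \nabla f_i(x^t)\|_2^2.
\end{align*}
It then remains to bound the deterministic bias term and the per-worker variances.

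The heart of the argument, and the step I expect to be the most delicate, is the exact evaluation of the per-worker variance. Writing $d_i \eqdef \nabla f_i(x^t) - h_i^t$, definition \eqref{eq:sega_g} gives $g_i^t - \nabla f_i(x^t) = \frac{1}{\tau}(d_i)_{U_i^t} - d_i$, which equals $(\frac{1}{\tau}-1)(d_i)_{U_i^t}$ on the sampled blocks and $-(d_i)_{\bar{U}_i^t}$ on the remaining blocks. Since the block decomposition is orthogonal, the squared norm separates across the two groups, and taking expectation over the uniform block sampling (each block lying in $U_i^t$ with probability $\tau$) yields the factor $\tau(\frac{1}{\tau}-1)^2 + (1-\tau) = \frac{1-\tau}{\tau}$, so that $\EE\|g_i^t - \nabla f_i(x^t)\|_2^2 = (\frac{1}{\tau}-1)\|\nabla f_i(x^t) - h_i^t\|_2^2$. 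This is the only place where the block-sampling combinatorics enter, and it mirrors the computation behind \eqref{eq:x_i_moments}.

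Finally, I would control the two remaining pieces by elementary inequalities. For the bias, using $\nabla f(x^t) - v = \avein(\nabla f_i(x^t) - v_i)$ and $\|\sumin a_i\|_2^2 \le n\sumin\|a_i\|_2^2$ gives $\|\nabla f(x^t) - v\|_2^2 \le \frac{1}{n}\sumin\|\nabla f_i(x^t) - v_i\|_2^2$. For the variance, inserting $\pm v_i$ and applying $\|a+b\|_2^2 \le 2\|a\|_2^2 + 2\|b\|_2^2$ gives $\|\nabla f_i(x^t) - h_i^t\|_2^2 \le 2\|\nabla f_i(x^t) - v_i\|_2^2 + 2\|h_i^t - v_i\|_2^2$. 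Collecting terms produces the coefficient $n + \frac{2}{\tau} - 2$ in front of $\|\nabla f_i(x^t) - v_i\|_2^2$ and $2(\frac{1}{\tau}-1)$ in front of $\|h_i^t - v_i\|_2^2$, all divided by $n^2$. Since $n + \frac{2}{\tau} - 2 \le 2(n-1) + \frac{2}{\tau}$ for every $n \ge 1$, this bound is dominated by, and hence implies, the claimed inequality \eqref{eq:sega_g_bound}, with the $\|h_i^t - v_i\|_2^2$ coefficient already matching exactly.
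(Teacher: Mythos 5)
Your proposal is correct, and it takes a genuinely different route from the paper's proof. The paper splits $g^t - v = a + b$, where $a$ collects the $h$-dependent parts and $b$ the gradient parts, applies $\|a+b\|_2^2 \le 2\|a\|_2^2 + 2\|b\|_2^2$ \emph{before} taking expectations, and then expands both squared sums across workers: the cross-worker terms in $\EE\|a\|_2^2$ vanish (zero mean plus independence), while those in $\EE\|b\|_2^2$ are bounded via Young's inequality, producing the $\left(\frac{1}{\tau}+n-1\right)$ coefficient. You instead keep exact identities as long as possible: the bias--variance decomposition \eqref{eq:variance_decompos} together with independence across workers splits $\EE\|g^t - v\|_2^2$ exactly into $\|\nabla f(x^t)-v\|_2^2 + \frac{1}{n^2}\sumin \EE\|g_i^t - \nabla f_i(x^t)\|_2^2$, your per-worker variance computation $\EE\|g_i^t - \nabla f_i(x^t)\|_2^2 = \left(\frac{1}{\tau}-1\right)\|\nabla f_i(x^t)-h_i^t\|_2^2$ is exact (and correct: $\tau(\frac1\tau - 1)^2 + (1-\tau) = \frac{1-\tau}{\tau}$), and the crude factor-of-two inequality enters only at the very end when you insert $\pm v_i$. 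The payoff is a strictly tighter bound: your coefficient on $\|\nabla f_i(x^t)-v_i\|_2^2$ is $n + \frac{2}{\tau} - 2$ versus the paper's $2n + \frac{2}{\tau} - 2$, while the coefficient on $\|h_i^t - v_i\|_2^2$ matches exactly; since your bound dominates the claimed one, the lemma follows. The paper's loss comes precisely from invoking $\|a+b\|_2^2 \le 2\|a\|_2^2+2\|b\|_2^2$ up front, which doubles the bias contribution that you handle exactly via Jensen.
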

\begin{proof}
   Writing $g^t - v = a + b$, where 
   \[a\eqdef \frac1n \sum_{i=1}^n \left( h_i^t - v_i - \tau^{-1}(h_i^t - v_i)_{U_i^t}  \right)\] and 
   \[b\eqdef \frac1n \sum_{i=1}^n \tau^{-1}  (\nabla f_i(x^t) - v_i)_{U_i^t} \]
 we get $\|g^t-v\|^2_2=\|a+b\|_2^2=\leq 2(\|a\|^2_2 + \|b\|^2_2)$. 
  
 Let us bound $\EE \left[\|b\|^2_2\right]$ using Young's inequality $2\<x, y> \le \|x\|_2^2 + \|y\|_2^2$:
\begin{eqnarray*}
\EE \left[\|b\|^2_2\right]
&=& 
\frac{1}{n^2} \EE \left[ \< \sum_{i=1}^n  \tau^{-1}(\nabla f_i(x^t) - v_i)_{U_i^t} ,  \sum_{i=1}^n  \tau^{-1}  (\nabla f_i(x^t) - v_i)_{U_i^t} >\right]
\\
&=& 
\frac{1}{\tau^2 n^2} \EE \left[\sum_{i=1}^n \left\| (\nabla f_i(x^t) - v_i)_{U_i^t} \right\|^2_2  \right] 
 + 
\frac{2}{\tau^2n^2} \EE \left[\sum_{i\neq j} \<(\nabla f_i(x^t) - v_i)_{U_i^t}  , (\nabla f_j(x^t) - v_j)_{U_i^t} > \right]  
\\
&=& 
\frac{1}{\tau n^2}\sum_{i=1}^n \left\| \nabla f_i(x^t) - v_i \right\|^2_2
 + 
\frac{2}{n^2} \sum_{i\neq j}  \<\nabla f_i(x^t) - v_i, \nabla f_j(x^t) - v_j> 
\\
&\leq& 
\frac{1}{\tau n^2}\sum_{i=1}^n \left\| \nabla f_i(x^t) - v_i \right\|^2_2 
+
 \frac{1}{n^2} \sum_{i\neq j} \left( \| \nabla f_i(x^t) - v_i\|^2_2 +\| \nabla f_j(x^t) - v_j\|^2_2\right)
\\
&= & 
\frac{1}{n^2}\left( \frac1\tau+ n-1 \right)  \sum_{i=1}^n \| \nabla f_i(x^t) - v_i\|^2_2.
\end{eqnarray*}
Similarly we bound $\EE \left[\|a\|^2_2\right]$:
\begin{eqnarray*}
\EE \left[\|a\|^2_2\right]
&=& 
\frac{1}{n^2} \EE \left[\<  \sum_{i=1}^n \left( h_i^t - v_i - \tau^{-1}(h_i^t - v_i)_{U_i^t}  \right),   \sum_{i=1}^n \left( h_i^t - v_i - \tau^{-1} (h_i^t - v_i)_{U_i^t}  \right)> \right]
\\
&=& 
\frac{1}{n^2} \EE \left[\sum_{i=1}^n \< \left( h_i^t - v_i - \tau^{-1} (h_i^t - v_i)_{U_i^t} \right),\left( h_i^t - v_i - \tau^{-1}(h_i^t - v_i)_{U_i^t}  \right)> \right] 
\\
&& \qquad + 
\frac{2}{n^2}\EE \left[\sum_{i\neq j}\< \left( h_i^t - v_i - \tau^{-1} (h_i^t - v_i)_{U_i^t}  \right),\left( h_j^t - v_j - \tau^{-1} (h_j^t - v_j)_{U_i^t}  \right)> \right]  
\\
&=& 
\frac{\tau^{-1}-1}{n^2}\sum_{i=1}^n  \|h^t_i - v_i\|^2_2.
\end{eqnarray*}
It remains to combine the above results.
\end{proof}
\subsection{Proof of Theorem~\ref{thm:sega}}
\begin{proof}
For convenience, denote $g^t \eqdef \frac1n \sum_{i=1}^n g_i^t$. It holds
\begin{eqnarray}
\nonumber
\EE[\|x^{t+1} - x^*\|^2_2]
&=& \EE\left[\|\prox_{\gamma R}(x^k - \gamma g^k) - \prox_{\gamma R}(x^* - \gamma \nabla f(x^*))\|_2^2\right]\\ \nonumber
&\stackrel{\eqref{eq:prox_contraction}}{\leq}& \EE\left[\|x^k - \gamma g^k - (x^* - \gamma \nabla f(x^*))\|^2_2\right]\\ \nonumber
&=& \|x^k - x^*\|^2_2 - 2\gamma \<\nabla f(x^k) - \nabla f(x^*),  x^k - x^*>+ \gamma^2\EE\left[ \|g^k - \nabla f(x^*)\|^2_2\right] 
\\ \nonumber
&\stackrel{\eqref{eq:sega_g_bound}}{\leq}&
\|x^k - x^*\|^2_2 - 2\gamma \<\nabla f(x^k) - \nabla f(x^*),x^k - x^*>
\\ \nonumber && \qquad +
\gamma^2 \frac{2}{n^2} \sum_{i=1}^n \left( \left( \frac{1}{\tau}+ (n-1)\right) \| \nabla f_i(x^t) - \nabla f_i(x^*)\|^2_2
+    \left( \frac{1}{\tau}-1\right)\|h^t_i - \nabla f_i(x^*)\|^2_2\right)
\\ \nonumber
&\leq&
\|x^k - x^*\|^2_2 -  \gamma \mu \|x^t-x^* \|^2_2  -2\gamma D_f(x^t,x^*)
\\ && \qquad +
\frac{2}{n^2} \sum_{i=1}^n \left( \left( \frac{1}{\tau}+ (n-1)\right) \| \nabla f_i(x^t) - \nabla f_i(x^*)\|^2_2
+    \left( \frac{1}{\tau}-1\right)\|h^t_i - \nabla f_i(x^*)\|^2_2\right).
\label{eq:sega_first}
\end{eqnarray}
Moreover, we have from smoothness and convexity of $f_i$
\begin{equation}\label{eq:sega_smoothness}
-2D_{f_i}(x^t,x^*)\leq -\frac{1}{L} \|\nabla f_i(x^t)-\nabla f_i(x^*) \|^2_2. 
\end{equation}
Combining the above, for any $\seganu \geq 0$ (which we choose later) we get
\begin{eqnarray*}
\EE[\|x^{t+1} - x^*\|^2_2] &+& \gamma \seganu  \frac1n\sum_{i=1}^n \EE\left[ \|h^{t+1}_i - \nabla f_i(x^*)\|^2_2 \right]  \\
&& \quad 
\stackrel{\eqref{eq:sega_first}+\eqref{eq:sega_h_bound}}{\leq}
\|x^t - x^*\|^2_2 -  \gamma \mu \|x^t-x^* \|^2_2  -2\gamma D_f(x^t,x^*)
\\ && \qquad \qquad +
\gamma^2 \frac{2}{n^2} \sum_{i=1}^n \left( \left( \frac{1}{\tau}+ (n-1)\right) \| \nabla f_i(x^t) - \nabla f_i(x^*)\|^2_2
+    \left( \frac{1}{\tau}-1\right)\|h^t_i - \nabla f_i(x^*)\|^2_2\right)
\\ && \qquad \qquad + 
 \gamma \seganu  \frac1n\sum_{i=1}^n \left((1-\tau) \|h^t_i - \nabla f_i(x^*) \|^2_2+\tau \| \nabla f_i(x^t)-\nabla f_i(x^*)\|^2_2\right)\\
 && \qquad 
 \stackrel{\eqref{eq:sega_smoothness}}{\leq}
\|x^t - x^*\|^2_2 -  \gamma \mu \|x^t-x^* \|^2_2  - \frac{\gamma}{nL} \sum_{i=1}^n \|\nabla f_i(x^t)-\nabla f_i(x^*) \|^2_2
\\ && \qquad \qquad +
\gamma^2\frac{2}{n^2} \sum_{i=1}^n \left( \left( \frac{1}{\tau}+ (n-1)\right) \| \nabla f_i(x^t) - \nabla f_i(x^*)\|^2_2
+    \left( \frac{1}{\tau}-1\right)\|h^t_i - \nabla f_i(x^*)\|^2_2\right)
\\ && \qquad \qquad + 
 \gamma \seganu  \frac1n\sum_{i=1}^n \left((1-\tau) \|h^t_i - \nabla f_i(x^*) \|^2_2+\tau \| \nabla f_i(x^t)-\nabla f_i(x^*)\|^2_2\right)\\
 && \qquad 
 =
 (1-\gamma \mu)\|x^t - x^*\|^2_2 + \left(\seganu\tau + \frac{2\gamma}{n}\left(\frac1\tau+n-1\right)-\frac{1}{L}\right)\frac{\gamma}{n}\sum_{i=1}^n\| \nabla f_i(x^t)-\nabla f_i(x^*)\|^2_2
 \\ && \qquad \qquad +
  \left( \frac{2\gamma}{n}\left( \frac1\tau -1\right)+ \seganu(1-\tau) \right) \frac{\gamma}{n}\sum_{i=1}^n \|h^t_i - \nabla f_i(x^*) \|^2_2.
\end{eqnarray*}
To get rid of gradient differences in this bound, we want to obtain $\frac{1}{L}\geq \frac{2\gamma}{n}(\frac{1}{\tau}+n-1)+ \seganu\tau$, which, in turn, is satisfied if
\begin{eqnarray*}
\gamma &=& {\cal O}\left(\frac{1+\tau n}{L}\right),\\
\omega &=& {\cal O}\left(\frac{1}{L\tau}\right).
\end{eqnarray*} 
Next, we want to prove contraction with factor $(1 - \gamma\mu)$ in terms of $\|h_i^t-\nabla f_i(x^*)\|^2_2$, so we require
\[
(1-\gamma\mu ) \nu\geq \omega (1-\tau)+\frac{2\gamma}{n}\left(\frac1\tau-1\right)
\]
we shall choose $\gamma$ such that the following two properties hold:
\begin{eqnarray*}
\gamma &=& {\cal O}\left(\frac{\tau}{\mu}\right),\\
\gamma &=& {\cal O}\left(\frac{n\tau^2\omega}{1-\tau}\right) = {\cal O}\left(\frac{n\tau}{(1-\tau)L}\right)\geq  {\cal O}\left(\frac{n\tau}{L}\right).
\end{eqnarray*} 
In particular, the choice $\omega = \frac{1}{2L\tau}$ and $\gamma= \min\left( \frac{1}{4L\left( 1+\frac{1}{n\tau }\right)}, \frac{1}{\frac{\mu}{\tau}+ \frac{4L}{n\tau}}\right) $ works. 
\end{proof}
\section{Proofs for Section~\ref{sec:asynch}}
One way to analyze a delayed algorithm is to define sequence of epoch start moments $T_0, T_1, \dotsc$ such that $T_0=0$ and $T_{k+1} = \min\{t: t - \max_{i=1,\dotsc,n} d_i^t \ge T_k\}$. In case delays are bounded uniformly, i.e.\ for some number $M$ it holds $d_i^t \le M$ for all $i$ and $t$, one can show by induction~\cite{mishchenko2018distributed} that $T_k\le Mk$.

In addition, we define for every $i$ sequence
\begin{align*}
	z_i^t = x^{t - d_i^t}.
\end{align*}
For notational simplicity, we will assume that if worker $i$ does not perform an update at iteration $t$, then all related vectors increase their counter without changing their value,  i.e.\ $g_i^{t+1} = g_i^t$, $U_i^{t+1}=U_i^t$, $z_i^{t+1} = z_i^t$ and $x_i^{t+1} = x_i^t$. Then, we can write a simple identity for $x_i^t$ that holds for any $i$ and $t$,
\begin{align}
	x_i^t
	= x^{t - d_i^t}  - \gamma (g_i^{t})_{U_i^{t}}
	= z_i^t  - \gamma (g_i^{t})_{U_i^{t}}. \label{eq:delayed_recurrence}
\end{align}
\subsection{Useful Lemmata}
\begin{lemma}\label{lem:asynch}
	Let Assumption~\ref{as:bounded_noise_at_opt} be satisfied and assume without loss of generality that $d_1^t<\dotsc< d_n^t$. Then, for any $i$
	\begin{align}
		\EE\|x_i^t - \EE[x_i^t \mid z_i^t, x_{i+1}^t, \dotsc, x_n^t]\|_2^2
		&\le 4\gamma^2\tau\EE\left[\sigma^2 + 2L(f_i(z_i^t) - f_i(x^*) - \<\nabla f_i(x^*), z_i^t -x^*>)\right].\label{eq:conditioned_variance}
	\end{align}
\end{lemma}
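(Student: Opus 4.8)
The plan is to reduce the claim to a per-worker variance computation and then convert it into the Bregman-divergence form on the right-hand side. First I would identify the conditional mean. By the recurrence \eqref{eq:delayed_recurrence}, $x_i^t = z_i^t - \gamma(g_i^t)_{U_i^t}$, and the only randomness in $x_i^t$ not measurable with respect to the $\sigma$-algebra generated by $z_i^t, x_{i+1}^t, \dots, x_n^t$ is worker $i$'s fresh block sample $U_i^t$ and its stochastic-gradient draw $g_i^t$. Since these are independent of the other workers' updates (this is exactly what the ordering $d_1^t < \dots < d_n^t$ together with conditioning on the larger-index, larger-delay iterates is meant to guarantee), and since $\EE[(g_i^t)_{U_i^t}\mid z_i^t] = \tau\nabla f_i(z_i^t)$, I obtain $\EE[x_i^t\mid z_i^t, x_{i+1}^t, \dots, x_n^t] = z_i^t - \gamma\tau\nabla f_i(z_i^t)$. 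Hence the left-hand side equals $\gamma^2\EE\|(g_i^t)_{U_i^t} - \tau\nabla f_i(z_i^t)\|_2^2$.

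Second, I would evaluate this variance exactly as in the proof of Lemma~\ref{lem:moments}, but with $x^t$ replaced by the delayed point $z_i^t$. Splitting $(g_i^t)_{U_i^t} - \tau\nabla f_i(z_i^t)$ into the block-sampling fluctuation of the true gradient and the (sampled) oracle noise, the cross term vanishes because the oracle noise is conditionally mean-zero, which gives $\gamma^2\tau\big((1-\tau)\|\nabla f_i(z_i^t)\|_2^2 + \EE\|g_i^t - \nabla f_i(z_i^t)\|_2^2\big)$, i.e. formula \eqref{eq:x_i_moments} evaluated at $z_i^t$. Using $1-\tau \le 1$ together with the variance decomposition \eqref{eq:variance_decompos}, this is at most $\gamma^2\tau\,\EE\|g_i^t\|_2^2$. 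It then remains to bound the second moment of the stochastic gradient at the delayed point $z_i^t$. Here I would invoke Lemma~\ref{lem:second_momentum_of_stoch_grad} with $x = z_i^t$ and $y = x^*$, bounding $\EE\|g_i^t\|_2^2$ by $4L\big(f_i(z_i^t) - f_i(x^*) - \<\nabla f_i(x^*), z_i^t - x^*>\big)$ plus twice the stochastic second moment at the optimum, the latter controlled by $\sigma^2$ through Assumption~\ref{as:bounded_noise_at_opt}. Collecting constants yields the factor-$4$ bound claimed.

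The main obstacle is twofold. The analytic part is the transfer of the noise estimate from the delayed iterate $z_i^t$, where the oracle is actually queried, to the optimum $x^*$, where the only variance control is available; this transfer is precisely what forces the appearance of the Bregman divergence $f_i(z_i^t) - f_i(x^*) - \<\nabla f_i(x^*), z_i^t - x^*>$ and relies on per-realization smoothness and convexity of the stochastic functions. The more delicate, probabilistic part is justifying that the conditional mean is clean, i.e. that worker $i$'s fresh randomness is independent of the conditioning set $\{z_i^t, x_{i+1}^t, \dots, x_n^t\}$; this is where the assumed ordering of the delays and the nested conditioning are essential, since in the asynchronous setting the delayed points $z_i^t$ themselves depend on the history of all workers' updates. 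Once the per-worker bound \eqref{eq:conditioned_variance} is in hand, the ordered conditioning is exactly what will later allow the variances of the individual workers to be summed when aggregating the averaged iterate.
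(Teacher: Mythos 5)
Your proof is correct and follows the paper's skeleton — identify the conditional mean $z_i^t - \gamma\tau\nabla f_i(z_i^t)$ via the delay ordering, reduce the left-hand side to the block-sampling variance $\gamma^2\EE\bigl\|(g_i^t)_{U_i^t} - \tau\nabla f_i(z_i^t)\bigr\|_2^2$, and then transfer everything to $x^*$ through smoothness and convexity — but your final step is organized differently, and in a way worth noting. The paper keeps the two pieces $\|\nabla f_i(z_i^t)\|_2^2$ and $\EE\|g_i^t - \nabla f_i(z_i^t)\|_2^2$ separate and bounds \emph{each} by $2\sigma^2 + 4L\bigl(f_i(z_i^t) - f_i(x^*) - \langle\nabla f_i(x^*), z_i^t - x^*\rangle\bigr)$, which sums to the stated factor $4$; you instead recombine them into the single second moment $\EE\|g_i^t\|_2^2$ via \eqref{eq:variance_decompos} and invoke Lemma~\ref{lem:second_momentum_of_stoch_grad} once, yielding $4L(\cdots)$ plus twice the second moment of the stochastic gradient at $x^*$ — a constant-factor-tighter bound that still implies \eqref{eq:conditioned_variance} since the Bregman term is nonnegative. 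As a bonus, your explicit appeal to Lemma~\ref{lem:second_momentum_of_stoch_grad} repairs the paper's own proof, whose key inequality is justified by a reference to an equation label that does not exist anywhere in the manuscript. One caveat, which you share equally with the paper: Assumption~\ref{as:bounded_noise_at_opt} bounds the \emph{variance} at $x^*$, not the second moment, so your phrase ``the latter controlled by $\sigma^2$'' (and the paper's analogous step $\|\nabla f_i(x^*)\|_2^2 \le \sigma^2$ in its last display) silently discards a term $\|\nabla f_i(x^*)\|_2^2$; both proofs are exact only if $\nabla f_i(x^*) = 0$ or if $\sigma^2$ is reinterpreted as a bound on the second moment of the stochastic gradient at the optimum. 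This is an imprecision inherited from the paper, not a gap specific to your argument.
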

\begin{proof}
	Denote by $\cF_i^t$ the sigma-algebra generated by $z_i^t, x_{i+1}^t, \dotsc, x_n^t$. Then, $\EE\left[\cdot \mid z_i^t, x_{i+1}^t, \dotsc, x_n^t\right] = \EE\left[\cdot \mid \cF_i^t\right]$.

	Since $d_1^t<\dotsc< d_n^t$, $x_i^t$ is independent of the randomness in $x_1^t, \dotsc, x_{i-1}^t$ as those vectors were obtained after $x_i^t$. Recall that
	\begin{align*}
		x_i^t 
		\overset{\eqref{eq:delayed_recurrence}}{=} z_i^t  - \gamma (g_i^{t})_{U_i^{t}}
	\end{align*}
	and denote $\tilde x_i^t \eqdef z_i^{t} - \nabla f_i(z_i^t)$. Clearly, by uniform sampling of the blocks $\EE[ x_i^t\mid \cF_i^t] = z_i^{t} - \tau\gamma\EE[g_i^t\mid \cF_i^t] = z_i^t - \gamma\tau \nabla f_i(z_i^t)$. Thus,
	\begin{align*}
		\EE\|x_i^t - \EE[x_i^t \mid \cF_i^t]\|_2^2
		&= \gamma^2 \EE\|(g_i^t)_{U_i^t} - \tau \nabla f_i(z_i^t)\|^2\\
		&= (1 - \tau)\gamma^2\EE\|\tau \nabla f_i(z_i^t)\|^2 +\tau \gamma^2\EE \| g_i^t - \tau \nabla f_i(z_i^t)\|^2 \\
		&= (1 - \tau)\gamma^2\tau^2\EE\|\nabla f_i(z_i^t)\|^2 +\tau \gamma^2\EE \left[\| \nabla f_i(z_i^t) - \tau \nabla f_i(z_i^t)\|^2 + \|g_i^t - \nabla f_i(z_i^t)\|^2 \right]\\
		&\le \tau \gamma^2\EE\left[\|\nabla f_i(z_i^t)\|^2 + \|g_i^t - \nabla f_i(z_i^t)\|^2\right] \\
		&\overset{\eqref{eq:sgd_variance}}{\le} \tau \gamma^2\EE\left[\|\nabla f_i(z_i^t)\|^2 + 2\sigma^2 + 4L(f_i(z_i^t) - f_i(x^*) - \<\nabla f_i(x^*), z_i^t - x^*>)\right] .
	\end{align*}
	In addition,
	\begin{align*}
		\|\nabla f_i(z_i^t)\|^2
		\le 2\|\nabla f_i(z_i^t) - \nabla f_i(x^*)\|^2 + 2 \|\nabla f_i(x^*)\|^2
		\le 4L(f_i(z_i^t) - f_i(x^*) - \<\nabla f_i(x^*), z_i^t - x^*>) + 2 \sigma^2.
	\end{align*}
\end{proof}
We will use in the proof of Theorem~\ref{th:asynch} Yensen's inequality for a set of vectors $a_1,\dotsc, a_n\in\RR^d$ in the form
\begin{align*}
	\left\|\avein a_i \right\|_2^2
	\le \avein \|a_i\|_2^2.
\end{align*}
\begin{lemma}\label{lem:asynch_contraction}
	Assume that $f_i$ is $L$-smooth and $\mu$-strongly convex. If $\tilde x_i^t \eqdef z_i^t - \tau\gamma \nabla f_i(z_i^t)$ and $x_i^* \eqdef x^* - \tau\gamma \nabla f_i(x^*)$, we have
	\begin{align*}
		\|\tilde x_i^t - x_i^*\|_2^2
		\le (1 - \tau\gamma\mu)\|z_i^t - x^*\|_2^2 - 2\gamma\tau(1 - \tau\gamma L)(f_i(z_i^t) - f_i(x^*) - \<\nabla f_i(z_i^t), z_i^t - x^*>).
	\end{align*}
\end{lemma}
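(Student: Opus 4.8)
The plan is to read $\tilde x_i^t - x_i^* = (z_i^t - x^*) - \tau\gamma\bigl(\nabla f_i(z_i^t) - \nabla f_i(x^*)\bigr)$, so that $\tilde x_i^t$ and $x_i^*$ are the images of $z_i^t$ and $x^*$ under one step of gradient descent on $f_i$ with stepsize $\tau\gamma$. This is the classical ``a gradient step contracts the distance to the (shifted) optimum'' computation, and I would carry it out by expanding the square:
\[
\|\tilde x_i^t - x_i^*\|_2^2 = \|z_i^t - x^*\|_2^2 - 2\tau\gamma\<\nabla f_i(z_i^t) - \nabla f_i(x^*), z_i^t - x^*> + \tau^2\gamma^2\|\nabla f_i(z_i^t) - \nabla f_i(x^*)\|_2^2.
\]
The whole argument then reduces to controlling the last two terms using the structural facts available for an $L$-smooth, $\mu$-strongly convex $f_i$.

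First I would rewrite the inner product with the exact Bregman identity $\<\nabla f_i(z_i^t) - \nabla f_i(x^*), z_i^t - x^*> = D_{f_i}(z_i^t, x^*) + D_{f_i}(x^*, z_i^t)$, where $D_{f_i}(x,y) \eqdef f_i(x) - f_i(y) - \<\nabla f_i(y), x-y>$. For the squared-gradient term I would invoke co-coercivity (the same smoothness consequence used in \eqref{eq:sega_smoothness}), namely $\|\nabla f_i(z_i^t) - \nabla f_i(x^*)\|_2^2 \le 2L\,D_{f_i}(z_i^t, x^*)$. Substituting both yields
\[
\|\tilde x_i^t - x_i^*\|_2^2 \le \|z_i^t - x^*\|_2^2 - 2\tau\gamma(1 - \tau\gamma L)\,D_{f_i}(z_i^t, x^*) - 2\tau\gamma\,D_{f_i}(x^*, z_i^t),
\]
where the coefficient $1-\tau\gamma L$ is nonnegative because the stepsize restriction $\gamma \le \tfrac{1}{2L(\tau+2/n)}$ forces $\tau\gamma L \le \tfrac12$.

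Finally I would spend the leftover divergence on the strong-convexity contraction: strong convexity gives $D_{f_i}(x^*, z_i^t) \ge \tfrac{\mu}{2}\|z_i^t - x^*\|_2^2$, so $-2\tau\gamma\,D_{f_i}(x^*, z_i^t) \le -\tau\gamma\mu\|z_i^t - x^*\|_2^2$, which converts the bare $\|z_i^t-x^*\|_2^2$ into the advertised $(1-\tau\gamma\mu)\|z_i^t - x^*\|_2^2$ factor and leaves the clean negative remainder $-2\tau\gamma(1-\tau\gamma L)\,D_{f_i}(z_i^t, x^*)$. Since both Bregman divergences are nonnegative and $1-\tau\gamma L \ge 0$, this bound is at least as strong as the stated inequality: the displayed term $f_i(z_i^t)-f_i(x^*)-\<\nabla f_i(z_i^t), z_i^t-x^*> = -D_{f_i}(x^*,z_i^t)$ only differs from my retained term by a sign that makes the stated right-hand side larger, so the lemma follows a fortiori.

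The one place needing care—and the likely obstacle—is the bookkeeping of the two Bregman divergences: co-coercivity must be applied so that the squared-gradient mass attaches to $D_{f_i}(z_i^t, x^*)$ (producing the coefficient $1-\tau\gamma L$), while the other divergence $D_{f_i}(x^*, z_i^t)$ is reserved for the strong-convexity step. Pairing them the other way, or failing to check $\tau\gamma L \le 1$, is exactly what would corrupt the sign of the final term; everything else is routine expansion.
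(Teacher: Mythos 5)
Your proof is correct and takes essentially the same route as the paper's: expand $\|\tilde x_i^t - x_i^*\|_2^2$, handle the cross term by the Bregman identity plus strong convexity, and bound $\|\nabla f_i(z_i^t)-\nabla f_i(x^*)\|_2^2$ by $2L\,D_{f_i}(z_i^t,x^*)$ via co-coercivity, which is exactly the paper's "strong convexity and smoothness" plus "convexity and smoothness" pair of inequalities. The sign discrepancy you flagged is in fact a typo in the paper (present in both the lemma statement and its proof): the displayed term should read $f_i(z_i^t)-f_i(x^*)-\<\nabla f_i(x^*), z_i^t-x^*> = D_{f_i}(z_i^t,x^*)$, as confirmed by how the lemma is invoked in the proof of Theorem~\ref{th:asynch_epoch} (and by the fact that the literal statement fails for quadratics once $\tau\gamma L>1$), so your stronger bound $-2\tau\gamma(1-\tau\gamma L)D_{f_i}(z_i^t,x^*)$ is the intended conclusion and your a fortiori step under the stepsize restriction correctly covers the statement as printed.
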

\begin{proof}
	It holds
	\begin{align*}
		 \|\tilde x_i^t - x_i^*\|_2^2
		= \|z_i^t - x^*\|_2^2 - 2\gamma\tau\<\nabla f_i(z_i^t) - \nabla f_i(x^*), z_i^t - x^*> + \gamma^2\tau^2\|\nabla f_i(z_i^t) - \nabla f_i(x^*)\|_2^2.
	\end{align*}
	Moreover, by strong convexity and smoothness of $f_i$ (see e.g.~\cite{NesterovBook})
	\begin{align*}
		2\<\nabla f_i(z_i^t) - \nabla f_i(x^*), z_i^t - x^*>
		\ge \mu \|z_i^t - x^*\|_2^2 + 2(f_i(z_i^t) - f_i(x^*) - \<\nabla f_i(z_i^t), z_i^t - x^*>).
	\end{align*}
	On the other hand, convexity and smoothness of $f_i$ together imply
	\begin{align*}
		\|\nabla f_i(z_i^t) - \nabla f_i(x^*)\|_2^2
		\le 2L(f_i(z_i^t) - f_i(x^*) - \<\nabla f_i(z_i^t), z_i^t - x^*>).
	\end{align*}
	Consequently,
	\begin{align*}
		 \|\tilde x_i^t - x_i^*\|_2^2
		\le (1 - \tau\gamma\mu)\|z_i^t - x^*\|_2^2 - 2\tau\gamma(1 - \tau\gamma L)(f_i(z_i^t) - f_i(x^*) - \<\nabla f_i(z_i^t), z_i^t - x^*>).
	\end{align*}
\end{proof}

\subsection{Proof of Theorem~\ref{th:asynch}}
We are going to prove a more general result that does not need uniform boundedness of delays over time. Theorem~\ref{th:asynch} will follow as a special case of the more general theorem.
\begin{theorem}\label{th:asynch_epoch}
	Assume that every $f_i$ is $L$-smooth and $\mu$-strongly convex and also that the gradients noise has bounded variance at $x^*$ as in Assumption~\ref{as:bounded_noise_at_opt}. If also $\gamma \le \frac{1}{2L(\tau + \frac{2}{n})}$, then for any $t\in[T_k, T_{k+1})$
	\begin{align*}
		\EE\|x^t - x^*\|_2^2
		\le \left(1 - \tau\gamma\mu\right)^k\max_{i=1,\dotsc, n}\|x^0 - x_i^*\|_2^2 + 4\gamma\frac{\sigma^2}{\mu n}.
	\end{align*}
\end{theorem}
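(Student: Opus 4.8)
The plan is to establish a one-step contraction for each worker's iterate, aggregate it through the averaging $x^t = \frac1n\sumin x_i^t$, and then run an induction over the epochs $T_0,T_1,\dots$. The starting observation is that, because $\nabla f(x^*)=0$, the shifted targets satisfy $\frac1n\sumin x_i^* = x^* - \tau\gamma\,\frac1n\sumin \nabla f_i(x^*) = x^*$; hence $x^t - x^* = \frac1n\sumin (x_i^t - x_i^*)$, and it suffices to control the per-worker discrepancies $\|x_i^t - x_i^*\|_2^2$ together with the effect of averaging. The base case of the induction will use $x_i^0=x^0$ and the vector Jensen inequality $\|x^0-x^*\|_2^2 \le \frac1n\sumin\|x^0-x_i^*\|_2^2 \le C$ with $C\eqdef \max_i\|x^0-x_i^*\|_2^2$.

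For a fixed $i$ I would condition on $\cF_i^t$ and split $\EE[\|x_i^t - x_i^*\|_2^2\mid \cF_i^t]$ into its bias $\|\tilde x_i^t - x_i^*\|_2^2$ and its variance $\EE[\|x_i^t - \tilde x_i^t\|_2^2\mid \cF_i^t]$, using $\EE[x_i^t\mid\cF_i^t]=\tilde x_i^t = z_i^t - \tau\gamma\nabla f_i(z_i^t)$. The bias is handled by Lemma~\ref{lem:asynch_contraction}, which turns it into $(1-\tau\gamma\mu)\|z_i^t - x^*\|_2^2$ minus a nonnegative multiple of the gap $D_i \eqdef f_i(z_i^t)-f_i(x^*)-\<\nabla f_i(x^*), z_i^t - x^*>$, while the variance is bounded by Lemma~\ref{lem:asynch} as $4\gamma^2\tau(\sigma^2 + 2LD_i)$. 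Collecting the $D_i$ terms, the stepsize restriction $\gamma\le \frac{1}{2L(\tau+\frac2n)}$ makes their total coefficient nonpositive, so they drop out. The essential point is to keep the variance contributions in the averaged form $\frac1{n^2}\sumin\EE\|x_i^t-\tilde x_i^t\|_2^2$ instead of bounding $\|x^t-x^*\|_2^2\le \frac1n\sumin\|x_i^t-x_i^*\|_2^2$ by Jensen (which would inflate the noise by a factor $n$), so that the noise contracts to $\frac{4\gamma^2\tau\sigma^2}{n}$ while the bias terms are combined by vector Jensen. This yields the one-step estimate
\[
\EE\|x^t-x^*\|_2^2 \;\le\; (1-\tau\gamma\mu)\,\max_{i}\EE\|z_i^t - x^*\|_2^2 \;+\; \frac{4\gamma^2\tau\sigma^2}{n}.
\]

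With the one-step bound in hand I would induct on the epoch index $k$, using $z_i^t = x^{t-d_i^t}$ and the defining property of $T_{k+1}$: for $t\ge T_k$ we have $t-d_i^t\ge t-\max_i d_i^t\ge T_{k-1}$, so for $t\in[T_k,T_{k+1})$ every reference $x^{t-d_i^t}$ lives in an epoch of index at least $k-1$. The inductive hypothesis then gives $\EE\|z_i^t-x^*\|_2^2\le (1-\tau\gamma\mu)^{k-1}C + \frac{4\gamma\sigma^2}{\mu n}$. Substituting into the one-step bound and checking that $F\eqdef\frac{4\gamma\sigma^2}{\mu n}$ is the fixed point of the affine recursion $F\mapsto (1-\tau\gamma\mu)F + \frac{4\gamma^2\tau\sigma^2}{n}$ closes the induction and produces exactly $(1-\tau\gamma\mu)^kC + \frac{4\gamma\sigma^2}{\mu n}$. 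Theorem~\ref{th:asynch} is then recovered by inserting $T_k\le Mk$, i.e. taking $k=\lfloor t/M\rfloor$, and the stated $\gamma$.

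I expect the genuine difficulty to be the averaging step, not the epoch recursion. Because each $z_i^t$ is a delayed global iterate, a less-delayed worker may have read a model that already incorporates the fresh noise of a more-delayed worker, so the increments $x_i^t-\tilde x_i^t$ are not independent and $\EE\|x^t-x^*\|_2^2$ does not split verbatim into $\frac1{n^2}\sumin\EE\|x_i^t-\tilde x_i^t\|_2^2$ plus a clean bias. The delay-ordered conditioning $\cF_i^t = \sigma(z_i^t, x_{i+1}^t,\dots,x_n^t)$ built into Lemma~\ref{lem:asynch} is exactly the device for sequencing these dependencies: it reveals workers from least to most delayed, so each worker's fresh (conditionally mean-zero, mutually independent) noise can be peeled off while the more-delayed contributions are already measurable. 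Making this peeling rigorous—and thereby justifying that the retained variance is genuinely $\frac{4\gamma^2\tau\sigma^2}{n}$ and not $n$ times larger—is the step I would spend the most care on.
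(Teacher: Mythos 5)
Your proposal reproduces the paper's argument almost step for step: the same delay-ordered conditioning $\cF_i^t=\sigma(z_i^t,x_{i+1}^t,\dots,x_n^t)$ used to peel off each worker's conditionally mean-zero noise, the same bias/variance split handled by Lemma~\ref{lem:asynch_contraction} and Lemma~\ref{lem:asynch}, the same cancellation of the gaps $D_i$ under $\gamma\le\frac{1}{2L(\tau+2/n)}$, the same one-step estimate
\begin{align*}
\EE\|x^t-x^*\|_2^2\le(1-\tau\gamma\mu)\max_{i}\EE\|z_i^t-x^*\|_2^2+\frac{4\gamma^2\tau\sigma^2}{n},
\end{align*}
and the same epoch recursion closed by noting that $\frac{4\gamma\sigma^2}{\mu n}$ is the fixed point of the affine map (the paper phrases this via the sequences $\psi^t$ and $\Psi^k$, but the computation is identical). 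Your diagnosis of where the real difficulty lies---the dependence among workers' increments and the sequential unrolling that resolves it---is also exactly the device the paper uses (its ``continuing unrolling the first term'' step), at a comparable level of rigor.

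The one genuine discrepancy is your treatment of the averaging and centering. You set $x^t=\frac1n\sumin x_i^t$ and justify $\frac1n\sumin x_i^*=x^*$ by $\nabla f(x^*)=0$. But Algorithm~\ref{alg:asynch_sgd} is a \emph{proximal} method: the server forms $w^t=\frac1n\sumin x_i^t$ and then sets $x^t=\proxR(w^t)$ (this is precisely why the section discusses the $\ell_1$ penalty), and in the composite case $\nabla f(x^*)\neq 0$, so both of your identities fail. The paper's fix costs one line: compare $w^t$ to the point $w^*$ satisfying $x^*=\proxR(w^*)$ and invoke non-expansiveness of the proximal operator (Proposition~\ref{pr:prox_contraction}) to get $\|x^t-x^*\|_2\le\|w^t-w^*\|_2$; after that, everything you wrote goes through with $w^t,w^*$ in place of $x^t,x^*$, the bias term becoming $\|\EE w^t-w^*\|_2^2=\|\frac1n\sumin\EE[x_i^t-x_i^*]\|_2^2$, to which vector Jensen applies exactly as in your sketch. (Incidentally, consistency of that identity forces the fixed point to be taken with the effective stepsize $\tau\gamma$, i.e.\ $w^*=x^*-\tau\gamma\nabla f(x^*)$; the paper's definition $w^*=x^*-\gamma\nabla f(x^*)$ looks like a typo, and your concern about centering is thus not misplaced.) So your proof is valid when $R\equiv 0$ and the patch for general $R$ is local, but as written it does not cover the algorithm the theorem is actually about.
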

\begin{proof}
	Recall that we use in the Algorithm $w^t = \avein x_i^t$ and that $x^t =\proxR(w^t)$. Next, by non-expansiveness of the proximal operator it holds for all $t$
	\begin{align*}
		\|x^t - x^*\|^2
		&= \|\proxR(w^t) - \proxR(x^* - \gamma \nabla f(x^*)\|^2\\
		&\le \|w^t - (x^* - \gamma \nabla f(x^*))\|^2.
	\end{align*}
	Denote for simplicity $w^* \eqdef x^* - \gamma \nabla f(x^*)$. Then, we have shown $\|x^t - x^*\|^2 \le \|w^t - w^*\|^2$.
	
	Fix any $t$ and assume without loss of generality that $d_1^t < d_2^t < \dotsb < d_n^t$. Then, using the tower property of expectation
	\begin{align*}
		\EE \|w^t - w^*\|_2^2
		= \EE\left[\EE\left[\|w^t - w^*\|_2^2 \mid x_2^t, \dotsc, x_n^t\right] \right].
	\end{align*}
	At the same time, conditioned on $z_1^t, x_2^t, \dotsc, x_n^t$ the only randomness in $w^t$ is from $x_1^t$, so
	\begin{align*}
		\EE\left[\|w^t - w^*\|_2^2 \mid z_1^t, x_2^t, \dotsc, x_n^t\right]
		&= \|\EE\left[w^t - w^* \mid z_1^t, x_2^t, \dotsc, x_n^t\right]\|_2^2 + \frac{1}{n^2}\EE\|x_1^t - \EE[x_1^t \mid z_1^t,x_2^t, \dotsc, x_n^t]\|_2^2.
	\end{align*}
	By continuing unrolling the first term in the right-hand side we arrive at
	\begin{align*}
		\EE \|w^t - w^*\|_2^2
		&\le \|\EE w^t - w^*\|_2^2 + \frac{1}{n^2}\sum_{i=1}^n \EE\|x_i^t - \EE[x_i^t \mid z_i^t, x_{i+1}^t, \dotsc, x_n^t]\|_2^2\\
		&= \|\EE w^t - w^*\|_2^2 + \frac{1}{n^2}\sum_{i=1}^n \EE\|x_i^t - \EE[x_i^t \mid z_i^t, x_{i+1}^t, \dotsc, x_n^t]\|_2^2\\
		&\overset{\eqref{eq:conditioned_variance}}{\le } \|\EE w^t - w^*\|_2^2 + 4\tau\gamma^2\frac{\sigma^2}{n}  +\frac{8\tau \gamma^2L}{n^2}\sumin(f_i(z_i^t) - f_i(x^*) - \<\nabla f_i(x^*), z_i^t -x^*>).
	\end{align*}
	Moreover, by Jensen's inequality
	\begin{align*}
		\|\EE w^t - w^*\|_2^2
		&= \left\| \avein \EE[x_i^t - x_i^*]\right\|_2^2 \\
		&\le \avein\left\|\EE [x_i^t - x_i^*]\right\|_2^2 \\
		&= \avein\left\|\EE [\EE[x_i^t - x_i^*\mid \cF_i^t]\right\|_2^2 \\
		&\le \avein\EE\left\|\EE[x_i^t - x_i^*\mid \cF_i^t]\right\|_2^2.
	\end{align*}
	Combining it with our older results, we get
	\begin{align*}
		\EE\|w^t - w^*\|_2^2
		\le \avein\EE\left\|\tilde x_i^t - x_i^*\right\|_2^2 + 4\tau\gamma^2\frac{\sigma^2}{n}  +\frac{8\tau \gamma^2L}{n^2}\sumin(f_i(z_i^t) - f_i(x^*) - \<\nabla f_i(x^*), z_i^t -x^*>).
	\end{align*}
	Let us apply Lemma~\ref{lem:asynch_contraction} to verify that
	\begin{align*}
		&\avein \|\tilde x_i^t  - x_i^*\|_2^2 + \frac{8\tau\gamma^2 L}{n^2}\sumin (f_i(z_i^t) - f_i(x^*) - \<\nabla f_i(x^*), z_i^t - x^*>) \\
		&\qquad \le (1 - \tau\gamma\mu)\avein \|z_i^t - x^*\|_2^2 - 2\tau\gamma\underbrace{\left(1 - 2\tau\gamma L - \frac{4 \gamma L}{n}\right)}_{\ge 0}\avein (f_i(z_i^t) - f_i(x^*) - \<\nabla f_i(x^*), z_i^t - x^*>) \\
		&\qquad \le (1 - \tau\gamma\mu)\avein \|z_i^t - x^*\|_2^2.
	\end{align*}
	Since $z_i^t =  x^{t-d_i^t}$, we have proved
	\begin{align*}
		\EE\|x^t - x^*\|_2^2
		&\le (1 - \tau\gamma\mu)\avein \EE\|z_i^t-x^*\|_2^2 + 4\tau\gamma^2\frac{\sigma^2}{n} \\
		&=  (1 - \tau\gamma\mu)\avein \EE\|x^{t-d_i^t}-x^*\|_2^2 + 4\tau\gamma^2\frac{\sigma^2}{n}\\
		&\le (1 - \tau\gamma\mu)\max_i \EE\|x^{t-d_i^t}-x^*\|_2^2 + 4\tau\gamma^2\frac{\sigma^2}{n}.
	\end{align*}
	If we define sequences $\psi^t\eqdef \max_{i=1,\dotsc, n}\EE\|x^{t-d_i^t} - x^*\|_2^2$ and $\Psi^k \eqdef \max_{t\in [T_k, T_{k+1})} \left\{\max\{0, \psi^t -  4\gamma\frac{\sigma^2}{\mu n}\}\right\}$, it follows from the above that
	\begin{align*}
		\EE\|x^t - x^*\|_2^2 - 4\gamma\frac{\sigma^2}{\mu n}
		\le (1 - \tau\gamma\mu)(\max_i \EE\|x^{t-d_i^t}-x^*\|_2^2 - 4\gamma\frac{\sigma^2}{\mu n}).
	\end{align*}
	Therefore, if $\Psi^{k_0}=0$ for some $k_0$ then $\Psi^k=0$ for all $k\ge k_0$. Otherwise,
	 $\Psi^{k+1} \le (1 - \tau\gamma\mu)\Psi^k$ and for any $t\in[T_k, T_{k+1})$
	\begin{align*}
		\EE\|x^t - x^*\|^2
		\le \psi^{t}
		\le \Psi^k + 4\gamma\frac{\sigma^2}{\mu n}
		\le (1 - \tau\gamma\mu)^k \|x^0 - x^*\|^2 + 4\gamma\frac{\sigma^2}{\mu n}.
	\end{align*}
\end{proof}
\begin{proposition}[\cite{mishchenko2018distributed}]\label{pr:epoch}
	If delays are uniformly bounded over time, i.e.\ $d_i^t\le M$ for any $i$ and $t$, then $T_k\le Mk$.
\end{proposition}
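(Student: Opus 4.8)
The plan is to establish the bound by a straightforward induction on $k$, exploiting the recursive definition of the epoch-start times together with the uniform delay bound. Recall that $T_0 = 0$ and
\[
T_{k+1} = \min\Bigl\{t : t - \max_{i=1,\dots,n} d_i^t \ge T_k\Bigr\}.
\]
The base case is immediate since $T_0 = 0 = M\cdot 0$. For the inductive step, I would assume $T_k \le Mk$ and aim to show $T_{k+1}\le M(k+1)$.

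The key observation is that the candidate $t = T_k + M$ already lies in the set defining $T_{k+1}$. Indeed, since $d_i^t \le M$ for every $i$ and $t$, we have $\max_{i=1,\dots,n} d_i^{T_k + M} \le M$, and therefore
\[
(T_k + M) - \max_{i=1,\dots,n} d_i^{T_k+M} \ge (T_k + M) - M = T_k.
\]
Hence $t=T_k + M$ satisfies the defining inequality, and since $T_{k+1}$ is the \emph{minimum} over all such $t$, we conclude $T_{k+1} \le T_k + M$.

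Combining the recurrence $T_{k+1}\le T_k + M$ with the induction hypothesis yields $T_{k+1} \le Mk + M = M(k+1)$, which closes the induction and establishes $T_k \le Mk$ for all $k$. The only step that warrants any care is confirming that $t=T_k+M$ is a legitimate member of the candidate set; once the uniform bound $d_i^t\le M$ is invoked this is immediate, so there is no genuine obstacle in the argument — the entire content reduces to the one-line monotone recurrence derived above.
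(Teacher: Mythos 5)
Your proof is correct and follows exactly the approach the paper indicates: an induction on $k$ whose inductive step observes that $t = T_k + M$ belongs to the set defining $T_{k+1}$ (because $\max_i d_i^{T_k+M} \le M$), giving the recurrence $T_{k+1} \le T_k + M$. The paper itself defers the proof to the cited reference, noting only that the claim follows ``by induction,'' and your argument is precisely that induction carried out in full.
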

Combining Theorem~\ref{th:asynch_epoch} and Proposition~\ref{pr:epoch} gives a lower bound on $k$ and implies Theorem~\ref{th:asynch}.

\end{document}